\def\cleartheorem#1{%
    \expandafter\let\csname#1\endcsname\relax
    \expandafter\let\csname c@#1\endcsname\relax
}
\def\clearthms#1{ \@for\tname:=#1\do{\cleartheorem\tname} }
\crefname{section}{Sec.}{Secs.}
\crefname{lemma}{Lem.}{Lems.}
\crefname{proposition}{Prop.}{Props.}
\crefname{corollary}{Cor.}{Cors.}
\crefname{theorem}{Thm.}{Thms.}
\crefname{example}{Ex.}{Exs.}
\crefname{assumption}{Assump.}{Assumps.}
\crefname{equation}{Eq.}{Eqs.}
\crefname{definition}{Def.}{Defs.}
\crefname{appendix}{App.}{Apps.}
\newcommand{\tgamma}{\tilde{\gamma}}
\newcommand{\z}{\mathbf{z}}
\newcommand{\tlam}{\tilde{\lambda}}
\newcommand{\vect}[1]{\ensuremath{\mathbf{#1}}}
\newcommand{\x}{\vect{x}}
\newcommand{\vv}{\vect{v}}
\newcommand{\ww}{\vect{w}}
\newcommand{\vvs}{\vect{v}^*}
\newcommand{\mR}{\mathbb{R}}
\newcommand{\mE}{\mathbb{E}}
\newcommand{\tr}{\mathrm{tr}}
\newcommand{\ntr}{\bar{\mathrm{tr}}}
\newcommand{\Sigmatr}{\Sigma}
\newcommand{\Sigmate}{\Sigma^{*}}
\newcommand{\sigmaeps}{\sigma_{\epsilon}^2}
\newcommand{\I}{\cal{I}}
\newcommand{\Ishift}{\cal{I}^{*}}
\newcommand{\bias}[1]{B_{#1}}
\newcommand{\var}[1]{V_{#1}}
\newcommand{\corrs}{r}
\newcommand{\tcorrs}{\tilde{r}}
\newcommand{\err}[1]{E_{#1}}
\newcommand{\Err}[1]{E_{#1}} 
\newcommand{\slope}{\textsc{slope}}
\newcommand{\Etest}{\Err{\Sigma^*}}
\newcommand{\mutr}{\mu_\text{train}}
\newcommand{\munull}{\mu_\emptyset}
\newcommand{\fs}{\sigma}
\renewcommand{\cal}{\mathcal}
\DeclareMathOperator*{\argmin}{arg\,min}
\newcommand{\QE}[1]{Q^{\scriptstyle{E_{#1}}}}
\newcommand{\GE}[1]{G^{\scriptstyle{E_{#1}}}}
\newcommand{\RE}[1]{R^{\scriptstyle{E_{#1}}}}
\newcommand{\bQE}[1]{\bar{Q}^{\scriptstyle{E_{#1}}}}
\newcommand{\bGE}[1]{\bar{G}^{\scriptstyle{E_{#1}}}}
\newcommand{\bRE}[1]{\bar{R}^{\scriptstyle{E_{#1}}}}
\newcommand{\QK}{Q^{K^{-1}}}
\newcommand{\GK}{G^{K^{-1}}}
\newcommand{\RK}{R^{K^{-1}}}
\newcommand{\bGK}{\bar{G}^{K^{-1}}}
\newcommand{\bRK}{\bar{R}^{K^{-1}}}
\newcommand{\bQK}{\bar{Q}^{K^{-1}}}
\newcommand{\hK}{\hat{K}}
\newcommand{\id}{\mathrm{id}}
\newcommand{\etas}{\eta_*}
\newcommand{\zetas}{\zeta_*}
\newcommand{\rhos}{\rho_*}
\newcommand{\omegas}{\omega_*}
\newcommand{\gammaeff}{\gamma_{\text{eff}}}
\newcommand{\gammaopt}{\gamma^{\text{opt}}}
\newcommand{\tauone}{\tau}
\newcommand{\taub}{\bar{\tau}}
\newcommand{\muspike}{\mu^{\text{diatomic}}}
\newcommand{\cN}{\mathcal{N}}
\renewcommand{\cal}{\mathcal}
\newcommand{\e}{\varepsilon}
\newcommand{\la}{\lambda}
\renewcommand{\P}{\mathbb{P}}
\newcommand{\E}{\mathbb{E}}
\newcommand{\R}{\mathbb{R}}
\newcommand{\V}{\mathbb{V}}
\newcommand{\deq}{\mathrel{\mathop:}=}
\newcommand{\bfx}{\mathbf{x}}
\newcommand{\bfv}{\mathbf{v}}
\newcommand{\bfu}{\mathbf{u}}
\DeclareMathOperator{\diag}{diag}
\newcommand{\eq}[1]{\begin{equation}#1\end{equation}}
\newcommand{\al}[1]{\begin{align}#1\end{align}}
\newcommand{\p}[1]{({#1})}
\newcommand{\pa}[1]{\left({#1}\right)}
\newcommand{\qa}[1]{\left[{#1}\right]}
\newcommand{\h}[1]{\{{#1}\}}
\newcommand{\ha}[1]{\left\{{#1}\right\}}
\newcommand{\abs}[1]{\lvert #1 \rvert}
\newcommand{\absa}[1]{\left\lvert #1 \right\rvert}
\newcommand{\norm}[1]{\lVert #1 \rVert}
\newcommand{\norma}[1]{\left\lVert #1 \right\rVert}
\newtheorem{theorem}{Theorem}[section]
\newtheorem{corollary}{Corollary}[section]
\newtheorem{lemma}{Lemma}[section]
\newtheorem{proposition}{Proposition}[section]
\newtheorem{definition}{Definition}[section]
\newtheorem{example}{Example}[section]
\newtheorem{remark}{Remark}[section]
\newtheorem{assumption}{Assumption}
\begin{document}
\title{Covariate Shift in High-Dimensional Random Feature Regression}

\author{\name Nilesh Tripuraneni\thanks{Equal contribution.} \email nilesh\_tripuraneni@berkeley.edu\\
        \addr Department of EECS, University of California, Berkeley
        \AND
        \name Ben Adlam\footnotemark[1] \email adlam@google.com\\
        \addr Brain Team, Google Research
        \AND
        \name Jeffrey Pennington\footnotemark[1]\email jpennin@google.com\\
        \addr Brain Team, Google Research}

\editor{Kevin Murphy and Bernhard Sch{\"o}lkopf}

\maketitle

\begin{abstract}
A significant obstacle in the development of robust machine learning models is \emph{covariate shift}, a form of distribution shift that occurs when the input distributions of the training and test sets differ while the conditional label distributions remain the same. Despite the prevalence of covariate shift in real-world applications, a theoretical understanding in the context of modern machine learning has remained lacking. In this work, we examine the exact high-dimensional asymptotics of random feature regression under covariate shift and present a precise characterization of the limiting test error, bias, and variance in this setting. Our results motivate a natural partial order over covariate shifts that provides a sufficient condition for determining when the shift will harm (or even help) test performance. We find that overparameterized models exhibit enhanced robustness to covariate shift, providing one of the first theoretical explanations for this intriguing phenomenon. Additionally, our analysis reveals an exact linear relationship between in-distribution and out-of-distribution generalization performance, offering an explanation for this surprising recent empirical observation.
\end{abstract}

\setcounter{tocdepth}{0}
\setcounter{footnote}{0}

\section{Introduction}
\label{sec:intro}

Theoretical justification for almost all machine learning methods relies upon the equality of the distributions from which the training and test data are drawn.
Nevertheless, in many real-world applications, this equality is  violated. Naturally-occurring distribution shift between the training data and the data encountered during deployment is the rule, not the exception \citep{koh2020wilds}. Even \emph{non-adversarial} changes in distributions can reveal the surprising fragility of modern machine learning models \citep{recht2018cifar, recht2019imagenet, pmlr-v119-miller20a,hendrycks2020faces,d2020underspecification,ovadia2019can}. 
Such shifts are distinct from adversarial examples, which require explicit poisoning attacks \citep{goodfellow2014explaining}; rather, they can result from mild corruptions, ranging from changes of camera angle or blur \citep{hendrycks2020faces}, to subtle, unintended changes in data acquisition procedures \citep{recht2019imagenet}. 
Moreover, this fragility limits the application of deep learning in certain safety-critical areas \citep{koh2020wilds}. Empirical studies of distribution shift have observed several intriguing phenomena, including linear trends between performance on shifted and unshifted test distributions \citep{recht2019imagenet, hendrycks2020faces, koh2020wilds}, dramatic degradation in calibration \citep{ovadia2019can}, and surprising spurious inductive biases \citep{d2020underspecification}. Theoretical understanding of why such patterns occur across a variety of real-world domains is scant. 
Even basic questions such as what makes a certain distribution shift likely to hurt (or help) a model's performance, and by how much, are not understood. One reason that these phenomena have eluded theoretical understanding is that even the qualitative behavior of different shifts can vary widely. In many cases, there is a strong coupling between the model and distribution shift; for example, in the context of character recognition, spatial translations typically hurt the performance of fully-connected networks but have minimal effect on convolutional models. On the other hand, there do exist certain types of shifts that affect most models in roughly the same way---a shift that reduces the frequency of ambiguous or contradictory examples would be of this type. Simultaneously capturing these model-dependent and model-independent characteristics of distribution shift is a significant obstacle in developing a realistic theoretical model.

Another reason that many phenomena related to distribution shift continue to lack satisfactory theoretical explanations is that the go-to formalism for studying generalization in classical models, namely uniform convergence theory \citep[see e.g.][]{wainwright2019high}, may be insufficient to explain the behavior of modern, deep learning methods even in the absence of distribution shift \citep{nagarajan2019uniform, yang2021exact}. Indeed, classical measures of model complexity, such as various norms of the parameters, have been found to lead to ambiguous conclusions \citep{neyshabur2017exploring}. To overcome these limitations, it can be fruitful to shift the focus from worst-case bounds for generic distributions to the average-case behavior for narrowly specified distributions. While this change in perspective sacrifices generality, it can produce more precise predictions, which we believe are necessary to fully capture the relevant phenomenology of distribution shift.

In this paper, we study a specific type of distribution shift called covariate shift, in which the distributions of the training and test covariates differ, while the conditional distribution of the labels given the covariates remains fixed. In particular, we focus on the setting in which the covariates are multivariate Gaussian and the targets are generated by a simple linear signal-plus-noise model. This narrow specification of the data distribution enables an asymptotically exact computation of the generalization error of random feature regression under covariate shift, which we perform using tools from random matrix theory. The results naturally lead to a  model-independent notion of shift strength and facilitate a nuanced analysis of the model-dependent behavior of covariate shift within the context of random feature regression.
\subsection{Summary of contributions}
\label{sec:contribs}

Here we provide a brief overview of the paper and highlight our main results. After discussing related work in \cref{sec:related}, we discuss the general random feature regression setup in \cref{sec:setup} and build some intuition for our approach in the simpler context of linear regression in \cref{sec:lin_reg}. In the remainder of the paper, we:

\begin{enumerate}[leftmargin=0.5cm]
    \item Compute the test error, bias, and variance of random feature regression for general multivariate Gaussian covariates under covariate shift in the high-dimensional limit (see \cref{sec:main_thms}). The results generalize prior work in two important ways: first, they provide one of the first asymptotically exact results for random feature regression with anisotropic covariates, even in the absence of covariate shift; second, they provide one of the first asymptotically exact results for covariate shift, even in the simple special case of linear regression;
    \item Provide a \emph{model-independent} partial order over covariate shifts that can indicate how a given shift will affect the performance of random feature regression (see \cref{def:hard}). When the shift does not alter the covariates' overall scale, the partial order is sufficient to determine whether the bias and test error will increase or decrease in response to the shift; otherwise, we characterize the \emph{model-dependent} conditions that are necessary to understand the effect of the shift (see \cref{sec:test_error_rf});
    \item Prove that overparameterization enhances robustness to covariate shift, and that the error, bias, and variance are nonincreasing functions of the number of excess parameters (see \cref{sec:b_v_rf}). Such benefits of overparameterization have been previously observed empirically for practical models and real-world datasets, but our results show that the behavior persists in simpler settings and may have a prosaic explanation stemming from the high-dimensionality of the problem;
    \item Deduce a linear relationship between in-distribution and out-of-distribution generalization performance, offering an explanation for this surprising recent empirical observation (see \cref{sec:linear_trends}). The relationship holds exactly in the ridgeless limit and is given parameterically in the degree of overparameterization, i.e. between models with varying numbers of excess parameters. The linearity approximately persists under non-zero regularization, but it is clearly violated in the underparameterized regime (see \cref{sec:nonlinear_trend});
    \item Prove that, in the limit of infinite overparameterization, the optimal regularization constant is independent of the covariate shift, and compute the associated optimal test error (see \cref{sec:optimal_gamma}). Owing to the implicit regularization effect of the nonlinear feature map, the optimal regularization constant can be negative. When the constant is optimized on the shifted test distribution, we observe the total error to be a nonincreasing function of the number of parameters, i.e. the characteristic double descent peak is eliminated; however, when it is tuned on the unshifted training distribution, a peak persists and the resulting error curve remains nonmonotonic;
    \item Examine how shifts in covariate scale can induce unexpected effects, such as nonmonotonicity of the bias with respect to the number of parameters and the amount of ridge regularization (see \cref{sec:scale}). By investigating this behavior in a special coordinate system, we develop an interpretation in terms of a coordinate-wise rescaling of the model's expected estimator for the target coefficients. Notably, this nonmonotonicity in the bias is only uncovered when the activation function is nonlinear and when the shift alters the scale of the covariates.
\end{enumerate}

\subsection{Related work}
\label{sec:related}
There is extensive literature focusing on the empirical analysis of distribution shift in all of its myriad forms, ranging from domain adaptation~\citep{sugiyama2007covariate,glorot2011domain,becker2013non, zhao2018adversarial,zhao2019learning, pmlr-v70-long17a, long2016unsupervised} to defenses against adversarial attacks~\citep{madry2017towards, schmidt2018adversarially} to distributionally robust optimization~\citep{sagawa2020distributionally, duchi2020distributionally, duchi2020learning}, among many others. Interestingly, for naturally occurring distribution shifts~\citep{koh2020wilds, hendrycks2020faces}, standard robustness interventions provide little to no protection \citep{recht2019imagenet, taori2020measuring}. Indeed, empirical risk minimization on clean, unshifted training data often performs better on out-of-distribution benchmarks than more sophisticated methods such as invariant risk minimization~\citep{koh2020wilds}. One of the most striking observations in the context of natural distribution shifts is that models become more robust as their in-distribution accuracy increases~\citep{recht2019imagenet, taori2020measuring, hendrycks2020faces, pmlr-v119-miller20a}. For example, \citet{recht2019imagenet} found that if a classifier's accuracy increases by 1.0\% on the unshifted CIFAR-10 test set, this tends to increase its accuracy by 1.7\% on the CIFAR-10.1 dataset (a dataset subject to natural distribution shift). Moreover, such linear trends between the unshifted and shifted measures of error have now been observed in a variety of contexts~\citep{recht2019imagenet, taori2020measuring, pmlr-v119-miller20a, pmlr-v139-miller21b, mania2021classifier}.

The number of theoretical works studying the impact of distribution shift on generalization is far smaller. The pioneering work of \citet{ben2007analysis} provides VC-dimension-based error bounds for classification that are augmented by a discrepancy measure between shifted source and target domains, while \citet{cortes2014domain} demonstrate a similar class of uniform convergence-based results in the setting of kernel regression.
\citet{zhao2019learning} show that learning domain-invariant features is insufficient to guarantee generalization when the class-conditional distributions of features may shift; while \citet{kumar2020understanding} provide non-vacuous margin-based bounds for self-training when the source domain gradually shifts toward the target domain. \citet{lei21a} compute the minimax risk for  linear models under distribution shift given access to labeled data from a source domain and unlabeled data from the target domain. \citet{mania2021classifier} introduce assumptions based on model similarity to help explain why classifiers exhibit linear trends between their accuracies on shifted and unshifted test sets.

Our technical tools build on a series of works that have studied the high-dimensional limit of the test error for a growing class of model families and data distributions. In the context of linear models, \citet{pmlr-v130-richards21b} and \citet{wu2020optimal} analyze ridge regression for general covariances and a general non-isotropic source condition on the parameters that generate the targets, extending earlier work studying minimum-norm interpolated least squares and ridge regression in the random design setting~\citep{belkin2019reconciling, dobriban2018high, hastie2019surprises}. The non-isotropy of source parameter effectively induces a shift on the bias term of this model, but the phenomenon is distinct from the covariate shifts we study here. 

Beyond linear regression, random feature models provide a rich but tractable class of models to gain further insight into generalization phenomena~\citep{adlam2020neural, adlam2020understanding, mei2019generalization, liao2020random}. These methods are of particular interest because of their connection to neural networks, with the number of random features corresponding to the network's width or model complexity \citep{neal1996priors, lee2017deep, jacot2018neural}, and because they serve as a practical method for data analysis in their own right~\citep{rahimi2007random, pmlr-v119-shankar20a}. In this context, \citet{yang2021exact} provide a precise characterization of the gaps between uniform convergence and the exact (asymptotic) test error as a function of the sample size and number of random features.

From the technical perspective, our results rely on the concept of Gaussian equivalents, specifically a linear-signal-plus-noise representation of the random feature matrix, an approach whose origin stems from \citet{Karoui2010TheSO} in the context of kernel random matrices and from \citet{pennington2019nonlinear,adlam2019random}; and \citet{peche2019note} for the Gram matrices relevant here. This linearization technique was further developed with applications to high-dimensional learning problems by a number of authors, including \citet{adlam2019random,mei2019generalization,goldt2020gaussian,Chang2021ProvableBO}, among many others. Our analytic techniques build upon these works and a series of recent results stemming from the literature on random matrix theory and free probability~\citep{pennington2018spectrum, pennington2019nonlinear, adlam2019random, adlam2020neural, louart2018random, peche2019note, far2006spectra, mingo2017free}.

An abbreviated version of this article focusing on the restricted setting in which the covariate shift preserves the overall scale may be found in \cite{tripuraneni2021over}.

\section{Preliminaries}
\label{sec:setup}
Here we introduce our problem setup, some important assumptions we make throughout the paper, and a simple example to illustrate our setup and assumptions.

\subsection{Problem setup and notation} As in prior work studying random feature regression~\citep{hastie2019surprises,mei2019generalization,adlam2020neural,adlam2020understanding}, we compute the test error in the high-dimensional, proportional asymptotics where the dataset size $m$, input feature dimension $n_0$, and hidden layer size $n_1$ all tend to infinity at the same rate, with the constants $\phi$ and $\psi$ defined as the limits of the ratios $n_0/m$ and $n_0/n_1$ respectively. We may think of $n_1$ and $m$ as functions of $n_0$, to write
\eq{
    \phi\deq\lim_{n_0\to\infty} \frac{n_0}{m(n_0)} \quad\text{and}\quad \psi\deq\lim_{n_0\to\infty} \frac{n_0}{n_1(n_0)}.
}
We further define the \emph{overparameterization ratio} as $\phi/\psi$, which is the limit of ${n_1}/{m}$ and characterizes the normalized complexity of the random feature model.

Interestingly, in this high-dimensional limit only linear functions of the data are learnable. A wide class of nonlinear teacher functions give rise to the same generalization behavior as linear labeling functions asymptotically \citep[for more details, see][]{mei2019generalization, adlam2020neural}. With this in mind, we consider the task of learning an unknown function from $m$ i.i.d. samples $(\x_i, y_i) \in \mathbb{R}^{n_0} \times \mathbb{R}$ for $i\in\h{1,\ldots, m}$, where the covariates are Gaussian, $\x_i \sim \cN(0, \Sigmatr)$ with positive definite covariance matrix $\Sigmatr$, and the labels are generated by a linear function parameterized by $\beta \in \mathbb{R}^{n_0}$, drawn from $\cN\pa{0,I_{n_0}}$. In particular,
\eq{\label{eq_data_dist}
    y(\x_i) = \beta^\top \x_i / \sqrt{n_0} + \e_i ,
}
where $\e_i \sim \cN(0, \sigma_{\e}^2)$ is additive label noise on the training points. We note that assuming the target vector is random and isotropic is common in the high-dimensional regression literature (see e.g. \citet{dobriban2018high}) as it decouples the geometry of $\Sigma$ and $\beta$, which would otherwise introduce additional complications (though see \citet{hastie2019surprises,wu2020optimal}; and \citet{mel2021theory} for an analysis of these complications in the context of linear regression without distribution shift). 

We study the class of prediction models defined by kernel ridge regression using unstructured random feature maps \citep{rahimi2007random}. The random features are given by a single-layer neural network with independent random weights. Given training data $X = [\x_1,\ldots,\x_m]$ and a prospective test point $\x$, the random features embeddings of the training and test data are given by
\eq{
\label{eqn:rf_embedding}
  F \deq \fs(W X/\sqrt{n_0})\,\quad\text{and}\quad f \deq \fs(W \x/\sqrt{n_0})\,,
}
for a random weight matrix $W \in \mathbb{R}^{n_1\times n_0}$ with i.i.d.  standard Gaussian entries and an activation function $\sigma : \mR \to \mR$ applied elementwise. The induced kernel is
\eq{
\label{eq_K2}
    K(\x_1,\x_2) \deq \frac{1}{n_1} \fs(W \x_1/\sqrt{n_0})^\top\fs(W \x_2/\sqrt{n_0})\,,
}
and the model's predictions are given by $\hat{y}(\x) = YK^{-1}K_\x$, where $Y\deq[y(\x_1),\ldots,y(\x_m)]$, $K \deq K(X,X) + \gamma I_m$, $K_\x \deq K(X, \x)$, and $\gamma$ is a ridge regularization constant.\footnote{We overload the definition of $K$ to include the additive regularization whenever no arguments are present. Also, if $\gamma = 0$ and $K$ is not full-rank, $K^{-1}$ should be understood as the Moore–Penrose pseudoinverse.} Owing to the implicit regularization effect of the nonlinear feature maps~\citep{bartlett2021deep}, in low noise settings the optimal value of $\gamma$ can sometimes be negative~\citep{kobak2020optimal}, as we discuss in~\cref{sec:optimal_gamma}. Otherwise, for simplicity, throughout the rest of the paper we make the standard assumption that $\gamma\ge 0$.

Our central object of study is the expected test error for a datapoint $\x \sim \cN(0, \Sigmate)$ where $\Sigmate$ may be different from the training covariance $\Sigmatr$. The expected test error (without label noise on the test point) is
\al{\label{eq:eq_test_error}
    \err{\Sigmate} &= \E[(\beta^\top \x/\sqrt{n_0} - Y K^{-1}K_{\x})^2]\\
    &= \underbrace{\E_{\x,\beta} [(\mE[\hat{y}(\x)]-y(\x))^2]}_{\bias{\Sigmate}} + \underbrace{\mE_{\x,\beta}[\V[\hat{y}(\x)]]}_{\var{\Sigmate}}\,,\label{eq_bias_variance_def}
}
where the inner expectations defining the bias and variance are computed over $W$, $X$, and all $\e_i$.\footnote{In fact, the outer expectation over $\beta$ can be removed, since the expectation concentrates around its mean.}

Note that this definition of the bias and variance is somewhat nonstandard in the statistics literature, where typically only the randomness from the label noise is considered. While this is appropriate for analyses of linear regression that condition on the design matrix, in our setting there is additional randomness in the model coming from $W$, and so it would be inappropriate to compute the bias conditional on $W$ as well. Instead, we treat all random variables ($X$, $W$, and $\varepsilon$) equivalently, which leads to unambiguous interpretations of the bias-variance decomposition~\citep{adlam2020understanding,Lin2020WhatCT}. Indeed, failing to incorporate the additional random variables $X$ and $W$ into the inner expectation of \cref{eq_bias_variance_def} has the effect of moving a component of the variance into the bias and can lead to counterintuitive conclusions (see \citet{adlam2020understanding} for additional discussion).

For finite values of $n_0$, the covariate shift is entirely specified by the training and test covariance matrices $\Sigmatr$ and $\Sigmate$, but to characterize the shift asymptotically we have to do so with their limiting spectral properties. Denote the eigenvalues of $\Sigmatr$ in nondecreasing order by ${0<\lambda_1 \le \lambda_2 \le \ldots \le \lambda_{n_0}}$ and denote a choice for the corresponding eigenvectors by ${\bfv_1 \le \bfv_2 \le \ldots \le \bfv_{n_0}}$. Similarly, denote a choice for $\Sigmate$ by ${0<\lambda^*_1 \le \lambda^*_2 \le \ldots \le \lambda^*_{n_0}}$ and ${\bfv_1^* \le \bfv_2^* \le \ldots \le \bfv_{n_0}^*}$.\footnote{Though when $\Sigmatr$ or $\Sigmate$ have repeated eigenvalues their eigendecompositions are not unique, the choice of eigendecomposition will not influence later conclusions (see \cref{app:ambiguity}).} We define the \emph{overlap coefficients} of $\Sigmate$ with respect to $\Sigmatr$ by
\begin{align}
\corrs_{i} \deq \mathbf{v}_i^\top \Sigmate \vv_i = \sum_{j=1}^{n_0} (\vvs_j\cdot \vv_i)^2 \lambda^*_j \label{eq:overlap_coeffs}
\end{align}
to measure the alignment of $\Sigmate$ with the $i$th eigendirection of $\Sigmatr$. In particular, $r_i$ is the induced norm of $\vv_i$ with respect to $\Sigmate$.

We use $\ntr$ to denote the dimension-normalized trace: for a matrix $A \in \mathbb{R}^{n \times n}$, $\ntr(A) = \tr(A) / n$. We also use $\norm{A}_{\infty}$ and $\norm{A}_{F}$ to denote the operator norm and Frobenius norm of a matrix $A$ respectively. Finally, we use $\delta_{\bfx}$ to denote the Dirac delta function centered at $\bfx$.

\subsection{Assumptions}
Regularity assumptions on the spectra of $\Sigmatr$ and $\Sigmate$ are necessary to state the limiting behavior of this system. As in \citet{wu2020optimal}, it is not sufficient to consider the spectra of these matrices individually; they must be considered jointly. A natural choice is to do this in an eigenbasis of $\Sigma$.
\begin{assumption}
    \label{assump:esd}
    We define the empirical joint spectral distribution (EJSD) as 
    \eq{
       \mu_{n_0} \deq \frac{1}{n_0} \sum_{i=1}^{n_0} \delta_{(\lambda_i, r_i)}
    }
    and assume it converges in distribution to some $\mu$, a distribution on $\mathbb{R}_{+}^2$ as $n_0 \to \infty$. We refer to $\mu$ as the limiting joint spectral distribution (LJSD), and emphasize that this defines the relevant limiting properties of the training and test distributions.\footnote{The EJSD depends not only on $\Sigmatr$ and $\Sigmate$ but also on a choice of eigendecomposition for $\Sigmatr$ when it has repeated eigenvalues. However, all possible choices for the EJSD form an equivalence class, and the ambiguity does not affect later definitions and conclusions. Again see \cref{app:ambiguity}.} Additionally, we require that $\limsup_{n_0} \max(\norm{\Sigmatr}_{\infty}, \norm{\Sigmate}_{\infty}) \leq C$ for a constant $C$. 
\end{assumption}
Often we use $(\lambda, r)$ for random variables sampled jointly from $\mu$ and denote the marginal of $\la$ under $\mu$ with $\mutr$: for measurable $\mathcal{E}\subseteq\R_+$, define
\eq{
    \mutr(\mathcal{E}) \deq \mu\p{ \mathcal{E} \times \R_+ }.
}
Since the conditional expectation $\mE[r|\lambda]$ is an important object in our study, we assume the following for simplicity.
\begin{assumption}
\label{assump:abs_cts}
    $\mu$ is either absolutely continuous or a finite sum of delta masses. Moreover, the expectations of $\lambda$ and $r$ are finite. 
\end{assumption}
When the eigenspaces of $\Sigmatr$ and $\Sigmate$ are aligned and $r_i=\la_i^*= \Phi(\lambda_i)$ for some smooth function $\Phi$, the support of the LJSD degenerates. Here, \cref{assump:esd} is essentially equivalent to assuming the empirical spectral distribution of $\Sigmatr$ converges in distribution to some $\mu_\text{train}$, which is a standard assumption in the high-dimensional regression literature \citep[e.g.][]{dobriban2018high, mei2019generalization}; moreover, \citet{wu2020optimal} use an identical specification in the context of linear regression. One special case of note is when there is no shift, i.e. $\Phi$ is the identity, in which case the LJSD degenerates to $\munull$ defined by
\eq{
    \munull(\mathcal{E}) \deq \mutr(\h{x: (x,x)\in\mathcal{E}})\,,\quad \text{i.e.} \quad (\la, \la) \sim \munull \;\;\text{for}\;\;\la \sim \mutr\,
}
for measurable $\mathcal{E}\subseteq\R_+^2$ and any choice of $\mutr$ a distribution over $\R_+$.

As our analysis takes place in the high-dimensional limit, we further define the limiting scales of the training and test covariances as
\eq{
    s\deq\lim_{n_0 \to \infty} \ntr(\Sigmatr) = \mE_{\mu}[\lambda] \quad\text{and}\quad s_* \deq \lim_{n_0 \to \infty} \ntr(\Sigmate) = \mE_{\mu}[r]
}
under the limiting behavior specified in \cref{assump:esd}.

Throughout this paper, we also enforce the following standard regularity assumptions on the activation functions to ensure the existence of the moments and derivatives we compute.
\begin{assumption}
\label{assump:sigma}
    The activation function $\sigma :\mR \to \mR$ is assumed to be differentiable almost everywhere. We assume that, $| \sigma(x) |,| \sigma'(x) | \leq c_0 \exp(c_1 x)$
    for constants $c_0$ and $c_1$.
\end{assumption}

\subsection{A simple family of diatomic distributions}
\label{sec:diatomic}
Because the above assumptions allow for such a general class of covariance structures, it will prove useful for examples and illustrations to consider our results in the context of a simple family of distributions that readily admits straightforward interpretation. 

\begin{definition}
\label{def:diatomic}
For $\alpha \geq 1$ and $\theta \in \R$, we define the family of \textit{$(\alpha, \theta)$-diatomic} LJSDs as 
    \eq{\label{eq:eq_alpha_diatomic}
        \muspike_{\alpha,\theta} \deq \frac{1}{\alpha+1}\delta_{(\alpha, C\alpha^\theta)} + \frac{\alpha}{\alpha+1}\delta_{(\alpha^{-1}, C\alpha^{-\theta})},
    }
    where $C$ is a normalization constant chosen so that $\E_{\muspike_{\alpha,\theta}}[r]=1$. 
    Note that $\muspike_{\alpha,\theta}$ is the limit of 
    \eq{
    \label{eq:eq_alpha_diatomic_finite}
        \Sigmatr_{ij}\deq\begin{cases} \alpha & \text{if } i=j \text{ and } i\leq \lfloor \frac{n_0}{1+\alpha} \rfloor \\ \alpha^{-1} & \text{if } i=j \text{ and } i> \lfloor \frac{n_0}{1+\alpha} \rfloor \\ 0 & \text{if } i\neq j \end{cases}\qquad\text{and}\quad \Sigmate \deq  \frac{1}{\ntr (\Sigmatr^\theta)}\Sigmatr^\theta\,.
    }
\end{definition}
This simple two-parameter family of distributions captures the fast eigenvalue decay observed in many datasets in machine learning, for which the covariance spectra are often dominated by several large eigenvalues and exhibit a long tail of many small eigenvalues~\citep{lee2020finite}. Note that the trivial case of $\alpha=1$ yields an identity covariance with no shift. For the nontrivial setting $\alpha>1$, the exponent $\theta$ parameterizes the strength of the shift in an intuitive way. When $\theta=1$, there is no shift. When $\theta<1$, $\alpha^{\theta}<\alpha$, so the large eigendirections of the training distribution are suppressed in the test distribution, suggesting that the shift makes learning harder. When $\theta>1$, $\alpha^{\theta}>\alpha$, so the large eigendirections of the training distribution are further emphasized in the test distribution, suggesting that the shift makes learning easier. We will return to the notion of shift strength in \cref{sec:lin_reg,,sec:shift_strength}.

\section{Motivating example: linear regression}
\label{sec:lin_reg}
We first consider the relatively simple case of ridgeless linear regression (LR), which will help build some intuition for the more general analysis of random feature regression in \cref{sec:main_thms}. Assuming the labels are generated by the linear model defined above, i.e. $y_i=\beta^\top \x_i /\sqrt{n_0}+\e_i$, the estimator is given by $\hat{\beta} = (X X^\top)^{-1} XY$, and the test risk (see \cref{eq:eq_test_error}) has the following simple form.

\begin{proposition}
    \label{prop:lin_reg}
    For fixed dimension $n_0$ and sample size $m\!>\!n_0\!+\!1$, the test error of LR is given by
    \begin{align}
    \label{eqn:err_lr}
       \err{\Sigmate}^{\text{LR}} = \sigma_{\e}^2 \frac{n_0}{m-n_0-1} \ntr(\Sigmate \Sigmatr^{-1}) = \sigma_{\e}^2 \frac{n_0}{m-n_0-1} \frac{1}{n_0} \sum_{i=1}^{n_0} \frac{\corrs_i}{\lambda_i}.
    \end{align}
    Under \cref{assump:esd}, as $n_0,m \to \infty$ such that $n_0/m\to\phi$, $\err{\Sigmate}^{\text{LR}} \to \err{\mu}^{\text{LR}} = \sigma_{\e}^2\phi/(1-\phi)\E_\mu[r/\la]$.
\end{proposition}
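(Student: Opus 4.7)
The plan is to first derive the finite-dimensional formula directly from the closed form of the least-squares estimator and then pass to the limit using the EJSD convergence in \cref{assump:esd}.

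For the finite-dimensional step, I would begin by substituting the data model $Y = X^\top \beta/\sqrt{n_0} + \bepsilon$ into $\hat\beta = (XX^\top)^{-1}XY$, which yields the clean decomposition $\hat\beta = \beta/\sqrt{n_0} + (XX^\top)^{-1}X\bepsilon$. Consequently the prediction residual on a test point $\x$ is $\hat y(\x) - \beta^\top\x/\sqrt{n_0} = \bepsilon^\top X^\top (XX^\top)^{-1}\x$. Squaring and taking the expectation over $\x\sim\cN(0,\Sigmate)$ gives the quadratic form $\bepsilon^\top X^\top(XX^\top)^{-1}\Sigmate(XX^\top)^{-1}X\bepsilon$, and averaging over $\bepsilon\sim\cN(0,\sigma_\e^2 I_m)$ together with the trace's cyclicity collapses the middle $X^\top\cdot X$ against one copy of the inverse Wishart, leaving $\sigma_\e^2\tr\bigl((XX^\top)^{-1}\Sigmate\bigr)$. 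To average over $X$, I invoke the standard inverse-Wishart identity $\mE\q{(XX^\top)^{-1}}=\Sigmatr^{-1}/(m-n_0-1)$, valid precisely when $m>n_0+1$, producing the claimed formula $\err{\Sigmate}^{\text{LR}} = \sigma_\e^2\,n_0\,\ntr(\Sigmate\Sigmatr^{-1})/(m-n_0-1)$. The equivalent overlap representation then follows by expanding $\Sigmatr^{-1} = \sum_i \lambda_i^{-1}\vv_i\vv_i^\top$ in its eigenbasis and using the definition \eqref{eq:overlap_coeffs} of $r_i = \vv_i^\top\Sigmate\vv_i$.

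For the asymptotic claim, the prefactor $\sigma_\e^2 n_0/(m-n_0-1)$ converges to $\sigma_\e^2\phi/(1-\phi)$ by the definition of $\phi$. For the remaining factor, I would observe that $\frac{1}{n_0}\sum_i r_i/\lambda_i$ is exactly the integral of the continuous function $(\lambda,r)\mapsto r/\lambda$ against the EJSD $\mu_{n_0}$, which converges in distribution to $\mu$ by \cref{assump:esd}. The main obstacle is that $r/\lambda$ is unbounded near $\lambda=0$, so weak convergence alone does not immediately yield convergence of the integral. To handle this, I would combine the operator-norm bound $\norm{\Sigmate}_\infty\leq C$ (giving $r\leq C$ $\mu$-a.s.) with an integrability assumption on $1/\lambda$; concretely, since $\Sigmatr$ is positive definite with $\mE_\mu[r/\lambda]<\infty$ implicit in the statement, a truncation argument (integrating against the bounded functions $(r/\lambda)\wedge M$, passing to the limit, then letting $M\to\infty$ with a uniform tail bound on the smallest eigenvalues of $\Sigmatr$) upgrades weak convergence to convergence of the relevant expectation.

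In short, the proof is a three-line Gaussian/Wishart calculation followed by a routine weak-convergence argument; the only nontrivial ingredient is the inverse-Wishart mean formula, whose finiteness dictates the condition $m>n_0+1$, and the only place where care is required is in justifying convergence of an integral against an unbounded integrand, which is handled by the operator-norm control in \cref{assump:esd}.
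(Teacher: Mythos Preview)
Your proposal is correct and follows essentially the same approach as the paper: both compute the finite-dimensional result via the inverse-Wishart mean formula $\mE[(XX^\top)^{-1}]=\Sigmatr^{-1}/(m-n_0-1)$ (the paper cites von Neumann--Anderson / von Rosen for this) after reducing to $\sigma_\e^2\tr\bigl((XX^\top)^{-1}\Sigmate\bigr)$ by trace cyclicity, and then pass to the proportional limit. Your treatment of the asymptotic step is actually more careful than the paper's, which simply asserts the convergence without discussing the unboundedness of $(\lambda,r)\mapsto r/\lambda$.
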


One immediate question is whether a given shift will increase or decrease the test error relative to $E^{\text{LR}}_\Sigmatr$. While the precise answer is of course determined by the value of $\ntr(\Sigmate \Sigmatr^{-1})$, it useful for the subsequent analysis to develop an understanding of the individual contributions to this term. Similar decompositions of the test error into eigenspaces have proved useful in a variety of other contexts \citep{mel2021theory, advani2020high}. We begin with a specific example in the setting of the finite-dimensional analog of \cref{def:diatomic}. 
\begin{example}
\label{examp:alpha_diatomic}
For the finite form of the $(\alpha,\theta)$-diatomic LJSD defined in Eq.~\eqref{eq:eq_alpha_diatomic_finite}, the test error of LR is given by
\al{
\err{\Sigmate}^{\text{LR}} &= \sigma_{\e}^2 \frac{n_0}{m-n_0-1} \frac{\alpha + w(\alpha^{2\theta-1}-\alpha)}{1 + w(\alpha^{2\theta}-1)}\,\quad\text{for}\quad w = \frac{1}{n_0} \left\lfloor \frac{n_0}{1+\alpha} \right\rfloor\,,
}
and so $\frac{\partial}{\partial \theta} \err{\Sigmate}^{\text{LR}} = \sigma_{\e}^2 \frac{n_0}{m-n_0-1} \frac{2 (1-w) w \alpha^{2 \theta -1}(1-\alpha^2) \log (\alpha )}{(1+w(\alpha^{2\theta}-1))^2}\le 0$, which implies $\err{\Sigmate_1}^{\text{LR}} \le \err{\Sigmate_2}^{\text{LR}}$ whenever $\theta_1 \ge \theta_2$, in accordance with the discussion in \cref{sec:diatomic}. It follows from \cref{eq:eq_alpha_diatomic_finite} that the condition $\theta_1 \ge \theta_2$ not only implies $\ntr(\Sigmate_1 \Sigmatr^{-1}) \le \ntr(\Sigmate_2 \Sigmatr^{-1})$, which is necessary for an ordering of the test error, but also that the ratios of overlap coefficients $r_{i,1}/r_{i,2}$ form a nondecreasing sequence. It is this condition involving all the eigendirections that will generalize to the nonlinear random feature setting in \cref{sec:shift_strength}.
\end{example}
The following proposition captures the essence of these considerations in the context of linear regression.
\begin{proposition}
\label{prop:ordering_lr}
Let $\corrs_{i,1}$ and $\corrs_{i,2}$ denote the overlap coefficients\footnote{Recall the definition of the overlap coefficients in Eq.~\eqref{eq:overlap_coeffs}.} of $\Sigmate_{1}$ and $\Sigmate_{2}$ relative to $\Sigmatr$. If $\tr(\Sigmate_2) \ge \tr(\Sigmate_1)$ and the ratios ${\corrs_{i,1}}/{\corrs_{i,2}}$ form a nondecreasing sequence, then in the setting of \cref{prop:lin_reg}, $E_{\Sigmate_2}^{\text{LR}} \geq E_{\Sigmate_1}^{\text{LR}}$.
\end{proposition}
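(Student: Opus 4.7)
The plan is to use the explicit formula from \cref{prop:lin_reg} and reduce the comparison to a summation-by-parts (Abel) argument. By \cref{prop:lin_reg}, for $k\in\{1,2\}$,
$$E^{\text{LR}}_{\Sigmate_k} = \sigma_{\e}^2\frac{1}{m-n_0-1}\sum_{i=1}^{n_0}\frac{r_{i,k}}{\la_i},$$
so since the prefactor is positive the claim reduces to showing
$$S \deq \sum_{i=1}^{n_0}\frac{r_{i,2}-r_{i,1}}{\la_i} \;\geq\; 0.$$
Two ingredients to prepare. First, because $\{\vv_i\}$ is an orthonormal eigenbasis of $\Sigmatr$, $\sum_i r_{i,k} = \sum_i \vv_i^\top \Sigmate_k \vv_i = \tr(\Sigmate_k)$, so the trace hypothesis is equivalent to $\sum_i(r_{i,2}-r_{i,1})\ge 0$. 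Second, the overlaps are strictly positive since $\Sigmate_k$ is positive definite, so the ratios $t_i\deq r_{i,1}/r_{i,2}$ are well defined and nondecreasing by assumption.

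Next I would exploit the monotonicity of $t_i$ to locate a single sign change in the differences $d_i\deq r_{i,2}-r_{i,1}$. Since $d_i\ge 0 \iff t_i\le 1$ and $t_i$ is nondecreasing, there exists an index $i^*\in\{0,1,\ldots,n_0\}$ such that $d_i\ge 0$ for $i\le i^*$ and $d_i\le 0$ for $i>i^*$. This sign pattern, together with $D_0=0$ and $D_{n_0}\ge 0$ (the trace condition), forces the partial sums $D_i\deq\sum_{j=1}^i d_j$ to be nonnegative for every $i$: they are nondecreasing up to $i^*$ and nonincreasing afterwards, so they cannot dip below $0$.

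Finally I would apply Abel summation to convert $S$ into a nonnegative combination of the $D_i$. Writing $c_i\deq 1/\la_i$, one has
$$S \;=\; \sum_{i=1}^{n_0} d_i\,c_i \;=\; D_{n_0}\,c_{n_0} \;+\; \sum_{i=1}^{n_0-1} D_i\,(c_i-c_{i+1}).$$
Since the $\la_i$ are nondecreasing, $c_i-c_{i+1}\ge 0$, and we have just argued $D_i\ge 0$ and $c_{n_0}>0$, so every term is nonnegative and $S\ge 0$, which is the claim.

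There is no significant analytic obstacle here; the argument is essentially a Chebyshev-type rearrangement. The only subtlety is verifying that the partial sums $D_i$ are nonnegative, which is why I would explicitly pin down the unique sign change in $d_i$ guaranteed by the monotonicity of $r_{i,1}/r_{i,2}$ before invoking summation by parts. Worth noting for the record: the proof does not actually use the ordering of $\lambda_i$ beyond its nondecreasing character, and the argument generalizes verbatim to any positive weights whose reciprocal is nonincreasing in $i$, which is the form in which this lemma is used later for the random feature setting.
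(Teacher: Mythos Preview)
Your proof is correct but takes a genuinely different route from the paper. The paper writes
\[
\err{\Sigmate_1}^{\text{LR}} = \frac{\sigma_{\e}^2 n_0}{m-n_0-1}\,\ntr(\Sigmate_2)\sum_{i=1}^{n_0}\frac{r_{i,2}}{\ntr(\Sigmate_2)}\cdot\frac{r_{i,1}}{r_{i,2}}\cdot\frac{1}{\la_i}
\]
and applies the Harris (FKG) inequality with respect to the probability weights $r_{i,2}/\ntr(\Sigmate_2)$, using that $r_{i,1}/r_{i,2}$ is nondecreasing while $1/\la_i$ is nonincreasing. This yields $\err{\Sigmate_1}^{\text{LR}}\le \tfrac{\ntr(\Sigmate_1)}{\ntr(\Sigmate_2)}\err{\Sigmate_2}^{\text{LR}}\le \err{\Sigmate_2}^{\text{LR}}$ in one stroke. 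By contrast, you work directly with the difference $S=\sum_i (r_{i,2}-r_{i,1})/\la_i$, observe that monotonicity of the ratios forces a single sign change in $d_i=r_{i,2}-r_{i,1}$ and hence nonnegative partial sums, and conclude via Abel summation. Your argument is more elementary and self-contained (no named inequality invoked), and the explicit identification of the sign-change structure is a nice touch. The paper's Harris approach, however, is what the authors actually reuse later: it extends cleanly to continuous LJSDs and arbitrary reweightings (their \cref{lem:harris_Ifunc}), whereas the Abel-summation version, while fine here, would need to be recast as an integration-by-parts argument in the continuous setting.
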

\begin{proof}
    Using the Harris inequality, we see
    \eq{
        \err{\Sigmate_1}^{\text{LR}} =  \frac{\sigma_{\e}^2n_0}{m-n_0-1} \frac{\ntr(\Sigmate_2)}{n_0}  \sum_{i=1}^{n_0} \frac{\corrs_{i,2}}{\ntr(\Sigmate_2)}\frac{\corrs_{i,1}}{\corrs_{i,2}} \frac{1}{\la_i} 
        \leq \frac{\sigma_{\e}^2n_0}{m-n_0-1}  \frac{\ntr(\Sigmate_1)}{\ntr(\Sigmate_2)}  \frac{1}{n_0}   \sum_{i=1}^{n_0}  \frac{\corrs_{i,2}}{\la_i} \leq \err{\Sigmate_2}^{\text{LR}},
    }
    since $1/\la_i$ and ${\corrs_{i,1}}/{\corrs_{i,2}}$ are nonincreasing and nondecreasing in $i$ respectively and $r_{i,2} / \ntr(\Sigmate_2)$ is a discrete distribution over $\ha{1,\ldots,n_0}$.\footnote{For more discussion and a generalization of this result and proof technique, see \cref{lem:harris_Ifunc}.}
\end{proof}
Whereas the $(\alpha,\theta)$-diatomic LJSDs of \cref{examp:alpha_diatomic} explicitly enforce the trace normalizations that ${\tr(\Sigmatr) = \tr(\Sigmate_1) = \tr(\Sigmate_2) = 1}$, \cref{prop:ordering_lr} provides sufficient conditions for the ordering of test errors for non-unit traces. The fact that $\err{\Sigmate}^{\text{LR}}$ scales linearly with the overall scale of $\Sigmate$ is a unique feature of linear regression and does not generalize to the nonlinear random feature setting. We return to this issue in \cref{sec:shift_strength}.

\section{Definition of shift strength}
\label{sec:shift_strength}

Deriving conditions governing whether a shift will hurt or help a model's performance is crucial to building an understanding of covariate shift. Motivated in part by the above results for linear regression, and in part by the results for random feature regression that we present in \cref{sec:b_v_rf}, we introduce the following definition of shift strength, which is a direct generalization of the conditions of \cref{prop:ordering_lr}.

\begin{definition}
    \label{def:hard}
    Let $\mu_1$ and $\mu_2$ be LJSDs with the same marginal distribution over $\lambda$. If the asymptotic overlap coefficients are such that $\E_{\mu_1}[r|\la]/\E_{\mu_2}[r|\la]$ is nondecreasing as a function of $\lambda$ on the support of $\mutr$ and ${\E_{\mu_1}[r]\le\E_{\mu_2}[r]}$, we say $\mu_1$ is \emph{easier} than $\mu_2$ (or $\mu_2$ is \emph{harder} than $\mu_1$), and write $\mu_1 \le \mu_2$. Comparing against the case of no shift, $\munull$, we say $\mu_1$ is \emph{easy} when $\mu_1\leq \munull$ and \emph{hard} when $\mu_1\geq\munull$.
\end{definition}

A priori, there is little reason to hope that such a \emph{model-independent} definition of shift strength would adequately characterize a shift's impact on the total error, bias, or variance of a given model. Even for the relatively simple case of random feature regression, the nonlinear feature maps of \cref{eqn:rf_embedding} would seem to inextricably couple the covariance distribution to the model. Indeed, in general, it is impossible to completely disentangle the distribution from the model (see \cref{sec:counterexamples}).

Nevertheless, as we show in \cref{sec:main_thms}, the coupling between model and shift simplifies considerably in the high-dimensional limit. It is characterized by a handful of constants that depend solely on the overall covariance scale $\E_{\mu}[r]$ and a collection of functionals of $\mu$ that can be controlled. Specifically, when comparing against the case of no shift, i.e. $\mu_2=\munull$, we have $\E_{\munull}[r|\la]=\la$. \cref{def:hard} then places conditions on $\E_{\mu}[r]$ and $\E_{\mu}[r|\la]/\la$, which, when combined with simple constraints on the model, bound how the total error and bias will respond to a shift of a given strength. We develop this analysis in~\cref{sec:b_v_rf}.

Remarkably, it turns out that nearly all of the model dependence can be eliminated by merely normalizing the scales of the covariate distributions, i.e. enforcing $s=s_*$, in which case~\cref{def:hard} provides an essentially \emph{model-independent} definition of shift strength that determines how random feature models respond to shifts of different strength. This observation motivates the following assumption:

\begin{assumption}
\label{assump:scale}
The training and test covariance scales are equal, $\E_{\mu}[\lambda] = \E_{\mu}[r]$, i.e. $s=s_*$.
\end{assumption}

We emphasize that \cref{assump:scale} reflects common practice for many models and data modalities, as preprocessing techniques such as standardization are ubiquitous and many architectural components such as layer- or batch-normalization achieve a similar effect~\citep{goodfellow2016deep,ioffe2015batch,nado2020evaluating}. While we state all of our main results without this assumption, we will at times invoke~\cref{assump:scale} to illustrate how the results simplify when $s=s_*$.

In the expression for the test error for linear regression, \cref{eqn:err_lr}, we saw how the effects of the scale in the test distribution decoupled from the more nuanced relative restructuring of the covariates. As discussed above, this decoupling does not happen for random feature regression. Nevertheless, it is useful to keep in mind these four possibilities for the covariate shift: (1)~$\E_\mu[r|\la] = \la$, which occurs when there is no covariate shift; (2)~$\E_\mu[r|\la] = (s^*/s) \la$ with $s^* \neq s$, which occurs when there is a pure-scale shift; (3)~$\E_\mu[r|\la] \not \propto \la$ with $s^* = s$, which occurs when the covariates are restructured without changing the scale, i.e. in the setting of \cref{assump:scale}; and (4) $\E_\mu[r|\la] \not \propto \la$ with $s^* \neq s$, which occurs for a generic covariate shift. Throughout the paper, we consider these various categories of shift, with certain results specializing to certain cases.

\section{Covariate shift in random feature kernel regression}
\label{sec:main_sec}
In this section, we first present our main results characterizing the bias, variance, and total test error of random feature regression. We then use these results to prove that harder shifts (in the sense of \cref{def:hard}) lead to increased bias and total error, that overparameterization improves performance and robustness under covariate shift, and that linear trends exist between the generalization error on unshifted and shifted distributions.

\subsection{Main results}
\label{sec:main_thms}

Our main result characterizes the high-dimensional limits of the test error, bias, and variance of the nonlinear random feature model of \cref{sec:setup}. Before presenting these results, we must first introduce some additional constants that capture the effect of the nonlinearity $\sigma$. These constants depend only on the overall scale of the covariance. For the training covariance, let $z\sim \cN(0,s)$ and define
\begin{alignat}{4}
\label{eq:constants_train}
\eta &\deq \mathbb{V}[\sigma(z)]\,,\;\; \rho &\deq (\tfrac{1}{s}\mE[z \sigma(z)])^2\,,\;\; \zeta &\deq s\rho\,,\;\;\;\;\text{and}\;\;\;\;\omega\deq s(\eta/\zeta-1)\,.
\end{alignat}
Viewing the constants in \cref{eq:constants_train} as function of $s$, i.e. $\Omega_\sigma:s\mapsto (\eta,\rho,\zeta,\omega)$, we write
\eq{\label{eq:constants_test}
    (\etas, \rhos,\zetas,\omegas) \deq \Omega_\sigma(s_*)
}
for the test covariance scale $s_*$. Similarly, sometimes it will be convenient to compare two different LJSDs $\mu_1$ and $\mu_2$, so we define $(\eta_i, \rho_i,\zeta_i,\omega_i) \deq \Omega_\sigma(s_i)$, where $s_i$ is the test covariance scale for $\mu_i$ for $i\in\h{1,2}$.

We also define $\xi \deq \sqrt{\rhos/\rho}$, which measures the overall scale mismatch between the training and test covariance matrices.
\begin{remark}
    \label{rem:equal_constants}
    Many of our results will simplify considerably under~\cref{assump:scale}, which implies $\etas = \eta$, $\zetas = \zeta$, $\rhos = \rho$, $\omegas = \omega$, and $\xi = 1$.
\end{remark}
\begin{remark}
    Many of our results will also simplify if the activation function is affine. In this case, if $\sigma(z) = az+b$ for $a,b\in\R$, we have $\rho=\rhos=a^2$, $\zeta=\eta=sa^2$, and $\zetas=\etas=s_*a^2$, which imply $\xi=1$ and $\omega=\omegas=0$.
\end{remark}

Finally, the bias, variance, and total error depend on the covariance spectra through two sets of functionals of $\mu$, which we define as
\eq{
\label{eqn:Idefs}
    \I_{a,b}(x) \deq \phi \ \E_\mu\qa{\la^{a} \pa{ \phi + x \la }^{-b} } \quad\text{and}\quad \Ishift_{a,b}(x) \deq \phi \ \E_\mu\qa{ r \la^{a-1} \pa{ \phi + x \la }^{-b} }.
}
\begin{theorem}
\label{thm:main_b_v}
Under \cref{assump:esd,,assump:abs_cts,,assump:sigma}, as $n_0,n_1,m\to\infty$ the test error $\err{\Sigmate}$ converges to ${\err{\mu} = \bias{\mu} + \var{\mu}}$, with the bias $  \bias{\mu}$ and variance $\var{\mu}$ given by
\al{
\label{eqn:main_bias}
    \bias{\mu} &= (1-\xi)^2 s_{*} + 2(1-\xi)\xi \Ishift_{1,1} + \xi^2 \phi \Ishift_{1,2} \qquad\text{and} \\
\label{eqn:main_var}
    \var{\mu} &= -\rho\xi^2 \frac{\psi}{\phi}\frac{\partial x}{\partial \gamma} \bigg(\I_{1,1}(\omega + \phi \I_{1,2})(\omegas+\Ishift_{1,1}) +\frac{\phi^2}{\psi}\gamma\taub \I_{1,2}\Ishift_{2,2} \nonumber\\
&\quad+\gamma\tauone\I_{2,2}(\omegas+\phi\Ishift_{1,2}) + \sigma_\e^2\Big((\omega + \phi \I_{1,2})(\omegas+\Ishift_{1,1}) +\frac{\phi}{\psi}\gamma\taub \Ishift_{2,2}\Big)\bigg),
}
where $x$ is the unique nonnegative real root of $x = \p{1-\gamma\tauone}/\p{\omega + \I_{1,1}}$,
\al{
    \frac{\partial x}{\partial \gamma} &= -\frac{x}{\gamma+\rho \gamma (\tauone \psi/\phi + \taub)(\omega + \phi \I_{1,2})},\label{eq:xprime}\\
    \tauone &= \frac{\sqrt{(\psi -\phi )^2+ 4 x \psi\phi  \gamma/\rho}+\psi -\phi }{2 \psi \gamma}, \\
    \text{and}\quad\taub &= \frac{\phi-\psi}{\phi}\frac{1}{\gamma} + \frac{\psi}{\phi}\tauone\,.
}
\end{theorem}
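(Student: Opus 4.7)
The plan is to use the now-standard Gaussian equivalence principle combined with operator-valued free probability to reduce the theorem to deterministic equivalents for traces of resolvents built from a linearized version of the random feature matrix. Concretely, I would first replace $F = \sigma(WX/\sqrt{n_0})$ by a linear Gaussian equivalent $F_{\mathrm{lin}} = \mu_2\, WX/\sqrt{n_0} + \mu_*\, \Theta$ where $\Theta$ has i.i.d.\ standard Gaussian entries independent of $W,X$, and the Hermite-like coefficients satisfy $\mu_2^2 = \zeta/s$ and $\mu_*^2 = \eta-\zeta$; this is where the parameter $\omega = s(\eta/\zeta-1)$ appears. The analogous replacement for the test features $f = \sigma(W\x/\sqrt{n_0})$ involves coefficients depending on $s_*$, and the ratio $\xi = \sqrt{\rhos/\rho}$ quantifies the mismatch between training- and test-side linearization scales. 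Justifying this substitution at the level of the quadratic-in-$f$ functionals we need follows from the linear-pencil results of Pennington--Worah, P\'ech\'e, and Adlam--Pennington extended to the anisotropic covariance setting under \cref{assump:sigma}.

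Next, I would decompose $\err{\Sigmate}$ per \cref{eq_bias_variance_def}, integrate out the test point using $\x \sim \cN(0,\Sigmate)$, and express both pieces as traces of the form $\ntr(\Sigmate \cdot \mathcal{R})$ where $\mathcal{R}$ is built from $K^{-1}$, $W$, $X$, and $\Sigmatr$. The bias becomes a quadratic form in the signal $\beta$ whose $\beta$-average (using $\beta \sim \cN(0,I_{n_0})$) collapses to expressions involving $\ntr(\Sigmate(I - \cdot))$-type objects, giving the three-term structure $(1-\xi)^2 s_* + 2(1-\xi)\xi \Ishift_{1,1} + \xi^2\phi\Ishift_{1,2}$: the first term is the residual from unmatched linearization scales, the cross term comes from partial cancellation against the signal, and the last term is the standard regression-bias contribution. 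The variance contains an additional $\sigma_\e^2$ label-noise piece together with a $W,X$-fluctuation piece; both reduce to $(\Sigmate, \Sigmatr)$-weighted traces of squared resolvents, which is how derivatives of the Stieltjes transform enter and explain the $\partial x/\partial \gamma$ factor in \cref{eqn:main_var}.

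The core computation is then the deterministic equivalent for $(F_{\mathrm{lin}}^\top F_{\mathrm{lin}}/n_1 + \gamma I)^{-1}$ and its contractions with $X^\top \Sigmate X$-style matrices. Running a cavity / Schur complement argument (or equivalently an operator-valued free convolution) on the block linearization produces a pair of coupled fixed-point equations for the subordination functions $\tauone$ and $\taub$, which upon elimination reduce to the scalar equation $x(\omega + \I_{1,1}(x)) = 1 - \gamma\tauone$; the explicit formulas for $\tauone$ and $\taub$ in terms of $x$ and $\gamma$ follow algebraically, and $\partial x/\partial \gamma$ is obtained by implicit differentiation, matching \cref{eq:xprime}. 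Under these equivalents, the $\Sigmatr$-dependent traces become $\I_{a,b}(x)$ while the $\Sigmate$-dependent traces become $\Ishift_{a,b}(x)$, as defined in \cref{eqn:Idefs}.

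The main obstacle is the third step: tracking two distinct anisotropic covariance structures ($\Sigmatr$ inside the resolvent and $\Sigmate$ only through the outer test-point expectation) while matching the Hermite-coefficient bookkeeping that distinguishes $(\eta,\zeta,\rho,\omega)$ from $(\etas,\zetas,\rhos,\omegas)$. The isotropic-unshifted case is known (\cite{adlam2020neural}), but here one must verify that the subordination relations remain scalar-valued despite the two covariances, and that the variance expression aggregates correctly into the form \cref{eqn:main_var}, including the $\gamma\taub\Ishift_{2,2}$ and $\gamma\tauone\I_{2,2}$ cross terms, which originate from the off-diagonal blocks of the linearized resolvent. Careful accounting of which $\Omega_\sigma$ constants attach to which resolvent factor is what makes $\xi$ appear as an overall squared prefactor on the variance but with a three-term structure in the bias.
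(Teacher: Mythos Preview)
Your proposal is correct and follows essentially the same route as the paper: Gaussian equivalents for $F$ and $f$, decomposition of the test error and bias into trace functionals, and then evaluation of those traces via the linear-pencil / operator-valued free probability machinery, yielding the scalar fixed point for $x$ and the expressions for $\tauone,\taub,\partial x/\partial\gamma$ by implicit differentiation.

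Two technical points you gloss over that the paper treats carefully and that you should be aware of when executing the plan. First, your linearization $F_{\mathrm{lin}}=\mu_2 WX/\sqrt{n_0}+\mu_*\Theta$ has no constant term, which presupposes that $\sigma$ is centered; the paper devotes a full subsection to showing that replacing $F,f$ by their centered versions $\bar F,\bar f$ (with \emph{different} centering constants for train and test scales) leaves $E_2,E_3$ asymptotically unchanged. This is a rank-one Schur-complement argument plus concentration, and in the anisotropic setting it does not follow directly from the isotropic Gegenbauer-based arguments in the literature you cite. Second, for the bias the paper does not compute $\E_{W,X,\e}[\hat y]$ directly but instead uses the independent-copies identity $\bias{\Sigmate}=\E\,\hat y(\x;W_1,X_1,\e_1)\hat y(\x;W_2,X_2,\e_2)+\dotsb$, which produces a separate trace term $E_4$ evaluated by its own linear pencil; this is what cleanly yields the three-term bias structure you describe. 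Your phrase ``$\beta$-average collapses to $\ntr(\Sigmate(I-\cdot))$-type objects'' is pointing at the right destination but the actual mechanism is this replica-like decoupling, not a direct quadratic-form average.
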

We note that $\tau$ is limiting Stieltjes transform of the random feature kernel matrix $\tfrac{1}{n_1}F^\top F$ and $\taub$ is the companion transform, i.e. the limiting Stieltjes transform of $\tfrac{1}{n_1}F F^\top$. Numerical predictions from \cref{thm:main_b_v} can be obtained by first solving the self-consistent equation for $x$ by fixed-point iteration, $x\mapsto \p{1-\gamma\tauone}/\p{\omega + \I_{1,1}}$, and then plugging the result into the remaining terms. \cref{fig:predictions} shows excellent agreement between these asymptotic predictions and finite-size simulations.

\begin{figure}[t]
\centering
\includegraphics[width=\linewidth]{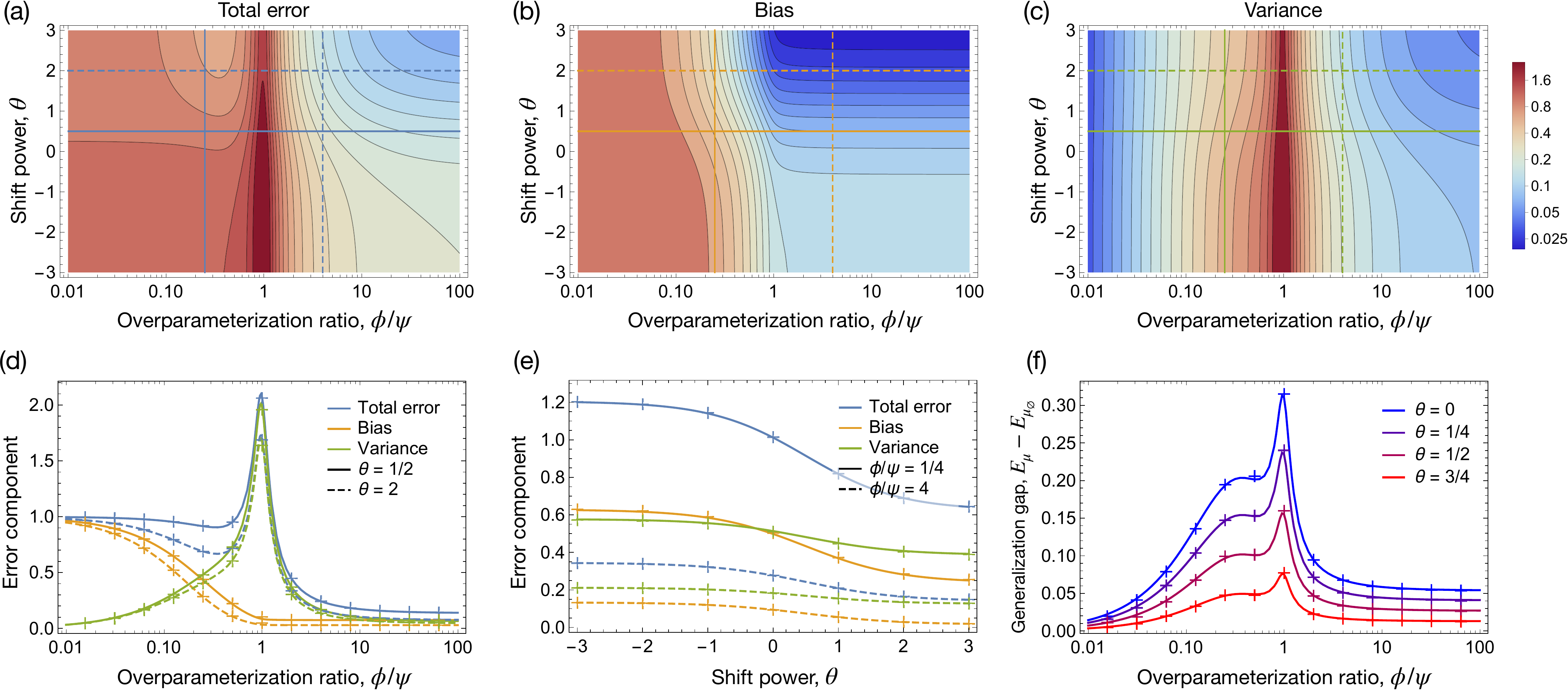}
\caption{Predictions of \cref{thm:main_b_v} as a function of the overparameterization ratio ($\phi/\psi = n_1/m$) and the shift power ($\theta$) for the $(2,\theta)$-diatomic LJSD (\cref{eq:eq_alpha_diatomic}) with ${\phi=n_0/m=0.5}$, $\sigma=\text{ReLU}$, $\gamma=0.001$, and $\sigma_{\e}^2=0.1$. (\textbf{a}) The test error exhibits the characteristic double descent behavior for all shift powers. (\textbf{b}) The bias is a nonincreasing function of $\phi/\psi$ for all shift powers, as in \cref{prop:b}. (\textbf{c}) The variance is the source of the double-descent peak, and is a nonincreasing function of $\phi/\psi$ for all shift powers in the overparameterized regime, as in \cref{prop:v}. In (\textbf{a},\textbf{b}), the total error and bias are nonincreasing functions of $\theta$, as in \cref{prop:error_order}. (\textbf{d}) 1D horizontal slices of (\textbf{a},\textbf{b},\textbf{c}) demonstrate the monotonicity in $\phi/\psi$ predicted by \cref{prop:b,,prop:v}. (\textbf{e}) 1D vertical slices of (\textbf{a},\textbf{b},\textbf{c}) demonstrate the monotonicity in $\theta$ predicted by \cref{prop:error_order} (the variance also appears monotonic, but it need not be in general). (\textbf{f}) The generalization gap between the error on shifted and unshifted distributions is a nonincreasing function of $\phi/\psi$ in the overparameterized regime, as in \cref{prop:gen_gap}. Markers in (\textbf{d},\textbf{e},\textbf{f}) show simulations for $n_0=512$ and agree well with predictions.} 
\label{fig:predictions}
\end{figure}
At times we will find it convenient to consider various limits of \cref{thm:main_b_v}. By carefully expanding $x$ and $\tauone$ for small $\gamma$, it is straightforward to obtain the following corollary, which characterizes the ridgeless limit of~\cref{thm:main_b_v}.
\begin{corollary}
\label{cor:ridgeless_lim}
In the setting of \cref{thm:main_b_v}, as the ridge regularization constant $\gamma \to 0$, $\err{\mu} = \bias{\mu} + \var{\mu}$ with $\bias{\mu}$ given in \cref{eqn:main_bias} and $\var{\mu}$ given by
\begin{align}
\label{eqn:v_ridgeless}
\var{\mu} = \xi^2 \frac{\psi}{|\phi - \psi|}x(\sigma_{\e}^2+\I_{1,1})(\omegas + \Ishift_{1,1}) + \begin{cases}
\xi^2 x\Big(1-\frac{x (\omega-\sigma_{\e}^2)}{1-x^2 \I_{2,2}}\Big)\Ishift_{2,2} & \phi \ge \psi \\
\xi^2 \frac{x^2 \psi \I_{2,2}}{\phi - x^2 \psi \I_{2,2}}\big(\omegas + \phi \Ishift_{1,2}\big) & \phi < \psi \\
\end{cases}\,,
\end{align}
where $x$ is the unique positive real root of $x = {\min(1,\phi/\psi)}/\p{\omega+\I_{1,1}}$.
\end{corollary}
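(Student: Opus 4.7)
The plan is to obtain the corollary by carefully taking the $\gamma \downarrow 0$ limit of the expressions in \cref{thm:main_b_v}. The main calculation is the leading-order asymptotics of the four quantities $\tauone$, $\taub$, $x$, and $\partial x/\partial \gamma$, all of which depend implicitly on $\gamma$ through the closed-form expressions in the theorem. First I would Taylor-expand the square root in the formula for $\tauone$ as $\sqrt{(\psi-\phi)^2 + 4x\psi\phi\gamma/\rho} = |\psi-\phi| + \frac{2x\psi\phi}{\rho|\psi-\phi|}\gamma + O(\gamma^2)$, and split into cases according to the sign of $\psi-\phi$. In the underparameterized regime $\phi > \psi$, one finds $\gamma \tauone \to 0$ and $\gamma \taub \to (\phi-\psi)/\phi$, while in the overparameterized regime $\phi < \psi$ one finds $\gamma \tauone \to 1 - \phi/\psi$ and $\gamma \taub \to 0$. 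Substituting into the fixed-point equation $x(\omega+\I_{1,1}) = 1 - \gamma \tauone$ immediately yields $x \to \min(1,\phi/\psi)/(\omega+\I_{1,1})$, the root claimed in the corollary. From \cref{eq:xprime}, the combination $\gamma(\tauone\psi/\phi + \taub)$ converges to $|\phi-\psi|/\phi$ in both regimes, giving $\partial x/\partial \gamma \to -x\phi / [\rho|\phi-\psi|(\omega+\phi\I_{1,2})]$.

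Second, I would plug these limits into the bias and variance formulas of \cref{thm:main_b_v}. The bias \eqref{eqn:main_bias} has no explicit $\gamma$ dependence and only enters through $x$ inside the $\Ishift_{a,b}(x)$ factors, so its form is preserved verbatim, now evaluated at the ridgeless $x$. For the variance, the prefactor $-\rho \xi^2 (\psi/\phi)(\partial x/\partial \gamma)$ simplifies to $\xi^2 \psi x / [|\phi-\psi|(\omega+\phi\I_{1,2})]$. The four terms inside the bracket of \eqref{eqn:main_var} then collapse according to the case: when $\phi > \psi$ the $\gamma\tauone$ term vanishes while the $\gamma\taub$ terms contribute $(\phi-\psi)/\phi$ factors; when $\phi < \psi$ the roles switch. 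Grouping the surviving pieces gives a $(\sigma_\e^2+\I_{1,1})(\omegas+\Ishift_{1,1})$ contribution paired with the prefactor (producing the first summand of \cref{eqn:v_ridgeless}), plus a residual $\Ishift_{2,2}$ or $(\omegas+\phi\Ishift_{1,2})$ contribution in the two respective regimes.

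To reconcile the residual with the compact form in \cref{eqn:v_ridgeless}, I would use the purely algebraic identity
\begin{equation*}
\phi \I_{1,2}(x) + x \I_{2,2}(x) = \I_{1,1}(x),
\end{equation*}
which follows by expanding $\la(\phi + x\la)^{-2}\cdot(\phi + x\la) = \la(\phi+x\la)^{-1}$ under the expectation in \eqref{eqn:Idefs}. This identity lets one rewrite $\omega + \phi\I_{1,2}$ as $1/x - x\I_{2,2}$ in the $\phi\ge\psi$ case and as $\phi/(x\psi) - x\I_{2,2}$ in the $\phi<\psi$ case, which turns the two residual contributions into the $1 - x(\omega-\sigma_\e^2)/(1-x^2\I_{2,2})$ and $x\psi\I_{2,2}/(\phi - x^2\psi\I_{2,2})$ factors stated in the corollary.

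The main obstacle I anticipate is bookkeeping the case split between $\phi>\psi$ and $\phi<\psi$ rather than any deep analytic issue: different terms in the variance survive the limit in each regime and care is needed to track which $\gamma\tauone$ or $\gamma\taub$ factor contributes a finite constant versus vanishing. The boundary point $\phi=\psi$ requires a brief separate verification, since there $\tauone$ scales as $\sqrt{1/\gamma}$ rather than $1/\gamma$, so the limit must be interpreted as a one-sided limit from either side; the corollary's formula with $|\phi-\psi|$ in the denominator makes clear that $V_\mu$ diverges in this interpolation limit, as expected for ridgeless regression.
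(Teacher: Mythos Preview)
Your approach is correct and matches the paper's own proof exactly: expand $\tauone$ and $\taub$ for small $\gamma$, split on the sign of $\phi-\psi$, deduce the ridgeless $x$ and $\partial x/\partial \gamma$, and substitute into \eqref{eqn:main_var} using the raising/lowering identity $\I_{1,1} = \phi\I_{1,2} + x\I_{2,2}$. One minor slip: you have the regime names reversed---in this paper's conventions $\phi > \psi$ (i.e.\ $n_1 > m$) is the \emph{overparameterized} regime, not the underparameterized one---but this is purely a labeling issue and does not affect the mathematics.
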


As a consistency check, taking $\sigma(x) = x$ and $\psi\to 0$ in \cref{cor:ridgeless_lim} yields an expression for the test risk of ridgeless linear regression that agrees with the results of \citet{hastie2019surprises} and with the asymptotic form of \cref{prop:lin_reg} (see \cref{app:limits}).

Similarly, the limit of infinite overparameterization can be obtained by expanding~\cref{thm:main_b_v} for small $\psi$ (or large $\phi/\psi$ with fixed $\phi$):
\begin{corollary}
\label{cor:infinite_overparameterization}
In the setting of \cref{thm:main_b_v}, as the overparameterization ratio $\phi/\psi\to \infty$, ${\err{\mu} = \bias{\mu} + \var{\mu}}$ with $\bias{\mu}$ given in \cref{eqn:main_bias} and $\var{\mu}$ given by
\al{
\label{eq:var_infinite_overparameterization}
\var{\mu} = \xi^2 \frac{\sigma_{\e}^2 + \phi \I_{1,2}}{\gammaeff + \phi \I_{1,2}}x \Ishift_{2,2}\,,
}
where $x$ is the unique positive real root of $x=1/(\gammaeff + \I_{1,1})$, and $\gammaeff = \gamma/\rho + \omega$.
\end{corollary}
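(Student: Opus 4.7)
The plan is to take the $\psi\to 0$ (equivalently $\phi/\psi\to\infty$) limit of the expressions in \cref{thm:main_b_v} directly. Since the bias formula (\ref{eqn:main_bias}) has no explicit $\psi$ dependence, it passes through unchanged, so all of the real work lies in the variance, where a vanishing prefactor must be combined with a diverging bracket.

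\textbf{Step 1 (expansion of $\tauone$ and $\taub$).} A first-order Taylor expansion of the square root in the definition of $\tauone$ gives $\sqrt{(\psi-\phi)^2 + 4x\psi\phi\gamma/\rho} = \phi - \psi + 2x\psi\gamma/\rho + O(\psi^2)$, and so $\tauone = x/\rho + O(\psi)$ after the $O(1)$ pieces of the numerator cancel. Feeding this into $\taub = (\phi-\psi)/(\phi\gamma) + (\psi/\phi)\tauone$ yields $\gamma\taub = 1 + O(\psi)$.

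\textbf{Step 2 (limiting fixed point and $\partial x/\partial\gamma$).} Substituting $\gamma\tauone\to \gamma x/\rho$ into $x = (1-\gamma\tauone)/(\omega+\I_{1,1})$ produces $x(\gammaeff+\I_{1,1}) = 1$ with $\gammaeff = \gamma/\rho+\omega$, which is the limiting self-consistent equation; uniqueness of the positive root follows from the monotonicity of $x\mapsto 1/(\gammaeff+\I_{1,1}(x))$. In (\ref{eq:xprime}), $\tauone\psi/\phi\to 0$ and $\gamma\taub\to 1$, so the denominator tends to $\gamma+\rho(\omega+\phi\I_{1,2}) = \rho(\gammaeff+\phi\I_{1,2})$, giving $\partial x/\partial\gamma\to -x/(\rho(\gammaeff+\phi\I_{1,2}))$.

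\textbf{Step 3 (vanishing prefactor meets diverging bracket).} The outer prefactor $-\rho\xi^2(\psi/\phi)\,\partial x/\partial\gamma$ is $O(\psi)$, so only bracket terms in (\ref{eqn:main_var}) that diverge as $1/\psi$ survive. Inspection shows exactly two such terms: $(\phi^2/\psi)\gamma\taub\,\I_{1,2}\Ishift_{2,2}$ and $\sigma_\e^2(\phi/\psi)\gamma\taub\,\Ishift_{2,2}$, which combine (using $\gamma\taub\to 1$) to $(\phi/\psi)(\phi\I_{1,2}+\sigma_\e^2)\Ishift_{2,2}$. Multiplying the limit of the prefactor against this leading bracket piece gives
\[
\var{\mu} \;\longrightarrow\; \frac{\xi^2 x}{\phi(\gammaeff+\phi\I_{1,2})}\cdot\phi(\phi\I_{1,2}+\sigma_\e^2)\Ishift_{2,2} \;=\; \xi^2\,\frac{\sigma_\e^2+\phi\I_{1,2}}{\gammaeff+\phi\I_{1,2}}\,x\,\Ishift_{2,2},
\]
which is exactly (\ref{eq:var_infinite_overparameterization}). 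All remaining bracket terms are $O(1)$ and are killed by the $O(\psi)$ prefactor.

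The principal obstacle is the delicate cancellation in Step 1: the leading $O(1)$ piece of the square root, $\phi-\psi$, is exactly offset by the additive $\psi-\phi$, so $\tauone$ is determined entirely by the $O(\psi)$ correction inside the square root; expanding the square root to leading order only would give a $0/0$ form or an incorrect finite limit. Once that subleading expansion is handled cleanly, the remainder is careful bookkeeping: pairing the $O(\psi)$ prefactor against the two $O(1/\psi)$ bracket contributions and verifying that all other bracket terms are finite and so drop out.
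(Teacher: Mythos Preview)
Your proposal is correct and follows essentially the same route as the paper: both take $\psi\to 0$ in \cref{thm:main_b_v}, compute the limits $\tauone\to x/\rho$ and $\taub\to 1/\gamma$ (equivalently $\gamma\taub\to 1$), derive the limiting self-consistent equation $x=1/(\gammaeff+\I_{1,1})$ and the limit of $\partial x/\partial\gamma$, and then substitute into (\ref{eqn:main_var}). The paper states these limits tersely and says the result ``follows immediately''; your Step~3, which explicitly pairs the $O(\psi)$ prefactor against the two $O(1/\psi)$ bracket terms, makes the cancellation mechanism more transparent but is exactly the same computation.
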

\begin{remark}
\label{rem:ridge}
Under~\cref{assump:scale}, the expressions in \cref{cor:infinite_overparameterization} are identical to those of linear ridge regression with ridge constant equal to $\gammaeff$.
\end{remark}
\cref{cor:infinite_overparameterization} highlights how the nonlinearity $\sigma$ acts as an implicit regularizer through the constant $\omega$, an effect described previously for isotropic data without covariate shift in \citet{bartlett2021deep}. To the best of our knowledge, for nontrivial covariate shift, even the linear ridge regression limit ($\omega\to 0$) of \cref{cor:infinite_overparameterization} has not appeared in previous work. In the absence of covariate shift, \cref{cor:infinite_overparameterization} does yield expressions that agree with existing results for ridge regression, as we show in \cref{app:ridge_regression_connect}.
\subsection{Harder shifts increase the bias and test error}
\label{sec:test_error_rf}
In~\cref{sec:lin_reg}, we found that the test error of linear regression increases in response to harder shifts, as defined by~\cref{def:hard}. The same is true for random feature regression, provided that the label noise $\sigma_{\e}^2$ is not too large and that the model-dependent constants in~\cref{eq:constants_train,eq:constants_test} respect the ordering of scales. As a matter of notation, recall that we use the subscripts ``$_1$'' to denote parameters (i.e. $\eta_1, \zeta_1, \xi_1)$ associated with $\mu_1$ and the subscript ``$_2$'' to denote parameters (i.e. $\eta_2, \zeta_2, \xi_2)$ associated with $\mu_2$.

\begin{proposition}
    \label{prop:error_order}
    Consider two LJSDs $\mu_1$ and $\mu_2$ such that $\mu_1 \le \mu_2$. Then, in the setting of \cref{thm:main_b_v},
    \begin{align}
        \bias{\mu_1} \le \bias{\mu_2} &\quad\text{if}\quad \xi_2 \le \xi_1 \le 1\label{eqn:bias_xi}\\
        \err{\mu_1} \le \err{\mu_2} &\quad\text{if}\quad \xi_2 \le \xi_1 \le 1\le\zeta_2/\zeta_1 \le \eta_2/\eta_1 \quad \text{and}\quad \sigma_{\e}^2 \le \omega\label{eqn:error_xi}\,.
    \end{align}
\end{proposition}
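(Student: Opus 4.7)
The hypothesis $\mu_1\le\mu_2$ forces $\mu_1$ and $\mu_2$ to share the same $\lambda$-marginal $\mutr$, so every object in \cref{thm:main_b_v} that depends on $\mu$ only through this marginal---the self-consistent scalar $x$, the Stieltjes transforms $\tauone$ and $\taub$, the derivative $\partial x/\partial\gamma$, and the functionals $\I_{a,b}$---coincides for the two distributions. Writing $r_i(\lambda)\deq\E_{\mu_i}[r\mid\lambda]$, the partial order reduces to the single-variable monotonicity that $r_1/r_2$ is nondecreasing on $\mathrm{supp}(\mutr)$ together with $s_{*,1}\le s_{*,2}$, while the starred model constants $(\eta_i,\rho_i,\zeta_i,\omega_i)$ differ only through $\Omega_\sigma(s_{*,i})$. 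The key simplifier for the bias is the completed-square identity
\eq{
\bias{\mu} \;=\; \E_{\mu}\qB{\,r\,g_\xi(\lambda)^2\,},\qquad g_\xi(\lambda)\deq (1-\xi)+\frac{\xi\phi}{\phi+x\lambda},
}
which follows from substituting $s_*=\E_\mu[r]$, $\Ishift_{1,1}=\phi\,\E_\mu[r(\phi+x\lambda)^{-1}]$, and $\Ishift_{1,2}=\phi\,\E_\mu[r(\phi+x\lambda)^{-2}]$ into~\eqref{eqn:main_bias} and recognizing the integrand as the expectation of $r$ against a perfect square.

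For $\xi\in[0,1]$ the function $g_\xi$ is nonnegative, nonincreasing in $\lambda$, and nonincreasing in $\xi$, so $\xi_2\le\xi_1\le 1$ gives $g_{\xi_2}\ge g_{\xi_1}\ge 0$ pointwise and hence $g_{\xi_2}^2\ge g_{\xi_1}^2$. Decomposing
\eq{
\bias{\mu_2}-\bias{\mu_1}\;=\;\E_{\mutr}\qB{r_2\,(g_{\xi_2}^2-g_{\xi_1}^2)}\;+\;\E_{\mutr}\qB{(r_2-r_1)\,g_{\xi_1}^2},
}
the first summand is pointwise nonnegative. For the second I pass to the probability measure $d\nu\deq (r_2/s_{*,2})\,d\mutr$: then $1-r_1/r_2$ is nonincreasing in $\lambda$ (since $r_1/r_2$ is nondecreasing) and $g_{\xi_1}^2$ is nonincreasing in $\lambda$, so the Harris/Chebyshev association inequality yields $\E_\nu[(1-r_1/r_2)\,g_{\xi_1}^2]\ge \E_\nu[1-r_1/r_2]\,\E_\nu[g_{\xi_1}^2]=(1-s_{*,1}/s_{*,2})\,\E_\nu[g_{\xi_1}^2]\ge 0$. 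This is the natural continuous analog of the Harris step used in the proof of \cref{prop:ordering_lr} and establishes~\eqref{eqn:bias_xi}.

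Since the bias ordering is now in hand, the error claim~\eqref{eqn:error_xi} reduces to showing $\var{\mu_2}\ge\var{\mu_1}$. Using \cref{thm:main_b_v}, I would write $\var{\mu_i}=\rho\,\xi_i^2\,(\psi/\phi)\,|\partial x/\partial\gamma|\,T_i$, where $T_i$ is an affine functional of $(\omega_{*,i},\Ishift_{1,1;i},\Ishift_{1,2;i},\Ishift_{2,2;i})$ with nonnegative $\mutr$-only coefficients. The identities $\xi_i^2 s_{*,i}=\zeta_i/\rho$ and $\xi_i^2\omega_{*,i}=(\eta_i-\zeta_i)/\rho$, combined with $\zeta_2\ge\zeta_1$ and $\eta_2/\eta_1\ge\zeta_2/\zeta_1$, yield $\xi_2^2 s_{*,2}\ge\xi_1^2 s_{*,1}$ and $\xi_2^2\omega_{*,2}\ge\xi_1^2\omega_{*,1}$; repeating the Harris/Chebyshev step above with $g_{\xi_1}^2$ replaced by the nonincreasing integrand $(\phi+x\lambda)^{-b}$ gives $\Ishift_{1,b;1}/s_{*,1}\le\Ishift_{1,b;2}/s_{*,2}$, hence $\xi_1^2\Ishift_{1,b;1}\le\xi_2^2\Ishift_{1,b;2}$ for $b\in\{1,2\}$. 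The main obstacle is the $\Ishift_{2,2}$ contribution, since its integrand $\lambda/(\phi+x\lambda)^2$ is not monotone in $\lambda$ and so Chebyshev fails directly. I plan to bypass this via the algebraic identity $\Ishift_{2,2}=(\Ishift_{1,1}-\phi\,\Ishift_{1,2})/x$, valid because $\tfrac{x\lambda}{(\phi+x\lambda)^2}=\tfrac{1}{\phi+x\lambda}-\tfrac{\phi}{(\phi+x\lambda)^2}$, which re-expresses $T_i$ as an affine form in the already-ordered scaled quantities. The hypothesis $\sigmaeps\le\omega$ is then used to keep the noise-proportional and signal-proportional summands of $T_i$ aligned in sign, so that the ordered pieces add rather than cancel; together with substituting the self-consistent relation $x=(1-\gamma\tauone)/(\omega+\I_{1,1})$ into the coefficients involving $\gamma\tauone$ and $\gamma\taub$ to tame the remaining $\gamma$-dependence, this sign bookkeeping is the delicate step that completes the argument.
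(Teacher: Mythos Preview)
Your bias argument is correct and matches the paper's in spirit: both recognize the completed-square structure $\bias{\mu}=\E_\mu[r\,g_\xi(\lambda)^2]$ and combine the pointwise monotonicity of $g_\xi$ in $\xi$ (on $[0,1]$) with a Harris-type step to handle the change from $r_1$ to $r_2$. The paper organizes the two steps in the reverse order (first replace $(\Ishift_{1,a})_1$ by $(\Ishift_{1,a})_2$ via the Harris inequality stated as $(\Ishift_{1,a})_2/s_2\ge(\Ishift_{1,a})_1/s_1$, then compare $g_{\xi_2}^2$ to $g_{\xi_1}^2$ under $\mu_2$), but the content is the same.

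For the error ordering, however, your reduction ``since the bias ordering is in hand, it suffices to show $\var{\mu_2}\ge\var{\mu_1}$'' is a genuine gap: under the hypotheses of~\eqref{eqn:error_xi} the variance \emph{need not} be ordered. The paper says this explicitly (Section~7.2 and Fig.~6): even with equal scales (so all of $\xi_2\le\xi_1\le 1\le\zeta_2/\zeta_1\le\eta_2/\eta_1$ hold with equality) and $\sigma_\e^2<\omega$, the variance can increase for easier shifts, and the error ordering survives only because the bias decrease dominates. Your planned identity $\Ishift_{2,2}=(\Ishift_{1,1}-\phi\,\Ishift_{1,2})/x$ is the right algebraic move, but applied to the variance alone it leaves a \emph{negative} coefficient on $\Ishift_{1,2}$ that no amount of ``sign bookkeeping'' with $\sigma_\e^2\le\omega$ can repair.

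The paper's proof does not separate bias and variance. It writes the full error as
\[
\err{\mu}\;=\;(1-\xi)^2 s_*+2(1-\xi)\xi\,\Ishift_{1,1}+\xi^2\bigl(C_1\,\omegas+C_2\,\Ishift_{1,1}+C_3\,\Ishift_{1,2}\bigr),
\]
where $C_1,C_2,C_3$ depend only on the $\lambda$-marginal. The crucial point is that the bias's own $\xi^2\phi\,\Ishift_{1,2}$ term is folded into $C_3$; after the $\Ishift_{2,2}$ substitution and using $\partial x/\partial\gamma=-x\big/\bigl(\gamma+\rho\gamma(\tfrac{\psi}{\phi}\tauone+\taub)(\omega+\phi\I_{1,2})\bigr)$, this bias contribution is exactly what converts the raw variance coefficient into
\[
C_3\;=\;-\rho\gamma\,\frac{\partial x}{\partial\gamma}\Bigl(\psi\tauone\I_{2,2}+\tfrac{\phi\taub}{x}(\omega-\sigma_\e^2)+\tfrac{\phi}{\rho x}\bigl(1+\rho\tauone\tfrac{\psi}{\phi}(\omega+\phi\I_{1,2})\bigr)\Bigr)\;\ge\;0,
\]
where the hypothesis $\sigma_\e^2\le\omega$ is used precisely for the middle summand. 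With $C_1,C_2,C_3\ge0$ in hand, the inequalities $\xi_2^2 s_2\ge\xi_1^2 s_1$, $\omega_2/s_2\ge\omega_1/s_1$ (from $1\le\zeta_2/\zeta_1\le\eta_2/\eta_1$), $(\Ishift_{1,a})_2/s_2\ge(\Ishift_{1,a})_1/s_1$, and $(\Ishift_{1,1})_2\le s_2$ finish the comparison. So the fix to your outline is: do not split off the bias---carry $\xi^2\phi\,\Ishift_{1,2}$ into the affine form and verify $C_3\ge0$ for the \emph{total} error.
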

In~\cref{sec:counterexamples}, we thoroughly investigate the consequences of the above conditions on the model-dependent constants. Here we simply note that, when the training and test covariance scales are the same, \cref{rem:equal_constants} implies that the conditions on the constants in~\cref{prop:error_order} are automatically satisfied and we obtain the following corollary.
\begin{corollary}
    \label{cor:error_order}
    Consider two LJSDs satisfying \cref{assump:scale} such that $\mu_1 \le \mu_2$. Then, in the setting of \cref{thm:main_b_v}, $\bias{\mu_1} \leq \bias{\mu_2}$ and, if $\sigma_{\e}^2\le \omega$, $\err{\mu_1} \leq \err{\mu_2}$.
\end{corollary}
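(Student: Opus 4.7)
The plan is to derive \cref{cor:error_order} as an immediate specialization of \cref{prop:error_order} by verifying that, under \cref{assump:scale}, the hypotheses on the nonlinearity-dependent scale constants $\xi_i$, $\zeta_i$, $\eta_i$ collapse to trivial equalities. No part of the analytic machinery used to prove \cref{prop:error_order} needs to be reopened; the corollary is essentially a translation statement.

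First I would unpack the assumptions. Because $\mu_1$ and $\mu_2$ share the marginal $\mutr$ over $\lambda$ by the definition of $\mu_1 \leq \mu_2$ (see \cref{def:hard}), and because each individually satisfies \cref{assump:scale}, we have $s_{*,i} = \E_{\mu_i}[r] = \E_{\mu_i}[\lambda] = s$ for $i \in \{1,2\}$. Thus the training scale and both test covariance scales coincide.

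Next I would propagate this equality of scales through the map $\Omega_\sigma$ defined in \cref{eq:constants_train}, which depends only on the scalar $s$. Applying it to the common value $s_1 = s_2 = s$ yields $(\eta_i,\rho_i,\zeta_i,\omega_i) = (\eta,\rho,\zeta,\omega)$ for each $i \in \{1,2\}$, and in particular $\xi_i = \sqrt{\rho_i/\rho} = 1$. This is exactly the content of \cref{rem:equal_constants}.

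Finally I would substitute these identities into the hypotheses of \cref{prop:error_order}. The bias condition $\xi_2 \leq \xi_1 \leq 1$ reduces to $1 \leq 1 \leq 1$, and the error condition $\xi_2 \leq \xi_1 \leq 1 \leq \zeta_2/\zeta_1 \leq \eta_2/\eta_1$ reduces to $1 \leq 1 \leq 1 \leq 1 \leq 1$; both hold trivially. The only remaining hypothesis, $\sigma_\e^2 \leq \omega$, is precisely the condition imposed in the corollary statement. Invoking \cref{prop:error_order} then yields $\bias{\mu_1} \leq \bias{\mu_2}$ unconditionally and $\err{\mu_1} \leq \err{\mu_2}$ under the stated noise condition. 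There is essentially no technical obstacle here: the substantive work of bounding the functionals $\I_{a,b}$ and $\Ishift_{a,b}$ against the partial order $\mu_1 \leq \mu_2$ is already packed into the proof of \cref{prop:error_order} (where a Harris-type inequality in the spirit of \cref{prop:ordering_lr} would do the job), and the role of \cref{cor:error_order} is merely to record that the scale-preprocessing assumption removes every extraneous side condition.
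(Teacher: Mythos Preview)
Your proposal is correct and matches the paper's own argument exactly: the paper derives \cref{cor:error_order} by noting that \cref{rem:equal_constants} (i.e., $s_1 = s_2 = s$ under \cref{assump:scale}) forces $\xi_1 = \xi_2 = 1$ and $\eta_1 = \eta_2$, $\zeta_1 = \zeta_2$, so the side conditions of \cref{prop:error_order} hold trivially. There is nothing to add.
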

When the scale of the training and test distributions are equal, \cref{cor:error_order} shows that \cref{def:hard} provides an essentially model-independent condition to determine the impact of covariate shift on the test error. Interestingly, both the bias (which arises from both regularization and model misspecification) and the total error (which has additional contributions from the randomness in $W$, $X$, and $\e$) respond in tandem to a covariate shift. This behavior is illustrated in~\cref{fig:predictions} for the $(\alpha,\theta)$-diatomic LJSD. Following the vertical lines upward in (a), (b), or the x-axis rightward in (e) yields easier shifts and a corresponding decrease in the bias and total error.

As we discuss in~\cref{sec:counterexamples}, even when \cref{assump:scale} and $\sigma_{\e}^2 \le \omega$ hold, the variance need not be ordered. The underlying obstacle is a lack of ordering on the $\Ishift_{2,2}$ functionals in general; however, in the special case of a \emph{pure-scale} shift, this ordering is restored, and we can derive conditions that guarantee that the variance is also ordered.
\begin{proposition}
\label{prop:pure_scale}
Consider two LJSDs $\mu_1$ and $\mu_2$ such that $\E_{\mu_i}[r|\la] = s_{i}\la$, i.e. a pure-scale shift by $s_{i}$, with $\mu_1\leq\mu_2$. Then $s_1\le s_2$ and, in the setting of \cref{thm:main_b_v},
    \begin{align}
        \bias{\mu_1} \le \bias{\mu_2} &\quad\text{if}\quad \xi_2 \le \xi_1 \le 1\label{eq:bias_pure_scale}\\
        \var{\mu_1} \le \var{\mu_2} &\quad\text{if}\quad 1\le\zeta_2/\zeta_1 \le \eta_2/\eta_1\label{eq:variance_pure_scale}\\
        \err{\mu_1} \le \err{\mu_2} &\quad\text{if}\quad \xi_2 \le \xi_1 \le 1\le\zeta_2/\zeta_1 \le \eta_2/\eta_1\,.
    \end{align}
\end{proposition}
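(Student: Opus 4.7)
The plan is to exploit the pure-scale factorization of the $\Ishift$-functionals to split the variance into a linear combination whose $i$-dependence is precisely governed by the quantities appearing in the hypothesis. Let $\bar s\deq\E_\mu[\la]$ denote the training scale. Pulling the conditional expectation through \cref{eqn:Idefs} gives, for each $\mu_i$, the identity $\phi\,\E_{\mu_i}\qa{r\la^{a-1}(\phi + x\la)^{-b}} = s_i\,\I_{a,b}(x)$, and in particular $\E_{\mu_i}[r] = s_i\bar s$; crucially, $\I_{a,b}$ and $\bar s$ do not depend on $i$. Under pure scale the conditional-expectation ratio $\E_{\mu_1}[r|\la]/\E_{\mu_2}[r|\la] = s_1/s_2$ is constant in $\la$, so the hypothesis $\mu_1\le\mu_2$ of \cref{def:hard} collapses to $s_1\le s_2$, establishing the first claim. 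The bias assertion is then a direct instantiation of the bias ordering of \cref{prop:error_order}, whose hypotheses $\mu_1\le\mu_2$ and $\xi_2\le\xi_1\le 1$ are in force.

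Next I would substitute $\Ishift_{a,b} = s_i\I_{a,b}$ and $\omegas = \omega_i$ into \cref{eqn:main_var} and collect the bracket by its linear dependence on the two $i$-dependent scalars $\omega_i$ and $s_i$. A direct computation yields the factorization
\als{
    \var{\mu_i} = C\,\xi_i^2\,\p{\omega_i A + s_i B},
}
where $C = -\rho(\psi/\phi)(\partial x/\partial\gamma)$ is nonnegative by \cref{eq:xprime}, and
\als{
    A &= (\omega + \phi\I_{1,2})(\I_{1,1} + \sigma_\e^2) + \gamma\tauone\I_{2,2},\\
    B &= \I_{1,1}^2(\omega + \phi\I_{1,2}) + \tfrac{\phi^2}{\psi}\gamma\taub\I_{1,2}\I_{2,2} + \gamma\tauone\phi\I_{1,2}\I_{2,2} + \sigma_\e^2\qa{\I_{1,1}(\omega + \phi\I_{1,2}) + \tfrac{\phi}{\psi}\gamma\taub\I_{2,2}}
}
are manifestly nonnegative and, crucially, independent of $i$.

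The key algebraic step is to convert $(\xi_i^2\omega_i,\xi_i^2 s_i)$ into the hypothesis variables $(\eta_i,\zeta_i)$. Using $\xi_i^2 = \rho_i/\rho$, $\zeta_i = s_i\bar s\,\rho_i$, and $\omega_i = \eta_i/\rho_i - s_i\bar s$, I read off $\xi_i^2\omega_i = (\eta_i - \zeta_i)/\rho$ and $\xi_i^2 s_i = \zeta_i/(\bar s\,\rho)$, which gives
\als{
    \var{\mu_i} = \frac{C}{\rho}\qa{(\eta_i - \zeta_i)\,A + \tfrac{1}{\bar s}\,\zeta_i\,B}.
}
Stein's lemma gives $\E[z\sigma(z)] = s\,\E[\sigma'(z)]$ for $z\sim\cN(0,s)$, and Cauchy--Schwarz applied to $s\E[\sigma'(z)] = \E[(\sigma(z)-\E\sigma(z))z]$ then yields $\zeta\le\eta$ at every positive scale; in particular $\eta_i - \zeta_i\ge 0$. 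The hypothesis $1\le\zeta_2/\zeta_1\le\eta_2/\eta_1$ is equivalent to $\zeta_1\le\zeta_2$ together with $\eta_1/\zeta_1\le\eta_2/\zeta_2$, so writing $\eta_i - \zeta_i = \zeta_i(\eta_i/\zeta_i - 1)$ as the product of two nonnegative nondecreasing sequences gives $\eta_1 - \zeta_1\le\eta_2 - \zeta_2$. Combined with $\zeta_1\le\zeta_2$ and the nonnegativity of $A, B$, this implies $\var{\mu_1}\le\var{\mu_2}$, and the error ordering then follows by summing the bias and variance orderings.

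The main obstacle is the factorization $\var{\mu_i} = C\xi_i^2(\omega_i A + s_i B)$ itself: it is essential that upon substitution, every $i$-dependent piece of \cref{eqn:main_var} localizes into the two scalars $\omega_i$ and $s_i$ with common prefactor $\xi_i^2$, and that the resulting coefficients $A$, $B$, $C$ be manifestly nonnegative and $i$-independent. The subsequent identity $\xi_i^2\omega_i = (\eta_i-\zeta_i)/\rho$ is what exposes $(\zeta_i,\eta_i/\zeta_i)$ as the natural order parameter for the variance under pure-scale shifts, matching the hypothesis of the proposition. It also explains why, in contrast to \cref{prop:error_order}, no condition of the form $\sigma_\e^2\le\omega$ is required here: the label-noise contribution is absorbed into the positive, $i$-independent prefactors $A$ and $B$ and therefore has no bearing on the $i$-ordering of the variance.
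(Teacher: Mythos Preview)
Your proof is correct, and the variance argument is essentially the paper's: both factor $V_{\mu_i}$ as $\xi_i^2$ times a nonnegative linear combination of $i$-dependent scalars with $i$-independent coefficients, then translate the hypothesis $1\le\zeta_2/\zeta_1\le\eta_2/\eta_1$ into monotonicity of those scalars. Your execution is tidier because you substitute the pure-scale identity $\Ishift_{a,b}=s_i\I_{a,b}$ at the outset, collapsing the paper's three-term decomposition $C_1(\omegas+\Ishift_{1,1})+C_2\Ishift_{2,2}+C_3(\omegas+\phi\Ishift_{1,2})$ to just $\omega_iA+s_iB$; the change of variables $\xi_i^2\omega_i=(\eta_i-\zeta_i)/\rho$ and $\xi_i^2s_i=\zeta_i/(\bar s\rho)$ then makes the connection to the hypothesis transparent, whereas the paper works with the equivalent pair $s_i\xi_i^2$ and $\omega_i/s_i$.

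The one substantive difference is the error ordering. The paper's appendix proof obtains it by invoking \cref{prop:error_order}, which carries the side condition $\sigma_\e^2\le\omega$, and accordingly the appendix restatement of the proposition adds that condition. You instead get the error ordering by summing the separately established bias and variance orderings, which requires no such condition and matches the main-text statement exactly. Your closing remark that $\sigma_\e^2$ is absorbed into the $i$-independent coefficients $A,B$ is precisely the reason this works, and it is a small improvement over the paper's route.
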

\cref{prop:pure_scale} provides a complement to many of our results, which otherwise emphasize nontrivial spectral differences that persist even when the training and test scales are equal. By focusing on pure-scale shifts, \cref{prop:pure_scale} isolates the model-dependent conditions under which an increase in scale results in an increase in the bias, variance, and total error, generalizing the analogous results from \cref{sec:lin_reg} on linear regression.

\subsection{The benefit of overparameterization}
\label{sec:b_v_rf}

While \cref{prop:error_order} shows that harder shifts increase the error, it is natural to wonder whether this increase can be mitigated by judicious model selection. In practice, empirical investigations have shown that the performance of large, overparameterized models tends to deteriorate less under distribution shift than their smaller counterparts~\citep{hendrycks2020faces}. We obtain a number of theoretical results that formally prove the benefit of overparameterization in our random feature setting.

First, we show that the bias decreases (or stays constant) when additional random features are added, which increases the model capacity and accords with the intuition of the bias as a measure of the model's ability to fit the data.

\begin{proposition}
    \label{prop:b}
     In the setting of \cref{thm:main_b_v}, if $\xi \leq 1$, the bias $\bias{\mu}$ is a nonincreasing function of the overparameterization ratio $\phi/\psi$.
\end{proposition}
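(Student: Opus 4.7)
The plan is to reduce the claim to two facts: (i) the bias depends on $\psi$ only through the auxiliary variable $x$, with $\partial \bias{\mu}/\partial x \le 0$ when $\xi \le 1$; and (ii) $x$ is nonincreasing in $\psi$ (equivalently, nondecreasing in $\phi/\psi$, since $\phi$ is held fixed). The chain rule will then give $\partial \bias{\mu}/\partial(\phi/\psi) \le 0$.

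For (i), from \cref{eqn:main_bias} the term $(1-\xi)^2 s_*$ is independent of $\psi$, while $\Ishift_{1,1}(x) = \phi\,\E_\mu[r/(\phi+x\la)]$ and $\Ishift_{1,2}(x) = \phi\,\E_\mu[r/(\phi+x\la)^2]$ are each strictly decreasing in $x > 0$. Since $\xi = \sqrt{\rhos/\rho} \ge 0$ by definition, the hypothesis $\xi \le 1$ makes both coefficients $2(1-\xi)\xi$ and $\xi^2$ nonnegative, so $\partial \bias{\mu}/\partial x \le 0$.

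For (ii), I would apply implicit differentiation to $x(\omega + \I_{1,1}(x)) = 1 - \gamma\tau(x,\psi)$. The key algebraic step, obtained by writing $x\la^2/(\phi+x\la)^2 = \la/(\phi+x\la) - \phi\la/(\phi+x\la)^2$ and integrating against $\mu$, is the identity
\[ x\,\I_{2,2}(x) = \I_{1,1}(x) - \phi\,\I_{1,2}(x), \]
which collapses $\partial_x[x(\omega+\I_{1,1})]$ to $\omega + \phi\,\I_{1,2}$. From the quadratic $\psi\gamma\tau^2 - (\psi-\phi)\tau = x\phi/\rho$ defining $\tau$, together with the identity $\gamma\phi(\tau\psi/\phi + \taub) = 2\psi\gamma\tau + \phi - \psi = \sqrt{\Delta}$, where $\Delta = (\psi-\phi)^2 + 4x\psi\phi\gamma/\rho$, implicit differentiation yields
\[ \gamma\,\partial_x\tau = \frac{1}{\rho(\tau\psi/\phi + \taub)}, \qquad \gamma\,\partial_\psi\tau = \frac{\tau(1-\gamma\tau)}{\phi(\tau\psi/\phi + \taub)}. \]
The implicit function theorem then gives
\[ \frac{\partial x}{\partial \psi} = -\frac{\gamma\,\partial_\psi\tau}{\omega + \phi\,\I_{1,2} + \gamma\,\partial_x\tau}. \]
The signs now follow from standard facts: $\tau > 0$ as a Stieltjes transform at a negative point; $1 - \gamma\tau = x(\omega + \I_{1,1}) \ge 0$, so $\gamma\tau \in [0,1]$; $\tau\psi/\phi + \taub = \sqrt{\Delta}/(\gamma\phi) > 0$; $\phi\,\I_{1,2} > 0$; and $\omega \ge 0$ because $\eta = \V[\sigma(z)] \ge \zeta$ (Cauchy--Schwarz applied to the linear Hermite component of $\sigma$). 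Thus the numerator is $\le 0$ and the denominator is $> 0$, so $\partial x/\partial \psi \le 0$, which combined with (i) completes the proof.

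The main obstacle is managing the coupled implicit dependence of $x$, $\tau$, and $\taub$ on $\psi$. The identity $x\,\I_{2,2} = \I_{1,1} - \phi\,\I_{1,2}$ is the crucial algebraic simplification that renders the derivative of the self-consistent equation sign-definite; without it, $\partial x/\partial \psi$ would appear as a difference of terms whose sign could not be read off directly.
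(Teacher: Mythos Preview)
Your proposal is correct and follows essentially the same approach as the paper's proof. Both argue via the chain rule that (i) the bias is nonincreasing in $x$ when $\xi\le 1$ (the paper differentiates explicitly to get the $\Ishift_{2,2}$ and $\Ishift_{2,3}$ terms, you argue more directly that $\Ishift_{1,1},\Ishift_{1,2}$ are decreasing in $x$), and (ii) $\partial x/\partial\psi\le 0$ by implicit differentiation of the self-consistent equation, using the raising/lowering identity $x\I_{2,2}=\I_{1,1}-\phi\I_{1,2}$; your final expression for $\partial x/\partial\psi$ is algebraically equivalent to the paper's \cref{eq:dxdpsi}.
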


The condition $\xi\le 1$ is necessary to \cref{prop:b}, as without it the bias can behave nonmonotonically, demonstrating one counterintuitive consequence of covariate shift (see \cref{sec:counterexamples} for more discussion). We emphasize that if we additionally invoke~\cref{assump:scale}, then this condition is automatically satisfied. The following proposition shows that, in contrast to the bias, the variance is strictly monotonic in the overparameterized regime, irrespective of the value of $\xi$. Note that our proof requires the setting of ridgeless regression ($\gamma=0$), but numerical investigation suggests this condition may not be necessary (see \cref{fig:predictions}).

\begin{proposition} 
    \label{prop:v}
    In the setting of \cref{cor:ridgeless_lim} and in the overparameterized regime (i.e. $\psi < \phi$), the variance $\var{\mu}$ is a nonincreasing function of the overparameterization ratio $\phi/\psi$.
\end{proposition}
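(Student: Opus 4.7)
The plan is to exploit the ridgeless formula from \cref{cor:ridgeless_lim} by isolating the $\psi$-dependence. In the overparameterized regime $\phi\ge\psi$, we have $\min(1,\phi/\psi)=1$, so the self-consistent equation reduces to $x = 1/(\omega+\I_{1,1})$, which involves only $\omega$ and $\I_{1,1}$. Crucially, none of $\I_{a,b}$, $\Ishift_{a,b}$, $\omega$, $\omegas$, or $\xi$ depend on $\psi$; they depend only on $\phi$ (via the functionals of $\mu$) and on the training/test scales. Hence $x$ is itself independent of $\psi$ in this regime. Fixing $\phi$ and varying $\psi$ to change the overparameterization ratio $\phi/\psi$, the variance from \cref{eqn:v_ridgeless} decomposes as
\eq{
    \var{\mu} \;=\; A\cdot\frac{\psi}{\phi-\psi} \;+\; B,
}
where $A = \xi^2 x(\sigma_\e^2+\I_{1,1})(\omegas+\Ishift_{1,1})$ and $B = \xi^2 x\big(1 - \tfrac{x(\omega-\sigma_\e^2)}{1-x^2\I_{2,2}}\big)\Ishift_{2,2}$ are both $\psi$-independent constants.

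Next, since $\psi/(\phi-\psi) = 1/(\phi/\psi - 1)$ is strictly decreasing as a function of $\phi/\psi$ on $(1,\infty)$, the monotonicity claim reduces to showing $A\ge 0$. The factors $\xi^2$, $x$, and $\sigma_\e^2+\I_{1,1}$ are manifestly nonnegative, so it suffices to verify that $\omegas + \Ishift_{1,1}\ge 0$. The term $\Ishift_{1,1} = \phi\E_\mu[r/(\phi+x\lambda)]$ is nonnegative since $r,\lambda\ge 0$. For $\omegas = s_*(\etas/\zetas - 1)$, I would use the Hermite expansion of $\sigma$ with respect to the Gaussian measure $\cN(0,s_*)$: writing $\sigma(\sqrt{s_*}Z) = \sum_{k\ge 0} a_k h_k(Z)$ for probabilists' Hermite polynomials $h_k$, one finds $\etas = \sum_{k\ge 1} a_k^2 k!$ while $\zetas = a_1^2$ (via Stein's lemma), so $\etas\ge \zetas$ and hence $\omegas\ge 0$.

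With $A\ge 0$ in hand, $\var{\mu}$ is a nonincreasing function of $\phi/\psi$ throughout the overparameterized regime, proving the claim. The main obstacle is the observation in the first paragraph, namely recognizing that the $\min(1,\phi/\psi)$ factor in the self-consistent equation for $x$ collapses in the overparameterized regime and eliminates $\psi$ from every other piece of the variance expression; once this is noted, the rest reduces to a direct calculation and an elementary bound on $\omegas$. As a sanity check, the formula $V_\mu \to \infty$ as $\phi/\psi \to 1^+$ is consistent with $A\ge 0$ and reproduces the double-descent peak at the interpolation threshold visible in \cref{fig:predictions}.
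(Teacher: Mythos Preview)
Your proposal is correct and follows essentially the same route as the paper: the paper also fixes $\phi$, notes that in the overparameterized regime the self-consistent equation collapses to $x=1/(\omega+\I_{1,1})$ so that $x$ (and hence every term except the $\psi/(\phi-\psi)$ prefactor) is $\psi$-independent, and then observes that the coefficient $\xi^2 x(\sigma_\e^2+\I_{1,1})(\omegas+\Ishift_{1,1})$ is nonnegative by \cref{lem:positivity}. The only cosmetic difference is that the paper differentiates with respect to $\psi$ and applies the chain rule, whereas you write $\psi/(\phi-\psi)=1/(\phi/\psi-1)$ directly; your Hermite-expansion justification of $\omegas\ge 0$ is exactly the argument the paper invokes (via citation) in the proof of \cref{lem:positivity}.
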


The explosion of variance at the interpolation threshold and its subsequent decay have been demonstrated in previous studies of the high-dimensional limit of random feature regression in the absence of covariate shift~\citep{adlam2020neural, mei2019generalization}, in stark contrast to what classical theory would suggest. \cref{prop:v} confirms the existence of analogous behavior under covariate shift.

Taken together, \cref{prop:b,prop:v} imply that some of the benefits of overparameterzation extend to models evaluated out-of-distribution. An additional benefit is that overparameterized models are more robust: the difference in error between unshifted and shifted test distributions is smaller for larger models. A formal statement of this enhanced robustness is given in the following proposition.

\begin{proposition} 
    \label{prop:gen_gap}
     Consider two LJSDs $\mu_1$ and $\mu_2$ such that $\mu_1 \leq \mu_2$ and ${1 \le \zeta_2/\zeta_1 \le \eta_2/\eta_1}$. Then, in the setting of \cref{cor:ridgeless_lim} and in the overparameterized regime (i.e. $\psi < \phi$), the generalization gap $E_{\mu_2}-E_{\mu_1}$ is a nonincreasing function of the overparameterization ratio $\phi/\psi$.   
\end{proposition}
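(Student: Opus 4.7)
The plan is to decompose $\err{\mu_2} - \err{\mu_1} = (\bias{\mu_2} - \bias{\mu_1}) + (\var{\mu_2} - \var{\mu_1})$ and handle each piece via the ridgeless overparameterized formulas of~\cref{cor:ridgeless_lim}. The key structural observation is that in the overparameterized regime ($\phi>\psi$), the fixed-point equation of~\cref{cor:ridgeless_lim} reduces to $x(\omega + \I_{1,1}) = 1$, which involves $\phi$ and the shared training marginal $\mutr$ but not $\psi$. Consequently $x$, and hence every object in the bias formula~\eqref{eqn:main_bias}---namely $\xi_i$, $s_{*,i}$, $\Ishift_{1,1}$, and $\Ishift_{1,2}$---depends on $\phi$ and $\mu_i$ but not $\psi$. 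Thus $\bias{\mu_2}-\bias{\mu_1}$ is constant in $\phi/\psi$ and trivially nonincreasing.

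For the variance, I would isolate the $\psi$-dependence by writing $\var{\mu} = \tfrac{\psi}{\phi-\psi}\xi^2 A_\mu + \xi^2 C_\mu$, where $A_\mu \deq x(\sigmaeps+\I_{1,1})(\omegas+\Ishift_{1,1})$ and $C_\mu \deq x\bigl(1-\tfrac{x(\omega-\sigmaeps)}{1-x^2\I_{2,2}}\bigr)\Ishift_{2,2}$, both $\psi$-independent by the preceding observation. Since $\psi/(\phi-\psi) = 1/(\phi/\psi - 1)$ is strictly decreasing in $\phi/\psi$ on $\phi>\psi$, it is enough to establish the nonnegativity statement $\xi_2^2 A_{\mu_2} \ge \xi_1^2 A_{\mu_1}$. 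The common factor $x(\sigmaeps+\I_{1,1})$ depends only on $\mutr$ and cancels, reducing the task to $\xi_2^2(\omega_2 + \Ishift_{1,1}[\mu_2]) \ge \xi_1^2(\omega_1 + \Ishift_{1,1}[\mu_1])$, where $\Ishift_{a,b}[\mu_i]$ denotes $\Ishift_{a,b}$ evaluated under $\mu_i$.

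The two summand inequalities are proved separately. For the $\omega$-term, direct algebra from $\xi_i^2 = \rho_i/\rho$, $\zeta_i = s_{*,i}\rho_i$, and $\omega_i = s_{*,i}(\eta_i/\zeta_i - 1)$ gives $\xi_i^2\omega_i = (\eta_i-\zeta_i)/\rho$; the hypothesis $\zeta_2/\zeta_1 \le \eta_2/\eta_1$ rearranges to $\eta_1(\zeta_2-\zeta_1) \le \zeta_1(\eta_2-\eta_1)$, and combining this with the Cauchy--Schwarz bound $\zeta_i \le \eta_i$ (applied to $\E[z(\sigma(z)-\E[\sigma(z)])]$ with $z\sim\cN(0,s_{*,i})$) yields $\zeta_2-\zeta_1 \le \eta_2-\eta_1$, i.e.\ $\xi_1^2\omega_1 \le \xi_2^2\omega_2$. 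For the $\Ishift_{1,1}$-term, the shared $\la$-marginal permits the conditioning identity
\eqs{\xi_i^2\,\rho\,\Ishift_{1,1}[\mu_i] \;=\; \zeta_i \cdot \phi\,\tilde\E_i[1/(\phi+x\la)],}
where $\tilde\E_i$ is expectation under the probability measure whose density relative to $\mutr$ is $\E_{\mu_i}[r|\la]/s_{*,i}$. The defining property of $\mu_1 \le \mu_2$ makes the density ratio of $\tilde\E_1$ to $\tilde\E_2$ nondecreasing in $\la$, so by monotone likelihood ratio $\tilde\E_1[h] \le \tilde\E_2[h]$ for every decreasing $h$; applied to $h(\la) = 1/(\phi+x\la)$ and combined with $\zeta_1 \le \zeta_2$, this yields $\xi_1^2\Ishift_{1,1}[\mu_1] \le \xi_2^2\Ishift_{1,1}[\mu_2]$, completing the argument.

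The main obstacle I anticipate is the bookkeeping required to route the shift-ordering $\mu_1 \le \mu_2$ through the model-dependent constants $(\xi_i, \omega_i, \rho_i, \zeta_i, \eta_i)$---which do \emph{not} share a common scale when $s_{*,1} \ne s_{*,2}$---and to recast the $\Ishift_{1,1}$ comparison as a monotone-likelihood-ratio comparison between the scale-normalized measures $\tilde\E_i$. What makes the argument clean is that in the ridgeless overparameterized regime the bias gap and the $\xi^2 C_\mu$ summand of the variance automatically drop out of the $\psi$-dependence, so the only place $\psi$ enters is through the explicit $\psi/(\phi-\psi)$ prefactor.
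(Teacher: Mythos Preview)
Your proof is correct and matches the paper's approach: both observe that in the ridgeless overparameterized regime $x$ is $\psi$-independent (hence so are the bias and the $\Ishift_{2,2}$ variance term), leaving only the $\psi/(\phi-\psi)$ prefactor to control, and both reduce the sign condition to $\xi_2^2(\omega_2 + (\Ishift_{1,1})_2) \ge \xi_1^2(\omega_1 + (\Ishift_{1,1})_1)$ via the hypothesis $1\le\zeta_2/\zeta_1\le\eta_2/\eta_1$ together with a Harris-type inequality on the scale-normalized $\Ishift_{1,1}$. The paper differentiates in $\psi$ and factors through $s_i\xi_i^2=\zeta_i/\rho$ rather than handling the $\omega$- and $\Ishift_{1,1}$-summands separately as you do, but this is cosmetic.
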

Recalling~\cref{rem:equal_constants}, we again note that if we additionally invoke~\cref{assump:scale}, then the condition on the model-dependent parameters $1 \le \zeta_2/\zeta_1 \le \eta_2/\eta_1$ is automatically satisfied.

In \cref{fig:predictions}, \cref{prop:b,,prop:v,,prop:gen_gap} are illustrated. Following the horizontal lines rightward in (a), (b), and (c) or the x-axis rightward in (d) and (f) leads to models with more parameters. The monotonicity of the bias across the whole range of parameterization is evident, as is the necessity of considering the monotonicity of the variance and generalization gap only when $\phi>\psi$.

\subsection{Linear trends between in-distribution and out-of-distribution generalization}
\label{sec:linear_trends}

We have discussed how overparameterization yields improved generalization performance for both unshifted and shifted distributions, which hints that these two quantities are positively correlated. Indeed, recently \citet{hendrycks2020faces} have suggested increasing model size as a path to increased robustness. Additionally, the empirical studies of \citet{recht2019imagenet,taori2020measuring, pmlr-v139-miller21b} have further refined this observation by discovering a linear relationship between the performance of models of varying complexity on unshifted and shifted test data. In the context of ridgeless random feature regression, we provide a formal proof of this linear relationship, which holds in the overparameterized regime. In the setting of random feature regression, the overparameterization ratio $\phi/\psi$ provides a surrogate for the complexity of the model class.

\begin{figure}[t]
\centering
\includegraphics[width=0.8\linewidth]{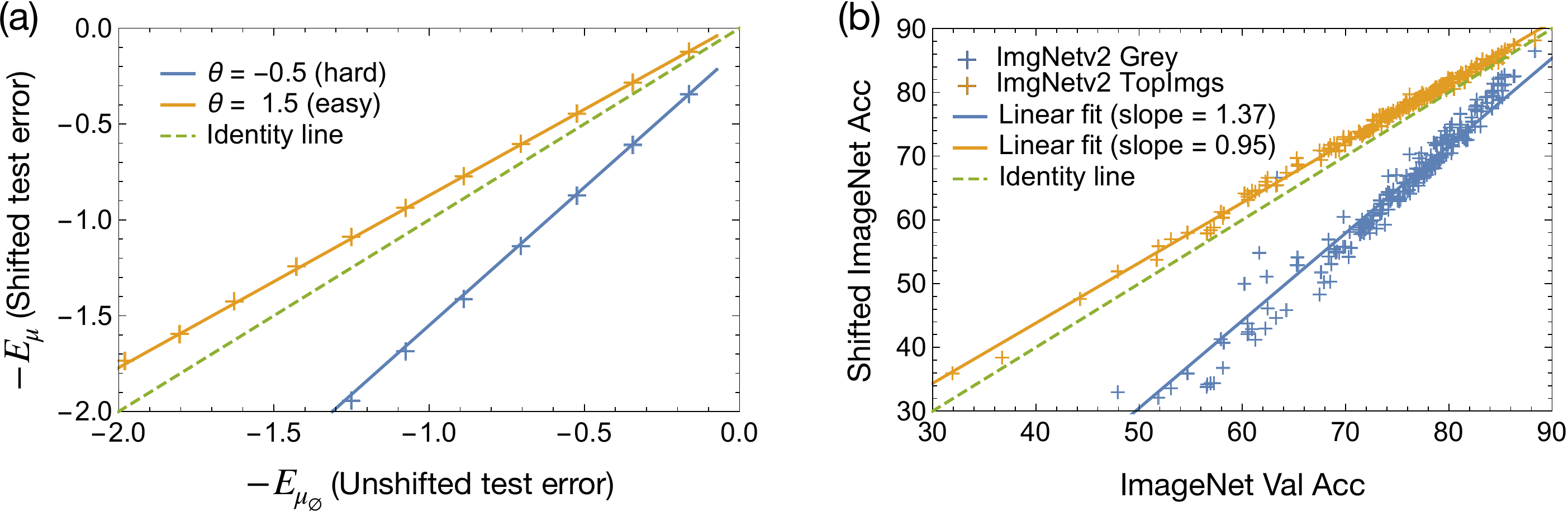}
\caption{Linear relationship between in-distribution and out-of-distribution generalization error. \textbf{(a)}~Theoretical predictions for shifted versus unshifted error for models with varying degrees of overparameterization $\phi/\psi > 1$, obtained via \cref{cor:ridgeless_lim} for the $(3,\theta)$-diatomic LJSD (\cref{eq:eq_alpha_diatomic}) with $\phi=n_0/m=0.5$, $\sigma=\text{ReLU}$, $\sigma_{\e}^2 = 0.01$ and two different values of the shift-power $\theta$. Markers represent simulations for $n_0 = 512$. The negated errors are plotted so that performance improves left to right and bottom to top, in order to match the behavior of the accuracy metric. \textbf{(b)}~Reproduction of the empirical results of \citet{recht2019imagenet, taori2020measuring}, showing the relationship between the classification accuracy of various models on the original ImageNet test set and two shifted ImageNet datasets: a ``hard" dataset with greyscale corruptions, Grey, and an ``easy" dataset with high inter-annotator agreement, TopImgs. In both (\textbf{a}) and (\textbf{b}), the slope is greater than one for the hard shift and less than one for the easy shift, in accordance with~\cref{prop:slope}.}
\label{fig:linear_slope_recht}
\end{figure}

\begin{proposition}
\label{prop:slope}
Consider two LJSDs $\mu_1$ and $\mu_2$. In the setting of \cref{cor:ridgeless_lim} and in the overparameterized regime (i.e. $\psi < \phi$), $E_{\mu_2}$ is linear in $E_{\mu_1}$, parametrically in $\phi/\psi$:
 \begin{equation}
\label{eqn:slope_modified}
    \err{\mu_2} = E_0 + \underbrace{\frac{\xi_2^2}{\xi_1^2} \pa{\frac{\omega_2 + (\Ishift_{1,1})_2}{\omega_1 + (\Ishift_{1,1})_1}}}_{\slope} \err{\mu_1}\,,
\end{equation}
where $E_0$ and $\slope \ge 0$ are constants independent of the overparameterization ratio $\phi/\psi$. Moreover, if $\mu_1 \leq \mu_2$ and $1 \le \zeta_2/\zeta_1 \le \eta_2/\eta_1$, then $\slope \ge 1$.
\end{proposition}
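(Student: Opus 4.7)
The plan is to exploit a structural observation from \cref{cor:ridgeless_lim}: in the ridgeless overparameterized regime ($\phi > \psi$), the dependence of $\err{\mu}$ on $\psi$ is confined to the single multiplicative factor $\psi/(\phi-\psi) = 1/(\phi/\psi - 1)$ appearing in the variance. The implicit root $x = 1/(\omega + \I_{1,1})$, the bias \eqref{eqn:main_bias}, the functionals $\I_{a,b}$ and $\Ishift_{a,b}$, and the remaining $\Ishift_{2,2}$ term of $\var{\mu}$ all depend on $\mu$ and $\phi$ only. Thus \cref{cor:ridgeless_lim} rearranges to
\begin{align*}
\err{\mu} = A(\mu) + \frac{B(\mu)}{\phi/\psi - 1}, \qquad B(\mu) = \xi^2\, x\,(\sigma_{\e}^2 + \I_{1,1})(\omegas + \Ishift_{1,1}),
\end{align*}
with $A, B$ independent of $\psi$ (at fixed $\phi$). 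Writing this decomposition for both $\mu_1$ and $\mu_2$ and eliminating $1/(\phi/\psi - 1)$ immediately gives
\begin{align*}
\err{\mu_2} = \underbrace{A(\mu_2) - \frac{B(\mu_2)}{B(\mu_1)}\,A(\mu_1)}_{E_0} + \underbrace{\frac{B(\mu_2)}{B(\mu_1)}}_{\slope}\err{\mu_1},
\end{align*}
which is the desired linear relationship parametric in $\phi/\psi$, with $E_0$ and $\slope$ independent of $\phi/\psi$. To recover the closed form in \eqref{eqn:slope_modified}, I take $\mu_1$ and $\mu_2$ to share the same $\la$-marginal $\mutr$ (implicit in \cref{def:hard} and the natural comparison setting of a fixed training distribution with different shifts); then $x$, $\omega$, and $\I_{1,1}$ coincide across $\mu_1, \mu_2$ and cancel in $B(\mu_2)/B(\mu_1)$, leaving exactly the stated $\slope$. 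Nonnegativity of $\slope$ is then immediate, since every factor of $B(\mu)$ is nonnegative.

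To prove $\slope \ge 1$ under the additional hypotheses $\mu_1 \le \mu_2$ and $1 \le \zeta_2/\zeta_1 \le \eta_2/\eta_1$, substitute $\xi_i^2 = \rho_i/\rho$ (with $\rho$ the common training value) to reduce the claim to
\begin{align*}
\rho_2\bigl(\omega_2 + (\Ishift_{1,1})_2\bigr) \;\ge\; \rho_1\bigl(\omega_1 + (\Ishift_{1,1})_1\bigr),
\end{align*}
which I treat term by term. For the $\omega$-piece, the identities $\omega_i = s_i(\eta_i/\zeta_i - 1)$ and $\zeta_i = s_i\rho_i$ combine to $\rho_i\omega_i = \eta_i - \zeta_i$, while Cauchy--Schwarz applied to $z_i \sim \cN(0, s_i)$ gives $(\E[z_i\sigma(z_i)])^2 \le s_i\,\eta_i$, i.e.\ $\zeta_i \le \eta_i$ (so each $\omega_i \ge 0$). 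Using $\eta_2/\eta_1 \ge \zeta_2/\zeta_1 \ge 1$ and $\eta_1 \ge \zeta_1$, one obtains the chain $\eta_2 - \zeta_2 \ge (\eta_1 - \zeta_1)\zeta_2/\zeta_1 \ge \eta_1 - \zeta_1$, handling the $\omega$-piece.

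The main obstacle is the $\Ishift_{1,1}$-piece, which I would establish via a Harris/FKG-type argument analogous to the proof of \cref{prop:ordering_lr}. Let $g(\la) = \E_{\mu_2}[r|\la]$ and $h(\la) = \E_{\mu_1}[r|\la]$, and define the probability measure $P = g\,d\mutr/s_2$ on $\R_+$ (total mass $1$ since $\int g\,d\mutr = \E_{\mu_2}[r] = s_2$). Under \cref{def:hard}, $h/g$ is nondecreasing, while $f(\la) = 1/(\phi + x\la)$ is nonincreasing, so Harris on $P$ gives $\E_P[(h/g)f] \le \E_P[h/g]\,\E_P[f]$. Expanding each expectation back into integrals against $\mutr$ and using $(\Ishift_{1,1})_j = \phi\,\E_{\mu_j}[r/(\phi + x\la)]$, this rearranges to $s_2\,(\Ishift_{1,1})_1 \le s_1\,(\Ishift_{1,1})_2$. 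Combined with $\zeta_2/\zeta_1 = \rho_2 s_2/(\rho_1 s_1) \ge 1$, we obtain $\rho_2(\Ishift_{1,1})_2 \ge \rho_1(\Ishift_{1,1})_1$. Summing the two pieces yields $\slope \ge 1$.
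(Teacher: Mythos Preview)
Your proof is correct and follows essentially the same route as the paper. The linearity argument is identical: in the ridgeless overparameterized regime, the only $\psi$-dependence sits in the factor $1/(\phi/\psi-1)$, and eliminating it between $\err{\mu_1}$ and $\err{\mu_2}$ produces the linear relation with the stated $\slope$ (the paper likewise relies on the shared $\lambda$-marginal so that $x$, $\omega$, and $\I_{1,1}$ cancel). For $\slope\ge 1$, the paper groups terms as $s_i\xi_i^2$ and $\omega_i/s_i$, $(\Ishift_{1,1})_i/s_i$, while you instead multiply through by $\rho$ and use the clean identity $\rho_i\omega_i=\eta_i-\zeta_i$; both reductions rest on the same Harris/FKG inequality $(\Ishift_{1,1})_2/s_2\ge(\Ishift_{1,1})_1/s_1$ (the paper's \cref{cor:harris_Ifunc}) together with $\zeta_2\ge\zeta_1$.
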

As above,~\cref{rem:equal_constants} implies that the condition $1 \le \zeta_2/\zeta_1 \le \eta_2/\eta_1$ is automatically satisfied if we additionally invoke~\cref{assump:scale}. Specializing~\cref{prop:slope} to the case $\mu_1 = \munull$ yields the claimed linear relationship between the generalization performance with shifted and unshifted test data as the overparameterization ratio varies. Finally, we note that $\slope\ge0$ implies that improved performance on the unshifted distribution translates into improved performance on the shifted distribution as well.
\begin{remark}
\label{cor:slope_simple}
In the setting of \cref{prop:slope}, if \cref{assump:scale} holds and $E_{\munull}$ denotes the error on the unshifted distribution, $\err{\mu} = E_0 + \slope \cdot \err{\munull}$, parametrically in $\phi/\psi$, where $E_0$ and $\slope\ge 0$ are constants independent of the overparameterization ratio $\phi/\psi$. Moreover, $\slope \geq 1$ when $\mu$ is hard and $\slope \leq 1$ when $\mu$ is easy.
\end{remark}

\cref{cor:slope_simple} makes the additional nontrivial prediction that an improvement on the unshifted distribution leads to a relatively greater improvement on the shifted distribution when the shift is hard, and to a relatively smaller improvement when the shift is easy. These predictions are corroborated qualitatively in the data from \citet{recht2018cifar, recht2019imagenet, pmlr-v119-miller20a, pmlr-v139-miller21b}. We plot this linear behavior in \cref{fig:linear_slope_recht}, where (a) shows the random feature model and (b) shows an example of data from \citet{recht2019imagenet,taori2020measuring}. The striking similarity in these plots is evident.

\section{Optimal regularization}
\label{sec:optimal_gamma}

Several prior works studying double descent in the random feature model with identity covariance have found a divergence of the total error and the variance at the interpolation threshold ($\phi = \psi$) in the ridgeless limit \citep[e.g.][]{mei2019generalization,adlam2020neural,adlam2020understanding}. For non-zero regularization, the divergence is tamed, and when the regularizer is tuned optimally for each overparameterization scale, \citet{mei2019generalization} observed that this effect is removed and the test error appears to be strictly decreasing in the number of random features.

The story is more nuanced in the presence of covariate shift because the optimal regularizer depends on the LJSD $\mu$, i.e. $\gammaopt_\mu \deq \argmin \err{\mu}(\gamma)$. Moreover, there are two natural ways to tune the regularizer: it can be tuned to an optimal value $\gammaopt \deq \gammaopt_{\munull}$ that minimizes the test error on the unshifted training distribution, or it can be tuned to an optimal value $\gammaopt_* \deq \gammaopt_{\mu}$ that minimizes the test error on the shifted test distribution. These settings can be approximately realized when one has access to a large validation set drawn from either the training or test distributions, respectively. 

We investigate the behavior of the test error under these two choices of optimal regularization in~\cref{fig:opt_gamma}. The curves in this figure were obtained by minimizing the shifted and unshifted test error with a zeroth-order optimizer over a nonnegative support.  Note that, counterintuitively, there is no guarantee in this setting that the optimal value of $\gamma$ will be nonnegative \citep[see e.g.][]{kobak2020optimal, wu2020optimal, bartlett2021deep}. While negative regularization is certainly an interesting scenario worthy of further investigation, it does not arise for the low signal-to-noise ratios (or equivalently large values of $\sigma_{\e}^2$) chosen for these simulations.

\begin{figure}[t!]
\centering
\includegraphics[width=\linewidth]{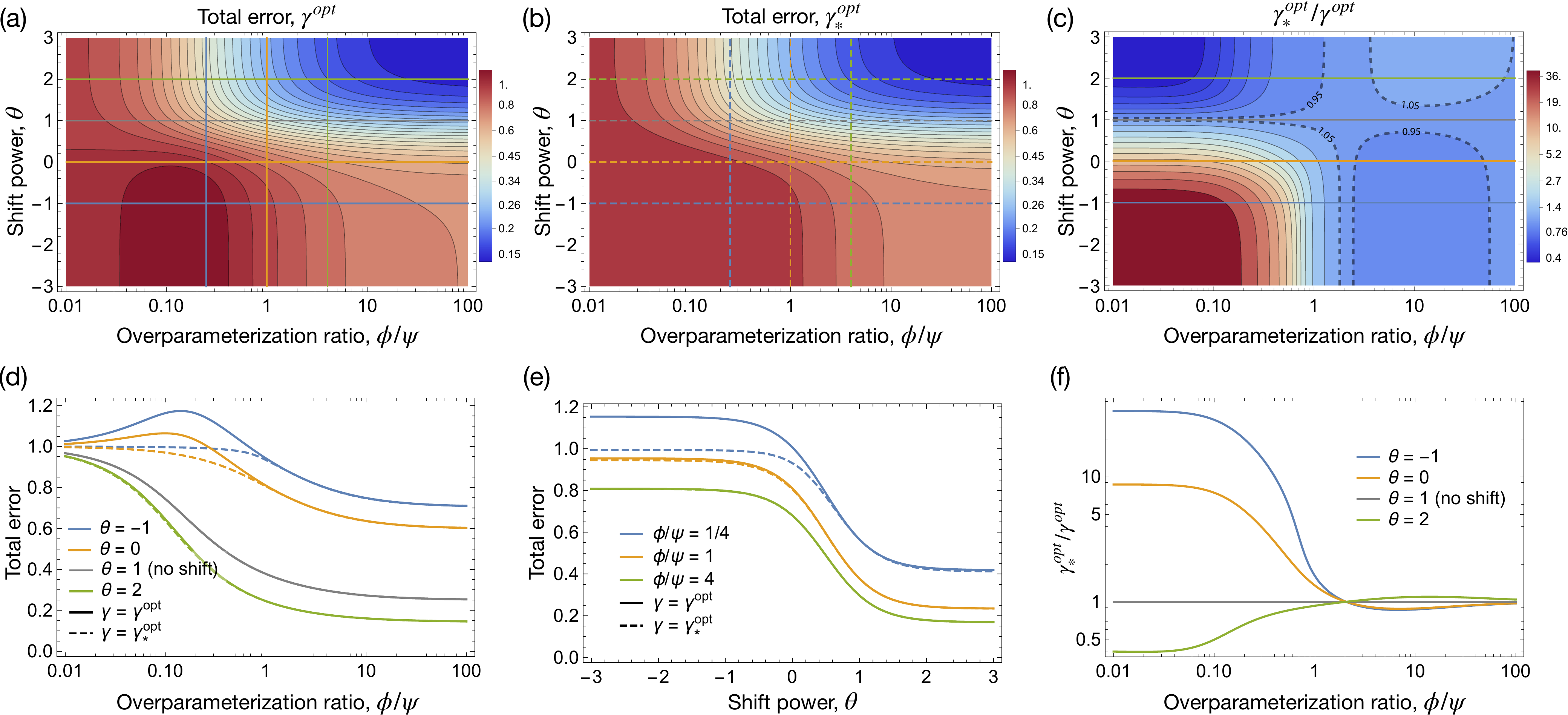}
\caption{The limiting test error and relative regularization strength under two methods of optimal regularization as a function of the overparameterization ratio ($\phi/\psi = n_1/m$) and the shift power ($\theta$) for the $(4,\theta)$-diatomic LJSD (\cref{eq:eq_alpha_diatomic}) with ${\phi=n_0/m=0.5}$, $\sigma=\text{ReLU}$, and $\sigma_{\e}^2=1$. (\textbf{a}) With regularization tuned on the unshifted training distribution, $\gamma = \gammaopt$, the test error exhibits nonmonotonicity in the overparameterization ratio when the shift is sufficiently hard. (\textbf{b}) With regularization tuned on the shifted test distribution, $\gamma = \gammaopt_*$, the test error is a nonincreasing function of the overparameterization ratio. (\textbf{c}) The ratio of optimal regularization values, $\gammaopt_*/\gammaopt$ is large for hard shifts in the underparameterized regime, but approaches one as $\phi/\psi$ becomes large, as predicted by \cref{prop:opt_gamma}. Dashed lines denote $5\%$ deviations from unity, and highlight the existence of a finite $\phi/\psi \approx 2$ for which the optimal regularization value is independent of shift strength, which is also evident in (\textbf{f}). (\textbf{d}) 1D horizontal slices of (\textbf{a},\textbf{b}) demonstrate the monotonicity in $\phi/\psi$ for $\gamma = \gammaopt_*$ (dashed lines) and nonmonotonicity for $\gamma=\gammaopt$ (solid lines) for hard shifts. Consistent with \cref{prop:opt_gamma}, for large overparameterization ratios, the solid and dashed lines overlap.  (\textbf{e}) 1D vertical slices of (\textbf{a},\textbf{b}) demonstrate monotonicity in $\theta$ and that the distinction between $\gammaopt$ and $\gammaopt_*$ diminishes for large overparameterization ratios. (\textbf{f}) 1D horizontal slices of (\textbf{c}) show that $\gammaopt \approx \gammaopt_*$ for large overparameterization ratios, consistent with \cref{prop:opt_gamma}.}
 \label{fig:opt_gamma}
\end{figure}

We observe that the qualitative features of optimal regularization with and without covariate shift are similar if the regularizer is tuned on the shifted distribution. Specifically, the peak at the interpolation threshold vanishes and the test error becomes nonincreasing as a function of the overparameterization ratio. Indeed, horizontal slices of \cref{fig:opt_gamma}(b) (shown as dashed lines in (d)) exhibit the monotonicity of the total error in this setting.

On the other hand, if $\gamma$ is tuned to minimize the test error on the unshifted training distribution, horizontal slices of~\cref{fig:opt_gamma}(a) (shown as solid lines in (d)) show residual nonmonotonicity in the test error. Interestingly, when a peak in the test error does persist under optimal regularization, it no longer appears at the interpolation threshold, suggesting that the residual peak may be of a different origin.

The behavior of the test error under decreasing shift strength can be seen by following vertical slices upward in~\cref{fig:opt_gamma}(a,b), or by following the x-axis rightward in (e), and, as might be expected, the distinction between $\gammaopt$ and $\gammaopt_*$ is less apparent for easier shifts. Indeed, the nonmonotonicity of the test error in (d) appears only for hard shifts, and the curves for $\gammaopt$ and $\gammaopt_*$ in (e) overlap for hard shifts.

The relative strength of the regularizer when optimized on the shifted versus unshifted distributions is shown in \cref{fig:opt_gamma}(c), with horizontal slices for several values of the shift strength shown in (f). In the underparameterized regime, $\gammaopt_*/\gammaopt$ depends strongly on the shift strength, with $\gammaopt_* \gg \gammaopt$ for hard shifts. In the overparameterized regime, however, the optimal $\gamma$ values are not significantly different. For all values of the the shift strength $\theta$, it appears that $\gammaopt_* = \gammaopt$ for some intermediate overparameterization ratio (around $\phi/\psi \approx 2$ in (c) and (f)). While we have no immediate explanation for this curious observation, the equality $\gammaopt_* = \gammaopt$ does occur when $\phi/\psi \to \infty$, as following proposition shows.

\begin{proposition}
\label{prop:opt_gamma}
In the setting of~\cref{cor:infinite_overparameterization} and under~\cref{assump:scale}, the optimal regularization $\gammaopt_\mu$ is independent of $\mu$ and satisfies,
\al{
\gammaopt_\mu \deq \argmin_{\gamma\ge-\rho\omega} \err{\mu}(\gamma) = \rho(\sigma_{\e}^2 - \omega)\qquad\text{and}\qquad \err{\mu}^{\text{opt}} \deq \err{\mu}(\gammaopt_\mu) = \Ishift_{1,1}\label{eqn:gammaopt}\,.
}
Moreover, the optimal error is ordered with respect to shift strength, i.e. for two LJSDs $\mu_1 \le \mu_2$, $\err{\mu_1}^{\text{opt}}\le \err{\mu_2}^{\text{opt}}$.
\end{proposition}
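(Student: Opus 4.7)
My plan is to reduce the optimization to a recognizable Bayesian-optimality problem via a careful algebraic rewriting of the limiting test error. Under \cref{assump:scale}, the constants simplify per \cref{rem:equal_constants} (so $\xi=1$, $\omegas=\omega$, etc.), and \cref{cor:infinite_overparameterization} gives $\bias{\mu}=\phi\Ishift_{1,2}(x)$ and $\var{\mu}=\frac{\sigma_\e^2+\phi\I_{1,2}(x)}{\gammaeff+\phi\I_{1,2}(x)}x\Ishift_{2,2}(x)$, with $x$ the unique positive root of $x(\gammaeff+\I_{1,1}(x))=1$. Two elementary identities organize the computation: writing $\lambda(\phi+x\lambda)/(\phi+x\lambda)^2=\lambda/(\phi+x\lambda)$ (and the same with $r$) yields $\phi\Ishift_{1,2}(x)+x\Ishift_{2,2}(x)=\Ishift_{1,1}(x)$ and $\phi\I_{1,2}(x)+x\I_{2,2}(x)=\I_{1,1}(x)$; combining the latter with the self-consistent equation gives $\gammaeff+\phi\I_{1,2}(x)=(1-x^2\I_{2,2}(x))/x$.

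Substituting these identities and splitting $\sigma_\e^2+\phi\I_{1,2}=(\sigma_\e^2-\gammaeff)+(\gammaeff+\phi\I_{1,2})$, the total error collapses to
\begin{equation*}
    \err{\mu}(\gammaeff)=\Ishift_{1,1}(x)+(\sigma_\e^2-\gammaeff)\cdot\frac{x^2\Ishift_{2,2}(x)}{1-x^2\I_{2,2}(x)}.
\end{equation*}
Implicit differentiation of the self-consistent equation gives $dx/d\gammaeff=-x^2/(1-x^2\I_{2,2})$, so the final factor is exactly $G'(\gammaeff)$ for $G(\gammaeff)\deq\Ishift_{1,1}(x(\gammaeff))$, and we have the compact form $\err{\mu}(\gammaeff)=G(\gammaeff)+(\sigma_\e^2-\gammaeff)G'(\gammaeff)$, i.e., the tangent line to $G$ at $\gammaeff$ evaluated at $\sigma_\e^2$. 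Differentiating once more, $\err{\mu}'(\gammaeff)=(\sigma_\e^2-\gammaeff)G''(\gammaeff)$, which vanishes at the critical point $\gammaeff=\sigma_\e^2$; the corresponding value is $G(\sigma_\e^2)=\Ishift_{1,1}(x^*)$ with $x^*$ solving $x^*(\sigma_\e^2+\I_{1,1}(x^*))=1$, and converting back via $\gamma=\rho(\gammaeff-\omega)$ yields $\gammaopt_\mu=\rho(\sigma_\e^2-\omega)$.

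The main obstacle will be upgrading this critical point to a global minimum, which is equivalent to the concavity of $G$ in $\gammaeff$. My preferred route is to appeal to \cref{rem:ridge}: under \cref{assump:scale}, the limiting error of \cref{cor:infinite_overparameterization} coincides with that of linear ridge regression at effective ridge $\gammaeff$ on the Gaussian linear model with prior $\beta\sim\cN(0,I_{n_0})$ and noise variance $\sigma_\e^2$. For that model, the posterior mean given $(X,y)$ is the Bayes-optimal predictor under squared loss for every fixed test covariate $\x$, and a direct computation identifies it with ridge at $\gammaeff=\sigma_\e^2$. Since Bayes-optimality holds pointwise in $\x$, it persists after averaging over $\x\sim\cN(0,\Sigmate)$ for any $\Sigmate$, so no other $\gammaeff$ can improve the MSE and $\gammaeff=\sigma_\e^2$ is indeed a global minimum. (Equivalently, this proves concavity of $G$, since the tangent-line form $\err{\mu}=G+(\sigma_\e^2-\gammaeff)G'\ge G(\sigma_\e^2)$ forces $G$ to lie below its tangents.)

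Finally, for the ordering, observe that $x^*$ is defined by an equation involving only $\I_{1,1}$, which depends on $\mu$ only through the marginal $\mutr$; since $\mu_1\le\mu_2$ share the same $\mutr$ by \cref{def:hard}, they share the same $x^*$. Writing $\err{\mu_i}^{\text{opt}}=\phi\,\E_{\mutr}[\E_{\mu_i}[r\mid\lambda]/(\phi+x^*\lambda)]$, the inequality $\err{\mu_1}^{\text{opt}}\le\err{\mu_2}^{\text{opt}}$ follows from the Harris-type argument used in the proof of \cref{prop:ordering_lr} (and formalized in \cref{lem:harris_Ifunc}): the ratio $\E_{\mu_1}[r\mid\lambda]/\E_{\mu_2}[r\mid\lambda]$ is nondecreasing in $\lambda$, $1/(\phi+x^*\lambda)$ is nonincreasing in $\lambda$, and $\E_{\mu_1}[r]\le\E_{\mu_2}[r]$ provides the total-mass control, completing the proof.
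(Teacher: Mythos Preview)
Your proof is correct, and the algebraic reduction to the form $\err{\mu}(\gammaeff)=\Ishift_{1,1}(x)+(\sigma_\e^2-\gammaeff)\cdot\frac{x^2\Ishift_{2,2}(x)}{1-x^2\I_{2,2}(x)}$ is exactly what the paper obtains (their \cref{eq:err_infinite_overparameterization}, written with $\partial x/\partial\gamma$ in place of your $-x^2/(1-x^2\I_{2,2})$ up to the factor $\rho$). The identification of the critical point $\gammaeff=\sigma_\e^2$ and the Harris-inequality ordering via \cref{cor:harris_Ifunc} are likewise identical to the paper's treatment.

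The genuine divergence is in upgrading the critical point to a global minimum. The paper stays purely analytic: it differentiates once more, uses the raising/lowering identity to rewrite $\Ishift_{2,2}=\phi\Ishift_{2,3}+x\Ishift_{3,3}$, and then checks directly that $\partial\err{\mu}/\partial\gamma=\rho(\gammaeff-\sigma_\e^2)\bigl(\tfrac{\partial^2x}{\partial\gamma^2}\phi\Ishift_{2,3}-2\rho\tfrac{\partial x}{\partial\gamma}\I_{2,3}\Ishift_{3,3}\bigr)$, where the bracketed factor is positive because $\partial^2x/\partial\gamma^2>0$ and $\partial x/\partial\gamma<0$. This is a self-contained sign computation. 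Your route instead invokes \cref{rem:ridge} to identify the limiting error with that of linear ridge regression and then uses the pointwise-in-$\x$ Bayes optimality of the posterior mean (ridge at $\gammaeff=\sigma_\e^2$) to conclude, noting correctly that this optimality survives integration over any test-covariate law. The tangent-line repackaging $\err{\mu}=G+(\sigma_\e^2-\gammaeff)G'$ is a nice structural observation the paper does not make explicit. The tradeoff: your argument is more conceptual and explains \emph{why} the optimum is shift-independent (Bayes optimality is pointwise), but it leans on \cref{rem:ridge}, whose full justification under covariate shift the paper only sketches (the appendix verifies it for $\Sigmatr=\Sigmate$). The paper's direct computation avoids that dependency at the cost of being less illuminating.
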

As also observed by \citet{kobak2020optimal, wu2020optimal, bartlett2021deep}, \cref{prop:opt_gamma} confirms that the optimal regularization coefficient $\gammaopt_\mu$ can actually be negative. Nevertheless, owing to the implicit regularization of the nonlinear feature map described in \cref{cor:infinite_overparameterization}, the effective regularizer $\gammaeff$ remains nonnegative, which can be seen by rewriting the expression for $\gammaopt_\mu$ in terms of $\gammaeff = \gamma/\rho + \omega$ to find $\gammaeff^{\text{opt}} = \sigma_{\e}^2$. The latter relationship is the same as the well-known result for optimal regularization in linear ridge regression with orthonormal design \citep[see e.g.][]{Wieringen2015LectureNO}, and might have been anticipated given the connection to ridge regression described in \cref{rem:ridge}. Even without strict orthonormality, \citet[Theorem 2.1]{dobriban2018high} established that the optimal regularizer in high-dimensional ridge regression also equals $\sigma_{\e}^2$ in the absence of covariate shift and under an isotropic source condition on the coefficient of the linear data-generating process.\footnote{Their optimal regularizer is given as $\phi\sigma_{\e}^2/\mE[\beta^\top \beta ]$ but is equivalent to our $\sigma_{\e}^2$ owing to normalization differences.} Perhaps surprisingly, \cref{prop:opt_gamma} shows that their result extends to the setting of random feature regression in the overparameterized limit, and, moreover, that the optimal regularizer is actually independent of the covariate shift, highlighting an additional benefit of overparameterization. This behavior can be seen in \cref{fig:opt_gamma}(f) where the optimal regularizers under all shifts approach each other for large $\phi/\psi$.

\section{Analysis of shift hardness, covariate scale, and assumptions}
\label{sec:counterexamples}

In the previous sections, many concepts and assumptions have emerged as prerequisites for the stated results. The goal of this section is to interpret those assumptions and explain why they are a natural consequence of the model and our analysis. By giving concrete counterexamples, we exhibit the various ways our results can break down when certain assumptions are violated. In some cases, assumptions were used in the proof of a result but the conclusion seems robust to relaxing the assumption---this kind of robustness speaks to the generality of our qualitative observations. This section also allows for us to introduce some refinements of our previous results. Interestingly, we find that optimal regularization can preserve the monotonicity of the total error with respect to the shift strength, even when the assumptions of \cref{prop:error_order} are violated.

In \cref{app:nec_monotonicity}, we look in detail at the monotonicity condition in the partial order over covariate shifts given in \cref{def:hard}. By constructing a simple example of several LJSDs, we demonstrate that when the monotonicity condition is violated between a pair of LJSDs, i.e. the LJSDs are incomparable, the ordering of their total errors becomes model-dependent and can be changed by varying the overparameterization ratio or the activation function. Nevertheless, we introduce a quantitative measure of a shift's deviation from monotonicity that can be leveraged to deduce model-independent upper and lower bounds on its total error.

In \cref{app:large_label_noise}, we investigate the impact of label noise, i.e. the signal-to-noise ratio for \cref{eq_data_dist}. More specifically, we show what happens when the condition $\sigma_\e^2\leq\omega$ is violated. We find this condition is necessary to guarantee the error ordering in \cref{prop:error_order} and \cref{cor:error_order}, since making the label noise sufficiently large can cause the variance to greatly increase and violate the asserted inequalities. We find that optimal regularization can prevent this behavior and maintain the error ordering, even in the presence of large label noise.

\cref{sec:scale} contains a detailed discussion of the effect of the training and test covariance scales, which are denoted by $s$ and $s_*$. In \cref{sec:nonmonotonic_bias}, we begin by developing an intuition for how a mismatch between $s$ and $s_*$ affects the random feature model in high dimensions and can lead to nonmonotonicity in the bias as a function of the overparameterization ratio $\phi/\psi$ and the regularization constant $\gamma$. Following that, in \cref{sec:model_dependent}, we study how $s$ and $s_*$ are related to the model-dependent constants of \cref{eq:constants_train} and the model-dependent conditions of several propositions, such as the condition $\xi_2 \le \xi_1 \le 1\le\zeta_2/\zeta_1 \le \eta_2/\eta_1$ in \cref{eqn:error_xi} of \cref{prop:error_order}, and show that the monotonicity of the bias and total error can fail when these conditions are violated. Finally, in \cref{sec:pure_scale}, we focus on the case of pure-scale shifts, refine \cref{prop:pure_scale}, and show how the conditions for the monotonicity of the bias and the variance can be mutually exclusive.

To conclude this section, \cref{sec:nonlinear_trend} probes the prerequisite conditions in \cref{prop:slope}. We find that the overparameterization condition $\phi>\psi$ is critical, with highly nonlinear trends when it is violated. In contrast, we observe the ridgeless assumption can be relaxed without significant deviations from linearity.

\subsection{Monotonicity condition in partial order of shifts}
\label{app:nec_monotonicity}

The partial order we introduced on covariate shifts in \cref{def:hard} has been central to much of our discussion. The core idea arises from asking what assumption is necessary to make conclusions about the ordering of the functionals $\I$ and $\Ishift$ using the Harris inequality, but a natural first-principles interpretation can be developed based on how well the target function can be learned in different eigendirections from the training data, and then how much the learned function will be tested in each eigendirection based on its representation in the test data. 

Unfortunately, the partial order condition is quite strict, as it requires all eigendirections of the shift to act in concert, rendering many shifts incomparable under the partial order. Such incomparable shifts do not unanimously hurt or help in all eigendirections, and thus their combined effect is a complicated average of the effects in each eigendirection. While \cref{thm:main_b_v} accounts for this averaging in a precise way, it nevertheless remains challenging to draw simple conclusions about the behavior of incomparable shifts.

\subsubsection{Monotonicity is necessary to guarantee ordering of the total error}
The monotonicity of overlap coefficients in~\cref{def:hard} is necessary in order to guarantee the ordering of errors in \cref{prop:error_order}. For a simple counterexample, consider the following four LJSDs,
\begin{align}
    \mu_1 &= \frac{1}{4}(\delta_{(\lambda_1,\lambda_4)} + \delta_{(\lambda_2,\lambda_3)} + \delta_{(\lambda_3,\lambda_2)} + \delta_{(\lambda_4,\lambda_1)})\label{eqn:mu1}\\   
    \mu_2 &= \frac{1}{4}(\delta_{(\lambda_1,\lambda_4)} + \delta_{(\lambda_2,\lambda_3)} + \delta_{(\lambda_3,\lambda_1)} + \delta_{(\lambda_4,\lambda_2)})\\
    \mu_3 &= \frac{1}{4}(\delta_{(\lambda_1,\lambda_2)} + \delta_{(\lambda_2,\lambda_4)} + \delta_{(\lambda_3,\lambda_3)} + \delta_{(\lambda_4,\lambda_1)})\\
    \text{and}\quad\mu_4 &= \frac{1}{4}(\delta_{(\lambda_1,\lambda_1)} + \delta_{(\lambda_2,\lambda_2)} + \delta_{(\lambda_3,\lambda_3)} + \delta_{(\lambda_4,\lambda_4)})\label{eqn:mu4}\,, 
\end{align}
where $\lambda_1 = 0.6$, $\lambda_2 = 0.24$, $\lambda_3 = 0.12$ and $\lambda_4 = 0.04$. Note that $\mu_4 = \munull$ and these LJSDs have equal training and test covariance scales, i.e. $s=s_1=s_2=s_3=s_4=1$. Moreover, the partial order in~\cref{def:hard} gives $\mu_1 \ge \mu_4$ and $\mu_2 \ge \mu_4$, with all other pairs of LJSDs incomparable. In particular, focusing on $\mu_2$ and $\mu_3$, the ratios of overlap coefficients are
\begin{equation}
    \frac{r_{2,1}}{r_{3,1}} = \frac{1}{6}\,,\quad  \frac{r_{2,2}}{r_{3,2}} = 3\,,\quad  \frac{r_{2,3}}{r_{3,3}} = 5\,,\quad\text{and}\quad  \frac{r_{2,4}}{r_{3,4}} = \frac{2}{5}\,,
\end{equation}
so the sequence $r_{2,i}/r_{3,i}$ is nonmonotonic in $i$. The violation of monotonicity is minimal, in the sense that only a single ratio ($r_{2,4}/r_{3,4}$) is out of order; nevertheless, the strict model-independent ordering of test errors is broken because of this nonmonotonicity, as can be seen in \cref{fig:incomparable_LJSDs}. The model-dependence in the ordering is evident for models of varying sizes in \cref{fig:incomparable_LJSDs}(a) and models with different nonlinearities in \cref{fig:incomparable_LJSDs}(b). On the other hand, for the pairs of LJSDs for which the monotonicity of overlap coefficients is satisfied, \cref{fig:incomparable_LJSDs} shows that the ordering of test errors is satisfied for all models, as predicted by \cref{prop:error_order}.

\begin{figure}[t!]
\centering
\includegraphics[width=0.8\linewidth]{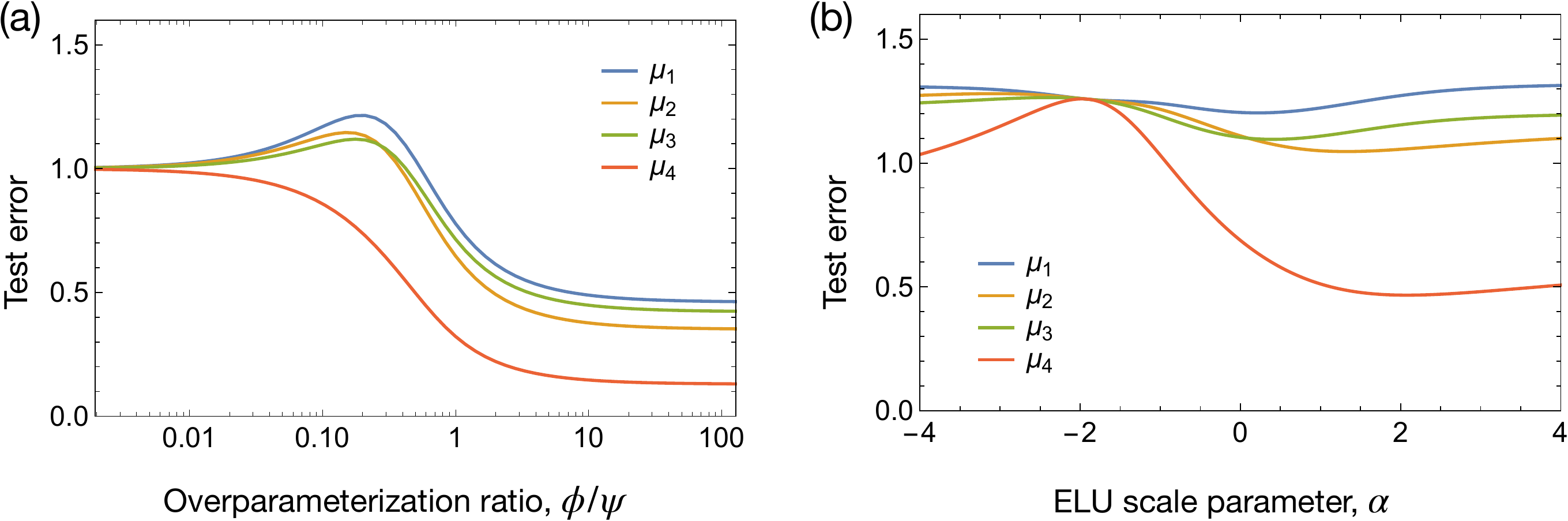}
 \caption{Predictions of \cref{thm:main_b_v} for the total test error with $\phi=0.5$, $\gamma=0.1$, $\sigma_{\e}^2=0.01$, $\phi=0.5$, and $\sigma(x) = \alpha(e^x - 1)1_{x\le0} + \text{ReLU}(x)$ \citep[i.e. the ELU function of][]{clevert2015fast} as a function of (\textbf{a}) the overparameterization ratio $\phi/\psi$ for $\alpha=0$ and (\textbf{b}) the ELU scale parameter $\alpha$ for $\phi/\psi=0.25$, for four different LJSDs $\mu_1,\ldots,\mu_4$. The $\mu_i$ are described in Eqs.~(\ref{eqn:mu1})-(\ref{eqn:mu4}). The only pairs of LJSDs that are comparable via the partial order in~\cref{def:hard} are $\mu_1 \ge \mu_4$ and $\mu_2\ge\mu_4$, and the strict ordering of the error for those pairs is seen for all values of $\phi/\psi$ and $\alpha$. The orange ($\mu_2$) and green ($\mu_3$) curves cross one another in both (\textbf{a}) and (\textbf{b}), illustrating two ways the ordering of the error exhibits model dependence induced by nonmontonicity of overlap coefficients in~\cref{def:hard}, thereby showing that the monotonicity condition is necessary for~\cref{prop:error_order}.}
 \label{fig:incomparable_LJSDs}
\end{figure}

\subsubsection{Predictable behavior for incomparable shifts}
One natural question is whether \cref{prop:error_order} has anything at all to say about shifts for which the overlap ratios are not strictly monotonic. First, note that small violations of monotonicity for finite-size systems caused by $o(1)$ overlap coefficients do not survive in the high-dimensional limit. To make this point concrete, consider the training and test covariance matrices given by $\Sigmatr =\diag(\la_1, \la_2,\ldots, \la_{n_0})= \diag(1/ n_0, 2/ n_0,\ldots, 1)$ and
\eq{
    \Sigmate_{ij}=\begin{cases}
        1/ n_0 & \text{if } i=j=2 \\
        \Sigmatr_{ij} & \text{otherwise}
    \end{cases},
}
resulting in overlap coefficients $r_2=1/n_0$ and $r_i=\la_i$ otherwise. Evidently, the monotonicity condition on $r_i/\la_i$ akin to \cref{prop:ordering_lr} does not hold; however, the limit of this covariate shift has LJSD $\mu = \munull=\text{Unif}(\h{(x,x): x\in[0,1]})$, meaning that the finite-size nonmonotonicity vanishes in the high-dimensional limit. This observation suggests that the limiting predictions might still be useful in practice despite small deviations from monotonicity at finite-size. 

In fact, even if nonmonotonicity in the overlap ratios does persist in the high-dimensional limit, \cref{def:hard} and~\cref{prop:error_order} can still provide useful information. To illustrate this point, suppose the LJSD $\mu$ is incomparable with $\munull$, implying that the function $\E[r|\la]/\la$ is nonmonotonic, and introduce the functions
\eq{
    L(x) \deq \inf_{0\leq\la\leq x} \frac{\E_\mu[r|\la]}{\la} \quad \text{and} \quad U(x) \deq \sup_{0\leq\la\leq x} \frac{\E_\mu[r|\la]}{\la}\,,
}
which satisfy $L(\la)\leq {\E_\mu[r|\la]}/{\la} \leq U(\la)$ for all $\la\geq0$, with $L(\la)$ nonincreasing and $U(\la)$ nondecreasing in $\lambda$. The functions $L$ and $U$ are a quantitative way to characterize the extent to which $\mu$ violates the monotonicity condition. Indeed, if ${\E_\mu[r|\la]}/{\la}$ is nonincreasing or nondecreasing, then ${\E_\mu[r|\la]}/{\la}=L(\la)$ or ${\E_\mu[r|\la]}/{\la}=U(\la)$ respectively. In general, $L$ and $U$ create the smallest envelope around the function $\E[r|\la]/\la$ possible with nonincreasing and nondecreasing functions (see \cref{fig_incomparable}(a,b) for an example).

In \cref{prop_incomparable}, we translate this quantitative characterization of nonmonotonicity into a bound on $B_\mu$ and $E_\mu$ relative to $B_{\munull}$ and $E_{\munull}$.
\begin{figure}[t]
    \centering
    \includegraphics[width=0.8\linewidth]{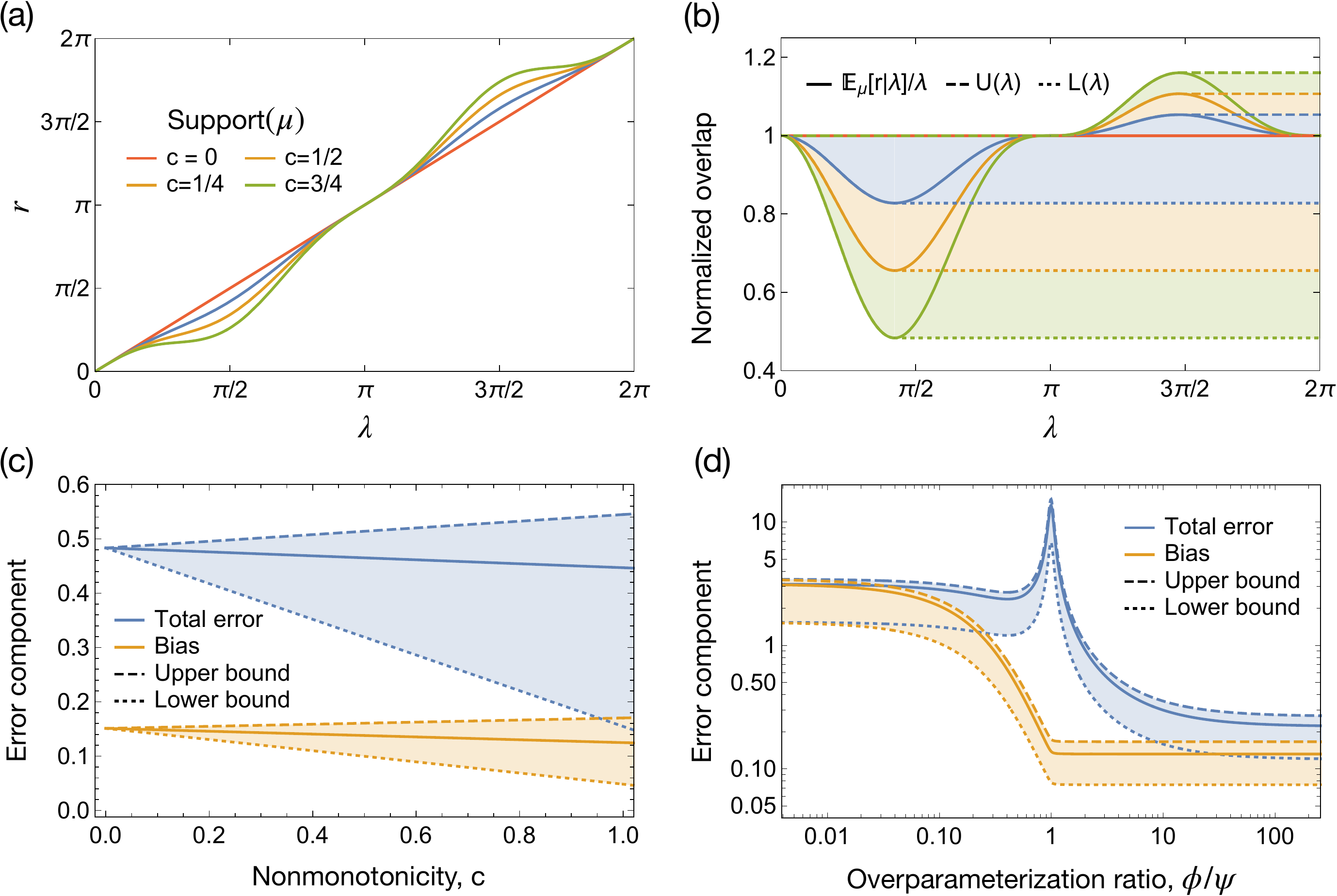}    
    \caption{We define the parametric family of LJSD $\mu$ whose distribution is given by the random variable $\pa{\la, \la - c \sin(\la)^3 }\in\R_+^2$ for $\la\sim \text{Unif}[0,2\pi]$. 
    \textbf{(a)} The support of $\mu$ in $\R_+^2$ is plotted for different values of $c\in\h{0,0.3,0.6,0.9}$. As expected $\munull$ is recovered by $c=0$ and $\mu$ becomes increasingly nonmonotonic as $c$ increases.
    \textbf{(b)} The ratio of overlaps $\E_\mu[r|\la]/\la$, as in \cref{def:hard}, is plotted as a function of $\la$ (solid line) for the same values of $c$. Clearly this function is neither nonincreasing nor nondecreasing in $\la$ when $c>0$, and moreover, the size of its oscillations are controlled by $c$. This is reflected in $U$ (dashed line) and $L$ (dotted lined), which progressively differ from $1$ as $c$ increases, thus increasing the shaded area between these curves. \textbf{(c)} The total error (blue) and bias (orange) are plotted for $\mu$ as a function of $c$, along with their upper (dashed) and lower (dotted) bounds from \cref{prop_incomparable}. For large, $c$ the gap between the upper and lower bounds are large, but as $c\to0$ they tightly bound the total error and bias. \textbf{(d)} $B_\mu$ and $E_\mu$ are tracked and enveloped by their bounds for all parameterization ratios.}
    \label{fig_incomparable}
\end{figure}
\begin{proposition}\label{prop_incomparable}
    Let $\mu$ be a LJSD such that $\E_\mu[\la]=s=s_*=\E_\mu[r]$. Then, in the setting of~\cref{thm:main_b_v},
    \eq{\label{eq_bias_bound}
        \frac{\E_{\mutr}\qa{ \la L(\la) }}{\E_{\mutr} \qa{\la} } B_{\munull} \leq B_\mu \leq \frac{\E_{\mutr} \qa{\la U(\la)} }{\E_{\mutr}\qa{ \la} } B_{\munull},
    }
    and, if $\sigma_{\e}^2 \leq \omega$,
    \eq{\label{eq_total_error_bound}
        \frac{\E_{\mutr}\qa{ \la L(\la) }}{\E_{\mutr} \qa{\la} } E_{\munull} \leq E_\mu \leq \frac{\E_{\mutr} \qa{\la U(\la)} }{\E_{\mutr}\qa{ \la} } E_{\munull}.
    }
\end{proposition}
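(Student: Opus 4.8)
The plan is to reduce both inequalities to a pair of Harris-type correlation inequalities for the functionals $\Ishift_{1,1}$ and $\Ishift_{1,2}$, after which the argument runs parallel to the proof of \cref{prop:error_order}. The first observation is that, because $\mu$ and $\munull$ share the same $\la$-marginal $\mutr$ and, by hypothesis, the same overall scale $s=s_*$, every quantity in \cref{thm:main_b_v} that is determined by $\mutr$ and $s$ alone takes the same value for $\mu$ as for $\munull$: this includes $\xi=1$, the constants $\omega=\omegas$ and $\rho=\rhos$, the $\I_{a,b}$ functionals, the root $x$ of the self-consistent equation, and $\tauone$, $\taub$, $\partial x/\partial\gamma$. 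With $\xi=1$ the first two terms of \cref{eqn:main_bias} vanish, so $\bias{\mu}=\phi\,\Ishift_{1,2}(x)$; and inspecting \cref{eqn:main_var} shows that $\err{\mu}$ is an affine function $c_0+c_{11}\Ishift_{1,1}+c_{12}\Ishift_{1,2}+c_{22}\Ishift_{2,2}$ whose coefficients depend on $\mutr$, $s$, $\gamma$, $\phi$, $\psi$, $\sigma_\e^2$, and the activation constants, but not on $\mu$ itself; for $\mu=\munull$ each $\Ishift_{a,b}$ reduces to $\I_{a,b}$.

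Next I would remove the $\Ishift_{2,2}$ term. The partial-fraction identity $(\phi+x\la)^{-1}-\phi(\phi+x\la)^{-2}=x\la(\phi+x\la)^{-2}$ implies $x\,\Ishift_{2,2}=\Ishift_{1,1}-\phi\,\Ishift_{1,2}$, so $\err{\mu}=c_0+\tilde c_{11}\Ishift_{1,1}+\tilde c_{12}\Ishift_{1,2}$ with $\tilde c_{11}=c_{11}+c_{22}/x$ and $\tilde c_{12}=c_{12}-\phi c_{22}/x$. The crucial step is to verify that $c_0,\tilde c_{11},\tilde c_{12}\ge0$ whenever $\sigma_\e^2\le\omega$ (and $\gamma\ge0$); this is the same nonnegativity bookkeeping that underlies \cref{prop:error_order} in the scale-matched case, using $\omega\ge0$, $\tauone,\taub\ge0$, and $-\partial x/\partial\gamma\ge0$ from \cref{eq:xprime}. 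For the bias no sign condition is needed, since $\bias{\mu}=\phi\,\Ishift_{1,2}$ has a manifestly positive coefficient.

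Finally I would bound the remaining functionals by the envelopes. Writing $\Ishift_{a,b}=\phi\,\E_{\mutr}\qa{\E_\mu[r|\la]\,\la^{a-1}(\phi+x\la)^{-b}}$ and using the pointwise bounds $\la L(\la)\le\E_\mu[r|\la]\le\la U(\la)$, I would introduce the probability measure $\nu$ with $d\nu=\la\,d\mutr/\E_{\mutr}[\la]$, so that the constants in the proposition are exactly $c_L=\E_{\mutr}[\la L(\la)]/\E_{\mutr}[\la]=\E_\nu[L]$ and $c_U=\E_{\mutr}[\la U(\la)]/\E_{\mutr}[\la]=\E_\nu[U]$, with $c_L\le1\le c_U$ because $\E_\nu[\E_\mu[r|\la]/\la]=\E_\mu[r]/\E_{\mutr}[\la]=1$. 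Since the weights $(\phi+x\la)^{-1}$ and $(\phi+x\la)^{-2}$ are nonincreasing in $\la$, while $L$ is nonincreasing and $U$ nondecreasing, the Harris inequality gives $c_L\,\I_{a,b}\le\Ishift_{a,b}\le c_U\,\I_{a,b}$ for $(a,b)\in\{(1,1),(1,2)\}$. Multiplying the $(1,2)$ bound by $\phi$ yields \cref{eq_bias_bound}; combining both bounds with $\tilde c_{11},\tilde c_{12},c_0\ge0$ and $c_L\le1\le c_U$ gives $c_L\,\err{\munull}\le c_0+\tilde c_{11}\Ishift_{1,1}+\tilde c_{12}\Ishift_{1,2}=\err{\mu}\le c_U\,\err{\munull}$, which is \cref{eq_total_error_bound}.

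The hard part will be the coefficient nonnegativity in the second step: after eliminating $\Ishift_{2,2}$ one must check that $\tilde c_{12}=c_{12}-\phi c_{22}/x$ does not turn negative, and this is precisely where the hypothesis $\sigma_\e^2\le\omega$ enters, exactly as in \cref{prop:error_order}. A secondary technical point is to confirm $\E_{\mutr}[\la U(\la)]<\infty$ (equivalently, that $\E_\mu[r|\la]/\la$ remains bounded near the lower edge of the support of $\mutr$), without which the upper bound, though still valid, would be vacuous.
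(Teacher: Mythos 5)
Your proposal follows essentially the same route as the paper: you set $\xi=1$ to reduce the bias to $\phi\,\Ishift_{1,2}$, use the raising-and-lowering identity to eliminate $\Ishift_{2,2}$ and express $\err{\mu}$ as $C_1\omega+C_2\Ishift_{1,1}+C_3\Ishift_{1,2}$ with nonnegative coefficients (which, as you correctly observe, is where $\sigma_\e^2\le\omega$ enters, exactly as in \cref{prop:error_order}), and close by a Harris-inequality argument with $c_L\le 1\le c_U$. The paper packages your Harris step into the standalone \cref{lem_harris_incomparable} and simply reuses the coefficient nonnegativity already established in the proof of \cref{prop:error_order}, but the mathematical content is identical.
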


In the special case where $\mu=\munull$, \cref{prop_incomparable} yields a tight bound since $L(\la)=U(\la)=1$. We exemplify \cref{prop_incomparable} for a specific family of LJSDs $\mu$ given by the random variable
\eq{    
    \pa{\la, \la - c \sin(\la)^3 }\in\R_+^2 \text{ for } \ \la \ \sim \text{Unif}[0,2\pi]
}
parameterized by some constant $c\in[0,1]$ (see \cref{fig_incomparable}(a) for the LJSD's support). When $c=0$, we recover the case of no shift $\munull$. By increasing $c$, the magnitude of the nonmonotonicity increases, since $c$ is the coefficient of the oscillatory term $\sin(\la)^3/\la$. Note that for all $c$, we have
\eq{
    \E_\mu[r] = \int_0^{2\pi} \frac{x - c\sin(x)^3 }{2\pi} dx = \pi,
}
implying $\mu$ is always an equal-scale shift. Calculating further, we find that
\al{\label{eq_muc_Erla}
    \E_\mu[r|\la]/\la &= 1-c\sin(\la)^3/\la,\\
    \label{eq_muc_L}L(\la) &= \begin{cases} 1 - c \sin(\la)/\la & \text{if } \la \leq x_L \\ 1 - c \sin(x_L)/x_L & \text{if } \la>x_L \end{cases}, \quad \text{and} \\ 
    \label{eq_muc_U} U(\la) &= \begin{cases} 1 & \text{if } \la\leq\pi \\ 1 - c \sin(\la)/\la & \text{if } \pi<\la \leq x_U \\ 1 - c \sin(x_U)/x_U & \text{if } x_U<\la \end{cases},
}
where $x_L\approx 1.324$ and $x_U\approx4.641$ are solutions to $3x=\tan(x)$. In \cref{fig_incomparable}(b), we can see the effect of $c$ on the functions $L$ and $U$: as $c$ increases, the envelopes around $\E_\mu[r|\la]/\la$ must also increase. Indeed, many possible LJSDs could lie within each envelope, which should make intuitive the requirement for a bound, as well as for this bound to loosen as the envelope increases in size.

Finally, the coefficients $\E_\mu[\la L(\la)]$ and $\E_\mu[\la U(\la)]$ can be calculated by integrating \cref{eq_muc_L,eq_muc_U} with respect to the uniform density over $[0,2\pi]$. Clearly, the difference in these coefficients (and hence the bounds on the bias and test error) depend on how close $L$ and $U$ are, i.e. the size of the envelope. In \cref{fig_incomparable}(c), we can observe this behavior. As $c$ increases, so does the envelope around $\E_\mu[r|\la]/\la$ and thus also the looseness of the bound on the bias and total error of the shift. Note that for the simple LJSDs considered here, we found that $L$, $U$, $B_\mu$, and $E_\mu$ are linear in $c$, which occurs because the nonmonotonic component of $\E_\mu[r|\la]$, i.e. $\sin(\la)^3/\la$, is scaled by $c$. Of course, this linear behavior is specific to this example and need not occur in general. Finally, in \cref{fig_incomparable}(d), we observe how the respective bounds follow the bias and total error as a function of the overparameterization ratio, $\phi/\psi$.

\subsection{Label noise}
\label{app:large_label_noise}

\cref{prop:error_order} and \cref{cor:error_order} provide conditions for the bias and total error to decrease in response to easier shifts, but they place no explicit constraints on the variance. Indeed, the variance is not guaranteed to decrease, and while it happened to do so in the setting of \cref{fig:predictions}, it may instead increase, as can be seen in \cref{fig:label_noise}. Still, for the total error to decrease in response to easier shifts, any increase in the variance must be offset by a larger decrease in the bias. The small label-noise condition $\sigma_{\e}^2 \leq \omega$ ensures that this is the case. Note that since the label noise is simply an additive effect on the labels, its magnitude does not affect the bias, but recall from \cref{eq_bias_variance_def} that this is not the only source of variance in the random feature model \citep{adlam2020understanding}.

\cref{fig:label_noise}(a) shows the bias, variance, and total error as a function of the label noise for an easy shift ($\theta = 2$) and a hard shift ($\theta = -2$). As expected, the bias is independent of the label noise, and the variance and total error increase with increasing $\sigma_{\e}^2$. For small label noise ($\sigma_{\e}^2 < \omega$), the total error for the hard shift is larger than that of the easy shift, consistent with \cref{prop:error_order}. For $\sigma_{\e}^2 > \omega$, however, as the label noise increases, the variance for the easy shift increases more rapidly than that of the hard shift, leading to a crossing of the error curves so that the total error for the easy shift becomes larger than that of the hard shift.

\begin{figure}[t!]
\centering
\includegraphics[width=\linewidth]{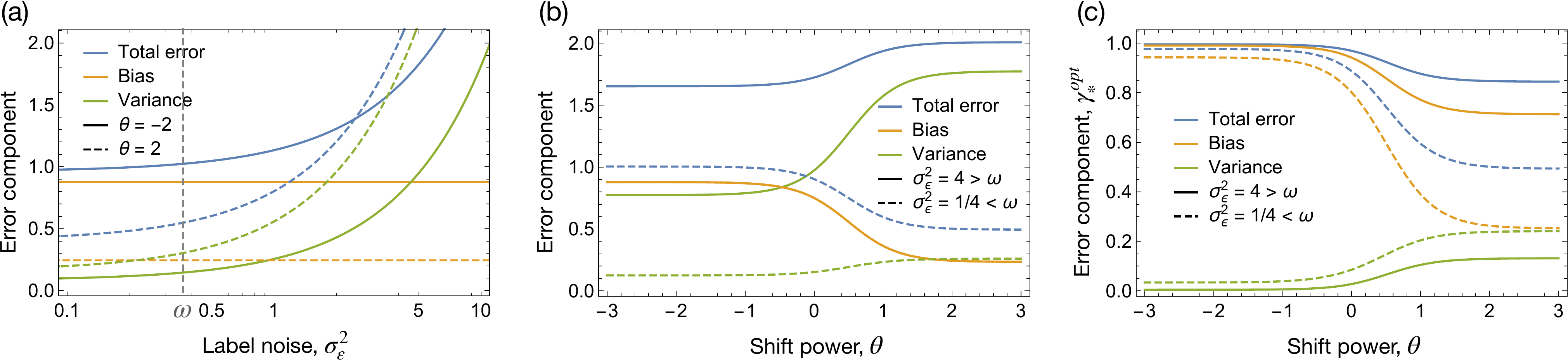}
 \caption{Predictions from \cref{thm:main_b_v} for the total test error, bias, and variance with $\alpha=4$, $\phi=4$, $\psi=1/4$, $\sigma = \text{ReLU}$, $\gamma=10^{-4}$, and $\omega = 1-\tfrac{2}{\pi} \approx 0.36$ as a function of \textbf{(a)} increasing label noise and \textbf{(b,c)} increasing shift power $\theta$ (corresponding to progressively easier shifts in the sense of \cref{def:hard}). \textbf{(a)} For sufficiently large label noise, the variance for easy shift increases more rapidly than that of the hard shift, leading to a crossing of the error curves so that the total error for the easy shift becomes larger than that of the hard shift. \textbf{(b)} The predictions of \cref{prop:error_order} are corroborated when $\sigma_{\e}^2 < \omega$, but when $\sigma_{\e}^2 > \omega$ the total error and variance can increase with increasing $\theta$. \textbf{(c)} Under the same configuration in (\textbf{b}), but with optimal regularization, the total error is seen to decrease for easier shifts, irrespective of the label noise.}
 \label{fig:label_noise}
\end{figure}

\cref{fig:label_noise}(b) shows a complementary perspective on the same phenomenon. In this panel, the bias, variance, and total error are plotted as a function of the shift power for large ($\sigma_{\e}^2 = 4 > \omega$) and small ($\sigma_{\e}^2 = 1/4 < \omega$) label noise. As predicted by \cref{prop:error_order}, the bias is the same for both values of label noise, and it decreases for easier shifts. On the other hand, the variance is seen to increase for easier shifts, and for large label noise, it increases sufficiently rapidly so as to induce an increase in the total error, whereas for small label noise it does not. 

Together, \cref{fig:label_noise}(a) and \cref{fig:label_noise}(b) show that the small label-noise condition in \cref{prop:error_order} and \cref{cor:error_order} is necessary in order to prevent a large increase in the variance from breaking the ordering of the total error. Of course, in practice it might not be possible to avoid large label noise, and regularization is an effective remedy to prevent the variance from becoming too large in such cases. It is therefore natural to wonder whether tuning the regularizer optimally is sufficient to guarantee that the total error decreases in response to easier shifts, irrespective of the label noise. \cref{prop:opt_gamma} proves this is indeed the case for infinite overparameterization and \cref{fig:label_noise}(c) also shows it to be the case for finite overparameterization ratios in the setting of \cref{fig:label_noise}(b). 

\subsection{Discrepancies in scale between the training and test distributions}
\label{sec:scale}

In the high-dimensional limit, the behavior of the random feature matrix can be modeled by a projection of the data into a linear component and noise component, whose scales are determined by the activation function $\sigma$. This notion is made mathematically precise by the linearization technique of \citet{adlam2019random}, reviewed in \cref{sec_gaussian}. Recalling the definition of the constants in \cref{eq:constants_train}, the linear component has scale $\sqrt{\rho}$ and the noise component has scale $\sqrt{\eta-\zeta}$. Note that the bias of the predictive function is only affected by its linear component, whereas the variance depends on the noise component as well. This observation helps explain why results concerning the bias, such as \cref{prop:error_order} in \cref{eqn:bias_xi}, only make assumptions on $\rho$, while results concerning the variance or total error, such as \cref{prop:error_order} in \cref{eqn:error_xi}, also make assumptions on $\eta$. 

In the presence of covariate shift, the training and test distributions may have different scales, leading to two separate sets of constants and an expanded set of possible relationships between them. As a result, for unequal scales, many of our results require specifying substantially more intricate conditions on the constants (e.g. compare \cref{prop:error_order} to \cref{cor:error_order}). In this section, we provide a detailed analysis of these conditions and the effect of unequal scales.

\subsubsection{Unequal scales and nonmonotonicity of the bias}
\label{sec:nonmonotonic_bias}
Among the various constants induced by the activation function $\sigma$, of particular interest is $\xi = \sqrt{\rhos/\rho}$, which is the ratio of the scales of the linear component of $\sigma$ on the training and test distributions. \cref{thm:main_b_v} shows that the bias and variance have markedly different dependence on $\xi$---while the variance scales multiplicatively with $\xi^2$, the bias has nontrivial quadratic dependence.

In fact, the behavior of the bias can be radically different depending on whether $\xi$ is greater or less than one. Indeed, \cref{prop:error_order,prop:b} both require the assumption that $\xi\leq 1$, since the bias can be highly nonmonotonic when $\xi>1$. To elucidate this dichotomy, it is fruitful to understand how the bias behaves as a function of $x$. Because $x$ is monotonic with respect to various other parameters, such as $\gamma$ (see \cref{eq:dxdgamma_sm1}) and $\phi/\psi$ (see \cref{eq:dxdpsi}), any nonmonotonicity with respect to $x$ implies nonmonotonicity with respect to those parameters as well.

Recall that the bias compares the average prediction at a test point $\bfx$ against its label in squared error, and then averages over the test point and $\beta$ (see \cref{eq_bias_variance_def}). Using the linearization technique and splitting the random features on the test point into their linear and noise components, we find that the average predictor satisfies
\eq{
    \mE[\hat{y}(\x)] = \beta^\top\frac{1}{\sqrt{n_0}} \E \qa{X K^{-1} F^\top f / n_1} \approx \hat{\beta}^\top{\bfx}/{\sqrt{n_0}},\label{eq_ex_est}
}
where $\hat{\beta}\deq\sqrt{\rhos} \,\E \qa{{W^\top F K^{-1} X^\top }/\p{{n_1\sqrt{n_0}}}}\beta$. Writing $\beta$ and $\hat{\beta}$ in the eigenbasis of $\Sigmatr$, where their coefficents in eigendirection $\bfv_\la$ are $\beta_\la = \bfv_\la^\top \beta$ and $\hat{\beta}_\la = \bfv_\la^\top \hat{\beta}$, the bias can be expressed as,
\al{
    \bias{\Sigmate} &= \E_{\x,\beta} [(\E[\hat{y}(\x)]-y(\x))^2] \\
    &= \E_{\x,\beta}[(\hat{\beta}-\beta)^\top\bfx/\sqrt{n_0})^2] \\
    &= \frac{1}{n_0}\E_{\beta}[(\hat{\beta}-\beta)^\top \Sigmate (\hat{\beta}-\beta)] \\
    &= \frac{1}{n_0} \sum_{\la,\la'} \E_{\beta}[(\hat{\beta}_\la-\beta_\la)^\top \bfv^\top_\la \Sigmate \bfv_{\la'} (\hat{\beta}_{\la'}-\beta_{\la'})] \\
    &\approx \frac{1}{n_0}\sum_\la r_\la \E_{\beta} (\hat{\beta}_\la-\beta_\la)^2\label{eq_bias_finite_decomp}\,,
}
where final line follows from the (nontrivial) observation that $\mathbb{E}_\beta[(\hat{\beta}-\beta)(\hat{\beta}-\beta)^\top]$ is approximately diagonal in the eigenbasis of $\Sigmatr$. \cref{eq_bias_finite_decomp} shows that the bias can be understood as a weighted measurement of how well the expected estimator $\hat{\beta}$ approximates $\beta$ in each of the eigendirections.
\begin{figure}[t]
\centering
\includegraphics[width=\linewidth]{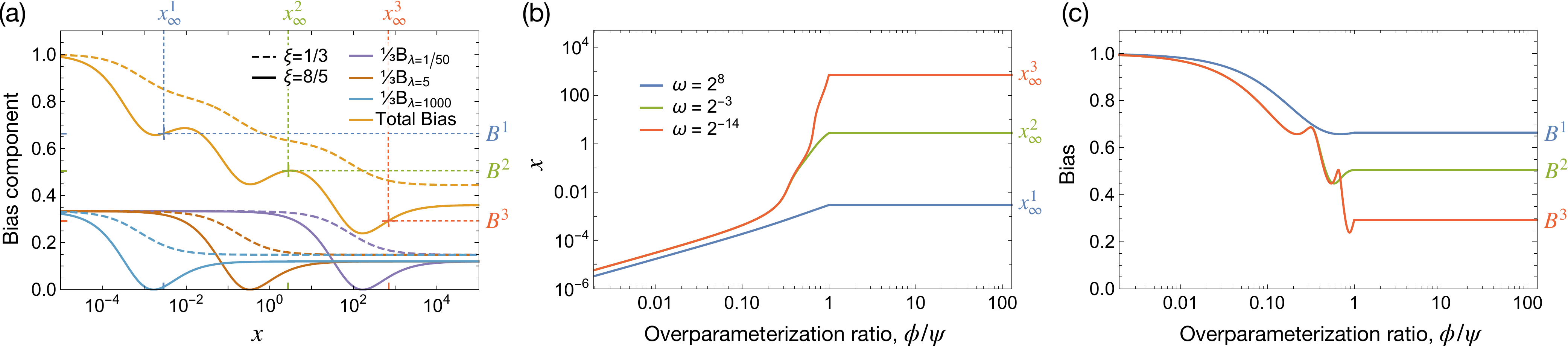}
\caption{Predictions for the bias under LJSD $\mu=\p{\delta_{1/50,1}+\delta_{5,1}+\delta_{1000,1}}/3$ with $\gamma = 0$ and $\phi = 1$ for two different values of $\xi$, $1/3$ and $8/5$. \textbf{(a)} The components of the bias in each eigendirection, \cref{eq_eigendecomp_bias}, normalized by the probability of the eigenvalue (i.e. $1/3$ under $\mu$) are plotted as a function of $x$ for each $\la$ and both $\xi$. When $\xi=1/3<1$ (dashed), all components start at $1/3$, then decrease monotonically to $(1-\xi)^2/3$; when $\xi=8/5>1$ (solid), each component starts at $1/3$, decreases, has a root and minimum at $x=\phi/(\la(\xi-1))$, and then increases again to $(\xi-1)^2/3$. This difference between $\xi<1$ and $\xi>1$ translates into the total bias (orange curves), which is obtained by summing the components from each $\la$. \textbf{(b)} $x$ as a function of $\phi/\psi$ for three values of $\omega$. While $x$ is monotonic to $\phi/\psi$, not all values of $x$ are achievable, with $x$ plateauing to a constant for $\phi>\psi$. These limiting values of $x$ and their corresponding biases are denoted with dotted lines in (a). \textbf{(c)} Combining (a) and (b), the bias is also nonmonotonic in $\phi/\psi$. Smaller values of $\omega$ allow for larger values of $x$, which uncover greater portions of the bias curve in (a), leading to additional nonmonotonicity.
}
\label{fig:xi}
\end{figure}

In fact, it is straightforward to extend this interpretation to the high-dimensional limit by rearranging the expression in \cref{eqn:main_bias} to find $\bias{\mu} = \int B_\lambda(x;\xi) d\mutr(\lambda)$, where 
\eq{\label{eq_eigendecomp_bias}
	B_\lambda(x;\xi) \deq \E[r|\la] w_\xi(x)\,,\qquad\text{and}\qquad w_\xi(x)\deq \pa{1-\xi + \xi \frac{\phi}{\phi + x \la} }^2\,.
}

Comparing \cref{eq_bias_finite_decomp} to \cref{eq_eigendecomp_bias}, we can immediately see that $\E_{\beta} (\hat{\beta}_\la-\beta_\la)^2 \to w_\xi(x)$ in the high-dimensional limit. The factor $\E[r|\la]$ reweights the contribution of each eigendirection by its relative importance in the test distribution. Note that $w_\xi$ depends on the scale ratio $\xi$ but is otherwise independent of the shift. For the simple uniform setting in which $\mathbb{E}[r|\la] = 1$, \cref{fig:xi}(a) illustrates $B_\lambda(x;\xi) = w_\xi(x)$ for $\la\in\h{10^{-2}, 5, 10^3}$ and $\xi\in\h{1/3,8/5}$. It can be observed that the overall shape depends strongly on the value of $\xi$, with monotonicity for $\xi = 1/3 < 1$ and nonmonotonicity for $\xi = 8/5> 1$.

To build intuition for this behavior, we first note that $x\to 0$ yields the zero predictor, i.e. ${\hat{\beta} = 0}$, which follows from the fact that $w_\xi(0)$ is independent of $\la$ and satisfies ${w_\xi(0) = 1 = \mathbb{E}[\beta^\top \beta]/n_0}$.
Similarly, $\lim_{x\to \infty} w_\xi(x) = (1-\xi)^2$, which is also independent of $\la$, and is consistent with a mean predictor that scales each eigendirection uniformly, i.e. $\hat{\beta} = \xi \beta $. Piecing these two limits together paints a picture of how the bias depends on $x$, but the details depend on the value of $\xi$.

When $\xi\le 1$, $w_\xi$ is a decreasing function of $x$ and asymptotes for large $x$ to $(1-\xi)^2$, implying that $x\to\infty$ is optimal from the perspective of bias reduction for all $\la$. Still, unless $\xi = 1$, the predictor remains biased and \emph{underestimates} all $\beta_\la$ by a factor of $\xi$. This underestimation is due to the fact that $\xi < 1$ implies $\rhos < \rho$, i.e. the linear component of the random feature projection of the data is \emph{smaller} on the test distribution than it is on the training distribution, using the terminology of the linearization technique described above.

When $\xi>1$, $w_\xi$ also asymptotes to $(1-\xi)^2$ for large $x$, but this behavior is due to the opposite effect, namely an \emph{overestimation} of all $\beta_\la$ by a factor of $\xi$. This overestimation occurs because $\xi > 1$ implies $\rhos > \rho$, i.e. the linear component of the random feature projection of the data is \emph{larger} on the test distribution than it is on the training distribution. Because $x=0$ corresponds to the zero predictor (which necessarily underestimates the coefficients in each eigendirection), there must be a transition between underestimation and overestimation, leading to a local minimum and nonmonotonicity in $x$. It follows from \cref{eq_eigendecomp_bias} that the local minimum is in fact a zero and occurs at $x=\phi/(\la(\xi-1))$, implying that for every $\la$ there is a unique $x$ that renders the predictor unbiased in the corresponding eigendirection. However, because this optimal $x$ depends on $\la$, it is not possible to optimize all eigendirections simultaneously, creating a tradeoff that can result in highly nonmonotonic behavior of the bias.

This behavior is depicted in \cref{fig:xi}(a), which shows that when $\xi=1/3<1$ (dashed lines), all curves are strictly decreasing, but when $\xi=8/5>1$ (solid lines), the minima of the curves for each $\la$ occur at different values of $x$, resulting in several local minima. Note also that $x\to\infty$ is not optimal.

In \cref{fig:xi}(b), we show how varying $\phi/\psi$ and $\omega$ adjusts the value of $x$. Observe that $x$ is monotonic in $\phi/\psi$ and $\omega$ but not all values of $x>0$ are achievable. Specifically, when $\phi>\psi$, $x$ is constant in $\phi/\psi$ and is bounded by $1/\omega$. Finally, \cref{fig:xi}(c) demonstrates that because $x$ is monotonic in $\phi/\psi$, the nonmonotonic behavior of the bias as a function of $x$ when $\xi>1$ causes nonmonotonicity of the bias as a function of $\phi/\psi$ as well.

\subsubsection{Scale-induced model dependence}
\label{sec:model_dependent}

We now examine~\cref{prop:b} and the consequences of violating the condition $\xi\le 1$. In~\cref{fig:scale_violation}(a), we show the total error, bias, and variance as a function of the overparameterization ratio for a model and LJSD for which $\xi > 1$. While \cref{prop:v} does still guarantee that the variance is nonincreasing in the overparameterized regime, \cref{prop:b} no longer constrains the bias, and indeed we observe nonmonotonicity of the bias, with it actually increasing for large overparameterization ratios. Consistent with the discussion in the previous section, the underlying reason for this behavior can be traced to the $\xi$-dependence of \cref{eqn:main_bias}, which allows for nontrivial interference between the various summands for $\xi > 1$. This tradeoff between terms can be seen as a function of the overparameterization ratio $\phi/\psi$ in \cref{fig:scale_violation}(a), which shows that the bias can be nonmonotonic and increase sufficiently rapidly so as to induce an increase in the total error, even in the overparameterized regime. While it may seem counterintuitive that the bias could increase when the overparameterization ratio is increased, the fact that $\partial x /\partial \psi \le 0$ (see \cref{eq:dxdpsi} and \cref{fig:xi}(b)) means that the discussion in the previous section about $x$ provides the explanation.

It is natural to wonder if the observed nonmonotonicity in \cref{fig:scale_violation}(a) might be a consequence of sub-optimal regularization. To examine this question, in the same figure we also show the behavior of the model under optimal regularization, and indeed we observe that the nonmonotonicity disappears and the bias and total error become nonincreasing functions of the overparameterization ratio.
\begin{figure}[t]
\centering
\includegraphics[width=0.8\linewidth]{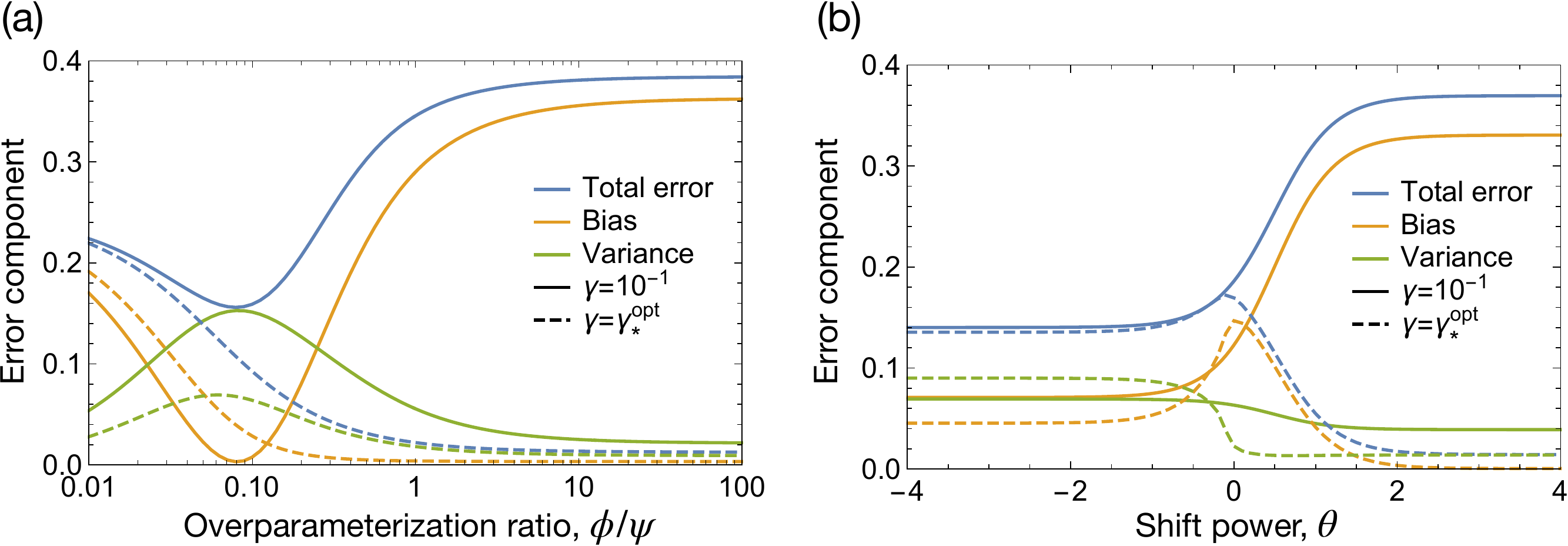}
\caption{Predictions for the total error, bias, and variance for the random feature model with $\sigma(x) = x \exp(-x^2)$, $\phi=0.25$, $\sigma_{\e}^2 = 0.1$, and $\theta = 2$ in (\textbf{a}) and $\phi/\psi = 2$ in (\textbf{b}), for a scaled version of the $(4,\theta)$-diatomic LJSD with $s=1$ and $s_*=0.25$, violating the model-dependent conditions in ~\cref{prop:error_order} and ~\cref{prop:b}. For fixed regularization, $\gamma = 0.1$, the bias and total error are no longer nonincreasing functions of (\textbf{a}) the overparameterization ratio when $\phi/\psi > 1$, nor (\textbf{b}) the shift power~$\theta$. For optimal regularization, $\gamma = \gammaopt_*$, the bias and total error are nonincreasing in $\phi/\psi$, but they are still nonmonontic as a function of $\theta$.}
\label{fig:scale_violation}
\end{figure}

Next, we turn to \cref{prop:error_order} and a comparison of the bias and total error of two different shifts. In this setting, there are two separate sets of model-dependent constants to consider, leading to the somewhat complicated set of conditions given in \cref{eqn:bias_xi} and \cref{eqn:error_xi}. We first remark that these inequalities are sufficient to establish the results, but they are not necessarily optimal. In particular, the condition $\xi_2\le\xi_1\le 1$ in \cref{eqn:bias_xi} forces $\xi_1 = 1$ whenever $\mu_2 = \munull$, preventing any conclusions about \emph{easy} shifts in the presence of unequal scales. While the bounds on $\xi_1$ in~\cref{eqn:bias_xi} can be loosened to allow for such conclusions, the improvements require introducing additional nontrivial spectral information, thereby coupling the model and the shift in an intricate way. We defer this more detailed analysis to future work.

Instead, we focus on the setting in which the two test scales are equal but differ from the training scale in such a manner as to violate the conditions of~\cref{prop:error_order}, i.e. $1 < \xi = \xi_1 = \xi_2$. In this case, the bias and total error can actually increase as the shifts become easier, as illustrated in \cref{fig:scale_violation}(b). Again, the origin of this behavior can be traced to the nontrivial interference between the various summands in \cref{eqn:main_bias} for $\xi > 1$. Interestingly, \cref{fig:scale_violation}(b) also shows that, unlike the behavior as a function of the overparameterization ratio, optimal regularization does not seem to guarantee monotonicity of the bias and error with respect to shift strength.

\subsubsection{Pure-scale shifts}
\label{sec:pure_scale}
Much of the complexity in the statements and analysis of Props.~\ref{prop:error_order}-\ref{prop:slope} stems from the model's dependence on the training and test covariance scales, $s$ and $s_*$. To examine the effect of scale alone, we consider the simple pair of LJSDs $\mu_1 = \delta_{(s,s_1)}$ and $\mu_2=\delta_{(s,s_2)}$, which are the asymptotic limits of  $\Sigmatr = s I_{n_0}$, $\Sigmate_1 = s_1 I_{n_0}$, and $\Sigmate_2 = s_2 I_{n_0}$. We assume $s_1\le s_2$ so that $\mu_1\le \mu_2$, as in the statement of \cref{prop:error_order}. As an example, we focus on the activation function $\sigma(x) = x \exp(-x^2)$, which was introduced in \cref{fig:scale_violation}. By straightforward Gaussian integration, one can easily compute the scale-dependence of the quantities appearing in \cref{prop:error_order}:
\begin{align}
    \eta_1 &= \eta(s_1) = \frac{s_1}{(1+4s_1)^{3/2}}\,,\quad \zeta_1 = \zeta(s_1) = \frac{s_1}{(1+2s_1)^{3}}\,, \quad \xi_1=\xi_1(s_1) = \frac{(1+2s)^{3/2}}{(1+2s_1)^{3/2}}\,,\\
    \eta_2 &= \eta(s_2) = \frac{s_2}{(1+4s_2)^{3/2}}\,,\quad \zeta_2 = \zeta(s_2) = \frac{s_2}{(1+2s_2)^{3}}\,,\quad \xi_2=\xi_2(s_2) = \frac{(1+2s)^{3/2}}{(1+2s_2)^{3/2}}\,.
\end{align}

\begin{figure}[t!]
\centering
\includegraphics[width=0.8\linewidth]{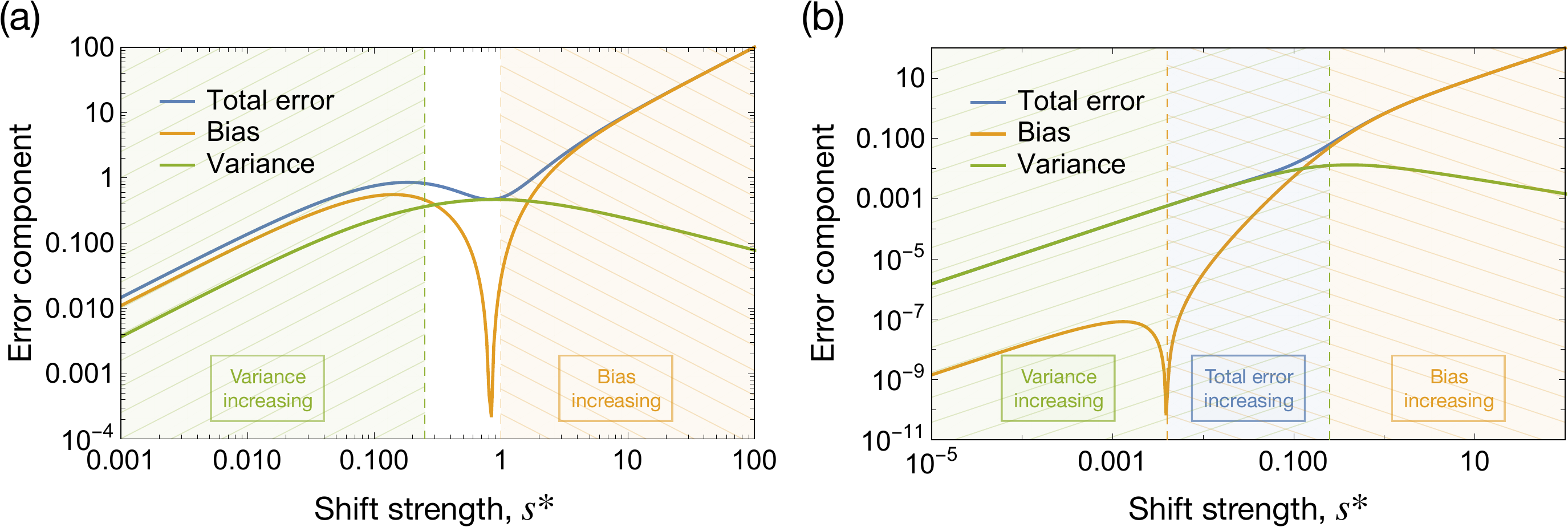}
 \caption{Predictions for the effect of pure-scale shifts on the total error, bias, and variance, where the shifted distribution is simply a scaled version of the identity, i.e. $\mu = \delta_{(s,s_*)}$, with (\textbf{a}) $s=1$ and (\textbf{b}) $s=1/250$, for $\gamma=10^{-3}$, $\phi=1/8$, $\phi/\psi = 4$, $\sigma_{\e}^2 = 1$, and $\sigma(x) = x \exp(-x^2)$. The model-dependent conditions in \cref{prop:pure_scale} predict monotonicity of the bias, variance, and total error in the indicated shaded regions. Nonmonotonicity can be observed outside these regions, highlighting the importance of these conditions in \cref{prop:pure_scale}. (\textbf{a}) For $s=1$, the regions of increasing bias and increasing variance do not overlap, and, as such, the region in between need not have increasing total error. (\textbf{b}) For $s=1/250$, the overlap of the regions of increasing bias and increasing variance extends from $s_*=1/250$ to $s_*=1$, and the total error is observed to be increasing in this region.}
 \label{fig:model_dependent_conditions}
\end{figure}
We now examine how the scale-dependence of these quantities affects the bounds in \cref{prop:pure_scale}. For the bias, the condition in \cref{eq:bias_pure_scale}, $\xi_2\le\xi_1\le 1$, can be rewritten in this setting as
\begin{align}
\label{eqn:bias_bound}
    s \le s_1 \le s_2\,.
\end{align}
All pairs of shift scales $s_1$ and $s_2$ satisfying \cref{eqn:bias_bound} will obey $\bias{\mu_1}\le \bias{\mu_2}$, which implies that the bias corresponding to $\mu=\delta_{(s,s_*)}$ should be a nondecreasing function of the shift strength $s_*$, so long as $s\le s_*$. The earlier discussion in \cref{sec:scale} explains this behavior. See \cref{fig:model_dependent_conditions} for an illustration, which demonstrates how the bias may generally be nonmonotonic as a function of shift strength, but so long as $s_*\ge s$ (to the right of the dashed orange lines), the bias is nondecreasing. 

For the variance, the condition in \cref{eq:variance_pure_scale}, $1\le\zeta_2/\zeta_1 \le \eta_2/\eta_1$, can be written in this setting as
\begin{align}
\label{eqn:s2_bound}
    s_1 \le s_2 \le \frac{1}{4}\left(\sqrt{1+\frac{2}{s_1}}(1+2s_1) - (3+2s_1)\right)\,.
\end{align}
The maximum value of $s_*$ for which all pairs of shift scales $s_1\le s_2 \le s_*$ satisfy \cref{eqn:s2_bound} is given by $s_*=1/4$, which implies that the variance corresponding to $\mu=\delta_{(s,s_*)}$ should be a nondecreasing function of the shift strength $s_*$, so long as $s_*\le 1/4$. This behavior can be seen in \cref{fig:model_dependent_conditions}, which illustrates how the variance may generally be nonmonotonic as a function of shift strength, but so long as $s_*\le 1/4$ (to the left of the dashed green lines), the variance is nondecreasing. 

Finally, by combining the above bounds for the bias and the variance, we see that the total error corresponding to $\mu=\delta_{(s,s_*)}$ should be a nondecreasing function of the shift strength $s_*$, so long as $s\le s_*\le 1/4$. If $s > 1/4$, these conditions cannot be satisfied, which is the case in \cref{fig:model_dependent_conditions}(a), which shows that the regimes for which the bias and the variance are nondecreasing do not overlap. Indeed, when $1/4<s_*<s=1$, the total error is seen to decrease as a function of $s_*$. Note that it is not strictly necessary for the bias and the variance to both be nondecreasing for the total error itself to be nondecreasing, since one term can dominate the other, and indeed \cref{fig:model_dependent_conditions} shows this to be the case for large (and small) $s_*$. \cref{fig:model_dependent_conditions}(b) illustrates the case $s=1/250$, which allows for a finite range of $s_*$ (shaded in blue) satisfying $s\le s_*\le 1/4$. In this regime, both the bias and the variance are nondecreasing, and so the error must be as well, which is indeed the case in the figure.

\subsection{Nonlinear trends with nonzero ridge constant or underparameterization}
\label{sec:nonlinear_trend}
\begin{figure}[t!]
\centering
\includegraphics[width=\linewidth]{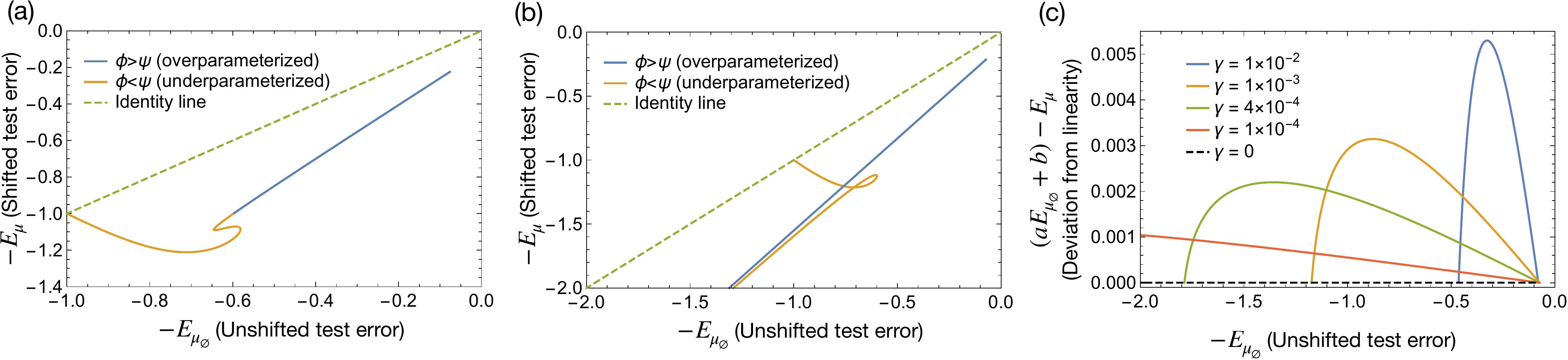}
 \caption{Predictions for shifted versus unshifted error for models with varying values of the overparameterization ratio $\phi/\psi$, obtained via \cref{thm:main_b_v} and \cref{cor:ridgeless_lim} for the $(3,-1/2)$-diatomic LJSD with $\phi=0.5$, $\sigma=\text{ReLU}$, and $\sigma_{\e}^2 = 0.01$. (\textbf{a}) For small but finite regularization, ${\gamma=0.005}$, the relationship is close to linear in the overparameterized regime, but it is markedly nonlinear in the underparameterized regime, highlighting the importance of the condition $\phi>\psi$ in \cref{prop:slope}. (\textbf{b}) The nonlinear relationship in the underparameterized persists in the ridgeless ($\gamma=0$) setting. (\textbf{c}) For nonzero ridge constant, the relationship is nonlinear in the overparameterized regime, but because the deviations from linearity tend to be small, they are not easily visible in (a), so here we plot the deviations $(a \err{\munull}+b)-\err{\mu}$ directly for several values of $\gamma$. The constants $a$ and $b$ are chosen to fit a line through the two points corresponding to $\phi/\psi = 1 + 2^{-16}$ and $\phi/\psi = 1 + 2^{16}$, which correspond to the points where the various curves cross the $x$-axis. For $\gamma = 0$, there is no deviation from linearity, as predicted by \cref{prop:slope}, but small deviations are evident for nonzero $\gamma$.}
 \label{fig:deviations_from_linear_trend}
\end{figure}
The exact linear relationship between in-distribution and out-of-distribution generalization characterized by \cref{prop:slope} in \cref{eqn:slope_modified} relies crucially on the overparameterization condition, $\psi<\phi$, as evidenced in \cref{fig:deviations_from_linear_trend}(a,b), which show marked nonlinearity in the underparameterized regime. This observation is perhaps unsurprising, as severely underparameterized models tend towards chance predictions, which produce comparable errors on shifted and unshifted data. Indeed, similar nonlinear behavior is seen in the low-accuracy regime for realistic models~\citep[Figure 17]{recht2019imagenet}. \cref{fig:deviations_from_linear_trend}(c) illustrates that the ridgeless condition in \cref{prop:slope} is necessary in order to achieve an exactly linear relationship. In this case, the deviations from linearity are quite small, so we plot the deviations $(a \err{\munull}+b)-\err{\mu}$, where the constants $a$ and $b$ are chosen to fit a line through the two points corresponding to $\phi/\psi = 1 + 2^{-16}$ and $\phi/\psi = 1 + 2^{16}$. If the function were perfectly linear, there would be no deviations from this fit line; instead, we observe small deviations for $\gamma > 0$. Interestingly, these deviations are quite small, suggesting an approximately linear relationship might hold more broadly than the narrow confines of \cref{prop:slope}.

\section{Conclusion}
\label{sec:conclusion}
Random feature models are a powerful and practically useful class of machine learning models that are analytically tractable in the high-dimensional limit. Because these models can exhibit rich phenomenology, they have emerged as fruitful null models to separate phenomena that are merely consequences of high-dimensional statistics from those that are unique to deep learning. In this work, we extended this line of research significantly by presenting exact results for the test error, bias, and variance for random feature regression in the high-dimensional limit under covariate shift. Our results enabled the exploration of many interesting phenomena that have been observed empirically for distribution shift.

We defined a partial order over covariate shifts (motivated by the setting of linear regression) to categorize the relative hardness of shifts. We found that when the shift does not change the covariate scale, the partial order provides a model-independent condition that determines when a shift will help or hurt performance. In contrast, when the shift does adjust the scale, its impact on performance is coupled to the model through a set of constants obtained from Gaussian integrals of the activation function. We saw that when there is a discrepancy between the scales of the training and test distributions, counterintuitive behavior can occur. For example, the bias can be highly nonmonotonic as a function of the number of random features or the amount of ridge regularization. Guided by our mathematical analysis, we developed a statistical explanation for many of these unexpected behaviors, including how they may be remedied or whether they remain when the regularization is tuned optimally.

In the simple context of random feature regression, our results capture many empirical phenomena that have been widely observed for realistic models and deep neural networks, suggesting that their occurrence is likely unrelated to deep learning per se, but rather a consequence of more prosaic factors such as high-dimensionality. One such important observation is that the conventional benefits of overparameterization, which exist in the absence of covariate shift, extend to settings with covariate shift in random feature regression. Additionally, our results predict a linear relationship (and the magnitude of its slope) between the generalization error on shifted and unshifted data in the ridgeless, overparameterized random feature model, matching the empirical phenomenology observed in broad generality in \citet{recht2018cifar, recht2019imagenet, pmlr-v119-miller20a, pmlr-v139-miller21b}. 

There exist several opportunities for future work. Exploring covariate shift in models which incorporate more richly structured kernels such as the neural tangent kernel (see \citet{jacot2018neural} and \citet{adlam2020neural}) and accommodating models with feature learning are interesting directions. Investigating the impact of covariate shift (and the relevance of our definition of shift strength) for other generative models or loss functions---such as logistic regression---is important as well. While we focused in this work on the generalization performance of random feature models on a \emph{single fixed} test distribution, understanding the simultaneous generalization performance of a fixed predictor across several distinct groups of test data (i.e. the worst-case loss over several test distributions) is also of interest in practice \citep{sagawa2020distributionally}.

\section*{Acknowledgements}
The authors would like to thank Rodolphe Jenatton, Horia Mania, Ludwig Schmidt, D. Sculley, Vaishaal Shankar, Lechao Xiao, and Steve Yadlowsky for valuable discussions. This work was performed at and funded by Google. No third party funding was used.

\newpage
\appendix

\setcounter{equation}{0}
\setcounter{figure}{0}
\setcounter{table}{0}
\setcounter{section}{0}
\setcounter{theorem}{0}

\onecolumn
\begin{center}

\textbf{{\large Appendix: Covariate Shift in High-Dimensional Random Feature Regression} \\
}
\end{center}
\renewcommand{\contentsname}{Table of Contents: Appendix}
\tableofcontents
\addtocontents{toc}{\protect\setcounter{tocdepth}{2}}
\newpage
\makeatletter
\renewcommand{\theequation}{S\arabic{equation}}
\renewcommand{\thefigure}{S\arabic{figure}}
\renewcommand{\bibnumfmt}[1]{[S#1]}
\section{Useful inequalities}
Here we include the statements and proofs of several auxiliary inequalities that we use throughout the Appendix.
\subsection{Basic properties of the self-consistent equation for \texorpdfstring{$x$}{}}
We begin by establishing several basic inequalities. The definitions of the following quantities can be found in \cref{thm:main_b_v}.
\begin{lemma}
\label{lem:positivity}
We have the following bounds: $\omega, \tauone, \taub, x, \I_{a,b}, \Ishift_{a,b} \ge 0$ and $\tfrac{\partial x}{\partial \gamma} \le 0$.
\end{lemma}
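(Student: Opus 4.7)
The plan is to verify each inequality in turn. The only nontrivial step is $\omega \ge 0$, which will follow from Cauchy--Schwarz; all the other claims reduce to algebraic manipulation of the defining equations in \cref{thm:main_b_v}.

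First I would establish $\omega \ge 0$. Unpacking \cref{eq:constants_train}, $\omega = s(\eta/\zeta - 1)$, so it suffices to prove $\zeta \le \eta$. Writing $z\sim\mathcal{N}(0,s)$ and using $\E[z]=0$, we have $\E[z\sigma(z)] = \E[z(\sigma(z)-\E[\sigma(z)])]$, and Cauchy--Schwarz then gives $(\E[z\sigma(z)])^2 \le \E[z^2]\,\V[\sigma(z)] = s\eta$. Dividing by $s^2$ yields $\rho \le \eta/s$, i.e., $\zeta = s\rho \le \eta$, and hence $\omega \ge 0$.

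The nonnegativity of $x$ is part of its definition in \cref{thm:main_b_v} (the \emph{unique nonnegative real root} of the self-consistent equation), so there is nothing to prove. Given $x \ge 0$, the nonnegativity of $\I_{a,b}$ and $\Ishift_{a,b}$ is immediate from \cref{eqn:Idefs}: by \cref{assump:esd} the supports of $\lambda$ and $r$ are contained in $\R_+$, and $\phi + x\lambda > 0$, so the integrands are nonnegative.

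For $\tauone$, the key observation is that $\sqrt{(\psi-\phi)^2 + 4x\psi\phi\gamma/\rho} \ge \sqrt{(\psi-\phi)^2} = |\psi-\phi| \ge -(\psi-\phi)$, so the numerator in the defining expression is nonnegative; dividing by the positive quantity $2\psi\gamma$ gives $\tauone \ge 0$. Substituting into $\taub = (\phi-\psi)/(\phi\gamma) + (\psi/\phi)\tauone$ and simplifying, one obtains
\[
\taub = \frac{\phi - \psi + \sqrt{(\psi-\phi)^2 + 4x\psi\phi\gamma/\rho}}{2\phi\gamma},
\]
and the same square-root bound (with the roles of $\phi$ and $\psi$ swapped) shows $\taub \ge 0$.

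Finally, with $x, \omega, \I_{1,2}, \tauone, \taub \ge 0$ all in hand and $\gamma > 0$, the denominator $\gamma + \rho\gamma(\tauone\psi/\phi + \taub)(\omega + \phi \I_{1,2})$ appearing in \cref{eq:xprime} is strictly positive, so $\partial x/\partial \gamma = -x/(\text{positive}) \le 0$. The one step requiring any real thought is the Cauchy--Schwarz bound for $\omega$; the rest is direct bookkeeping.
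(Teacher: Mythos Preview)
Your proof is correct and is more self-contained than the paper's. The paper handles $\omega \ge 0$ by citing a Hermite-expansion argument from prior work, whereas your Cauchy--Schwarz bound is both direct and elementary. For $\tauone,\taub \ge 0$ the paper appeals to their interpretation as normalized traces of the positive-definite matrices $K^{-1}$ and $\hat K^{-1}$ (from the derivation in the appendix), while you work purely from the closed-form expressions in the theorem statement; similarly, the paper obtains $x \ge 0$ from the identity $x=\gamma\rho\tauone\taub$, whereas you simply invoke the definition of $x$ as the nonnegative root. Your route has the advantage of requiring nothing beyond what is stated in \cref{thm:main_b_v}; the paper's route has the advantage of explaining \emph{why} these quantities are nonnegative in terms of the underlying random-matrix objects. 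The final step for $\partial x/\partial\gamma$ is identical in both.
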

\begin{proof}
As shown in \citet{pennington2018spectrum} for the unit-variance case, a simple Hermite expansion argument establishes the relation $\eta\ge \zeta$, which implies $\omega = s(\eta/\zeta -1) \ge 0$. From \cref{eqn:tau1,eqn:tau1b}, $\tauone$ and $\taub$ are traces of positive semi-definite matrices and are therefore nonnegative. From the same equations, it follows that $x = \gamma \rho \tauone \taub \ge 0$. Nonnegativity of $x$ implies $\I_{a,b}\ge 0$ and $\Ishift_{a,b}\ge 0$ from their definitions in \cref{eqn:Idefs}. Finally, using the nonnegativity of $\omega$, $\tauone$, $\taub$, $x$, and $\I_{a,b}$, the expression for $\tfrac{\partial x}{\partial \gamma}$ in \cref{thm:main_b_v} immediately gives,
\begin{equation}
    \frac{\partial x}{\partial \gamma} = -\frac{x}{\gamma+\rho \gamma (\tfrac{\psi}{\phi} \tauone + \taub)(\omega + \phi \I_{1,2})} \le 0 \label{eq:dxdgamma_sm1}.
\end{equation}
\end{proof}

Next we show that the self-consistent equation $x=\tfrac{1-\gamma\tauone}{\omega + \I_{1,1}}$ appearing in \cref{thm:main_b_v} and defined in \cref{eqn:x} admits a unique positive real solution for $x$.

\begin{lemma}
\label{lem:unique_x}
There is a unique real $x\ge 0$ satisfying $x=\tfrac{1-\gamma\tauone}{\omega + \I_{1,1}}$.
\end{lemma}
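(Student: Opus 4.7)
The plan is to rewrite the fixed-point equation in the form $f(x) = 0$ for an explicit continuous function $f:[0,\infty)\to\R$ and establish existence and uniqueness via strict monotonicity and the intermediate value theorem. Specifically, I set
\[
  f(x) \deq x\bigl(\omega + \I_{1,1}(x)\bigr) + \gamma\,\tauone(x) - 1,
\]
so the roots of $f$ on $[0,\infty)$ are exactly the nonnegative real solutions of the equation in the statement. I would note that $\I_{1,1}(x)$ and $\tauone(x)$ are real-valued and continuous on $[0,\infty)$ (the discriminant $(\psi-\phi)^2+4x\psi\phi\gamma/\rho$ inside the square root defining $\tauone$ is nonnegative since $x\ge 0$ by \cref{lem:positivity}), so $f$ is continuous on $[0,\infty)$.

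Next I would evaluate $f(0)$. From the formula, $\I_{1,1}(0) = \E_\mu[\la] = s < \infty$ by~\cref{assump:abs_cts}, so $0\cdot(\omega+\I_{1,1}(0)) = 0$, while $\gamma\tauone(0) = \bigl(|\psi-\phi|+\psi-\phi\bigr)/(2\psi) = \max(0,1-\phi/\psi)$. Hence $f(0) = \max(0,1-\phi/\psi) - 1 = -\min(1,\phi/\psi) \le 0$. For the behavior at infinity, the formula for $\tauone$ gives $\gamma\tauone(x) \sim \sqrt{x\phi\gamma/(\rho\psi)}\to\infty$, so $f(x)\to\infty$ as $x\to\infty$. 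This already gives existence by the intermediate value theorem.

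For uniqueness, I would verify that $f$ is strictly increasing on $[0,\infty)$. The term $\gamma\tauone(x)$ is obviously strictly increasing in $x$ (the square root in its definition is). The term $x\omega$ is nondecreasing since $\omega\ge 0$ (\cref{lem:positivity}). For the remaining term, one differentiates under the expectation:
\[
  \frac{d}{dx}\bigl[x\I_{1,1}(x)\bigr]
  = \phi\,\frac{d}{dx}\E_\mu\!\qa{\frac{x\la}{\phi + x\la}}
  = \phi^2\,\E_\mu\!\qa{\frac{\la}{(\phi+x\la)^2}} > 0,
\]
where strict positivity uses that $\mu$ has positive $\la$-marginal (since $\Sigma$ is positive definite, all $\la_i>0$, so $\mutr$ is supported on $(0,\infty)$). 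Dominated convergence justifies the differentiation under the expectation using~\cref{assump:esd}, which bounds $\la$ uniformly.

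Combining the three pieces, $f$ is strictly increasing and continuous on $[0,\infty)$ with $f(0)\le 0$ and $f(x)\to\infty$, so there is a unique $x\ge 0$ with $f(x)=0$. I don't anticipate a significant obstacle here; the only mild subtlety is the boundary case $f(0)=0$, which can occur only when $\min(1,\phi/\psi)=0$, i.e. a degenerate limit, and is excluded by the standing assumption that $\phi,\psi\in(0,\infty)$.
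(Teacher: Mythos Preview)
Your proof is correct and takes a genuinely different route from the paper. The paper performs the substitution $t=1/x$, defines an auxiliary function $h(t)$, and argues in two steps: boundary behavior ($h(0)=\omega>0$ and $h(t)/t\to -\min\{1,\phi/\psi\}<0$) forces an odd number of positive roots, while a second-derivative computation shows $h$ is concave on $[0,\infty)$, capping the count at two; together this yields exactly one root. You instead work directly in the variable $x$ with $f(x)=x(\omega+\I_{1,1}(x))+\gamma\tauone(x)-1$ and establish strict monotonicity term by term: $x\omega$ is nondecreasing, $x\I_{1,1}(x)$ has positive derivative $\phi^2\E_\mu[\la/(\phi+x\la)^2]$, and $\gamma\tauone(x)$ is strictly increasing in $x$ via its square-root dependence. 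Your approach is more elementary in that it avoids both the change of variables and the concavity/second-derivative step, replacing them with a first-derivative monotonicity check whose pieces are immediate. The paper's route, on the other hand, does not need to differentiate under the expectation (it only uses concavity of $\I_{2,3}$-type terms which are manifestly nonnegative). Both arguments tacitly use $\gamma>0$: in your proof it makes $\gamma\tauone$ strictly increasing and divergent (so $f\to\infty$), in the paper's it makes the $\gamma^2$-term in $h''$ nontrivial; this is consistent with the lemma being invoked for \cref{thm:main_b_v}, where $\gamma>0$, with the ridgeless case handled separately.
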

\begin{proof}
Let $t=1/x\ge0$ and define
\begin{equation}
    h(t) = t\pa{\frac{\rho (\psi-\phi)+ \sqrt{\rho^2(\psi-\phi)^2+4\gamma\rho\phi\psi/t}}{2\rho\psi}-1} + \omega + \I_{1,1}(1/t)\,,
\end{equation}
which is a rewriting of \cref{eqn:x}. It suffices to show that $h$ admits a unique real positive root. To that end, first observe that $\lim_{t\to 0}\I_{1,1}(1/t) = 0$ and $\lim_{t\to \infty}\I_{1,1}(1/t) = s$, so that
\begin{align}
    h(0) =\omega > 0\,\quad\text{and}\quad \lim_{t\to\infty} h(t)/t = -\min\{1,\phi/\psi\} < 0\,,
\end{align}
which together imply that $h$ has an odd number of positive real roots. Next, we show that $h$ is concave for $t\ge 0$:
\eq{
    h''(t) = -\frac{2 \phi}{t^3}\pa{ \frac{ \gamma^2\rho\phi \psi}{(\rho^2(\psi-\phi)^2+4\gamma\rho\phi\psi/t)^{3/2}} + \I_{2,3}(1/t)} \le 0\,,
}
which implies that $h$ has at most two positive real roots. Therefore, we conclude that $h$ has exactly one positive real root.
\end{proof}

\subsection{\texorpdfstring{$\I$}{} and \texorpdfstring{$\Ishift$}{} inequalities} 
\label{app:Iineqs}
We now establish some useful properties of the $\I$ and $\Ishift$ functionals defined in \cref{eqn:Idefs}. To begin, we note that simple algebraic manipulations establish the following raising and lowering identities:
\eq{\label{eq:raise_lower}
    \I_{a-1, b-1} = \phi \I_{a-1, b} + x \I_{a, b} \quad\text{and}\quad \Ishift_{a-1, b-1} = \phi \Ishift_{a-1, b} + x \Ishift_{a, b}\,.
}
Next, we consider how the partial order of LJSDs given in~\cref{def:hard} leads to inequalities on the $\Ishift$ functionals. Letting $(\Ishift_{a,b})_{1}$ and $(\Ishift_{a,b})_{2}$ to denote the corresponding functionals with the LJSDs $\mu_1$ and $\mu_2$ respectively, we can establish the following useful lemma.
\begin{lemma}
\label{lem:harris_Ifunc}
Let $\mu_1 \leq \mu_2$, so $\mu_2$ is harder then $\mu_1$. Suppose the functions $f,g,h:\R\to\R$ are such that $f(\la)=g(\la)h(\la)$, $g(\la)$ is nonnegative, and $h(\la)$ is nonincreasing for all $\la>0$, then
\al{
    \frac{\E_{\mu_1}[rf(\la)]}{\E_{\mu_2}[rf(\la)]} \leq \frac{\E_{\mu_1}[rg(\la)]}{\E_{\mu_2}[rg(\la)]}.
}
If instead $h(\la)$ is nondecreasing for all $\la>0$, then
\al{
    \frac{\E_{\mu_1}[rf(\la)]}{\E_{\mu_2}[rf(\la)]} \geq \frac{\E_{\mu_1}[rg(\la)]}{\E_{\mu_2}[rg(\la)]}.
}
\end{lemma}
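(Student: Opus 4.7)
The plan is to reduce the claim to the Chebyshev sum (a.k.a.\ Harris/FKG) inequality for monotone functions, after conditioning on $\la$ and performing a change of measure.

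First, I would use the tower property to rewrite each expectation appearing in the lemma as an integral against $\mutr$. Specifically, for any measurable $F$, $\E_{\mu_i}[r F(\la)] = \E_{\mutr}[R_i(\la) F(\la)]$, where $R_i(\la) \deq \E_{\mu_i}[r | \la]$. By hypothesis $\mu_1 \le \mu_2$, the ratio $R_1(\la)/R_2(\la)$ is nondecreasing on the support of $\mutr$ (\cref{def:hard}); note also that $R_1, R_2 \ge 0$.

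Second, the inequality to be proved is equivalent (after clearing the positive denominators; the cases where they vanish are trivial) to the cross-multiplied form
\[
    \E_{\mutr}[R_1 g h]\,\E_{\mutr}[R_2 g] \;\le\; \E_{\mutr}[R_1 g]\,\E_{\mutr}[R_2 g h].
\]
I would then consider the finite positive measure $d\nu \deq R_2(\la)\, g(\la)\, d\mutr(\la)$, noting $d\nu \ge 0$ since $g \ge 0$ and $R_2 \ge 0$. With $u(\la) \deq R_1(\la)/R_2(\la)$ (nondecreasing) and $v(\la) \deq h(\la)$ (nonincreasing), the Chebyshev sum inequality (the one-variable version of Harris/FKG) applied to $\nu$ yields
\[
    \int u v \, d\nu \cdot \int d\nu \;\le\; \int u \, d\nu \cdot \int v \, d\nu.
\]
Substituting $d\nu = R_2 g\, d\mutr$ collapses the factors of $R_2$ in the integrand and recovers exactly the cross-multiplied inequality above. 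Dividing through by $\E_{\mutr}[R_1 g]\,\E_{\mutr}[R_2 g]$ finishes the first case. The second case ($h$ nondecreasing) follows by either applying the same argument to $-h$ or invoking Chebyshev with two comonotone functions, which reverses the inequality.

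The only real obstacle is bookkeeping around degenerate cases: if $\E_{\mutr}[R_i g] = 0$ for some $i$, then the corresponding $\E_{\mutr}[R_i g h] = 0$ as well (since $R_i g \ge 0$), and the ratios are either both zero or vacuously comparable, so the inequality holds trivially. Beyond that, everything is a direct application of Chebyshev's monotone-function inequality, and the work is really in identifying the correct measure $d\nu = R_2 g \, d\mutr$ that allows the monotone ratio $R_1/R_2$ supplied by \cref{def:hard} to be paired with $h$.
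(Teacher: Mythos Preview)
Your proposal is correct and takes essentially the same approach as the paper: both condition on $\la$, introduce the tilted measure proportional to $\E_{\mu_2}[r|\la]\,g(\la)\,d\mutr$ (the paper calls the corresponding random variable $\tilde{\la}$, you write it as $d\nu = R_2 g\,d\mutr$), and then apply the Harris/Chebyshev inequality to the nondecreasing ratio $\E_{\mu_1}[r|\la]/\E_{\mu_2}[r|\la]$ paired with the nonincreasing $h$. The only cosmetic difference is that you cross-multiply first whereas the paper works directly with the ratio via iterated expectation, but the substance is identical.
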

\begin{proof}
By the law of iterated expectation, we have
\eq{\label{eq_I_iterated_ex}
   \E_{\mu_1}[rf(\la)] = \mE_{\mu_2}[rg(\la)] \mE_\la \qa{ \frac{\mE_{\mu_{2}}[rg(\la)|\lambda]}{\mE_{\mu_2}[rg(\la)]} \frac{\mE_{\mu_{1}}[r|\lambda]}{\mE_{\mu_2}[r| \lambda]} h(\la) }.
}
Note that the expectation $\E_\la$ in \cref{eq_I_iterated_ex} over $\la$ is the same under $\mu_1$ and $\mu_2$ by assumption. Moreover, the function $h(\la)$ is nonincreasing in $\la>0$ by assumption. Finally, observe that the factor ${\mE_{\mu_{2}}[rg(\la)|\lambda]}/{\mE_{\mu_2}[rg(\la)]}$ defines a change in distribution for the random variable $\la$, since taking its expectation over $\la$ yields 1. Denote a new random variable with this distribution by $\tilde{\la}$. Then, we may apply the Harris inequality\footnote{See \citet[Section 2.2]{grimmett1999percolation} for example.} to \cref{eq_I_iterated_ex} to see
\al{
    \E_{\mu_1}[rf(\la)] &= \mE_{\mu_2}[rg(\la)] \mE_{\tilde{\la}} \qa{ \frac{\mE_{\mu_{1}}[r|\tilde{\la}]}{\mE_{\mu_2}[r| \tilde{\la}]} h(\tilde{\la}) } \\
    & \leq \mE_{\mu_2}[rg(\la)] \mE_{\tilde{\la}} \qa{ \frac{\mE_{\mu_{1}}[r|\tilde{\la}]}{\mE_{\mu_2}[r| \tilde{\la}]} } \mE_{\tilde{\la}} \qa{  h(\tilde{\la}) }\\
    & = \mE_{\mu_2}[rg(\la)] \mE_{{\la}} \qa{\frac{\mE_{\mu_{2}}[rg(\la)|\lambda]}{\mE_{\mu_2}[rg(\la)]} \frac{\mE_{\mu_{1}}[r|{\la}]}{\mE_{\mu_2}[r| {\la}]} } \mE_{{\la}} \qa{ \frac{\mE_{\mu_{2}}[rg(\la)|\lambda]}{\mE_{\mu_2}[rg(\la)]} h({\la}) }\\
    & = \frac{\mE_{\mu_1}[rg(\la)]}{\mE_{\mu_2}[rg(\la)]} \mE_{\mu_2} \qa{rf(\la)}\,.
    \label{eq_gen_harris2}
}
\end{proof}

The following corollary is an immediate consequence of \cref{lem:harris_Ifunc}.
\begin{corollary}
\label{cor:harris_Ifunc}
Let $\mu_1 \leq \mu_2$. Then, for $a\ge 0$,
\begin{align}
     \frac{(\Ishift_{1,a})_2}{s_2}-\frac{(\Ishift_{1,a})_1}{s_1} \geq 0
\end{align}
\end{corollary}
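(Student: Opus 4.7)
The plan is to deduce the claim as a direct application of \cref{lem:harris_Ifunc} with an essentially trivial factorization $f=g\cdot h$. Writing $s_i = \E_{\mu_i}[r]$ and unwinding the definition $\Ishift_{1,a} = \phi\,\E_{\mu}[r(\phi+x\la)^{-a}]$ evaluated at the (unique, nonnegative) $x$ from \cref{lem:unique_x}, the desired inequality $\tfrac{(\Ishift_{1,a})_2}{s_2} \ge \tfrac{(\Ishift_{1,a})_1}{s_1}$ is equivalent, after clearing the common factor $\phi>0$, to
\[
    \frac{\E_{\mu_1}\!\left[r(\phi+x\la)^{-a}\right]}{\E_{\mu_1}[r]} \;\le\; \frac{\E_{\mu_2}\!\left[r(\phi+x\la)^{-a}\right]}{\E_{\mu_2}[r]}\,,
\]
which is precisely the ratio-form inequality produced by \cref{lem:harris_Ifunc}.

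To invoke the lemma, I would take $f(\la) = (\phi+x\la)^{-a}$, $g(\la) \equiv 1$, and $h(\la) = (\phi+x\la)^{-a}$, so that $f = gh$ and $g \ge 0$. The key verification is that $h$ is nonincreasing in $\la$ on $\R_+$: this follows because $a \ge 0$ and because $x \ge 0$ by \cref{lem:positivity}, so $\phi + x\la$ is nondecreasing in $\la$ and raising it to the nonpositive power $-a$ yields a nonincreasing function. The nonnegativity of $\la$ on the support of $\mutr$ (from \cref{assump:esd}) and the fact that $\phi>0$ keep $\phi + x\la$ strictly positive, so $h$ is well-defined.

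Applying the first (nonincreasing-$h$) conclusion of \cref{lem:harris_Ifunc} gives
\[
    \frac{\E_{\mu_1}[r f(\la)]}{\E_{\mu_2}[r f(\la)]} \;\le\; \frac{\E_{\mu_1}[r g(\la)]}{\E_{\mu_2}[r g(\la)]} \;=\; \frac{\E_{\mu_1}[r]}{\E_{\mu_2}[r]} \;=\; \frac{s_1}{s_2}\,,
\]
which, since all quantities are positive, rearranges to the displayed inequality above and hence to the claim after multiplying both sides by $\phi$.

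There is essentially no obstacle here beyond ensuring that $h$ is monotone in the correct direction; once $x \ge 0$ is secured from \cref{lem:positivity} and the trivial factorization $f = 1 \cdot f$ is chosen, the corollary falls out of \cref{lem:harris_Ifunc} immediately. The only mild care needed is to note that the $x$ appearing implicitly in $\Ishift_{1,a}$ is the unique nonnegative root identified in \cref{lem:unique_x}, so the monotonicity argument is not vacuous.
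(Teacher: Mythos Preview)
Your proof is correct and follows essentially the same approach as the paper's own proof: both apply \cref{lem:harris_Ifunc} with the trivial factorization $g\equiv 1$ and $h(\la)=(\phi+x\la)^{-a}$ (the paper absorbs the constant $\phi$ into $h$, which is immaterial), then identify $\E_{\mu_i}[r]=s_i$ to rearrange into the stated inequality.
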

\begin{proof}
    This result follows from~\cref{lem:harris_Ifunc} by choosing $g:\lambda \mapsto 1$ and $h:\lambda \mapsto \phi (\phi + x \lambda)^{-a}$ and recalling by definitions $s_1 = \mE_{\mu_1}[r]$ and $s_2 = \mE_{\mu_2}[r]$.
\end{proof}

The argument from \cref{lem:harris_Ifunc} can be extended to cases where the shift is incomparable, i.e. when neither $\mu\leq\munull$ not $\mu\geq\munull$, by introducing some monotonic functions that in some sense bound the shift-strength.
\begin{lemma}\label{lem_harris_incomparable}
    Let $\mu$ be a LJSD and define the functions
    \eq{
        L(x) \deq \inf_{0\leq\la\leq x} \frac{\E_\mu[r|\la]}{\la} \quad \text{and} \quad U(x) \deq \sup_{0\leq\la\leq x} \frac{\E_\mu[r|\la]}{\la}.
    }
    Then if $f$ is a nonincreasing function, we have
    \eq{\label{eq_bounds_on_exp}
        \frac{\E_{\mutr}[\la L(\la)]}{\E_{\mutr}[\la]} \leq \frac{\E_\mu[rf(\la)]}{ \E_{\mutr}[\la f(\la)] } = \frac{\E_\mu[rf(\la)]}{ \E_{\munull}[r f(\la)] } \leq \frac{\E_{\mutr}[\la U(\la)]}{\E_{\mutr}[\la]}.
    }
\end{lemma}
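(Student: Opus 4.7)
The plan is to rewrite both sides in terms of a common tilted probability measure on $\R_+$, and then reduce the desired bounds to Harris's (one-dimensional FKG) inequality applied to appropriately monotonic functions, in the same spirit as the proof of \cref{lem:harris_Ifunc}.

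First, by the tower property,
\als{
    \E_\mu[r f(\la)] = \E_{\mutr}\qa{\E_\mu[r|\la]\, f(\la)} = \E_{\mutr}\qa{\la f(\la) \cdot \frac{\E_\mu[r|\la]}{\la}}.
}
The middle equality in the statement, $\E_{\mutr}[\la f(\la)] = \E_{\munull}[r f(\la)]$, is immediate from the definition of $\munull$ as the pushforward of $\mutr$ under $\la \mapsto (\la, \la)$. From the definitions of $L$ and $U$ we have $L(\la) \leq \E_\mu[r|\la]/\la \leq U(\la)$ pointwise, so it suffices to establish
\als{
    \frac{\E_{\mutr}[\la f(\la) L(\la)]}{\E_{\mutr}[\la f(\la)]} \;\geq\; \frac{\E_{\mutr}[\la L(\la)]}{\E_{\mutr}[\la]} \quad\text{and}\quad \frac{\E_{\mutr}[\la f(\la) U(\la)]}{\E_{\mutr}[\la f(\la)]} \;\leq\; \frac{\E_{\mutr}[\la U(\la)]}{\E_{\mutr}[\la]}.
}

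Next I would introduce the tilted probability measure $d\tilde\mu(\la) \deq \la\, d\mutr(\la)/\E_{\mutr}[\la]$ on $\R_+$, in terms of which the two displayed inequalities above become, respectively,
\als{
    \E_{\tilde\mu}[f(\la) L(\la)] \geq \E_{\tilde\mu}[f(\la)]\,\E_{\tilde\mu}[L(\la)] \quad\text{and}\quad \E_{\tilde\mu}[f(\la) U(\la)] \leq \E_{\tilde\mu}[f(\la)]\,\E_{\tilde\mu}[U(\la)].
}
Both are direct applications of Harris's inequality on the probability space $(\R_+,\tilde\mu)$: by construction $L$ is nonincreasing and $U$ is nondecreasing, being the running infimum and supremum of $\E_\mu[r|\la]/\la$ over the expanding intervals $[0,\la]$. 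Hence $f$ and $L$ are co-monotone (both nonincreasing), while $f$ and $U$ are anti-monotone, which yields the two inequalities with the correct signs.

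The only step that requires a little care is the bookkeeping: verifying monotonicity of $L$ and $U$ (which is immediate from their definition), checking that $\tilde\mu$ is a bona fide probability measure (using $\E_{\mutr}[\la]<\infty$ from \cref{assump:abs_cts} together with positivity of the eigenvalues), and disposing of the degenerate case $\E_{\mutr}[\la f(\la)]=0$, which forces $f\equiv 0$ on the support of $\la\, d\mutr$ and renders all quantities in the statement zero. Beyond these routine checks, no appeal to any of the earlier random-feature machinery of the paper is required; the entire argument is a two-step reduction (envelope the unknown ratio between $L$ and $U$, then tilt and invoke Harris).
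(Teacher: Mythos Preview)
Your proposal is correct and follows essentially the same approach as the paper: envelope $\E_\mu[r|\la]/\la$ between $L$ and $U$, pass to the $\la$-tilted measure $\la\,d\mutr/\E_{\mutr}[\la]$, and apply the Harris inequality using the monotonicity of $f$, $L$, and $U$. The only cosmetic difference is that you name the tilted measure $\tilde\mu$ explicitly, whereas the paper keeps the weight $\la/\E_{\mutr}[\la]$ inline before invoking Harris.
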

\begin{proof}
    Recall that the marginal of $\la$ under both $\mu$ and $\munull$ is $\mutr$, and that $(r|\la) = \la$ almost surely under $\munull$, explaining the equality in \cref{eq_bounds_on_exp}. Clearly, $L(\la)\leq {\E_\mu[r|\la]}/{\la} \leq U(\la)$ for all $\la\geq0$, and $L(\la)$ and $U(\la)$ are nonincreasing and nondecreasing respectively. Next, we see
    \eq{
        \E_\mu[rf(\la)] = \E_{\mutr}[\la] \E_{\mutr}\qa{ \frac{\la}{\E_{\mutr}[\la]} \frac{\E_\mu[r|\la]}{\la}f(\la) },
    }
    so
    \eq{
        \E_{\mutr}\qa{  \frac{\la}{\E_{\mutr}[\la]}  L(\la) f(\la) } \leq \frac{\E_{\mu}[rf(\la)]}{\E_{\mutr}[\la]} \leq  \qa{ \frac{\la}{\E_{\mutr}[\la]} U(\la)f(\la) }.
    }
    Now we can apply the Harris inequality, as in the proof of \cref{lem:harris_Ifunc}, to see
    \eq{
        \frac{\E_{\mutr}\qa{ \la L(\la) }}{\pa{\E_{\mutr}[\la]}^2} \E_{\mutr}\qa{ {\la} f(\la) }\leq \frac{\E_\mu[rf(\la)]}{\E_{\mutr}[\la]} \leq \frac{\E_{\mutr}\qa{ \la U(\la) }}{\pa{\E_{\mutr}[\la]}^2} \E_{\mutr}\qa{ {\la} f(\la) },
    }
    which can be rearranged to give the result.
\end{proof}
\section{Hardness is a partial order}
\label{sec:general_definition}
We restate~\cref{def:hard} for clarity.
\begin{definition}[Restatement of~\cref{def:hard}]
    \label{def:hard_SM}
      Let $\mu_1$ and $\mu_2$ be LJSDs with the same marginal distribution of $\lambda$. If the asymptotic overlap coefficients are such that $\E_{\mu_1}[r|\la]/\E_{\mu_2}[r|\la]$ is nondecreasing as a function of $\lambda$ and ${\E_{\mu_1}[r]\le\E_{\mu_2}[r]}$, we say $\mu_1$ is \emph{easier} than $\mu_2$ (or $\mu_2$ is \emph{harder} than $\mu_1$), and write $\mu_1 \le \mu_2$. Comparing against the case of no shift $\munull$, we say $\mu_1$ is \emph{easy} when $\mu_1\leq \munull$ and \emph{hard} when $\mu_1\geq\munull$.
\end{definition}

\begin{proposition}
\cref{def:hard} (and \cref{def:hard_SM}) form a partial order over covariate shifts $\mu$.
\end{proposition}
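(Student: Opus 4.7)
I will verify the three defining axioms of a partial order for the relation $\leq$ of Definition~\ref{def:hard_SM}: reflexivity, transitivity, and antisymmetry. Since the definition applies only when $\mu_1$ and $\mu_2$ share a common $\la$-marginal, pairs with different marginals are simply incomparable, which is compatible with the notion of a partial order.

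\emph{Reflexivity} is immediate: for any LJSD $\mu$, the ratio $\E_\mu[r|\la]/\E_\mu[r|\la]\equiv 1$ is constant and hence nondecreasing on $\mathrm{supp}(\mutr)$, and the inequality $\E_\mu[r]\le\E_\mu[r]$ is trivial, so $\mu\le\mu$.

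\emph{Transitivity} follows from the observation that the product of two nonnegative nondecreasing functions is nondecreasing. Suppose $\mu_1\le\mu_2$ and $\mu_2\le\mu_3$; then $\mu_1,\mu_2,\mu_3$ share the same $\la$-marginal $\mutr$, and the two ratios $\E_{\mu_1}[r|\la]/\E_{\mu_2}[r|\la]$ and $\E_{\mu_2}[r|\la]/\E_{\mu_3}[r|\la]$ are nonnegative and nondecreasing on $\mathrm{supp}(\mutr)$. Their product is therefore nondecreasing, and equals $\E_{\mu_1}[r|\la]/\E_{\mu_3}[r|\la]$. The scale condition $\E_{\mu_1}[r]\le\E_{\mu_3}[r]$ follows by concatenating the two given scale inequalities.

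For \emph{antisymmetry}, suppose $\mu_1\le\mu_2$ and $\mu_2\le\mu_1$. Set $f(\la)\deq\E_{\mu_1}[r|\la]/\E_{\mu_2}[r|\la]$, which is strictly positive since $\Sigmate$ is positive definite and hence $\E_{\mu_i}[r|\la]>0$. By hypothesis, both $f$ and $1/f$ are nondecreasing on $\mathrm{supp}(\mutr)$; for a strictly positive function on an interval, this forces $f$ to be constant, say $f\equiv c$. The two scale inequalities give $\E_{\mu_1}[r]=\E_{\mu_2}[r]$, and integrating the identity $\E_{\mu_1}[r|\la]=c\,\E_{\mu_2}[r|\la]$ against $\mutr$ then yields $c=1$, so $\E_{\mu_1}[r|\la]=\E_{\mu_2}[r|\la]$ for $\mutr$-a.e.\ $\la$.

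The main subtlety, and the one step that deserves comment, is that the identity of conditional expectations combined with the identity of $\la$-marginals does not in general force $\mu_1=\mu_2$ as joint distributions on $\R_+^2$, since higher conditional moments of $r$ given $\la$ are unconstrained. However, inspection of Theorem~\ref{thm:main_b_v} (and Corollaries~\ref{cor:ridgeless_lim} and \ref{cor:infinite_overparameterization}) shows that $\mu$ enters the bias, variance, and total error exclusively through $\mutr$ and $\E_\mu[r|\la]$: every occurrence of $\mu$ in the functionals $\I_{a,b}$ and $\Ishift_{a,b}$ in \cref{eqn:Idefs} reduces, via the tower property, to an integral involving only these two objects. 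Hence $\leq$ is naturally regarded as a relation on the equivalence classes of LJSDs sharing the pair $(\mutr, \E[r|\la])$, and the argument above establishes it is a genuine partial order on this quotient.
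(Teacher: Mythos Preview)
Your reflexivity and transitivity arguments coincide with the paper's essentially verbatim. The interesting divergence is in antisymmetry. The paper, after deriving $\E_{\mu_1}[r\mid\la]=\E_{\mu_2}[r\mid\la]$ exactly as you do, goes on to claim $\mu_1=\mu_2$ as joint laws: invoking \cref{assump:abs_cts}, it writes $p_i(\la,r)=p_i(\la)p_i(r\mid\la)$ and argues that $\int_{\R_+} r\bigl(p_1(r\mid\la)-p_2(r\mid\la)\bigr)\,dr=0$ together with $r>0$ forces $p_1(r\mid\la)=p_2(r\mid\la)$ almost everywhere. You instead note that matching first conditional moments does not determine the conditional law, and you pass to equivalence classes of LJSDs sharing $(\mutr,\E[r\mid\la])$, observing that every appearance of $\mu$ in \cref{thm:main_b_v} factors through this pair via the tower property. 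Your route is the more defensible one: the paper's inference from the vanishing of a single signed integral to pointwise equality of densities is not valid as written (two distinct conditional densities with the same mean furnish an immediate counterexample), whereas your quotient formulation is both correct and aligned with the paper's own discussion in \cref{app:ambiguity} that only $\E[r\mid\la]$ matters downstream. One minor point: your phrase ``on an interval'' is unnecessary, since $f$ and $1/f$ both nondecreasing on any subset of $\R_+$ already forces $f$ constant there.
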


\begin{proof}
    Reflexivity is clearly satisfied as $\E_{\mu} [r] \leq \E_{\mu} [r]$ and $\E_{\mu} [r|\la]/\E_{\mu} [r|\la] = 1$ is nondecreasing for all $\mu$. 
    
    For antisymmetry, we see $\mu_1\leq\mu_2$ and $\mu_2\leq\mu_1$ imply $\E_{\mu_1}[r]=\E_{\mu_2}[r]$ and that $\E_{\mu_1}[r|\la] / \E_{\mu_2}[r|\la]$ is constant in $\la$ as it is nonincreasing and nondecreasing. However, setting $\E_{\mu_1}[r|\la]=c \E_{\mu_2}[r|\la]$ and taking expectation over $\la$ and rearranging yields $1=\E_{\mu_1}[r]/\E_{\mu_2}[r]=c$, so in fact ${\E_{\mu_1}[r|\la] = \E_{\mu_2}[r|\la]}$. Assuming that $\mu_1$ and $\mu_2$ are absolutely continuous (the case where they are a sum of point masses is similar), we can write their densities as $p_{i}(\la,r) = p_{i}(\la)p_{i}(r|\la)$. By assumption $p_{1}(\la)=p_{2}(\la)$, so it suffices to show $p_{1}(r|\la)=p_{2}(r|\la)$ almost everywhere. Next note
    \eq{
        0 = \E_{\mu_1}[r|\la] - \E_{\mu_2}[r|\la] = \int_{\R^{+}} r\pa{p_{1}(r|\la) - p_{2}(r|\la)} \text{d}r,
    }
    but since $r>0$ over the domain of the integral, we have $p_{1}(r|\la) - p_{2}(r|\la) = 0$ almost everywhere.
    
    Finally, for transitivity assume $\mu_1\leq\mu_2$ and $\mu_2\leq\mu_3$, then clearly $\E_{\mu_1}[r]\leq\E_{\mu_2}[r]\leq\E_{\mu_3}[r]$. Next note 
    \eq{
        \frac{\E_{\mu_1}[r|\la]}{\E_{\mu_3}[r|\la]} = \frac{\E_{\mu_1}[r|\la]}{\E_{\mu_2}[r|\la]} \cdot \frac{\E_{\mu_2}[r|\la]}{\E_{\mu_3}[r|\la]},
    }
    so ${\E_{\mu_1}[r|\la]}/{\E_{\mu_3}[r|\la]}$ is the product of two nondecreasing, positive functions and is thus also nondecreasing.
\end{proof}
\section{Repeated eigenvalues of \texorpdfstring{$\Sigmatr$}{}}
\label{app:ambiguity}

When $\Sigmatr$ has repeated eigenvalues (denote one such by $\la$), its eigendecomposition is not unique, since the eigenvectors associated to $\la$ need only span the eigenspace $\{\vv:\Sigmatr \vv = \la \vv\}$. Specifically, if the eigenspace of $\la$ has dimension $n_\la$ then the eigenvectors $\vv^\la_1, \ldots, \vv^\la_{n_\la}$ are orthonormal but not unique. However, for any other choice of orthonormal vectors $\ww^\la_1, \ldots, \ww^\la_{n_\la}$ that span the eigenspace of $\la$, there exists some orthogonal matrix $O$ such that $W = V O$, where $V$ and $W$ contain the two bases as their columns.

Let $\vvs_1,\ldots,\vvs_{n_0}$ and $\la_1^*,\ldots,\la_{n_0}^*$ denote a choice for the eigenvectors and eigenvalues of $\Sigmate$. The nonuniqueness of the eigenvectors implies that the corresponding overlaps to $\Sigmate$, defined as
\eq{
    r^\la_i = \sum_{j=1}^{n_0} (\vvs_j \cdot  {\vvs_i}^\la)^2 \la^*_j = (\vv_i^\la)^\top \Sigmate \vv_i^\la
}
are also not unique (but do not depend on the choice of eigendecomposition for $\Sigmate$). However, we note that the conditional expectation $\E[r|\la]$ for the EJSD is invariant to the choice of eigendecomposition for $\Sigmatr$. Indeed, for $V$, we have
\eq{
    \E[r|\la]=\frac{1}{n_\lambda}\sum_{i=1}^{n_\la} r^\la_i = \frac{1}{n_\la}\sum_{i=1}^{n_\la} (\vv_i^\la)^\top \Sigmate \vv_i^\la = \ntr(V^\top \Sigmate V),
    \label{eq:finite_cond_e}
}
but this is the same as $\ntr(W^\top \Sigmatr W)$ using $W=VO$ and the cyclic property of the trace.

Since $\E[r|\la]$ under the EJSD is invariant to the choice of eigendecomposition for $\Sigmatr$, this will also be true of the LJSD. Said differently, while the choice of eigendecomposition affects both the EJSD and its corresponding LJSD, all possible choices of eigendecomposition lead to JSD in the same equivalence class, where $\mu_1$ and $\mu_2$ are equivalent when $\E_{\mu_1}[r|\la]=\E_{\mu_2}[r|\la]$. 

Finally, since all potential EJSDs associated to $\Sigmatr$ and $\Sigmate$ are in the same equivalence class, the particular choice has no affect on their downstream use. Specifically, \cref{def:hard} is not changed as it depends only on $\E[r|\la]$, and none of the functionals of $\mu$ (e.g. $\I_{a,b}$ and $\Ishift_{a,b}$) are changed due to the law of iterated expectation.
\section{Test error for linear regression} 
\label{app:lin_reg}
\subsection{Asymptotic and nonasymptotic results for \texorpdfstring{$m > n_0 + 1$}{}}
We present a short proof of the nonasymptotic test error for linear regression.

Recall data is generated via the model for $y_i=\beta^\top \x_i/\sqrt{n_0}+\e_i$, which is fit with the ridgeless linear regression estimator $\hat{\beta} = (X X^\top)^{-1} XY$. Although in the main text we have assumed the same isotropic prior on $\beta$ that we use for the random feature model, no generative assumption is needed on $\beta$ for this result.
\begin{proof}[Proof of \cref{prop:lin_reg}]
Under the condition $m > n_0+1$, the sample covariance is almost surely invertible so the test error can be written as
\begin{align}
    \err{\Sigmate} &= \E\qa{ \pa{ \beta^\top \bfx /\sqrt{n_0} - \hat{\beta}^\top \bfx }^2 } \\
    &=\sigma_{\e}^2\tr \left(\Sigmate \mE[(XX^\top)^{-1}] \right)\\
    &= \sigmaeps \frac{n_0}{m-n_0-1} \tr(\Sigmate \Sigmatr^{-1}),
\end{align}
where we used the cyclicity and linearity of the trace, as well as a formula for the expectation of the inverse sample covariance of a Gaussian matrix (which applies when $m > n_0+1$) from \citet[Theorem 3.1]{von1988moments}. The asymptotic form of the result follows from the limit in the proportional asymptotics: 
\begin{equation}
 \err{\mu} = \lim_{n_0,m \to \infty} \err{\Sigmate} = \frac{\sigmaeps \phi}{1-\phi} \mathbb{E}_\mu[r/\lambda]\,.
\end{equation}
\end{proof}

Next we prove \cref{prop:ordering_lr} using the Harris inequality before proving a slightly more general result that properly handles the case where $\Sigmatr$ may have repeated eigenvalues.

\begin{proof}[Proof of \cref{prop:ordering_lr}]
    Using the Harris inequality,
    \eq{
        \err{\Sigmate_1}^{\text{LR}} =  \frac{\sigma_{\e}^2n_0}{m-n_0-1} \frac{\ntr(\Sigmate_2)}{n_0}  \sum_{i=1}^{n_0} \frac{\corrs_{i,2}}{\ntr(\Sigmate_2)}\frac{\corrs_{i,1}}{\corrs_{i,2}} \frac{1}{\la_i} 
        \leq \frac{\sigma_{\e}^2n_0}{m-n_0-1}  \frac{\ntr(\Sigmate_1)}{\ntr(\Sigmate_2)}  \frac{1}{n_0}   \sum_{i=1}^{n_0}  \frac{\corrs_{i,2}}{\la_i} \leq \err{\Sigmate_2}^{\text{LR}},
    }
    since $1/\la_i$ and ${\corrs_{i,1}}/{\corrs_{i,2}}$ are nonincreasing and nondecreasing in $i$ respectively. 
\end{proof}

\cref{prop:ordering_lr} can be strengthened, and doing so motivates the occurrence of the conditional expectation in \cref{def:hard}. Note that we assume that the eigenvalues of $\Sigmatr$ are in nondecreasing order, and each eigenvalue has associated to it two overlap coefficients, $r_{i,1}$ and $r_{i,2}$. \cref{prop:ordering_lr} assumes that $r_{i,1}/r_{i,2}$ form a nondecreasing sequence. However, what happens when $\Sigmatr$ has repeated eigenvalues? In this case, the ordering of the $\lambda_i$ can be changed, which in turn changes the associated $r_{i,1}$ and $r_{i,2}$. Therefore, the assumption on $r_{i,1}/r_{i,2}$ is too strong---reordering the repeated eigenvalues might be sufficient to satisfy the condition even if it is violated for the original ordering. Instead, we can introduce the conditional expectation to handle this more gracefully.

In the following, we use $\tlam_1, \hdots, \tlam_{k}$ to denote the $k$ non-repeated eigenvalues of the training covariance $\Sigmatr$ and the sets $S_j$ for $j \in \{1, \hdots, k \}$ to denote the indices of eigenvalues in $\{1, \hdots, n_0 \}$ associated to the $j$th repeated eigenvalue of $\Sigmatr$. Analogously, we define the corresponding non-repeated overlap coefficients as,
\begin{align}
    \tcorrs_{j} = \frac{1}{\abs{S_j}}\sum_{i \in S_j} r_i, \quad j \in \{ 1, \hdots, k \}
    \label{eq:non_repeated_overlap_coeffs}
\end{align}
This is equivalent to the conditional expectation discussed in \cref{eq:finite_cond_e}. In this case, the measure-theoretic definition of \emph{hardness} in \cref{def:hard} becomes equivalent to stating that the sequence $\tfrac{\tcorrs_{j,2}}{\tcorrs_{j,1}}$ is nondecreasing as $j$ ranges from $\{1, \hdots, k \}$ when $\Sigmate_2$ is \emph{harder} then $\Sigmate_1$ (here $\tcorrs_{j,1}$ and $\tcorrs_{j,2}$ denote the non-repeated overlap coefficients of $\Sigmate_1$ and $\Sigmate_2$ respectively).

\begin{proposition}
    \label{prop:err_order_lin_reg}
    Let $\tcorrs_{j,1}$ and $\tcorrs_{j,2}$ denote the non-repeated overlap coefficients\footnote{Recall the definition in \cref{eq:non_repeated_overlap_coeffs}.} of $\Sigmate_{1}$ and $\Sigmate_{2}$ relative to $\Sigmatr$. If $\tr(\Sigmate_2) \ge \tr(\Sigmate_1)$ and the ratios ${\tcorrs_{j,1}}/{\tcorrs_{j,2}}$ form a nondecreasing sequence, then $E_{\Sigmate_2}^{\text{LR}} \geq E_{\Sigmate_1}^{\text{LR}}$
\end{proposition}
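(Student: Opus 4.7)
The plan is to follow the same Harris/Chebyshev sum inequality argument used in the proof of \cref{prop:ordering_lr}, but first repackage the sum in \cref{eqn:err_lr} by grouping indices with identical training eigenvalues. Concretely, since $\lambda_i = \tlam_j$ for all $i \in S_j$, and since $\sum_{i\in S_j} r_{i,\ell} = |S_j|\,\tcorrs_{j,\ell}$ by \cref{eq:non_repeated_overlap_coeffs}, one can rewrite
\eqs{
    \frac{1}{n_0}\sum_{i=1}^{n_0} \frac{r_{i,\ell}}{\la_i} = \frac{1}{n_0}\sum_{j=1}^{k} \frac{|S_j|\,\tcorrs_{j,\ell}}{\tlam_j}\,,\qquad \ell \in\h{1,2}\,.
}
In particular, taking $\ell = 2$ gives $\tr(\Sigmate_2) = \sum_j |S_j|\,\tcorrs_{j,2}$, so the quantities $w_j \deq |S_j|\,\tcorrs_{j,2}/\tr(\Sigmate_2)$ form a probability mass function on $\h{1,\ldots,k}$.

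Next I would apply the Harris inequality exactly as in the proof of \cref{prop:ordering_lr}, but with the discrete distribution $\{w_j\}$ in place of the original overlap-weighted distribution. Writing
\eqs{
    \frac{1}{n_0}\sum_{i=1}^{n_0}\frac{r_{i,1}}{\la_i} = \frac{\tr(\Sigmate_2)}{n_0}\sum_{j=1}^{k} w_j \cdot \frac{1}{\tlam_j}\cdot\frac{\tcorrs_{j,1}}{\tcorrs_{j,2}}\,,
}
the assumption that $\la_i$ is in nondecreasing order makes $j\mapsto 1/\tlam_j$ nonincreasing, while the hypothesis of the proposition makes $j\mapsto \tcorrs_{j,1}/\tcorrs_{j,2}$ nondecreasing. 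Harris then gives
\eqs{
    \sum_{j} w_j \cdot \frac{1}{\tlam_j}\cdot\frac{\tcorrs_{j,1}}{\tcorrs_{j,2}} \le \pa{\sum_j w_j \frac{1}{\tlam_j}}\pa{\sum_j w_j \frac{\tcorrs_{j,1}}{\tcorrs_{j,2}}}\,.
}
The two resulting factors unpack in a clean way: the second equals $\tr(\Sigmate_1)/\tr(\Sigmate_2)$, and the first equals $\frac{1}{\tr(\Sigmate_2)}\sum_i r_{i,2}/\la_i$, after using the grouping identity in reverse.

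Finally I would combine these pieces and use the trace-scale hypothesis $\tr(\Sigmate_1)\le \tr(\Sigmate_2)$ to cancel the ratio of traces, yielding
\eqs{
    \frac{1}{n_0}\sum_{i=1}^{n_0}\frac{r_{i,1}}{\la_i} \le \frac{\tr(\Sigmate_1)}{\tr(\Sigmate_2)}\cdot\frac{1}{n_0}\sum_{i=1}^{n_0}\frac{r_{i,2}}{\la_i} \le \frac{1}{n_0}\sum_{i=1}^{n_0}\frac{r_{i,2}}{\la_i}\,,
}
and multiplying by $\sigma_\e^2 n_0/(m-n_0-1)$ gives $\err{\Sigmate_1}^{\text{LR}}\le\err{\Sigmate_2}^{\text{LR}}$ via \cref{eqn:err_lr}. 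The only real subtlety here, as compared to \cref{prop:ordering_lr}, is correctly aggregating the contributions from indices within each eigenspace $S_j$ so that the monotonicity hypothesis is stated in terms of $\tcorrs_{j,\ell}$ rather than $r_{i,\ell}$; once that bookkeeping is done, the Harris-inequality step is identical and the rest is algebra.
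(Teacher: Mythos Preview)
Your proposal is correct and follows essentially the same approach as the paper: group the sum by the distinct eigenvalues $\tlam_j$, introduce the probability weights $w_j = |S_j|\,\tcorrs_{j,2}/\tr(\Sigmate_2)$, apply the Harris inequality to the nonincreasing function $1/\tlam_j$ against the nondecreasing ratio $\tcorrs_{j,1}/\tcorrs_{j,2}$, and finish with the trace hypothesis. The bookkeeping and the identification of the two factors after Harris are handled exactly as in the paper's proof.
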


\begin{proof}
From \cref{prop:lin_reg} it is enough to show $\tr(\Sigmate_2 \Sigmatr^{-1}) \ge \tr(\Sigmate_1 \Sigmatr^{-1})$. First, note that $\sum_{i=1}^{n_0} r_{i,1} = \sum_{j=1}^{k} \abs{S_j} \tilde{r}_{j,1} =  \tr(\Sigmate_1)$ and $\sum_{i=1}^{n_0} r_{i,2} = \sum_{j=1}^{k} \abs{S_j} \tilde{r}_{j,2} = \tr(\Sigmate_2)$. Then,
\begin{align}
    \tr(\Sigmate_1 \Sigmatr^{-1}) & =  \sum_{i=1}^{n_0} \frac{\corrs_{i,1}}{\lambda_i} =  \sum_{j=1}^{k} \abs{S_j} \frac{\tcorrs_{j,1}}{\tlam_j} = \tr(\Sigmate_2) \sum_{j=1}^{k} \abs{S_j} \frac{\tcorrs_{j,2} }{\tr(\Sigmate_2)} \frac{\tcorrs_{j,1}}{\tcorrs_{j,2}} \frac{1}{\tlam_j}\\ 
    &\leq \tr(\Sigmate_2) \left(\sum_{j=1}^{k} \abs{S_j} \frac{\tcorrs_{j,2} }{\tr(\Sigmate_2)} \frac{\tcorrs_{j,1}}{\tcorrs_{j,2}} \right) \left(\sum_{j=1}^{k} \abs{S_j} \frac{\tcorrs_{j,2} }{\tr(\Sigmate_2)} \frac{1}{\tlam_j} \right)\\
    &=\frac{\tr(\Sigmate_{1})}{\tr(\Sigmate_{2})}  \tr(\Sigmate_{2} \Sigmatr^{-1})\\
    &\leq \tr(\Sigmate_{2} \Sigmatr^{-1}),
\end{align}
where the inequality is a consequence of the Harris inequality \citep[Section 2.2]{grimmett1999percolation}: since $\abs{S_j} {\tcorrs_{j,1}}/{\tlam_j}$ is a normalized measured with respect to $j$, the sequence ${1}/{\tlam_j}$ is nonincreasing in $j$, while the sequence ${\tcorrs_{j,1}}/{\tcorrs_{j,2}}$ is nondecreasing in $j$.
\end{proof}

\subsection{Linear regression limit of random feature regression}
\label{app:limits}

In this section, we show that taking appropriate limits of~\cref{cor:ridgeless_lim} and \cref{cor:infinite_overparameterization} recovers existing results for linear regression in high-dimensions. We first examine the ridgeless limit, for which the results fall into two cases: $\phi<1$, which we discuss in ~\cref{sec_lr_lim1} and connect to \cref{prop:lin_reg} from the main text; and $\phi > 1$, which we discuss in \cref{sec_lr_lim2} and connect to \citet[Cor. 2]{hastie2019surprises}. We then consider non-zero regularization, and in \cref{sec_lr_lim3} we connect \cref{cor:infinite_overparameterization} to \citet[Theorem 1]{wu2020optimal}.

To recover these results, we let $\sigma$ approach the identity activation function and $\psi \to 0$\footnote{The order of these limits does not change the result, but for concreteness we take take the limit as $\psi\to0$ first.}, since as more random features are added the kernel concentrate around its limit. Moreover, since we are taking the limit $\psi\to0$ and we assume $\phi > 0$, we may assume $\phi>\psi$ in all calculations for simplicity. We also note that as $\sigma$ approaches the identity, the constants associated to $\sigma$ approach the following limits:
\eq{\label{eq_sigma_to_linear}
    \eta, \zeta, \eta_*, \zeta_*,\xi \to 1 \quad\text{and}\quad \omega,\omegas\to0.
}

\subsubsection{Recovering asymptotic form of \texorpdfstring{\cref{prop:lin_reg}}{} for \texorpdfstring{$\phi < 1$}{}}\label{sec_lr_lim1}
From~\cref{cor:ridgeless_lim}, the expression for the total error is
\al{
    \err{\mu} &= (1-\xi)^2 s_{*} + 2(1-\xi)\xi \Ishift_{1,1} + \xi^2 \phi \Ishift_{1,2} + \xi^2 \frac{\psi}{\phi - \psi}x(\sigma_{\e}^2+\I_{1,1})(\omegas + \Ishift_{1,1}) \\
    &\qquad+ \xi^2 x\Big(1-\frac{x (\omega-\sigma_{\e}^2)}{1-x^2 \I_{2,2}}\Big)\Ishift_{2,2},
}
where $x=1/ (\omega+\I_{1,1})$ since we are assuming $\psi<\phi$. Taking the limit $\psi\to0$, the total error $E_\mu$ converges to
\eq{\label{eq_total_error_psi0}
    (1-\xi)^2 s_{*} + 2(1-\xi)\xi \Ishift_{1,1} + \xi^2 \phi \Ishift_{1,2} + \xi^2 x\Big(1-\frac{x (\omega-\sigma_{\e}^2)}{1-x^2 \I_{2,2}}\Big)\Ishift_{2,2}.
}
Recall that \cref{prop:lin_reg} assumes that $\phi < 1$. We begin by analyzing the solution to the self-consistent equation for $x$ in the ridgeless limit when the activation function $\sigma$ becomes linear.
\begin{lemma}
    \label{lem:x_ridgeless_linear}
    Suppose $0<\phi<1$ and $\psi < \phi$. In the ridgeless limit, the solution $x$ to the self-consistent equation in \cref{cor:ridgeless_lim},
    \begin{equation}\label{eq_sce2}
        x = \frac{1}{\omega+\I_{1,1}}\,,
    \end{equation}
    satisfies $\lim_{\omega \to 0}x =\infty$.
\end{lemma}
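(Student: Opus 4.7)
The plan is to lower-bound $x(\omega)$ by a quantity that diverges as $\omega \to 0^+$, working directly from the self-consistent equation~\cref{eq_sce2}. Multiplying both sides of $x = 1/(\omega + \I_{1,1}(x))$ by $\omega + \I_{1,1}(x)$ and rearranging gives
\begin{equation}
\omega\,x \;=\; 1 - x\,\I_{1,1}(x)\,.
\end{equation}

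Next, I would expand $x\,\I_{1,1}(x)$ using the definition in~\cref{eqn:Idefs} together with the identity $x\lambda/(\phi + x\lambda) = 1 - \phi/(\phi + x\lambda)$:
\begin{equation}
x\,\I_{1,1}(x) \;=\; \phi\,\E_\mu\!\left[\frac{x\lambda}{\phi + x\lambda}\right] \;=\; \phi - \phi^2\,\E_\mu\!\left[\frac{1}{\phi + x\lambda}\right]\,.
\end{equation}
Since $\lambda > 0$ on the support of $\mutr$ (by the positive-definiteness of $\Sigmatr$ in~\cref{assump:esd}) and $x \ge 0$ (by~\cref{lem:positivity}), the expectation is strictly positive, yielding $x\,\I_{1,1}(x) < \phi$.

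Combining the two displays above gives $\omega\,x(\omega) = 1 - \phi + \phi^2 \E_\mu[1/(\phi + x\lambda)] > 1 - \phi$, and since $\phi < 1$ by hypothesis, $1 - \phi > 0$. This yields the uniform bound
\begin{equation}
x(\omega) \;>\; \frac{1-\phi}{\omega}\,,
\end{equation}
which diverges as $\omega \to 0^+$, establishing the claim. The proof is a one-step algebraic manipulation, so there is no serious obstacle; the only delicate point is the strict inequality $x\,\I_{1,1}(x) < \phi$, which fails only if $\mutr$ were concentrated at $\lambda = 0$, a case ruled out by the positive-definiteness of $\Sigmatr$.
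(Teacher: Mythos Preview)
Your proof is correct and essentially identical to the paper's: both establish the key bound $x\,\I_{1,1}(x)\le\phi$ (the paper writes it as $\I_{1,1}\le\phi/x$ via $\lambda/(\phi/x+\lambda)\le1$, you via the equivalent identity $x\lambda/(\phi+x\lambda)=1-\phi/(\phi+x\lambda)$), and then combine it with the self-consistent equation to obtain $x\ge(1-\phi)/\omega$, which diverges as $\omega\to0$.
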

\begin{proof}
From the definition of $\I_{1,1}$, we see
\al{
    \I_{1,1} = \phi \mathbb{E}_\mu\left[\frac{\lambda}{\phi + \lambda x}\right] = \frac{\phi}{x} \mathbb{E}_\mu\left[\frac{\lambda}{\phi/x + \lambda}\right] \le \frac{\phi}{x}.
}
Using \cref{eq_sce2}, we find $x\geq \p{1-\phi}/{\omega}$. Taking $\omega\to0$ completes the proof.
\end{proof}

From \cref{lem:x_ridgeless_linear}, the solution to the self-consistent equation for $x$ diverges, i.e. $x\to\infty$ as $\sigma$ becomes linear. In this case, by the dominated convergence theorem, we have that
\al{
    \lim_{\omega\to 0} x \Ishift_{1,1} &= \lim_{x\to \infty} \E_{\mu}\qa{\frac{x r }{\phi+x \lambda}} = \phi \E_{\mu}\qa{r/\lambda},\\
    \lim_{\omega\to 0} x^2 \I_{2,2} &= \lim_{x\to \infty} \phi \E_{\mu}\qa{\frac{x^2 \lambda^2}{(\phi+x \lambda)^2}} = \phi,\\
    \lim_{\omega\to 0} x^2 \Ishift_{2,2} &= \lim_{x\to \infty} \phi \E_{\mu}\qa{\frac{x^2 r \lambda}{(\phi+x \lambda)^2}} = \phi \E_{\mu}\qa{r/\lambda},\\
    \lim_{\omega\to 0} \Ishift_{1,1} &= \lim_{x\to \infty} \phi \E_{\mu}\qa{\frac{r}{(\phi+x \lambda)^2}} = 0,\\
    \text{and}\quad \lim_{\omega\to 0} \Ishift_{1,2} &= \lim_{x\to \infty} \phi \E_{\mu}\qa{\frac{r}{(\phi+x \lambda)^2}} = 0\,.
}
As such, the total error in~\cref{eq_total_error_psi0} converges to
\begin{align}
     \sigma_{\e}^2 \frac{\phi}{1-\phi} \E_{\mu}[r/\lambda]
\end{align}
as $\omega\to0$ and $\psi\to0$, as desired.

\subsubsection{Recovering results from \texorpdfstring{\cite{hastie2019surprises}}{}\label{sec_lr_lim2} when $\Sigmatr=\Sigmate$ and $\phi>1$}
The minimum-norm solution for under-determined linear regression is the same as the limiting ridge-regularized solution as the ridge constant converges to 0. As such, studying the ridgeless limit as in~\cref{cor:ridgeless_lim} allows for a comparison to prior results on minimum-norm interpolants~\citep{hastie2019surprises}. To do so, we again take the limit as $\sigma$ becomes linear; however, in contrast to the previous section, we now assume $\phi>1$ in order to compare with \citet[Cor. 2]{hastie2019surprises}.

As \citet{hastie2019surprises} examines the setting in which the training and test covariate distributions are equal, $\Sigmatr = \Sigmate$, we have that $\Ishift_{a,b} = \I_{a,b}$. Using this relation, the total error~\cref{eq_total_error_psi0} becomes
\eq{
    \err{\mu} = (1-\xi)^2 s + 2(1-\xi)\xi \I_{1,1} + \xi^2 \phi \I_{1,2} + \xi^2 x\Big(1-\frac{x (\omega-\sigma_{\e}^2)}{1-x^2 \I_{2,2}}\Big)\I_{2,2}.
}
Next, letting $\sigma$ become linear as in~\cref{eq_sigma_to_linear}, we obtain $x = {1}/{\I_{1,1}}$ and
\al{
    \label{eqn:ridgeless_linear_error}
    \err{\mu} \,\stackrel{\omega \to 0}{\longrightarrow}\, \I_{1,1} + \sigma_{\e}^2 \frac{x^2 \I_{2,2}}{1-x^2 \I_{2,2}},
}
where we used the identity \cref{eq:raise_lower}.

We now simplify and relate the result for the limiting error from \citet[Cor. 2]{hastie2019surprises} in the case of isotropic $\beta$ satisfying $\norm{\beta}_2=1$, where the total error is written as\footnote{\citet[Cor. 2]{hastie2019surprises} provides expressions for a bias and variance term separately, but the decomposition is defined slightly differently than the one we use, so we compare directly to the total error. Note that~\cref{eq:hastie_error} corrects a typo in~\citet{hastie2019surprises}.}
\begin{align}
\label{eq:hastie_error}
    \frac{1}{\phi^2 c_0(H, \phi)} + \sigma_{\e}^2 c_0 \phi \frac{\int{\frac{s^2}{(1+c_0 \phi s)^2 dH(s)}}}{\int{\frac{s}{(1+c_0 \phi s)^2 dH(s)}}}\,.
\end{align}
The measure $H$ is the limiting empirical spectral density of the covariance, which is equivalent to the marginal distribution of $\lambda$, and $c_0$ satisfies the equation
\eq{
    1-\frac{1}{\phi} = \int{\frac{1}{1+c_0 \phi s} dH(s)}.
}
Substituting $c_0 = x/\phi^2$ and rearranging terms, we see
\begin{align}
    0 &=  1-\frac{1}{\phi} - \int{\frac{1}{1+c_0 \phi s} dH(s)}\\
    &=  -\frac{1}{\phi} + 1- \int{\frac{1}{1+(x/\phi) s} dH(s)}\\
    &=  -\frac{1}{\phi} + x \int{\frac{ s}{\phi+x  s} dH(s)}\\
    &= -\frac{x}{\phi}\big(\frac{1}{x} - \I_{1,1}\big),
\end{align}
which is satisfied by $x=1/\I_{1,1}$, implying the two self-consistent equations are equivalent and validating the identification $c_0 = x/\phi^2$. Using the same substitution $c_0 = x/\phi^2$ in~\cref{eq:hastie_error} gives
\al{
    \I_{1,1} + \sigma_{\e}^2 \frac{1}{x \phi} \frac{x^2 \I_{2,2}}{\I_{1,2}} = \I_{1,1} + \sigma_{\e}^2 \frac{x^2 \I_{2,2}}{1-x^2 \I_{2,2}},
}
which matches our expression in~\cref{eqn:ridgeless_linear_error}.
\subsubsection{Recovering results from \texorpdfstring{\citet{wu2020optimal}}{}\label{sec_lr_lim3} when $\Sigmatr=\Sigmate$ and $\gamma > 0$}
\label{app:ridge_regression_connect}
Viewing the expressions in \cref{cor:infinite_overparameterization} as functions of $\gamma$ and $\omega$, the total error satisfies the self-consistency relation $\err{\mu}(\gamma,\omega) = \err{\mu}(\gammaeff,0)$, i.e. the error with regularizer $\gamma$ and nonlinearity coefficient $\omega$ (from \cref{eq:constants_train}) is identical to the error with regularizer $\gammaeff$ and nonlinearity constant $\omega = 0$ (achieved by taking a linear activation function). Given this relationship, we seek to connect \cref{cor:infinite_overparameterization} with the results of \citet[Theorem 1]{wu2020optimal} for linear ridge regression, in the case where $\Sigmatr = \Sigmate$ and their underlying $\beta$ has an isotropic prior and their ridge-weighting matrix is the identity. This connection establishes that the total test error in \cref{cor:infinite_overparameterization} with regularizer $\gamma$ and nonlinearity constant $\omega$ matches the total test error of linear ridge regression with regularizer $\gammaeff$.

To begin, recall in the case that $\Sigma=\Sigmate$, $\Ishift_{a,b} = \I_{a,b}$ and $\xi=1$. In this case, our total test error can be written as:
\begin{align}
    E_{\mu} = \phi \I_{1,2} + \frac{\sigma_{\e}^2 + \phi \I_{1,2}}{\gammaeff + \phi \I_{1,2}} \cdot x \I_{2,2}. \label{eq:ridge_test_error}
\end{align}
with $x=1/(\gammaeff + \I_{1,1})$.  

Simplified to the aforementioned setting, the results of \citet[Theorem 1]{wu2020optimal} can be written as\footnote{Note this expression subtracts the contribution of the irreducible label noise on the test point to match our definition of the test error. Additionally, the generative model in \citet{wu2020optimal} takes the form $y(\x_i) = \beta^\top \x_i / \sqrt{m} + \e_i$ instead of the convention $y(\x_i) = \beta^\top \x_i / \sqrt{n_0} + \e_i$ we take. Hence the term $\int{\frac{\lambda}{(\lambda m(-\tgamma)+1)^2} dH(\lambda)}$ is rescaled by a relative factor of $1/\phi$ relative to the comparable term in \citet[Theorem 1]{wu2020optimal}.},
\begin{align}
    E^{\text{LR}} = \frac{m'(-\tgamma)}{m(-\tgamma)^2} \cdot \left( \int{\frac{\lambda}{(\lambda m(-\tgamma)+1)^2} dH(\lambda)} + \sigma_{\e}^2 \right) - \sigma_{\e}^2
\end{align}
where measure $H$ is the limiting empirical spectral density of the covariance, which is equivalent to the marginal distribution of $\lambda$. Additionally, $m(z)$ (the Stieltjes transform of the limiting spectral density corresponding to the Gram matrix $X^\top X/m$) and its derivative are given by \citet[Theorem 1]{wu2020optimal} as,
\begin{align}
    \tgamma &= \frac{1}{m(-\tgamma)}-\phi \int{\frac{\lambda}{1+\lambda m(-\tgamma)} dH(\lambda)} \\
    1 &= \left(\frac{1}{m^2(-\tgamma)}-\phi \int{  \frac{\lambda^2}{(\lambda \cdot m(-\tgamma)+1)^2} dH(\lambda) } \right) m'(-\tgamma)\label{eq:test_error_wu_xu}\,,
\end{align}
which can be rearranged to read,
\begin{align}
    \phi m(-\tgamma) &= \frac{1}{\tgamma/\phi +\phi \int{\frac{\lambda}{\phi+\lambda \cdot \phi m(-\tgamma)} dH(\lambda)}}\label{eq:mtgamma}\\
    \phi^2 m'(-\tgamma) &= \frac{1}{\frac{1}{\phi^2 m^2(-\tgamma)}-\phi \int{  \frac{\lambda^2}{(\lambda \cdot \phi m(-\tgamma)+\phi)^2} dH(\lambda)}}\label{eq:mtgammaprime}\,.
\end{align}
Whereas we define the Gram matrix as $X^\top X/n_0$, it is defined as $X^\top X/m$ in \citet{wu2020optimal}, leading to some normalization differences. To resolve these discrepancies, it is necessary to identify $m(-\tgamma) = x/\phi$ and $\tgamma = \phi \gammaeff$ in order for the regression vector $\hat{\beta}$ to be consistent between the two conventions. Under these identifications, \cref{eq:mtgamma} can be written as,
\begin{align}
    x &= \frac{1}{\gammaeff+\I_{1,1}}\label{eq:x_wuxu}\,,
\end{align}
which agrees with the expression in \cref{cor:infinite_overparameterization}, and \cref{eq:mtgammaprime} can be written as,
\begin{align}
    \frac{\partial x}{\partial \gammaeff} &= \frac{1}{1/x^2-\I_{2,2}}\\
    &= - \frac{x}{1/x -\I_{1,1} + \phi \I_{1,2}}\\
    & = - \frac{x}{\gammaeff + \phi \I_{1,2}}\,,
\end{align}
which, using $\tfrac{\partial x}{\partial \gammaeff} = \rho \tfrac{\partial x}{\partial \gamma}$ agrees with the expression in \cref{eq:xprime_inf_overparam}. Next, we use these identifications to simplify the expression for the test error in \cref{eq:test_error_wu_xu},
\begin{align}
    E^{\text{LR}} &= \frac{\phi^2 m'(-\tgamma)}{(\phi m(-\tgamma))^2} \cdot \left( \phi^2 \int{\frac{\lambda}{(\lambda \cdot \phi m(-\tgamma)+\phi)^2} dH(\lambda)} +\sigma_{\e}^2 \right) - \sigma_{\e}^2\\
    & = \frac{1}{1-x^2 \I_{2,2}} \cdot \left( \phi \I_{1,2}+ \sigma_{\e}^2 \right) - \sigma_{\e}^2\,.
\end{align}
Invoking the identity $\frac{1}{1-x^2 \I_{2,2}}=\frac{1/x}{1/x-x \I_{2,2}} = \frac{\gammaeff+\I_{1,1}}{\gammaeff+\phi \I_{1,2}}$, which follows from \cref{eq:raise_lower} and \cref{eq:x_wuxu}, we can rewrite the error as,
\begin{align}
    E^{\text{LR}} = \frac{\phi \I_{1,2} \cdot(\gammaeff+\I_{1,1})}{\gammaeff+\phi \I_{1,2}} + \frac{\sigma_{\e}^2 \cdot(\I_{1,1}-\phi \I_{1,2})}{\gammaeff+\phi \I_{1,2}}\,.
\end{align}
Finally, we invoke \cref{eq:raise_lower} and \cref{eq:x_wuxu} once more to obtain,
\al{
E^{\text{LR}} = \phi \I_{1,2} + \phi \I_{1,2} \cdot \frac{x \I_{2,2}}{\gammaeff+\phi\I_{1,2}}  +  \frac{\sigma_{\e}^2 x \I_{2,2}}{\gammaeff+\phi \I_{1,2}}\,,
}
which matches the expression in \cref{eq:ridge_test_error}, confirming the equivalence of the two expressions.

\section{Harder shifts increase the bias and the total error}
\label{sec:ordering}
To begin, we state \cref{prop:error_order} in the general setting where the scales in $\mu_1$ and $\mu_2$ need not match. In this setting, the marginals of the two test distribution may have two scales $s_1$ and $s_2$ distinct from the scale of the training covariance $s$, and the parameters in \cref{thm:main_b_v} are subscripted by a $1$ or $2$ to indicate it whether they are induced by $\mu_1$ or $\mu_2$. As discussed in \cref{sec:shift_strength} and \cref{sec:counterexamples}, additional constraints on these parameters are necessary to guarantee an ordering of the bias and error. 

\begin{proposition}[Restatement of \cref{prop:error_order}]
    \label{prop:error_order_sm}
    Consider two LJSDs $\mu_1$ and $\mu_2$ such that $\mu_1 \le \mu_2$. Then, in the setting of \cref{thm:main_b_v},
    \begin{align}
        \bias{\mu_1} \le \bias{\mu_2} &\quad\text{if}\quad \xi_2 \le \xi_1 \le 1\\
        \err{\mu_1} \le \err{\mu_2} &\quad\text{if}\quad \xi_2 \le \xi_1 \le 1\le\zeta_2/\zeta_1 \le \eta_2/\eta_1 \quad \text{and}\quad \sigma_{\e}^2 \le \omega\,.
    \end{align}
\end{proposition}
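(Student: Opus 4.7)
The plan is to isolate the $\mu$-dependence in the formulas of \cref{thm:main_b_v} and exploit the Harris-type inequalities of \cref{lem:harris_Ifunc,cor:harris_Ifunc} together with the hypotheses on the scale-dependent constants. A useful first observation is that the self-consistent root $x$ and the accompanying quantities $\tauone$, $\taub$, $\partial x/\partial\gamma$, and $\I_{a,b}$ depend on $\mu$ only through the marginal $\mutr$ and through the training-side constants $\omega,\rho,\gamma,\phi,\psi$; since $\mu_1$ and $\mu_2$ share both the marginal and the training scale, these quantities are common to the two shifts. Only $\xi$, $s_*$, $\omegas$, and the $\Ishift_{a,b}$ functionals differ.

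For the bias, rewrite \cref{eqn:main_bias} in the form
\eq{
    \bias{\mu} = \E_\mu\qa{r\, w_\xi(x;\la)},\qquad w_\xi(x;\la) = \pa{1 - \xi\pa{1 - \tfrac{\phi}{\phi+x\la}}}^{\!2},
}
which matches the eigenspace decomposition of \cref{sec:nonmonotonic_bias}, and chain two monotonicity steps. First, for each fixed $\xi\in[0,1]$, the weight $w_\xi(x;\la)$ is nonnegative and nonincreasing in $\la$: setting $u=\phi/(\phi+x\la)$, we have $u\in[0,1]$ nonincreasing in $\la$ and $1-\xi(1-u)\ge u\ge 0$. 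Applying \cref{lem:harris_Ifunc} with $g\equiv 1$ and $h(\la)=w_\xi(x;\la)$ yields $\bias{\mu_1}(\xi)/s_1\le \bias{\mu_2}(\xi)/s_2$, and combined with $s_1\le s_2$ (from $\mu_1\le\mu_2$), this gives $\bias{\mu_1}(\xi)\le \bias{\mu_2}(\xi)$. Second, the derivative $\partial_\xi w_\xi(x;\la)=-2(1-u)(1-\xi(1-u))$ is nonpositive on $\xi\in[0,1]$, so $\bias{\mu_2}(\xi)$ is nonincreasing on $[0,1]$. Under $\xi_2\le\xi_1\le 1$ we then chain $\bias{\mu_1}=\bias{\mu_1}(\xi_1)\le \bias{\mu_2}(\xi_1)\le \bias{\mu_2}(\xi_2)=\bias{\mu_2}$.

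For the total error, the plan is to show $\var{\mu_1}\le \var{\mu_2}$ separately and add it to the bias inequality. Factor out the common nonnegative prefactor $-\rho(\psi/\phi)\partial x/\partial\gamma$ from \cref{eqn:main_var} and multiply through by $\xi^2$; the remaining expression is a linear combination with $\mu$-independent nonnegative coefficients of the quantities $\xi^2\omegas$, $\xi^2\Ishift_{1,1}$, $\xi^2\Ishift_{1,2}$, and $\xi^2\Ishift_{2,2}$. It then suffices to show each such quantity is nondecreasing as we pass from $\mu_1$ to $\mu_2$. Using $\xi_i^2=\zeta_i/(s_i\rho)$, we have $\xi_i^2\omegas^{(i)}=(\eta_i-\zeta_i)/\rho$, and the hypotheses $1\le\zeta_2/\zeta_1\le\eta_2/\eta_1$ (together with the always-true $\eta_i\ge\zeta_i$) imply $\zeta_2-\zeta_1\le\eta_2-\eta_1$, so $\xi_1^2\omegas^{(1)}\le\xi_2^2\omegas^{(2)}$. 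For $a\in\h{1,2}$, \cref{cor:harris_Ifunc} gives $(\Ishift_{1,a})_i/s_i$ nondecreasing in $i$, and since $\zeta_i/\rho$ is nondecreasing by hypothesis, $\xi_i^2(\Ishift_{1,a})_i=(\zeta_i/\rho)\cdot(\Ishift_{1,a})_i/s_i$ is nondecreasing as well.

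The main obstacle is the $\Ishift_{2,2}$ piece: its integrand $\la/(\phi+x\la)^2$ is not monotone in $\la$, so \cref{lem:harris_Ifunc} does not apply directly. The plan is to eliminate $\Ishift_{2,2}$ via the raising identity $x\Ishift_{2,2}=\Ishift_{1,1}-\phi\Ishift_{1,2}$ from \cref{eq:raise_lower}, converting all $\Ishift$-dependence to the already-monotone functionals $\Ishift_{1,1}$ and $\Ishift_{1,2}$. After simplifying the resulting $\mu$-independent coefficients with the parallel identity $x\I_{2,2}=\I_{1,1}-\phi\I_{1,2}$, the critical step is to verify that the net coefficients of $\xi^2\Ishift_{1,1}$ and $\xi^2\Ishift_{1,2}$ remain nonnegative. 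This is precisely the role of the low-noise hypothesis $\sigma_\e^2\le\omega$, which ensures that the cancellation produced by substituting out $\Ishift_{2,2}$ does not flip the sign of the $\Ishift_{1,2}$ coefficient. Combining $\var{\mu_1}\le\var{\mu_2}$ with the bias ordering yields $\err{\mu_1}\le\err{\mu_2}$.
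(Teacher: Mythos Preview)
Your bias argument is correct and matches the paper's: writing $\bias{\mu}=\E_\mu[r\,w_\xi(\la)]$ with $w_\xi(\la)=(1-\xi+\xi\phi/(\phi+x\la))^2$, then chaining the Harris step (fix $\xi$, swap $\mu_1\to\mu_2$) with the $\xi$-monotonicity step ($\xi_1\to\xi_2$), is exactly what the paper does in slightly different packaging.

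The total-error plan has a genuine gap. You propose to prove $\var{\mu_1}\le\var{\mu_2}$ on its own and then add the bias inequality, but this fails: as the paper notes explicitly after \cref{cor:error_order} and illustrates in \cref{app:large_label_noise}, the variance is \emph{not} ordered in general, even under \cref{assump:scale} and $\sigma_{\e}^2\le\omega$. Concretely, after substituting $x\Ishift_{2,2}=\Ishift_{1,1}-\phi\Ishift_{1,2}$ into \cref{eqn:main_var}, the coefficient of $\Ishift_{1,2}$ in the variance alone is (inside the nonnegative prefactor) $\psi\gamma\tauone\I_{2,2}-\tfrac{\phi\gamma\taub}{x}(\phi\I_{1,2}+\sigma_{\e}^2)$, and $\sigma_{\e}^2\le\omega$ does nothing to keep this nonnegative; it is typically negative in the overparameterized, small-$\gamma$ regime. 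The paper's remedy is to work with $\err{\mu}=\bias{\mu}+\var{\mu}$ directly rather than with the pieces: the bias term $\xi^2\phi\Ishift_{1,2}$ supplies an additive $\phi$ to that coefficient, and after the simplification in \cref{eq_Cis2} the only sign-indefinite piece of the resulting $C_3$ is $\tfrac{\phi\taub}{x}(\omega-\sigma_{\e}^2)$, which is exactly where $\sigma_{\e}^2\le\omega$ enters. The correct route is therefore to decompose $\err{\mu}=(1-\xi)^2 s_*+2(1-\xi)\xi\Ishift_{1,1}+\xi^2(C_1\omegas+C_2\Ishift_{1,1}+C_3\Ishift_{1,2})$ with $\mu$-independent $C_1,C_2,C_3\ge 0$, and then run your Harris/scale-constant chain on this combined expression.
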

\begin{proof}[Proof of \cref{prop:error_order}] Consider two LJSDs $\mu_1$ and $\mu_2$ such that $\mu_1 \le \mu_2$, and assume $\xi_2 \le \xi_1 \le 1$. Then,
\begin{align}
     \bias{\mu_2}-\bias{\mu_1} &= (1-\xi_2)^2 s_2 + 2(1-\xi_2)\xi_2 (\Ishift_{1,1})_2 + \xi_2^2 \phi (\Ishift_{1,2})_2 \\ 
    &\qquad - (1-\xi_1)^2 s_1 - 2(1-\xi_1)\xi_1 (\Ishift_{1,1})_1 - \xi_1^2 \phi (\Ishift_{1,2})_1\\
    &\ge (1-\xi_2)^2 s_2 + 2(1-\xi_2)\xi_2 (\Ishift_{1,1})_2 + \xi_2^2 \phi (\Ishift_{1,2})_2 \\ 
    &\qquad - (1-\xi_1)^2 s_2 - 2(1-\xi_1)\xi_1 (\Ishift_{1,1})_2 - \xi_1^2 \phi (\Ishift_{1,2})_2\\
    &= \mathbb{E}_{\mu_2} \left[ r\bigg((1-\xi_2) + \xi_2 \frac{\phi}{\phi + x \lambda}\bigg)^2\right]-\mathbb{E}_{\mu_2} \left[ r\bigg((1-\xi_1) + \xi_1 \frac{\phi}{\phi + x \lambda}\bigg)^2\right]\\
    &= \mathbb{E}_{\mu_2} \left[r\left(\bigg((1-\xi_2) + \xi_2 \frac{\phi}{\phi + x \lambda}\bigg)^2 - \bigg((1-\xi_1) + \xi_1 \frac{\phi}{\phi + x \lambda}\bigg)^2\right)\right]\\
    &= \mathbb{E}_{\mu_2} \left[r\left((\xi_1 - \xi_2)\Big(2-(\xi_1 + \xi_2)\frac{x \lambda}{\phi + x \lambda}\Big)\frac{x \lambda}{\phi + x \lambda}\right)\right]\\
    &\ge 0\,,
\end{align}
where we have again used \cref{lem:positivity} and the inequalities in \cref{eqn:Iineq} which follow from \cref{cor:harris_Ifunc}.

Now, additionally assume that $\sigma_{\e}^2 \le \omega$ and $1 \le \zeta_2/\zeta_1 \le \eta_2/\eta_1$. Note that the latter condition implies $s_2 \xi_2^2 \ge s_1 \xi_1^2$ and ${\omega}_2/s_2 \ge {\omega}_1/s_1$. Reorganizing the terms specifying the limiting variance, we can rewrite the total error as,
\al{
\err{\mu} &= \bias{\mu} + \var{\mu}\\
&= (1-\xi)^2 s_* + 2(1-\xi)\xi (\Ishift_{1,1}) + \xi^2 \phi (\Ishift_{1,2})\nonumber\\
&\quad - \rho\xi^2 \frac{\psi}{\phi}\frac{\partial x}{\partial \gamma} \bigg((\sigma_\e^2+\I_{1,1})(\omega + \phi \I_{1,2})(\omegas+\Ishift_{1,1})+\gamma\tauone\I_{2,2}(\omegas+\phi\Ishift_{1,2}) \nonumber\\
&\qquad\qquad\qquad\quad+\frac{\phi}{\psi x}\gamma\taub(\sigma_{\e}^2+\phi \I_{1,2})(\Ishift_{1,1} -\phi\Ishift_{1,2}) \bigg)\\
& \equiv (1-\xi)^2 s_* + 2(1-\xi)\xi (\Ishift_{1,1}) + \xi^2\big(C_1 \omegas + C_2 \Ishift_{1,1} + C_3 \Ishift_{1,2}\big)\,,
}
where the $C_i \ge 0$ and depend on $\mu$ only through the marginal $\lambda$ (i.e. they only depend on the training distribution):
\al{\label{eq_Cis1}
    C_1 & = -\rho \frac{\psi}{\phi}\frac{\partial x}{\partial \gamma}\big( (\sigma_\e^2+\I_{1,1})(\omega + \phi \I_{1,2}) +  \gamma\tauone\I_{2,2} \big) \ge 0 \\
    C_2 & = -\rho\xi^2\frac{\partial x}{\partial \gamma}\left(\frac{\psi}{\phi} (\sigma_\e^2+\I_{1,1})(\omega + \phi \I_{1,2}) + \frac{\gamma\taub}{x}(\sigma_{\e}^2+\phi \I_{1,2})\right) \ge 0 \\
    C_3 & = -\rho \frac{\partial x}{\partial \gamma} \left(\psi \gamma\tauone\I_{2,2}-\frac{\phi \gamma\taub}{x}(\sigma_{\e}^2+\phi \I_{1,2})-\frac{\phi}{\rho \tfrac{\partial x}{\partial \gamma}}\right)\\
    & = -\rho \frac{\partial x}{\partial \gamma} \left(\psi \gamma\tauone\I_{2,2}-\frac{\phi \gamma\taub}{x}(\sigma_{\e}^2+\phi \I_{1,2})+\frac{\phi}{\rho x}(\gamma+\rho \gamma (\tauone \psi/\phi + \taub)(\omega + \phi \I_{1,2}))\right)\\
    & =  -\rho \gamma \frac{\partial x}{\partial \gamma} \left(\psi \tauone\I_{2,2}+\frac{\phi \taub}{x}(\omega - \sigma_{\e}^2)+\frac{\phi}{\rho x}\big(1+\rho  \tauone \frac{\psi}{\phi}(\omega + \phi \I_{1,2})\big)\right)\\
    & \ge 0\,.\label{eq_Cis2}
}
where the inequalities follow from \cref{lem:positivity} and from the assumption $\sigma_{\e}^2 \le \omega$. It is now straightforward to see that,
\al{
\err{\mu_2} - \err{\mu_1} &= (1-\xi_2)^2 s_2 + 2(1-\xi_2)\xi_2 (\Ishift_{1,1})_2 - (1-\xi_1)^2 s_1 - 2(1-\xi_1)\xi_1 (\Ishift_{1,1})_1\\
 &\quad + \xi_2^2 \big(C_1 \omega_2 + C_2 (\Ishift_{1,1})_2 + C_3 (\Ishift_{1,2})_2\big) - \xi_1^2 \big(C_1 \omega_1 + C_2 (\Ishift_{1,1})_1 + C_3 (\Ishift_{1,2})_1\big)\nonumber\\
& \ge (1-\xi_2)^2 s_2 + 2(1-\xi_2)\xi_2 (\Ishift_{1,1})_2 - (1-\xi_1)^2 s_2 - 2(1-\xi_1)\xi_1 (\Ishift_{1,1})_2\\
 &\quad + \xi_2^2 \big(C_1 \omega_2 + C_2 (\Ishift_{1,1})_2 + C_3 (\Ishift_{1,2})_2\big) - \xi_1^2 \big(C_1 \omega_1 + C_2 (\Ishift_{1,1})_1 + C_3 (\Ishift_{1,2})_1\big)\nonumber\\
& = (\xi_1 -\xi_2)\big((2-\xi_1-\xi_2)s_2 - 2(1-\xi_1-\xi_2)(\Ishift_{1,1})_2\big)\\
&\quad + s_2 \xi_2^2 \big(C_1 \frac{\omega_2}{s_2} + C_2 \frac{(\Ishift_{1,1})_2}{s_2} + C_3 \frac{(\Ishift_{1,2})_2}{s_2}\big) - s_1 \xi_1^2 \big(C_1 \frac{\omega_1}{s_1} + C_2 \frac{(\Ishift_{1,1})_1}{s_1} + C_3 \frac{(\Ishift_{1,2})_1}{s_1}\big)\nonumber\\
& \ge (\xi_1-\xi_2)\big((2-\xi_1-\xi_2)(\Ishift_{1,1})_2 - 2(1-\xi_1-\xi_2)(\Ishift_{1,1})_2\big)\\
&\quad + s_2 \xi_2^2 \big(C_1 \frac{\omega_2}{s_2} + C_2 \frac{(\Ishift_{1,1})_2}{s_2} + C_3 \frac{(\Ishift_{1,2})_2}{s_2}\big) - s_2 \xi_2^2 \big(C_1 \frac{\omega_1}{s_1} + C_2 \frac{(\Ishift_{1,1})_1}{s_1} + C_3 \frac{(\Ishift_{1,2})_1}{s_1}\big)\nonumber\\
& \ge (\xi_1-\xi_2)(\xi_1+\xi_2)(\Ishift_{1,1})_2\\
&\quad + s_2 \xi_2^2 \big(C_1 \frac{\omega_2}{s_2} + C_2 \frac{(\Ishift_{1,1})_2}{s_2} + C_3 \frac{(\Ishift_{1,2})_2}{s_2}\big) - s_2 \xi_2^2 \big(C_1 \frac{\omega_2}{s_2} + C_2 \frac{(\Ishift_{1,1})_2}{s_2} + C_3 \frac{(\Ishift_{1,2})_2}{s_2}\big)\nonumber\\
& \ge 0\,,
}
where we have used the inequalities
\begin{align}
    \frac{(\Ishift_{1,1})_2}{s_2}-\frac{(\Ishift_{1,1})_1}{s_1} \geq 0\,, \quad \frac{(\Ishift_{1,2})_2}{s_2}-\frac{(\Ishift_{1,2})_1}{s_1} \geq 0\,, \quad\text{and}\quad (\Ishift_{1,1})_2 \le s_2\,.
\end{align}
The former two inequalities again follow from \cref{cor:harris_Ifunc}.
\end{proof}

\subsection{Pure-scale shifts}
\label{sec:order_variance}
\begin{proposition}[Restatement of~\cref{prop:pure_scale}]
\label{prop:pure_scale_sm}
Consider two LJSDs $\mu_1$ and $\mu_2$ such that $\mu_1 \le \mu_2$ and $\mathbb{E}_{\mu_1}[r|\lambda]/\mathbb{E}_{\mu_1}[r|\lambda] = \mathbb{E}_{\mu_1}[r]/\mathbb{E}_{\mu_2}[r]$, i.e. a \emph{pure-scale shift}. Then, in the setting of \cref{thm:main_b_v},
    \begin{align}
        \bias{\mu_1} \le \bias{\mu_2} &\quad\text{if}\quad \xi_2 \le \xi_1 \le 1\label{eq:bias_pure_scale_sm}\\
        \var{\mu_1} \le \var{\mu_2} &\quad\text{if}\quad 1\le\zeta_2/\zeta_1 \le \eta_2/\eta_1\label{eq:variance_pure_scale_sm}\\
        \err{\mu_1} \le \err{\mu_2} &\quad\text{if}\quad \xi_2 \le \xi_1 \le 1\le\zeta_2/\zeta_1 \le \eta_2/\eta_1 \quad \text{and}\quad \sigma_{\e}^2 \le \omega\label{eq:error_pure_scale_sm}\,.
    \end{align}
\end{proposition}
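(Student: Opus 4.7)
The plan hinges on the simplification that a pure-scale shift makes available: writing $\kappa_i = s_i/s$, the assumption $\mathbb{E}_{\mu_i}[r|\lambda] = \kappa_i \lambda$ collapses each test-side functional to a scalar multiple of a training-only one, namely $(\Ishift_{a,b})_i = \kappa_i \I_{a,b}$. Since $\kappa_i s = \mathbb{E}_{\mu_i}[r]$, the partial order $\mu_1 \le \mu_2$ immediately yields $\kappa_1 \le \kappa_2$ (equivalently $s_1 \le s_2$). For the bias half \cref{eq:bias_pure_scale_sm}, the hypothesis $\xi_2 \le \xi_1 \le 1$ matches exactly that of the general \cref{prop:error_order_sm}, so the plan is to simply invoke the bias conclusion of that proposition; the Harris-type inequalities of \cref{cor:harris_Ifunc} used in its proof become equalities in the pure-scale setting and are trivially satisfied.

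The main work is the variance ordering \cref{eq:variance_pure_scale_sm}. The plan is to substitute $(\Ishift_{a,b})_i = \kappa_i \I_{a,b}$ into \cref{eqn:main_var} and regroup the bracketed expression by $\omega_i$ and $\kappa_i$, obtaining
\begin{equation*}
\var{\mu_i} = -\rho\xi_i^2 \tfrac{\psi}{\phi}\tfrac{\partial x}{\partial \gamma}\,\bigl[A\,\omega_i + B\,\kappa_i\bigr],
\end{equation*}
where
\begin{align*}
A &= \I_{1,1}(\omega + \phi \I_{1,2}) + \gamma\tauone\I_{2,2} + \sigma_\e^2(\omega + \phi \I_{1,2}),\\
B &= \I_{1,1}^2(\omega + \phi \I_{1,2}) + \tfrac{\phi^2}{\psi}\gamma\taub \I_{1,2}\I_{2,2} + \phi\gamma\tauone \I_{1,2}\I_{2,2} + \sigma_\e^2 \I_{1,1}(\omega + \phi \I_{1,2}) + \sigma_\e^2 \tfrac{\phi}{\psi}\gamma\taub \I_{2,2}
\end{align*}
depend only on the training-side marginal $\mutr$ and are manifestly nonnegative by \cref{lem:positivity} (which also gives $-\partial x/\partial \gamma \ge 0$). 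The key bridge to the hypothesis is then to eliminate $\xi_i^2, \omega_i, \kappa_i$ in favor of $\eta_i$ and $\zeta_i$ using the identities
\begin{equation*}
\xi_i^2 \omega_i = (\eta_i - \zeta_i)/\rho \quad\text{and}\quad \xi_i^2 \kappa_i = \zeta_i/\zeta,
\end{equation*}
which follow directly from the definitions $\xi_i^2 = \rho_i/\rho$, $\zeta_i = s_i \rho_i$, $\zeta = s\rho$, and $\omega_i = s_i(\eta_i/\zeta_i - 1)$. After this substitution, $\var{\mu_i}$ becomes a nonnegative linear combination of $(\eta_i - \zeta_i)$ and $\zeta_i$ with $i$-independent coefficients.

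The final step will be to check that $1 \le \zeta_2/\zeta_1 \le \eta_2/\eta_1$ implies both $\zeta_2 \ge \zeta_1$ and $\eta_2 - \zeta_2 \ge \eta_1 - \zeta_1$. The first is immediate; for the second, rewrite $\zeta_2/\zeta_1 \le \eta_2/\eta_1$ as $\eta_2/\zeta_2 \ge \eta_1/\zeta_1$, subtract one, and multiply by $\zeta_2 \ge \zeta_1 > 0$ to get $\eta_2 - \zeta_2 \ge (\zeta_2/\zeta_1)(\eta_1 - \zeta_1) \ge \eta_1 - \zeta_1$, using $\eta_1 \ge \zeta_1$ (equivalent to $\omega_1 \ge 0$, again from \cref{lem:positivity}). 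The error ordering \cref{eq:error_pure_scale_sm} then follows by summing the bias and variance bounds. The main obstacle I expect is the bookkeeping in the variance reorganization: carrying through all cross terms coming from $\sigma_\e^2$ and $\gamma$ and recognizing that the pure-scale structure makes the $(\omega - \sigma_\e^2)$-type term that required $\sigma_\e^2 \le \omega$ in the proof of \cref{prop:error_order_sm} collapse to a sign-definite expression here.
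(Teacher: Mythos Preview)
Your proposal is correct and follows essentially the same approach as the paper. Both proofs invoke \cref{prop:error_order_sm} for the bias, and for the variance both reorganize \cref{eqn:main_var} so that the test-distribution dependence enters only through quantities ordered by the hypothesis $1\le\zeta_2/\zeta_1\le\eta_2/\eta_1$; the paper groups as $\xi_i^2\bigl(C_1(\omega_i+(\Ishift_{1,1})_i)+C_2(\Ishift_{2,2})_i+C_3(\omega_i+\phi(\Ishift_{1,2})_i)\bigr)$ and then factors out $s_i$ to reduce to the same two ordered quantities $s_i\xi_i^2=\zeta_i/\rho$ and $\omega_i/s_i=(\eta_i-\zeta_i)/\zeta_i$, whereas you collapse $(\Ishift_{a,b})_i=\kappa_i\I_{a,b}$ first and pass directly to $(\eta_i-\zeta_i,\zeta_i)$---a slightly more streamlined bookkeeping of the same computation. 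One small bonus of your route: summing the bias and variance orderings shows the error conclusion actually holds without the extra assumption $\sigma_\e^2\le\omega$, whereas the paper derives the error case by citing \cref{prop:error_order_sm}, which carries that hypothesis along.
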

The result for the bias and total error follows immediately from~\cref{prop:error_order}. For the variance, we can reorganize the terms specifying the limiting variance as,
\begin{proof}
\al{
\var{\mu} &= -\rho\xi^2 \frac{\psi}{\phi}\frac{\partial x}{\partial \gamma} \bigg((\sigma_\e^2+\I_{1,1})(\omega + \phi \I_{1,2})(\omegas+\Ishift_{1,1}) +\frac{\phi}{\psi}\gamma\taub(\sigma_{\e}^2+\phi \I_{1,2})\Ishift_{2,2}\nonumber\\
&\qquad\qquad\qquad\quad+\gamma\tauone\I_{2,2}(\omegas+\phi\Ishift_{1,2})\bigg)\\
& \equiv C_1 (\omegas + \Ishift_{1,1}) + C_2 \Ishift_{2,2} + C_3 (\omegas + \phi \Ishift_{1,2})\,,
}
where the $C_i \ge 0$ and depend on $\mu$ only through the marginal $\lambda$ (i.e. they only depend on the training distribution):
\al{
C_1 & = -\rho\xi^2 \frac{\psi}{\phi}\frac{\partial x}{\partial \gamma} (\sigma_\e^2+\I_{1,1})(\omega + \phi \I_{1,2}) \ge 0 \\
C_2 & = -\rho\xi^2 \frac{\partial x}{\partial \gamma} \gamma\taub(\sigma_{\e}^2+\phi \I_{1,2}) \ge 0 \\
C_3 & = -\rho\xi^2 \frac{\psi}{\phi}\frac{\partial x}{\partial \gamma} \gamma\tauone\I_{2,2}  \ge 0\,,
}
where the inequalities follow directly from \cref{lem:positivity}. From this expression, it is now straightforward to see that,
\begin{align}
    V_{\mu_2} - V_{\mu_1} &= \xi_2^2\left(C_1 (\omega_2 + (\Ishift_{1,1})_2) + C_2 (\Ishift_{2,2})_2 + C_2 (\omega_2 + \phi\Ishift_{1,2})_2\right)\nonumber\\
    &\qquad - \xi_1^2\left(C_1 (\omega_1 + (\Ishift_{1,1})_1) + C_2 (\Ishift_{2,2})_1 + C_3 (\omega_1 + \phi\Ishift_{1,2})_1\right)\\
    &= s_2\xi_2^2\left(C_1 (\frac{\omega_2}{s_2} + \frac{(\Ishift_{1,1})_2}{s_2}) + C_2 \frac{(\Ishift_{2,2})_2}{s_2} + C_3(\frac{\omega_2}{s_2}+ \phi\frac{(\Ishift_{1,2})_2}{s_2})\right)\nonumber\\
    &\qquad - s_1\xi_1^2\left(C_1 (\frac{\omega_1}{s_1} + \frac{(\Ishift_{1,1})_1}{s_1}) + C_2 \frac{(\Ishift_{2,2})_1}{s_1} + C_3(\frac{\omega_1}{s_1}+ \phi\frac{(\Ishift_{1,2})_1}{s_1})\right)\\
    &\ge s_2\xi_2^2\left(C_1 (\frac{\omega_2}{s_2} + \frac{(\Ishift_{1,1})_2}{s_2}) + C_2 \frac{(\Ishift_{2,2})_2}{s_2} + C_3(\frac{\omega_2}{s_2}+ \phi\frac{(\Ishift_{1,2})_2}{s_2})\right)\nonumber\\
    &\qquad - s_2\xi_2^2\left(C_1 (\frac{\omega_1}{s_1} + \frac{(\Ishift_{1,1})_1}{s_1}) + C_2 \frac{(\Ishift_{2,2})_1}{s_1} + C_3(\frac{\omega_1}{s_1}+ \phi\frac{(\Ishift_{1,2})_1}{s_1})\right)\\
    &\ge s_2\xi_2^2\left(C_1 (\frac{\omega_2}{s_2} + \frac{(\Ishift_{1,1})_2}{s_2}) + C_2 \frac{(\Ishift_{2,2})_2}{s_2} + C_3(\frac{\omega_2}{s_2}+ \phi\frac{(\Ishift_{1,2})_2}{s_2})\right)\nonumber\\
    &\qquad - s_2\xi_2^2\left(C_1 (\frac{\omega_2}{s_2} + \frac{(\Ishift_{1,1})_2}{s_2}) + C_2 \frac{(\Ishift_{2,2})_2}{s_2} + C_3(\frac{\omega_2}{s_2}+ \phi\frac{(\Ishift_{1,2})_2}{s_2})\right)\\
    &= 0\,,
\end{align}
where we have used the inequalities 
\begin{align}
\label{eqn:Iineq}
    \frac{(\Ishift_{1,1})_2}{s_2}-\frac{(\Ishift_{1,1})_1}{s_1} \geq 0\,, \qquad \frac{(\Ishift_{2,2})_2}{s_2}-\frac{(\Ishift_{2,2})_1}{s_1} \geq 0\,, \quad\text{and}\quad
    \frac{(\Ishift_{1,2})_2}{s_2}-\frac{(\Ishift_{1,2})_1}{s_1} \geq 0\,,
\end{align}
which are immediate consequences of \cref{cor:harris_Ifunc} and the assumption $\mathbb{E}_{\mu_1}[r|\lambda]/\mathbb{E}_{\mu_1}[r|\lambda] = \mathbb{E}_{\mu_1}[r]/\mathbb{E}_{\mu_2}[r] - s_1/s_2$.
\end{proof}

\subsection{Incomparable shifts of equal scale}
\label{sec_incomparable}

In this section, we state a bound on the bias and total error of shifts that are incomparable to the null shift $\munull$ in terms of the in-distribution bias and error.

\begin{proposition}\label{prop_incomparable_sm}
    Let $\mu$ be a LJSD such that $\E_\mu[\la]=s=s_*=\E_\mu[r]$, and define the functions
    \eq{
        L(x) \deq \inf_{0\leq\la\leq x} \frac{\E_\mu[r|\la]}{\la} \quad \text{and} \quad U(x) \deq \sup_{0\leq\la\leq x} \frac{\E_\mu[r|\la]}{\la}.
    }
    Denote the bias and total error of $\mu$ by $B_\mu$ and $E_\mu$ respectively, then
    \eq{\label{eq_bias_bound_sm}
        \frac{\E_{\mutr}\qa{ \la L(\la) }}{\E_{\mutr} \qa{\la} } B_{\munull} \leq B_\mu \leq \frac{\E_{\mutr} \qa{\la U(\la)} }{\E_{\mutr}\qa{ \la} } B_{\munull},
    }
    and further if $\sigma_{\e}^2 \leq \omega$, then
    \eq{\label{eq_total_error_bound_sm}
        \frac{\E_{\mutr}\qa{ \la L(\la) }}{\E_{\mutr} \qa{\la} } E_{\munull} \leq E_\mu \leq \frac{\E_{\mutr} \qa{\la U(\la)} }{\E_{\mutr}\qa{ \la} } E_{\munull}.
    }
\end{proposition}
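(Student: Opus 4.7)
The plan is to reduce both inequalities to a single application of \cref{lem_harris_incomparable} by writing the bias and the total error of $\mu$ as expectations of the form $\E_\mu[r\,f(\la)]$ with $f$ a nonincreasing function of $\la$ that does not depend on $\mu$ except through its $\la$-marginal $\mutr$. The equal-scale hypothesis $\E_\mu[\la]=s=s_*=\E_\mu[r]$ is what makes this reduction possible: it forces $\xi=\sqrt{\rhos/\rho}=1$ and $\omegas=\omega$, and moreover ensures that all model constants entering \cref{thm:main_b_v} (namely $\rho,\eta,\zeta,\tauone,\taub$, and hence the solution $x$ of the self-consistent equation, which only sees the $\I_{a,b}$ functionals of $\mu$) depend on $\mu$ only through $\mutr$. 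In particular, $x$ and all coefficient constants take the same value at $\mu$ and at $\munull$, which is precisely what is needed to use the lemma pointwise.

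With these simplifications, the bias in \cref{eqn:main_bias} collapses to $B_\mu=\phi\,\Ishift_{1,2}=\phi^2\,\E_\mu[r(\phi+x\la)^{-2}]$, which is already of the required form since $f(\la)\deq\phi^2(\phi+x\la)^{-2}$ is nonincreasing in $\la$ for $x\ge 0$. Applying \cref{lem_harris_incomparable} to this $f$, and noting that $\E_{\munull}[rf(\la)]=\E_{\mutr}[\la f(\la)]=B_{\munull}$ because $(r\mid\la)=\la$ almost surely under $\munull$, yields \cref{eq_bias_bound_sm} immediately.

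For the total error, I would import the raising/lowering identity $\Ishift_{1,1}=\phi\Ishift_{1,2}+x\Ishift_{2,2}$ used in the proof of \cref{prop:error_order} to eliminate $\Ishift_{2,2}$ from \cref{eqn:main_var}, obtaining
\begin{equation*}
E_\mu = (1-\xi)^2 s_{*} + 2(1-\xi)\xi\,\Ishift_{1,1} + \xi^2\big(C_1\omegas + C_2\,\Ishift_{1,1} + C_3\,\Ishift_{1,2}\big),
\end{equation*}
where the coefficients $C_1,C_2,C_3$ are the ones defined in \cref{eq_Cis1}--\cref{eq_Cis2}; these depend only on $\mutr$ and are nonnegative precisely under the hypothesis $\sigma_\e^2\le\omega$. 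Under equal scales this reduces to $E_\mu=C_1\omega+C_2\,\Ishift_{1,1}+C_3\,\Ishift_{1,2}$. Absorbing the constant via $C_1\omega=(C_1\omega/s)\E_\mu[r]$, we arrive at $E_\mu=\E_\mu[r\,\tilde h(\la)]$ with
\begin{equation*}
\tilde h(\la)\deq \tfrac{C_1\omega}{s}+\tfrac{\phi C_2}{\phi+x\la}+\tfrac{\phi C_3}{(\phi+x\la)^2},
\end{equation*}
a nonincreasing function on $\R_+$. Because the identical representation holds at $\munull$ (same $x$, same $C_i$, same $\omega$), we have $E_{\munull}=\E_{\munull}[r\,\tilde h(\la)]$, and a second invocation of \cref{lem_harris_incomparable} delivers \cref{eq_total_error_bound_sm}.

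The main obstacle is the organization of the variance: \cref{eqn:main_var} as written is not of the form $\E_\mu[rf(\la)]$ for a single monotone $f$, because of the $\Ishift_{2,2}$ term and the $\omegas$ factors multiplying the $\Ishift_{1,\cdot}$ terms. Resolving this requires exactly the manipulation used in the proof of \cref{prop:error_order}---eliminating $\Ishift_{2,2}$ by raising/lowering and verifying that the resulting coefficients $C_1,C_2,C_3$ remain nonnegative (the step where $\sigma_\e^2\le\omega$ is used). Once the variance is in this canonical form, converting the lone $\omega$-constant into $\E_\mu[r\cdot 1]$ via $\E_\mu[r]=s$ is routine bookkeeping that completes the argument.
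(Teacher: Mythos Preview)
Your proposal is correct and follows essentially the same approach as the paper's proof: under the equal-scale hypothesis $\xi=1$, the bias collapses to $\phi\Ishift_{1,2}$ and the total error to $C_1\omega+C_2\Ishift_{1,1}+C_3\Ishift_{1,2}$ with the $C_i$ from \cref{eq_Cis1}--\cref{eq_Cis2} nonnegative when $\sigma_\e^2\le\omega$; both bounds then follow by applying \cref{lem_harris_incomparable} to the nonincreasing integrands $(\phi+x\la)^{-1}$ and $(\phi+x\la)^{-2}$, noting that $x$ and the $C_i$ depend only on $\mutr$ and hence agree at $\mu$ and $\munull$. The one cosmetic difference is that you package the constant $C_1\omega$ into the integrand via $C_1\omega=(C_1\omega/s)\E_\mu[r]$ and invoke the lemma once on the combined $\tilde h$, whereas the paper applies it term-by-term; your version is slightly tidier in that it makes explicit how the constant term is controlled, but the underlying argument is the same.
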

\begin{proof}
    Since $s=s_*$, $\xi=1$. Thus, recall $B_\mu = \phi \Ishift_{1,2}$. Note that the function $\la\mapsto (\phi+x\la)^{-2}$ is nonincreasing, thus \cref{lem_harris_incomparable} implies \eqref{eq_bias_bound_sm}, since $B_{\munull} = \phi \I_{1,2}$. 
    
    For the total error, recall from Eqs.~\eqref{eq_Cis1}-\eqref{eq_Cis2} that $E_\mu = C_1 \omega + C_2 \Ishift_{1,1} + C_3 \Ishift_{1,2}$, where the coefficients $C_1$, $C_2$, and $C_3$ are nonnegative. Moreover, these coefficients are the same for $\munull$ as $s=s_*$. Finally, we can again apply \cref{lem_harris_incomparable} as the functions $\la\mapsto (\phi+x\la)^{-1}$ and $\la\mapsto (\phi+x\la)^{-2}$ are nonincreasing, to find \cref{eq_total_error_bound_sm}.
\end{proof}
\section{The benefit of overparameterization}
\label{app:b_v_covshift}
\subsection{The bias is nonincreasing}
We begin by examining the behavior of the bias as a function of the overparameterization ratio $\phi/\psi$ in a more general setting than discussed in the main text. In particular, we relax \cref{assump:scale} and allow for unequal scales, $s\neq s_*$. As discussed in \cref{sec:counterexamples}, monotonicity of the bias is not guaranteed if $\xi > 1$. As such, we adopt the additional condition $\xi \le 1$. 

\begin{proposition}[Restatement of \cref{prop:b}]
  In the setting of \cref{thm:main_b_v}, if $\xi \leq 1$, the bias $\bias{\mu}$ is a nonincreasing function of the overparameterization ratio $\phi/\psi$. \label{prop:b_modified}
\end{proposition}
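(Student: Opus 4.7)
The plan is to factor the dependence of $B_\mu$ on $\phi/\psi$ through the scalar $x$ defined in \cref{thm:main_b_v}, and then establish two monotonicity statements: $B_\mu$ is nonincreasing in $x$ whenever $0\le\xi\le 1$, and $x$ is nondecreasing in the overparameterization ratio $\phi/\psi$ (equivalently, nonincreasing in $\psi$ for $\phi$ fixed). Note that $\xi\ge 0$ holds automatically since $\xi = \sqrt{\rhos/\rho}$. Composing the two monotonicities gives the claim.

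The first monotonicity follows immediately from the integral representation established in \cref{eq_eigendecomp_bias},
\begin{equation*}
B_\mu \,=\, \int \mE_\mu[r|\lambda]\,\pa{1-\xi + \xi\tfrac{\phi}{\phi+x\lambda}}^2\, d\mutr(\lambda).
\end{equation*}
When $\xi\le 1$, the bracketed factor is nonnegative and nonincreasing in $x\ge 0$, so its square is nonincreasing, and the integral inherits the property. An equivalent direct calculation differentiates \cref{eqn:main_bias} and applies the lowering identity \cref{eq:raise_lower} to obtain $\partial B_\mu/\partial x = -2\xi\qa{(1-\xi)\Ishift_{2,2} + \xi\phi\Ishift_{2,3}}$, which is nonpositive by \cref{lem:positivity}.

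For the second monotonicity, I would apply the implicit function theorem to $F(x,\psi) \deq x(\omega + \I_{1,1}(x)) + \gamma\tauone(x,\psi) - 1 = 0$, which by \cref{thm:main_b_v} and \cref{lem:unique_x} has $x$ as its unique nonnegative root. The derivative with respect to $x$ simplifies via \cref{eq:raise_lower} to $\partial F/\partial x = \omega + \phi\I_{1,2} + \gamma\phi/(\rho D)$, with $D \deq \sqrt{(\psi-\phi)^2 + 4x\psi\phi\gamma/\rho}$, which is manifestly positive. The real content is bounding $\partial F/\partial\psi = \gamma\,\partial\tauone/\partial\psi$: direct differentiation of the closed form for $\tauone$ in \cref{thm:main_b_v}, combined with the identities $D+\psi-\phi = 2\psi\gamma\tauone$ and $\taub = (\phi-\psi)/(\phi\gamma) + (\psi/\phi)\tauone$, should collapse the derivative to the compact expression
\begin{equation*}
\frac{\partial\tauone}{\partial\psi} \,=\, \frac{\tauone(1-\gamma\tauone)}{D}.
\end{equation*}
Since the self-consistent equation gives $1-\gamma\tauone = x(\omega + \I_{1,1}) \ge 0$ by \cref{lem:positivity}, this derivative is nonnegative. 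Hence $\partial x/\partial\psi = -(\partial F/\partial\psi)/(\partial F/\partial x) \le 0$, establishing the second monotonicity.

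The main obstacle I anticipate is the algebraic simplification that reduces $\partial\tauone/\partial\psi$ to $\tauone(1-\gamma\tauone)/D$. This collapse requires careful manipulation of the surd $D$, repeated substitution of $D^2 = (\psi-\phi)^2 + 4x\psi\phi\gamma/\rho$, and use of the identity linking $\taub$ to $\tauone$, but it is otherwise routine. Once the identity is in hand, the proof assembles as the composition of a nonincreasing function of $x$ with a nondecreasing function of $\phi/\psi$, completing the argument.
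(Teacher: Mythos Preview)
Your proposal is correct and follows essentially the same route as the paper: factor $B_\mu$'s dependence on $\phi/\psi$ through $x$, differentiate the bias in $x$ to get $-2\xi[(1-\xi)\Ishift_{2,2}+\xi\phi\Ishift_{2,3}]\le 0$, and then show $\partial x/\partial\psi\le 0$ by implicit differentiation of the self-consistent equation. The only cosmetic difference is that you isolate the intermediate identity $\partial\tauone/\partial\psi = \tauone(1-\gamma\tauone)/D$ before assembling $\partial x/\partial\psi$, whereas the paper records the final expression $\partial x/\partial\psi = -\rho x\tauone(\omega+\I_{1,1})\big/\big[\phi(1+\rho(\taub+\tfrac{\psi}{\phi}\tauone)(\omega+\phi\I_{1,2}))\big]$ directly; the two are equivalent once one uses $D=\gamma(\psi\tauone+\phi\taub)$.
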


\begin{proof}[Proof of \cref{prop:b_modified}]
Recall from \cref{thm:main_b_v} that the bias is given by
\begin{equation}
\label{eqn:biassm}
\bias{\mu} = (1-\xi)^2 s_{*} + 2(1-\xi)\xi \Ishift_{1,1} + \xi^2 \phi \Ishift_{1,2}\,,
\end{equation}
where $x$ is the unique positive real root of the self-consistent equation,
\begin{equation}
\label{eqn:xsm}
    x = \frac{1-\gamma\tauone}{\omega + \I_{1,1}}\,.
\end{equation}
Differentiating~\cref{eqn:biassm} with respect to $\phi/\psi$ gives,
\begin{align}
    \frac{\partial B_{\mu}}{\partial{(\phi/\psi)}} = -\frac{\psi^2}{\phi} \frac{\partial B_{\mu}}{\partial{\psi}} = 2\frac{\psi^2}{\phi} \frac{\partial x}{\partial \psi} \cdot \left( (1-\xi)\xi \Ishift_{2,2} + \xi^2 \phi \Ishift_{2,3} \right)\,.
\end{align}
Since by assumption $\xi \leq 1$, and \cref{lem:positivity} gives $\Ishift_{a,b}\ge 0$, it is sufficient to show $\frac{\partial x}{\partial{\psi}} \le 0$, which immediately follows by implicitly differentiating \cref{eqn:xsm} and simplifying the expression:
\eq{
\frac{\partial x}{\partial \psi} = - \frac{\rho x \tauone (\omega + \I_{1,1})}{\phi\big(1 + \rho(\taub + \frac{\psi}{\phi}\tauone)(\omega + \phi \I_{1,2})\big)} \le 0 \label{eq:dxdpsi}\,.
}
Therefore we conclude that $\frac{\partial B_{\mu}}{\partial{(\phi/\psi)}} \le 0$.
\end{proof}

\subsection{The variance is nonincreasing}
Next, we turn our attention to the variance. We note that \cref{prop:v} does not rely on \cref{assump:scale}: the variance is a nonincreasing function of the overparameterization ratio $\phi/\psi$ whenever $\psi < \phi$, regardless of the magnitude of the overall covariance scales $s$ and $s_*$. However, the proposition only focuses on the ridgeless limit, whereas
\cref{fig:predictions}(d) and \cref{fig:scale_violation}(a) show that result may in fact hold for nonzero ridge constant. As the proof is considerably simpler in the ridgeless limit, we defer the analysis of the nonzero ridge setting to future work.

\begin{proposition}[Restatement of \cref{prop:v}]
    In the setting of \cref{cor:ridgeless_lim} and in the overparameterized regime (i.e. $\psi < \phi$), the variance $\var{\mu}$ is a nonincreasing function of the overparameterization ratio $\phi/\psi$.
    \label{prop:v_modified}
\end{proposition}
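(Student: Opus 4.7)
The plan is to exploit a striking simplification that occurs in the overparameterized ridgeless limit: by \cref{cor:ridgeless_lim}, when $\phi \ge \psi$ the self-consistent equation reduces to $x = 1/(\omega + \I_{1,1})$, which involves $\phi$ but not $\psi$. Since $\I_{a,b}$ and $\Ishift_{a,b}$ depend on $\psi$ only through $x$, they too become independent of $\psi$ in this regime. Thus the entire $\psi$-dependence of $V_\mu$ is carried by an explicit prefactor, and the differentiation with respect to $\phi/\psi$ becomes elementary.

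Concretely, I would first verify the claim that $x$ (and hence $\I_{a,b}$, $\Ishift_{a,b}$) is independent of $\psi$ by inspecting the self-consistent equation from \cref{cor:ridgeless_lim} in the case $\phi \ge \psi$; this step is immediate. Next, substituting $u = \phi/\psi$ into the overparameterized branch of the variance formula in \cref{cor:ridgeless_lim}, I would write
\begin{equation*}
    V_\mu(u) \;=\; \frac{A}{u - 1} \;+\; B,
\end{equation*}
where
\begin{equation*}
A \deq \xi^2 \, x \, (\sigma_\e^2 + \I_{1,1})(\omegas + \Ishift_{1,1})
\quad\text{and}\quad
B \deq \xi^2 x\pa{1 - \tfrac{x(\omega - \sigma_\e^2)}{1 - x^2 \I_{2,2}}}\Ishift_{2,2}
\end{equation*}
are both independent of $u$. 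The reduction uses $\psi/(\phi - \psi) = 1/(u - 1)$.

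It remains to check that $A \ge 0$. By \cref{lem:positivity}, we have $x \ge 0$, $\I_{1,1} \ge 0$, and $\Ishift_{1,1} \ge 0$. The nonnegativity of $\omegas$ follows from the Hermite-expansion argument cited in the proof of \cref{lem:positivity} applied with $s_*$ in place of $s$ (which yields $\etas \ge \zetas$ and hence $\omegas = s_*(\etas/\zetas - 1) \ge 0$). Together with $\xi^2 \ge 0$ and $\sigma_\e^2 \ge 0$, this gives $A \ge 0$.

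Finally, differentiating,
\begin{equation*}
    \frac{\partial V_\mu}{\partial (\phi/\psi)} \;=\; -\frac{A}{(u-1)^2} \;\le\; 0\,,
\end{equation*}
which yields the conclusion. The only potentially subtle point is the decoupling of $x$ from $\psi$ in the overparameterized ridgeless limit; once that is recognized, the proof is a one-line monotonicity argument on the function $u \mapsto 1/(u-1)$. I expect no serious obstacle: the hardest step is conceptual (noticing the $\psi$-independence of $x$), not computational, and this is precisely the feature that breaks in the underparameterized branch where $x = (\phi/\psi)/(\omega + \I_{1,1})$ does depend on $\psi$ and the simple form of $V_\mu$ no longer holds.
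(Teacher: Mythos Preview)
Your proposal is correct and follows essentially the same approach as the paper: both arguments hinge on the observation that in the overparameterized ridgeless regime the self-consistent equation $x = 1/(\omega + \I_{1,1})$ carries no $\psi$-dependence, so the only $\psi$-dependence of $V_\mu$ sits in the prefactor $\psi/(\phi-\psi)$. The paper differentiates $V_\mu$ with respect to $\psi$ and converts via the chain rule, whereas you substitute $u=\phi/\psi$ and differentiate directly; these are trivially equivalent, and your nonnegativity check for $A$ (including the remark on $\omegas$) matches the paper's appeal to \cref{lem:positivity}.
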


\begin{proof}[Proof of \cref{prop:v}]
Using the chain rule we have that, $\frac{\partial \var{\mu}}{\partial{(\phi/\psi)}} = \frac{\partial \var{\mu}}{\partial{\psi}} \left[\frac{\partial (\phi/\psi)}{\partial \psi} \right]^{-1} = -\frac{\partial \var{\mu}}{\partial{\psi}} \frac{\psi^2}{\phi}$, so it is sufficient to show that $\frac{\partial \var{\mu}}{\partial{\psi}} \ge 0$.

From \cref{cor:ridgeless_lim} in the overparameterized regime, the self-consistent equation for $x$ reads $x = \tfrac{1}{\omega+\I_{1,1}}$ and is independent of $\psi$. Therefore, $\partial x/\partial \psi = 0$ and the expression for $\partial V_\mu/\partial \psi$ follows directly from \cref{eqn:v_ridgeless},
\begin{align}
    \frac{\partial \var{\mu}}{\partial \psi} = \xi ^2 \frac{\phi}{(\phi-\psi)^2} x(\sigma_{\e}^2+\I_{1,1})(\omegas + \Ishift_{1,1}) \ge 0\,,
\end{align}
where the inequality follows from \cref{lem:positivity}.
\end{proof}

\subsection{The generalization gap is nonincreasing}
Finally, we prove that overparameterization also confers enhanced robustness, specifically that the generalization gap between shifted and unshifted test error is a nonincreasing function of the overparameterization ratio in the overparameterized regime.

\begin{proposition}[Restatement of \cref{prop:gen_gap}]
    \label{prop:gen_gap_modified}
    Consider two LJSDs $\mu_1$ and $\mu_2$ such that $\mu_1 \leq \mu_2$ and $1 \le \zeta_2/\zeta_1 \le \eta_2/\eta_1$. Then, in the setting of \cref{cor:ridgeless_lim} and in the overparameterized regime (i.e. $\psi < \phi$), the generalization gap $E_{\mu_2}-E_{\mu_1}$ is a nonincreasing function of the overparameterization ratio $\phi/\psi$.    
\end{proposition}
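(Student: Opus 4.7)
My plan is to exploit the very explicit ridgeless, overparameterized form of the error from \cref{cor:ridgeless_lim} and then reduce the monotonicity claim to the slope bound already established in \cref{prop:slope}. The first observation is that when $\psi < \phi$ the self-consistent equation in \cref{cor:ridgeless_lim} collapses to $x = 1/(\omega + \I_{1,1})$, which depends only on the training marginal and is therefore independent of $\psi$. Consequently every quantity appearing in the bias formula \cref{eqn:main_bias}---namely $\xi$, $s_*$, $\Ishift_{1,1}$, and $\Ishift_{1,2}$---is also independent of $\psi$, and so $\bias{\mu_2}-\bias{\mu_1}$ contributes only a constant (in $\psi$) to the generalization gap.

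The second step is to isolate the $\psi$-dependence of the variance expression \cref{eqn:v_ridgeless}. All ingredients other than the prefactor $\psi/(\phi-\psi)$ in the first summand are $\psi$-independent (since $x$ is), so I may write
\al{
\var{\mu} &= A_\mu\cdot\frac{\psi}{\phi-\psi} + C_\mu, \\
A_\mu &\deq \xi^2\, x(\sigma_\e^2+\I_{1,1})(\omegas+\Ishift_{1,1}),
}
with $C_\mu$ the remaining $\psi$-independent summand of \cref{eqn:v_ridgeless}. Setting $t\deq\phi/\psi$, the factor $\psi/(\phi-\psi)=1/(t-1)$ is strictly decreasing on $t\in(1,\infty)$. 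Combining with the observation on the bias, the generalization gap decomposes as
\eq{
\err{\mu_2}-\err{\mu_1} \;=\; \bigl(A_{\mu_2}-A_{\mu_1}\bigr)\cdot\frac{1}{t-1} \;+\; \text{(terms independent of $t$)},
}
so the claim reduces to showing $A_{\mu_2}\ge A_{\mu_1}$.

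The third and final step is to recognize that after cancelling the common positive factor $x(\sigma_\e^2+\I_{1,1})$, the inequality $A_{\mu_2}\ge A_{\mu_1}$ is exactly the statement that the slope in \cref{prop:slope} is at least one:
\eq{
\xi_2^2\bigl(\omega_2+(\Ishift_{1,1})_2\bigr) \;\ge\; \xi_1^2\bigl(\omega_1+(\Ishift_{1,1})_1\bigr).
}
Under the hypotheses $\mu_1\le\mu_2$ and $1\le\zeta_2/\zeta_1\le\eta_2/\eta_1$, \cref{prop:slope} delivers precisely this bound, and the proof concludes. The main obstacle is essentially careful bookkeeping: correctly tracking which quantities depend on $\psi$ in the ridgeless overparameterized formula, so that the gap collapses to a single monotone term in $t$ whose coefficient is controlled by an already-established result.
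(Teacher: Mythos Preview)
Your proof is correct and follows essentially the same approach as the paper: both use the $\psi$-independence of $x$ (and hence the bias) in the ridgeless overparameterized regime to reduce the question to showing $\xi_2^2(\omega_2+(\Ishift_{1,1})_2)\ge \xi_1^2(\omega_1+(\Ishift_{1,1})_1)$. The only cosmetic difference is that you invoke \cref{prop:slope} for this inequality, whereas the paper reproves it in place via the same chain of inequalities ($s_2\xi_2^2\ge s_1\xi_1^2$, $\omega_2/s_2\ge\omega_1/s_1$, and \cref{cor:harris_Ifunc}).
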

\begin{proof}[Proof of \cref{prop:gen_gap}]
The chain rule gives $\frac{\partial(E_{\mu_2}-E_{\mu_1})}{\partial (\phi/\psi)} =  -\frac{\psi^2}{\phi}\frac{\partial(E_{\mu_2}-E_{\mu_1})}{\partial \psi}$ so it is sufficient to show ${\frac{\partial(E_{\mu_2}-E_{\mu_1})}{\partial \psi} \ge 0}$. To that end, recall from \cref{cor:ridgeless_lim} in the overparameterized regime that the self-consistent equation for $x$ reads $x = \tfrac{1}{\omega+\I_{1,1}}$ and is independent of $\psi$. Therefore, $\partial x/\partial \psi = 0$, which implies $\partial \bias{\mu_1}/\partial \psi = 0$ and $\partial \bias{\mu_2}/\partial \psi = 0$, so that $\frac{\partial(E_{\mu_2}-E_{\mu_1})}{\partial \psi} = \frac{\partial(V_{\mu_2}-V_{\mu_1})}{\partial \psi}$, the expression for which follows directly from \cref{eqn:v_ridgeless}. Finally, note that, as in \cref{sec:ordering}, the conditions $\mu_1\le\mu_2$ and $1 \le \zeta_2/\zeta_1 \le \eta_2/\eta_1$ imply $s_2 \xi_2^2 \ge s_1 \xi_1^2$ and ${\omega}_2/s_2 \ge {\omega}_1/s_1$, so we have
\begin{align}
    \frac{\partial(E_{\mu_2}-E_{\mu_1})}{\partial \psi} &= \xi_2^2\left(C_E(\omega_2 + (\Ishift_{1,1})_2) \right)- \xi_1^2\left(C_E(\omega_1 + (\Ishift_{1,1})_1 \right) \\
    &= s_2 \xi_2^2\left(C_E\pa{\frac{\omega_2}{s_2} + \frac{(\Ishift_{1,1})_2}{s_2}} \right) - s_1 \xi_1^2\left(C_E\pa{\frac{\omega_1}{s_1} + \frac{(\Ishift_{1,1})_1}{s_1}} \right) \\
    &\ge s_2 \xi_2^2\left(C_E\pa{\frac{\omega_2}{s_2} + \frac{(\Ishift_{1,1})_2}{s_2}} \right)- s_2 \xi_2^2\left(C_E\pa{\frac{\omega_1}{s_1} + \frac{(\Ishift_{1,1})_1}{s_1}} \right) \\
    &\ge s_2 \xi_2^2\left(C_E\pa{\frac{\omega_2}{s_2} + \frac{(\Ishift_{1,1})_2}{s_2}} \right) - s_2 \xi_2^2\left(C_E\pa{\frac{\omega_2}{s_2} + \frac{(\Ishift_{1,1})_2}{s_2}} \right)\\
    &= 0\,,
\end{align}
where we have introduced the shorthand $C_E = \frac{\phi}{(\phi - \psi)^2}x(\sigma_{\e}^2+\I_{1,1})\ge0$ and in the last inequality we have used \cref{cor:harris_Ifunc} to argue $\frac{(\Ishift_{1,1})_2}{s_2} \geq \frac{(\Ishift_{1,1})_1}{s_1}$.
\end{proof}
\section{Linear trends between in-distribution and out-of-distribution generalization}

In \cref{sec:linear_trends}, we investigated the linear relationship between the shifted and unshifted test error in the ridgeless, overparameterized regime, arriving at \cref{prop:slope}. Here we provide a short proof of that proposition, which we restate for clarity. 

\begin{proposition}[Restatement of \cref{prop:slope}]
Consider two LJSDs $\mu_1$ and $\mu_2$. In the setting of \cref{cor:ridgeless_lim} and in the overparameterized regime (i.e. $\psi < \phi$), $E_{\mu_2}$ is linear in $E_{\mu_1}$, parametrically in $\phi/\psi$:
 \begin{equation}
    \err{\mu_2} = E_0 + \underbrace{\frac{\xi_2^2}{\xi_1^2} \pa{\frac{\omega_2 + (\Ishift_{1,1})_2}{\omega_1 + (\Ishift_{1,1})_1}}}_{\slope} \err{\mu_1}\,,
\end{equation}
where $E_0$ and $\slope \ge 0$ are constants independent of the overparameterization ratio $\phi/\psi$. Moreover, if $\mu_1 \leq \mu_2$ and $1 \le \zeta_2/\zeta_1 \le \eta_2/\eta_1$, then $\slope \ge 1$.
\end{proposition}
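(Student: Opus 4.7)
The plan is to exploit the fact that, in the overparameterized ridgeless regime, the self-consistent equation for $x$ degenerates in a way that decouples $x$ from $\phi/\psi$. Specifically, \cref{cor:ridgeless_lim} gives $x = \min(1,\phi/\psi)/(\omega+\I_{1,1})$, so for $\psi<\phi$ we have $x = 1/(\omega+\I_{1,1})$, a fixed point equation involving only $\phi$, not $\psi$. Consequently all of the functionals $\I_{a,b}$ and $\Ishift_{a,b}$ (which depend on $\phi/\psi$ only through $x$) are independent of $\phi/\psi$, as are the model-dependent constants $\xi_i,\omega_i,\omegas,\zeta_i,\eta_i$. The $\psi$-dependence of $\err{\mu}$ is therefore completely concentrated in the single explicit factor $\psi/(\phi-\psi)$ multiplying the first term of $\var{\mu}$ in \cref{eqn:v_ridgeless}.

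First, I would rewrite $\psi/(\phi-\psi) = 1/(\phi/\psi-1)$ and collect terms to obtain, for each $i\in\{1,2\}$,
\al{
\err{\mu_i} \;=\; A_i \;+\; \frac{1}{\phi/\psi-1}\,M_i, \qquad \text{where} \quad M_i = \xi_i^2\, x\,(\sigma_\e^2+\I_{1,1})(\omegas_i+(\Ishift_{1,1})_i)\,,
}
and $A_i$ gathers every remaining contribution to $\bias{\mu_i}+\var{\mu_i}$; crucially both $A_i$ and $M_i$ depend only on $\mu_i$ (through $\xi_i,\omegas_i$, and the shifted functionals) and on the fixed scalars $\phi,\gamma=0,x,\I_{a,b}$, but not on $\phi/\psi$. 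Solving the first identity for $1/(\phi/\psi-1)$ in terms of $\err{\mu_1}$ and substituting into the second then yields
\eq{
\err{\mu_2} \;=\; \Big(A_2 - \tfrac{M_2}{M_1}A_1\Big) \;+\; \tfrac{M_2}{M_1}\,\err{\mu_1}\,,
}
which is precisely a linear parametric relationship in $\phi/\psi$. A brief algebraic check confirms $M_2/M_1$ reduces exactly to the stated slope $\xi_2^2(\omega_2+(\Ishift_{1,1})_2)/(\xi_1^2(\omega_1+(\Ishift_{1,1})_1))$. Nonnegativity $\slope\ge 0$ is immediate from \cref{lem:positivity}, since $\xi_i^2\ge 0$ and $\omega_i,(\Ishift_{1,1})_i\ge 0$.

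For the second assertion, I would mimic the bookkeeping already used in the proof of \cref{prop:error_order_sm}: the hypothesis $1\le \zeta_2/\zeta_1 \le \eta_2/\eta_1$ implies both $s_2\xi_2^2\ge s_1\xi_1^2$ and $\omega_2/s_2\ge \omega_1/s_1$, while $\mu_1\le\mu_2$ together with \cref{cor:harris_Ifunc} (applied with $a=1$) gives $(\Ishift_{1,1})_2/s_2\ge (\Ishift_{1,1})_1/s_1$. Factoring scales in and out, we rewrite
\eq{
\slope \;=\; \frac{s_2\xi_2^2}{s_1\xi_1^2}\cdot\frac{\big(\omega_2+(\Ishift_{1,1})_2\big)/s_2}{\big(\omega_1+(\Ishift_{1,1})_1\big)/s_1}\,,
}
and both factors are bounded below by $1$ by the inequalities just recalled, yielding $\slope\ge 1$.

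The only genuine subtlety, and thus the main obstacle, is verifying that $x$ truly has no residual $\psi$-dependence in the overparameterized ridgeless limit --- a fact which is visible in \cref{cor:ridgeless_lim} but whose derivation required the careful $\gamma\to 0$ expansion of the self-consistent equation in \cref{thm:main_b_v}. Once that observation is in hand, the rest of the argument is essentially an elimination of the parameter $\phi/\psi$ between two affine functions, followed by the same Harris-type monotonicity of $\Ishift_{1,1}/s_*$ already established in \cref{sec:ordering}.
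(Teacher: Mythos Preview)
Your proposal is correct and follows essentially the same approach as the paper: isolate the sole $\phi/\psi$-dependence in the factor $1/(\phi/\psi-1)$ using the fact that $x=1/(\omega+\I_{1,1})$ is $\psi$-free in the overparameterized ridgeless regime, eliminate that factor between the two affine expressions for $\err{\mu_1}$ and $\err{\mu_2}$, and then establish $\slope\ge 1$ by the scale-factoring trick combined with \cref{cor:harris_Ifunc}. Your treatment of the slope inequality, bounding each of the two factors by $1$ separately, is slightly more direct than the paper's two-step chain, but the ingredients are identical.
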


\begin{proof}[Proof of \cref{prop:slope}]
In the overparameterized regime (i.e. $\phi > \psi$) and in the ridgeless limit, the limiting test error for LJSD $\mu_1$ is given by \cref{cor:ridgeless_lim} as
\begin{equation}
\label{eqn:prop4_1}
\err{\mu_1} = \bias{\mu_1} + \xi_1^2 \frac{1}{\phi/\psi - 1}x(\sigma_{\e}^2+\I_{1,1})(\omega_1 + (\Ishift_{1,1})_1) + \xi_1^2 x\Big(1-\frac{x (\omega-\sigma_{\e}^2)}{1-x^2 \I_{2,2}}\Big)(\Ishift_{2,2})_1\,.
\end{equation}
Similarly, the limiting test error for LJSD $\mu_2$ is given by \cref{cor:ridgeless_lim} as
\begin{equation}
\label{eqn:prop4_2}
\err{\mu_2} = \bias{\mu_2} + \xi_2^2 \frac{1}{\phi/\psi - 1}x(\sigma_{\e}^2+\I_{1,1})(\omega_2 + (\Ishift_{1,1})_2) + \xi_2^2 x\Big(1-\frac{x (\omega-\sigma_{\e}^2)}{1-x^2 \I_{2,2}}\Big)(\Ishift_{2,2})_2\,.
\end{equation}
Note that in both expressions, the self-consistent equation for $x$ reads $x = \frac{1}{\omega+\I_{1,1}}$, which has no dependence on $\psi$, and, as such, $x$, $\I_{a,b}$, $(\Ishift_{a,b})_1$, $(\Ishift_{a,b})_2$, $\bias{\mu_1}$, and $\bias{\mu_2}$ do not depend on the overparameterization ratio $\phi/\psi$. Hence we can simply eliminate the quantity $\tfrac{1}{\phi/\psi-1}$ from \cref{eqn:prop4_1,,eqn:prop4_2} to obtain
\begin{equation}
    \err{\mu_2} = E_0 +\frac{\xi_2^2}{\xi_1^2} \Big(\frac{\omega_2 + (\Ishift_{1,1})_2}{\omega_1 + (\Ishift_{1,1})_1}\Big)\err{\mu_1}\,,
\end{equation}
where $E_0$ does not depend on the overparameterization ratio $\phi/\psi$ and is given by
\begin{equation}
\begin{split}
    E_0 &\deq \bias{\mu_2} - \frac{\xi_2^2}{\xi_1^2}\frac{\omega_2 + (\Ishift_{1,1})_2}{\omega_1 + (\Ishift_{1,1})_1} \bias{\mu_1} +  \xi_2^2 x\Big(1-\frac{x (\omega-\sigma_{\e}^2)}{1-x^2 \I_{2,2}}\Big)\Big((\Ishift_{2,2})_2-\frac{\omega_2 + (\Ishift_{1,1})_2}{\omega_1 + (\Ishift_{1,1})_1} (\Ishift_{2,2})_1\Big)\,.
\end{split}
\end{equation}
The fact that $\slope\ge0$ follows immediately from the expression and the inequalities in \cref{lem:positivity}. To establish the final conclusion, recall from \cref{sec:ordering} that the conditions $\mu_1\le\mu_2$ and ${1 \le \zeta_2/\zeta_1 \le \eta_2/\eta_1}$ imply $s_2 \xi_2^2 \ge s_1 \xi_1^2$ and ${\omega}_2/s_2 \ge {\omega}_1/s_1$, so we have
\begin{align}
    \slope &=  \frac{\xi_2^2}{\xi_1^2}\frac{\omega_2 + (\Ishift_{1,1})_2}{\omega_1 + (\Ishift_{1,1})_1}\\
    &=  \frac{s_2 \xi_2^2}{s_1 \xi_1^2}\frac{\omega_2/s_2 + (\Ishift_{1,1})_2/s_2}{\omega_1/s_1 + (\Ishift_{1,1})_1/s_1}\\
    &\ge  \frac{s_2 \xi_2^2}{s_2 \xi_2^2}\frac{\omega_2/s_2 + (\Ishift_{1,1})_2/s_2}{\omega_2/s_2 + (\Ishift_{1,1})_1/s_1}\\
    &\ge  \frac{s_2 \xi_2^2}{s_2 \xi_2^2}\frac{\omega_2/s_2 + (\Ishift_{1,1})_2/s_2}{\omega_2/s_2 + (\Ishift_{1,1})_2/s_2}\\
    &= 1\,,
\end{align}
where in the last inequality we have used \cref{eqn:Iineq}.
\end{proof}

\section{Optimal regularization}
As the overparameterization ratio $\phi/\psi$ goes to infinity, the expression for the test error simplifies and is given by the expressions in~\cref{cor:infinite_overparameterization}. In this case, it is straightforward to derive the value of the regularization constant that minimizes the test error.
\begin{proposition}[Restatement of~\cref{prop:opt_gamma}]
\label{prop:opt_gamma_sm}
In the setting of~\cref{cor:infinite_overparameterization} and under~\cref{assump:scale}, the optimal regularization $\gammaopt_\mu$ is independent of $\mu$ and satisfies,
\al{
\gammaopt_\mu \deq \argmin_{\gamma\ge-\rho\omega} \err{\mu}(\gamma) = \rho(\sigma_{\e}^2 - \omega)\qquad\text{and}\qquad \err{\mu}^{\text{opt}} \deq \err{\mu}(\gammaopt_\mu) = \Ishift_{1,1}\label{eq:gamma_opt_sm}\,.
}
Moreover, the optimal error is ordered with respect to shift strength, i.e. for two LJSDs $\mu_1 \le \mu_2$, $\err{\mu_1}^{\text{opt}}\le \err{\mu_2}^{\text{opt}}$.
\end{proposition}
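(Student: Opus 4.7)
The plan is to first recast $\err{\mu}(\gammaeff)$ in a form where the minimizer is manifest, then verify first-order optimality and uniqueness via a clean structural factorization of $\partial_\gammaeff \err{\mu}$, and finally deduce the ordering from \cref{cor:harris_Ifunc}.

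Under \cref{assump:scale}, \cref{rem:equal_constants} gives $\xi = 1$ and $\omegas = \omega$, collapsing the bias in \cref{cor:infinite_overparameterization} to $\phi\Ishift_{1,2}$. Using the raising/lowering identity $\Ishift_{1,1} = \phi\Ishift_{1,2} + x\Ishift_{2,2}$, its $\I$-analogue $\I_{1,1} = \phi\I_{1,2} + x\I_{2,2}$, and the self-consistent relation $\gammaeff + \I_{1,1} = 1/x$, a short algebraic manipulation (which in particular establishes $\gammaeff + \phi\I_{1,2} = (1 - x^2\I_{2,2})/x$) rewrites the total error as
\begin{equation*}
    \err{\mu}(\gammaeff) \;=\; \Ishift_{1,1}(x) \;-\; (\gammaeff - \sigma_\e^2) \cdot \frac{x^2\,\Ishift_{2,2}(x)}{1 - x^2\,\I_{2,2}(x)}.
\end{equation*}
Setting $\gammaeff = \sigma_\e^2$ (equivalently $\gamma = \rho(\sigma_\e^2 - \omega)$) kills the second term and immediately yields the candidate optimum $\err{\mu}^{\text{opt}} = \Ishift_{1,1}$.

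To confirm $\gammaeff = \sigma_\e^2$ is the unique minimizer on $\gammaeff \ge 0$, I differentiate the displayed expression. Implicit differentiation of the self-consistent equation (using $\I_{1,1}'(x) = -\I_{2,2}$) gives $\partial x/\partial\gammaeff = -x^2/(1 - x^2\I_{2,2})$, and $\Ishift_{1,1}'(x) = -\Ishift_{2,2}$. Combining these, the $(\gammaeff - \sigma_\e^2)$-independent pieces of $\partial_\gammaeff\err{\mu}$ cancel exactly, leaving
\begin{equation*}
    \frac{\partial \err{\mu}}{\partial \gammaeff} \;=\; -(\gammaeff - \sigma_\e^2)\cdot \frac{d}{d\gammaeff}\!\left[\frac{x^2\,\Ishift_{2,2}(x)}{1 - x^2\,\I_{2,2}(x)}\right].
\end{equation*}
The main technical obstacle is showing the bracketed quantity is strictly decreasing in $\gammaeff$, equivalently strictly increasing in $x$ (since $\partial x/\partial\gammaeff < 0$). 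I would handle this by applying the raising/lowering identities once more to get $\tfrac{d}{dx}(x^2\Ishift_{2,2}) = 2x\phi\Ishift_{2,3}$ and $\tfrac{d}{dx}(1 - x^2\I_{2,2}) = -2x\phi\I_{2,3}$; the quotient rule then produces a positive multiple of $\Ishift_{2,3}(1 - x^2\I_{2,2}) + x^2\Ishift_{2,2}\I_{2,3}$, whose positivity follows from \cref{lem:positivity} together with the bound $x^2\I_{2,2} \le x\I_{1,1} \le 1$ (the first inequality since $(x\la/(\phi+x\la))^2 \le x\la/(\phi+x\la)$, and the second from $x\gammaeff = 1 - x\I_{1,1} \ge 0$ on $\gammaeff \ge 0$, with strict inequality for $\gammaeff > 0$). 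Hence $\partial_\gammaeff \err{\mu}$ has the same sign as $\gammaeff - \sigma_\e^2$, establishing that $\gammaopt_\mu = \rho(\sigma_\e^2 - \omega)$ is the unique global minimizer and is independent of $\mu$.

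Finally, for the shift ordering, $\mu_1 \le \mu_2$ implies identical $\lambda$-marginals (\cref{def:hard_SM}), so $\I_{1,1}$ and the solution $x_*$ to $\sigma_\e^2 + \I_{1,1} = 1/x_*$ coincide for the two LJSDs. Under \cref{assump:scale} we also have $s_1 = s_2$, so \cref{cor:harris_Ifunc} applied with $a = 1$ yields $(\Ishift_{1,1})_1 \le (\Ishift_{1,1})_2$, which is exactly $\err{\mu_1}^{\text{opt}} \le \err{\mu_2}^{\text{opt}}$.
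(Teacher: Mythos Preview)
Your proof is correct and follows essentially the same strategy as the paper's: both rewrite the total error as $\Ishift_{1,1}$ plus $(\gammaeff - \sigma_{\e}^2)$ times a correction (the paper keeps this correction as $\rho\,\tfrac{\partial x}{\partial \gamma}\,\Ishift_{2,2}$ while you substitute to get $-\tfrac{x^2\Ishift_{2,2}}{1-x^2\I_{2,2}}$, which is the same quantity), then show that the $\gamma$-derivative factors as $(\gammaeff - \sigma_{\e}^2)$ times something of fixed sign, and finally invoke \cref{cor:harris_Ifunc} for the ordering. The only cosmetic difference is that the paper establishes positivity of the residual factor via $\tfrac{\partial^2 x}{\partial\gamma^2}>0$ and $\tfrac{\partial x}{\partial\gamma}<0$, whereas you verify it by the quotient-rule computation together with $x^2\I_{2,2}<1$; both routes are short and equivalent.
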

\begin{proof}[Proof of \cref{prop:opt_gamma}]
From the expressions in~\cref{sec:cor_proofs} when $\psi/\phi = 0$, we have for $\gammaeff = \gamma/\rho + \omega$,
\al{
x = \frac{1}{\gammaeff+\I_{1,1}} >0\,, \qquad \frac{\partial x}{\partial \gamma} = -\frac{\tau}{\gammaeff + \phi \I_{1,2}} < 0\,,\qquad \tauone = \frac{x}{\rho}\,,\qquad\text{and}\qquad \taub = \frac{1}{\gamma} \,,
}
where the inequalities follow from the constraint $\gamma \ge -\rho\omega$, which implies that $\gammaeff \ge 0$.

Differentiation shows that,
\al{
\frac{\partial^2 x}{\partial \gamma^2} &= \frac{2}{x}\left( \left(\frac{\partial x}{\partial \gamma} \right)^2- \rho \frac{\partial x}{\partial \gamma} \I_{2,3}\right) > 0\,.
}
Under~\cref{assump:scale}, these expressions and those in \cref{cor:infinite_overparameterization} and~\cref{eq:raise_lower} give,
\al{
\label{eq:err_infinite_overparameterization}
\err{\mu} &= \Ishift_{1,1} +  \frac{\partial x}{\partial \gamma} \rho (\gammaeff - \sigma_{\e}^2)\Ishift_{2,2}\,,
}
so that,
\al{
\frac{\partial \err{\mu}}{\partial \gamma} &= -\frac{\partial x}{\partial \gamma} \Ishift_{2,2} + \frac{\partial^2 x}{\partial \gamma^2}\rho (\gammaeff - \sigma_{\e}^2)\Ishift_{2,2} + \frac{\partial x}{\partial \gamma} \Ishift_{2,2} - 2\left(\frac{\partial x}{\partial \gamma}\right)^2 \rho (\gammaeff - \sigma_{\e}^2)\Ishift_{3,3}\\
&= \rho (\gammaeff - \sigma_{\e}^2)\left(\frac{\partial^2 x}{\partial \gamma^2} \Ishift_{2,2} - 2 \left(\frac{\partial x}{\partial \gamma}\right)^2\Ishift_{3,3}\right)\\
&= \rho (\gammaeff - \sigma_{\e}^2)\left(\frac{\partial^2 x}{\partial \gamma^2} \left(\phi \Ishift_{2,3} + x \Ishift_{3,3}\right) - 2 \left(\frac{\partial x}{\partial \gamma}\right)^2\Ishift_{3,3}\right)\\
&= \rho (\gammaeff - \sigma_{\e}^2)\left(\frac{\partial^2 x}{\partial \gamma^2} \phi \Ishift_{2,3}  - 2 \rho \frac{\partial x}{\partial \gamma}\I_{2,3}\Ishift_{3,3}\right)\,.
}

Noting that the term in parentheses on the last line is positive, we find that the unique minimizer\footnote{Note that $\partial E_{\mu}/\partial \gamma <0$ for $\gammaeff < \sigma^2$ and $\partial E_{\mu}/\partial \gamma > 0$ for $\gammaeff > \sigma^2$.} of the error is given by $\gammaeff = \sigma_{\e}^2$. \cref{eq:gamma_opt_sm} then follows immediately from the definition of $\gammaeff = \gamma/\rho + \omega$ and from~\cref{eq:err_infinite_overparameterization}.

Finally, to establish the last statement, namely that $\err{\mu_1}^{\text{opt}}\le \err{\mu_2}^{\text{opt}}$ for $\mu_1 \le \mu_2$, simply note that $\err{\mu}(\gammaopt_\mu) = \Ishift_{1,1}$ and apply \cref{cor:harris_Ifunc}.
\end{proof}

\section{Proof of \texorpdfstring{\cref{thm:main_b_v}}{}}
As discussed in ~\cref{sec:setup}, we consider predictive functions $\hat{y}$ defined by random feature kernel ridge regression,
\eq{\label{eq_yhat_ntk}
\hat{y}(\bfx) \deq  YK^{-1}K_\bfx\,
}
for $K\deq K(X,X) + \gamma I_m$, $K_\bfx \deq K(X, \bfx)$, with $\gamma$ is a ridge regularization constant. Here
\begin{align}
\label{eq_K}
K &= \frac{1}{n_1}F^\top F + \gamma I_m \,,
\end{align}
and, as in the main text, we have introduced the abbreviations $F\deq\fs(W X/\sqrt{n_0})$. The labels are generated by a linear function parameterized by $\beta \in \mathbb{R}^{n_0}$, whose entries are drawn independently from $\cN(0,1)$, i.e. $Y = \beta^\top X / \sqrt{n_0} + \e$. In this section, we develop the techniques and detailed calculations needed to determine the high-dimensional limit of the test error,
\begin{equation}
\err{\Sigmate} = \mE_{\x, \beta} \mE[(y(\x) - \hat{y}(\x))^2] -\sigma_{\e}^2 = \mE_{\x, \beta} \mE[(\beta^\top \x/\sqrt{n_0} - Y K^{-1}K_{\x})^2]\,,
\end{equation}
as well as its decomposition into bias and variance terms,
\begin{equation}
    \err{\Sigmate} = \underbrace{\mE_{\x, \beta} \mE[(\mE[\hat{y}(\x)]-y(\x))^2]}_{\bias{\Sigmate}} + \underbrace{\mE_{\x, \beta}[\V[\hat{y}(\x)]]}_{\var{\Sigmate}}\,.
\end{equation}
We may also write the test loss as,
\eq{\label{eq:Etest}
\Etest = \mathbb{E}_{(\bfx,y)} (y - \hat{y}(\bfx))^2  = E_1 + E_2 + E_3
}
with
\begin{align}
E_1 &= \mathbb{E}_{(\bfx, \beta, \e)}y(\bfx)^2 =\mathbb{E}_{(\bfx, \beta, \e)}\tr(y(\bfx)y(\bfx)^\top) \label{eq_E1}\\
E_2 &= -2\mathbb{E}_{(\bfx,\beta,\e)}(K_\bfx^\top K^{-1}Y) y(\bfx) = -2\mathbb{E}_{(\bfx,\e)}\tr(K_\bfx^\top K^{-1}Y^\top y(\bfx)) \label{eq_E2}\\
E_3 &= \mathbb{E}_{(\bfx,\beta,\e)}(K_\bfx^\top K^{-1}Y^\top )^2= \mathbb{E}_{(\bfx,\e)}\tr(K_\bfx^\top K^{-1}Y^\top Y K^{-1} K_\bfx) \label{eq_E3}\,.
\end{align}

\subsection{Reducing to the mean-zero case}
\label{sec:mean_zero}
In this section, we aim to show that we may assume without loss of generality that the activation function is centered. While many of the ideas in the proof are similar to those of \citet{mei2019generalization}, at the technical level many distinctions arise owing to our differing assumptions on the data matrix $X$ and weight matrix $W$. In particular, their approach utilizes a decomposition into Gegenbauer polynomials and exact orthogonality among these orthogonal basis functions. Since this approach is not feasible in our setting of iid Gaussian matrices, we develop an alternative proof that we present below. 

To proceed, we define centered versions of the the random feature matrices $F$ and $f$ as
\eq{\label{eq_centering_def}
    \bar{F}_{ij} \deq F_{ij} - \E_{Z} \sigma\pa{ \sqrt{\ntr(\Sigmatr)}Z }  \quad\text{and}\quad \bar{f}_{i} \deq f_{i} - \E_{Z} \sigma\pa{ \sqrt{\ntr(\Sigmate)}Z }
}
for all $i$ and $j$ and $Z\sim\mathcal{N}(0, 1)$. Note that this centering operation is $n_0$-dependent, but in the limit $\ntr(\Sigmatr), \ntr(\Sigmate)\to s, s_*$, which appear in the limiting self-consistent equation. To show replacing $F$ and $f$ by $\bar{F}$ and $\bar{f}$ does not alter the test error in the limit, we have to show that the $E_1$, $E_2$, and $E_3$ terms of Equations~\eqref{eq_E1}-\eqref{eq_E3} are not changed in the limit. Clearly this is true for $E_1$ as it contains neither $F$ nor $f$. For $E_2$ we must show 
\eq{\label{eq_E2_to_bound}
    \E\qa{ (K_\bfx^\top K^{-1}Y^\top - \bar{K}_\bfx^\top \bar{K}^{-1}Y^\top) y(\bfx)} \to0
}
and for $E_3$ we must show
\eq{\label{eq_E3_to_bound}
    \E \qa{ (K_\bfx^\top K^{-1}Y^\top)^2-(\bar{K}_\bfx^\top \bar{K}^{-1}Y^\top)^2 } \to0,
}
where
\eq{
    \bar{K}\deq \frac{1}{n_1}\bar{F}^\top \bar{F} + \gamma I \quad\text{and}\quad \bar{K}_\bfx \deq \frac{1}{n_1} \bar{F}^\top \bar{f}.
}

Define the random variable
\eq{
    \Delta \deq (K_\bfx^\top K^{-1}  - \bar{K}_\bfx\bar{K}^{-1})Y^\top.
}
To control the typical behavior of $\Delta$, we will define an event $\mathcal{E}$ (see \cref{def_E}), and let $\mathbf{1}_\mathcal{E}$ be an indicator for $\mathcal{E}$. In \cref{lem_E_is_hp} we show $\P[\mathcal{E}^\mathsf{c}] \to 0$.

We can argue
\eq{\label{eq_E2_to_bound_2}
    \absa{\E \qa{\Delta y(\bfx)}} \leq \absa{\E \qa{\mathbf{1}_\mathcal{E}\Delta y(\bfx)}} + \absa{\E \qa{(1-\mathbf{1}_{\mathcal{E}})\Delta y(\bfx)}} ,
}
for Equation \eqref{eq_E2_to_bound}, and 
\al{\label{eq_E3_to_bound_2}
    \absa{\E \qa{\Delta y(\bfx)}} &\leq \absa{\E \qa{\mathbf{1}_\mathcal{E} \Delta(K_\bfx^\top K^{-1}Y^\top+\bar{K}_\bfx^\top \bar{K}^{-1}Y^\top) }} \\
    &\quad + \absa{\E \qa{(1-\mathbf{1}_{\mathcal{E}}) \pa{(K_\bfx^\top K^{-1}Y^\top)^2-(\bar{K}_\bfx^\top \bar{K}^{-1}Y^\top)^2} }} ,
}
for Equation \eqref{eq_E3_to_bound}. We demonstrate Equations \eqref{eq_E2_to_bound_2} and \eqref{eq_E3_to_bound_2} are $o(1)$ in two steps. The first terms on the right-hand sides of Equations \eqref{eq_E2_to_bound_2} and \eqref{eq_E3_to_bound_2} represent the typical behavior of the random variables. To bound them, we show that, given $\mathcal{E}$, $\Delta\to0$ in \cref{lem_typical_event}. The second terms on the right-hand sides of Equations \eqref{eq_E2_to_bound_2} and \eqref{eq_E3_to_bound_2} represent the atypical behavior of the random variables. To bound them, we use the Cauchy-Schwarz inequality and the fact that $\P[\mathcal{E}^\mathsf{c}] \to 0$ in \cref{lem_aypical_event}.

To briefly outline the structure of this section: \cref{sec_pre_lin} proves concentration of some quadratic forms of the underlying random matrices and bounds their operator norms with high probability; \cref{sec_atypical} controls the atypical behavior mentioned above; \cref{sec_schur} applies the Schur complement formula to derive an expression for $\Delta$ that we can bound; \cref{sec_typical_event} defines an event where the expression for $\Delta$ from the Schur complement formula can be easily bounded; and \cref{sec_typical} completes the argument by bounding $\Delta$ on this typical event.

\subsubsection{Prerequisites for concentration and linearization}\label{sec_pre_lin}
Just as in \citet{adlam2019random}, the concentration of the linear data kernel about its expectation is a key ingredient to the analysis. Define the random variables
\eq{
    \Upsilon \deq \frac{1}{n_0}X^\top X \quad\text{and}\quad \upsilon_*\deq \norm{\bfx}_2^2 / n_0.
}

\begin{remark}
    Throughout this section and the next we will use $C$ to denote an arbitrarily large, $n_0$-independent, positive constant, which can increase from line to line. For example, if $X\leq C$ and $Y\leq C$, we will simply write $XY\leq C$, since $C^2$ is still some $n_0$-independent constant. This approach is valid as such replacements only occur a finite number of times. Similarly, $c$ denotes some arbitrarily small, $n_0$-independent, positive constant that can decrease from line to line.
\end{remark}

Then we note
\eq{\label{eq_ex_upsilons}
    \E \Upsilon = \bar{\tr}(\Sigmatr) I_{n_0} \quad\text{and}\quad \E \upsilon_* = \bar{\tr}(\Sigmate).
}
For the variance, we see by \cref{assump:esd} that
\eq{\label{eq_var_upsilons}
    \V \Upsilon_{ij} = \frac{1+\mathbb{I}(i=j)}{n_0} \bar{\tr}(\Sigmatr) \leq C / n_0 \quad\text{and}\quad \V \upsilon_* = \frac{2}{n_0} \bar{\tr}((\Sigmate)^2) \leq C/n_0.
}
This motivates the following lemma.

\begin{lemma}
\label{lem:data}
The event
\eq{
    \mathcal{E}_\text{data} \deq \ha{\norm{\Upsilon}_\infty \leq C} \cap \ha{\absa{\upsilon_*\!-\!\ntr(\Sigmate)} \leq Cn_0^{c-1/2}} \cap \bigcap_{i,j} \ha{ \absa{ \Upsilon_{ij} \!-\! \delta_{ij} \ntr(\Sigmatr)} \leq Cn_0^{c-1/2}}
}
occurs with high-probability, that is, for some positive, $n_0$-independent constant $C$
\eq{
    \P[\mathcal{E}_\text{data}^\mathsf{c}] \leq Cn_0^{-c}
}
for any positive, $n_0$-independent constant $c$.
\end{lemma}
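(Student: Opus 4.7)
The plan is to establish each component of the intersection defining $\mathcal{E}_\text{data}$ separately via non-asymptotic Gaussian concentration, then combine them by a union bound. Since each of the individual failure probabilities will decay super-polynomially in $n_0$, the $O(m^2) = O(n_0^2)$ events in the intersection are easily absorbed, producing a bound far stronger than the stated $Cn_0^{-c}$.

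For the operator-norm event, I would factor $X = \Sigmatr^{1/2} Z$ with $Z \in \R^{n_0 \times m}$ having i.i.d.\ $\cN(0,1)$ entries, so that $\norm{\Upsilon}_\infty \leq \norm{\Sigmatr}_\infty \cdot n_0^{-1}\norm{Z^\top Z}_\infty$. By \cref{assump:esd} the first factor is bounded by a constant, and the classical Davidson--Szarek estimate gives $n_0^{-1}\norm{Z^\top Z}_\infty \leq (1 + \sqrt{m/n_0})^2 + o(1)$ outside an event of probability $2\exp(-\kappa n_0)$; since $m/n_0 \to 1/\phi$, this yields the first event with exponentially small failure probability. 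For the entrywise events, each $\Upsilon_{ij}$ and $\upsilon_*$ is a centered Gaussian quadratic form whose mean is given by \cref{eq_ex_upsilons}. In the off-diagonal case $i\neq j$, stacking the independent columns produces $\bfx_i^\top \bfx_j = z^\top M z$ for $z \sim \cN(0, I_{2n_0})$ and $M$ the symmetrized $2n_0 \times 2n_0$ off-diagonal block matrix with nonzero blocks equal to $\Sigmatr/2$, so that $\norm{M}_\infty \leq C/2$ and $\norm{M}_F^2 \leq n_0 C^2/2$ by \cref{assump:esd}. The Hanson--Wright inequality then yields
\eq{
\P\q{|\Upsilon_{ij} - \E \Upsilon_{ij}| > t} \leq 2\exp\pa{-\kappa\min\p{n_0 t^2, n_0 t}},
}
and the diagonal and $\upsilon_*$ cases are completely analogous. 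Taking $t = Cn_0^{c-1/2}$ with $c < 1/2$ places us in the quadratic regime of the minimum, giving a per-entry failure probability of at most $2\exp(-\kappa' n_0^{2c})$. A union bound over the $O(n_0^2)$ entries then produces $\P[\mathcal{E}_\text{data}^\mathsf{c}] \leq 2\exp(-\kappa n_0) + Cn_0^2\exp(-\kappa' n_0^{2c})$, which decays faster than any polynomial in $n_0^{-1}$.

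The main obstacle is essentially bookkeeping: the constant $C$ in the statement must be chosen to uniformly absorb $\norm{\Sigmatr}_\infty$, $\norm{\Sigmate}_\infty$, and the asymptotic ratio $m/n_0$ provided by \cref{assump:esd}, and the threshold $t = Cn_0^{c-1/2}$ must lie in the quadratic (Gaussian) regime of Hanson--Wright rather than in its subexponential regime. The latter is precisely where the uniform spectral bound on $\Sigmatr$ from \cref{assump:esd} is used---without it the linear term in the Hanson--Wright exponent would dominate at small $t$, degrading the per-entry rate. Once these points are verified, the lemma follows from concentration inequalities that are completely standard, with no further probabilistic machinery required.
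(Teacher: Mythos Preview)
Your proposal is correct and takes essentially the same approach as the paper: factor out the covariance to reduce to standard Gaussian matrices, apply quadratic-form concentration (you cite Hanson--Wright, the paper cites \citet{erdHos2012bulk}) for the entrywise bounds, use a standard singular-value tail bound (you cite Davidson--Szarek, the paper cites \citet{vershynin2018high}) for the operator norm, and finish with a union bound over $O(n_0^2)$ events. The cited results are interchangeable here, so there is no substantive difference.
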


\begin{proof}
Tighter concentration than is obtained directly from the variance can be shown by observing that $\Upsilon$ and $\upsilon_*$ are equal in distribution to
\eq{
    \frac{1}{n_0}Z^T\Sigmatr Z \quad\text{and} \quad \frac{1}{n_0} z^\top\Sigmate z
}
for $Z$ an $(n_0,m)$-dimensional matrix and $z$ an $n_0$-dimensional vector both containing i.i.d. standard Gaussian random variables. Then using the assumption $\norm{\Sigmatr}_\infty,\norm{\Sigmate}_\infty\leq C$ and applying the results of \citet[Sec. B]{erdHos2012bulk} (or similar concentration results), we find that for some positive constant $C$ that
\eq{
    \P[\absa{\Upsilon_{ij} - \delta_{ij} \ntr(\Sigmatr) } \geq Cn_0^{c-1/2} ] \leq C \exp\p{-n_0^c},
}
where $c$ is any positive constant. An identical concentration result holds for $\upsilon_*-\ntr(\Sigmate)$. Then, by the union bound, we see
\al{
    &\P\qa{ \ha{\absa{\upsilon_*-\ntr(\Sigmate)} \geq Cn_0^{-c}} \cup \bigcup_{i,j} \ha{ \absa{ \Upsilon_{ij} - \delta_{ij} \ntr(\Sigmatr)} \geq Cn_0^{c-1/2}} } \\
    &\qquad\leq Cn_0^2 C\exp\pa{-n_0^{c}} \\
    &\qquad\leq C \exp(-n_0^c).
}

The operator norm of $\Upsilon$ can also be bounded probabilistically. For some $n_0$-independent constant $C$,
\eq{
    \norm{\Upsilon}_\infty \leq \norm{\Sigma}_\infty \norm{Z/\sqrt{n_0}}_\infty^2 \leq C\norm{\Sigma}_\infty \pa{1+\sqrt{\frac{m}{n_0}} + \sqrt{\frac{\log(2/\delta)}{n_0}}} \leq C
}
with probability at least $1-\delta$, where $Z$ is i.i.d. Gaussian \citep[see for example][Theorem 4.4.5]{vershynin2018high}. Thus, setting $\delta=n_0^{-c}$, we see $ \P[\norm{\Upsilon}_\infty \geq C] \leq n_0^{-c}$. Applying the union bound again completes the proof.
\end{proof}

\begin{lemma}\label{sec_expected_op_norm}
    The (squared) expected operator norm of $\Upsilon$ is also bounded:
    \eq{
        \E \norm{\Upsilon}_\infty^2 \leq C
    }
    for some positive, $n_0$-independent constant $C$.
\end{lemma}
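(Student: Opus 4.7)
The strategy is to upgrade the high-probability operator norm bound proved inside Lemma~A.1 into a second-moment bound via tail integration. The key observation is that the previous lemma only invoked the probabilistic bound with $\delta = n_0^{-c}$, which gave polynomial tail decay—strong enough for concentration but not, by itself, enough to bound $\mathbb{E}\|\Upsilon\|_\infty^2$. In fact the same Gaussian matrix concentration gives a Gaussian tail uniformly in $\delta$, which is more than sufficient.

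More concretely, I would first use the distributional identity $X \stackrel{d}{=} \Sigmatr^{1/2} Z$ for $Z \in \mathbb{R}^{n_0 \times m}$ with i.i.d.\ standard Gaussian entries, giving the submultiplicative bound
\begin{equation*}
\|\Upsilon\|_\infty \;=\; \tfrac{1}{n_0}\|X^\top X\|_\infty \;\le\; \|\Sigmatr\|_\infty \,\tfrac{1}{n_0}\|Z\|_\infty^2,
\end{equation*}
so that $\|\Upsilon\|_\infty^2 \le \|\Sigmatr\|_\infty^2 \, n_0^{-2}\|Z\|_\infty^4$. By Assumption~1 it suffices to show $\mathbb{E}[n_0^{-2}\|Z\|_\infty^4] \le C$. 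I would then invoke the standard Gaussian concentration inequality for the largest singular value of a Gaussian matrix (e.g.\ Theorem~4.4.5 in Vershynin), namely
\begin{equation*}
\mathbb{P}\!\left(\|Z\|_\infty \;\ge\; \sqrt{n_0}+\sqrt{m}+t\right) \;\le\; 2\exp(-t^2/2) \quad \text{for all } t \ge 0.
\end{equation*}

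To finish, I would use the tail integration identity $\mathbb{E}Y = \int_0^\infty \mathbb{P}(Y \ge u)\,du$ applied to $Y = n_0^{-2}\|Z\|_\infty^4$. Splitting the integration at $u_0 = n_0^{-2}(\sqrt{n_0}+\sqrt{m})^4$ and substituting $u = n_0^{-2}(\sqrt{n_0}+\sqrt{m}+t)^4$ on the tail reduces the integral to a Gaussian-weighted polynomial in $t$, which integrates to an $O(1)$ constant. The contribution from $u \le u_0$ is bounded trivially by $u_0 = (1+\sqrt{m/n_0})^4 \le C$, where the last inequality follows from $m/n_0 \to 1/\phi$ under the proportional asymptotic regime. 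Combining these pieces yields $\mathbb{E}\|\Upsilon\|_\infty^2 \le C$.

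The only real subtlety is ensuring that the Gaussian concentration is applied to the raw matrix $Z$ rather than to $\Upsilon$ itself—otherwise one would have to separately control the operator norm of a correlated Gram matrix—but the submultiplicative reduction in Step~1 avoids this issue cleanly. Since the stated bound is essentially a moment version of Lemma~A.1 (which already appears in the proof), I do not anticipate any serious obstacle beyond carefully handling the tail integral.
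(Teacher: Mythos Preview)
Your proposal is correct and follows essentially the same route as the paper: both reduce to $\|\Upsilon\|_\infty \le \|\Sigmatr\|_\infty \|Z/\sqrt{n_0}\|_\infty^2$, invoke the Gaussian singular-value concentration from Vershynin, and then perform a tail integration split at $(1+\sqrt{m/n_0})$-scale to bound the second moment. The only cosmetic difference is that the paper applies the tail-integration identity $\mathbb{E}[Y^2]=2\int_0^\infty u\,\mathbb{P}[Y\ge u]\,du$ directly to $Y=\|\Upsilon\|_\infty$, whereas you apply $\mathbb{E}[Y]=\int_0^\infty\mathbb{P}[Y\ge u]\,du$ to $Y=n_0^{-2}\|Z\|_\infty^4$; these are equivalent rearrangements of the same argument.
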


\begin{proof}
    This result follows from a tail integration argument. That is using \citet[Theorem 4.4.5]{vershynin2018high}) we have that \eq{
    \norm{\Upsilon}_\infty \leq \norm{\Sigma}_\infty \norm{Z/\sqrt{n_0}}_\infty^2 \leq C\norm{\Sigma}_\infty \pa{1+\sqrt{\frac{m}{n_0}} + u} 
}
with probability at least $1-2 \exp(-n_0 u^2)$, where $Z$ is i.i.d. Gaussian. Hence we have that $\P[\norm{\Upsilon}_\infty \geq C\norm{\Sigma}_\infty \pa{1+\sqrt{\frac{m}{n_0}} + u}] \leq 2 \exp(-n_0 u^2).$
Now by the tail integration identity,
\al{
    \E[\norm{\Upsilon}_\infty^2] &= 2\int_{0}^{\infty} u \P[\norm{\Upsilon}_\infty \geq u] du \\
    &\leq C^2 \norm{\Sigma}_\infty^2\! \pa{1\!+\!\sqrt{\frac{m}{n_0}}}^2 \!+\! C^2 \norm{\Sigma}_{\infty}^2 \int_{0}^{\infty}\! u \P\qa{\norm{\Upsilon}_\infty \!\geq\! C\norm{\Sigma}_\infty \!\pa{1\!+\!\sqrt{\frac{m}{n_0}} \!+\! u}} du  \\
    & \leq C^2 \norm{\Sigma}_\infty^2 \pa{1+\sqrt{\frac{m}{n_0}}}^2 + C^2 \norm{\Sigma}_{\infty}^2 \cdot \frac{1}{n_0} \\
    & \leq C.
}
by integrating the Gaussian tail.
\end{proof}

Next, we state similar results for the matrix $\tilde{\Upsilon}\deq W\Sigmate W^\top /n_0$ as were obtained for $\Upsilon$ above.

\begin{lemma}
\label{lem:Wlem}
The event
\eq{
    \mathcal{E}_W \deq \ha{\norma{\tilde{\Upsilon}}_\infty \leq C} \cap \bigcap_{i,j} \ha{ \absa{ \tilde{\Upsilon}_{ij} \!-\! \delta_{ij} \ntr(\Sigmate)} \leq Cn_0^{c-1/2}}
}
occurs with high-probability, that is, for some positive, $n_0$-independent constant $C$
\eq{
    \P[\mathcal{E}_W^\mathsf{c}] \leq Cn_0^{-c}
}
for any positive, $n_0$-independent constant $c$.
\end{lemma}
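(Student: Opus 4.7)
The plan is to follow the proof of \cref{lem:data} essentially verbatim, since $\tilde{\Upsilon} = W\Sigmate W^\top/n_0$ is structurally analogous to $\Upsilon = X^\top X/n_0$. Indeed, $W$ already has i.i.d. standard Gaussian entries, so no rewriting is needed: entry-wise, $\tilde{\Upsilon}_{ij} = w_i^\top \Sigmate w_j / n_0$ where $w_i \in \R^{n_0}$ is the $i$-th row of $W$, a role analogous to the columns of $Z$ in the proof of \cref{lem:data}. Consequently $\E\tilde{\Upsilon}_{ij} = \delta_{ij}\ntr(\Sigmate)$, and a direct computation using independence and isserlis/Wick gives $\V\tilde{\Upsilon}_{ij} = \frac{1+\mathbb{I}(i=j)}{n_0}\ntr(\Sigmate^2) \leq C/n_0$, where the bound uses $\norm{\Sigmate}_{\infty} \leq C$ from \cref{assump:esd}.

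For entrywise concentration sharper than what the variance alone provides, I would invoke the same quadratic-form concentration results of \citet[Sec. B]{erdHos2012bulk} that were applied to $\Upsilon$, exploiting the uniform bound $\norm{\Sigmate}_{\infty}\leq C$. This yields, for any positive $n_0$-independent constant $c$,
\eq{
    \P\qa{\absa{\tilde{\Upsilon}_{ij} - \delta_{ij}\ntr(\Sigmate)} \geq C n_0^{c-1/2}} \leq C \exp(-n_0^c).
}
Union-bounding over the $n_1^2 = O(n_0^2)$ pairs $(i,j)$ keeps the failure probability at most $C\exp(-n_0^{c'})$ for a slightly smaller constant $c'$ (this is where polynomial factors are absorbed into the sub-exponential tail, exactly as in \cref{lem:data}).

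For the operator norm bound, I would write
\eq{
    \norma{\tilde{\Upsilon}}_\infty = \normb{(W/\sqrt{n_0})\,\Sigmate\,(W/\sqrt{n_0})^\top}_\infty \leq \norm{\Sigmate}_\infty \norm{W/\sqrt{n_0}}_\infty^2,
}
and then apply \citet[Theorem 4.4.5]{vershynin2018high} to the Gaussian matrix $W/\sqrt{n_0}$ of aspect ratio $n_1/n_0 \to 1/\psi$: with probability at least $1-\delta$, $\norm{W/\sqrt{n_0}}_\infty \leq C(1 + \sqrt{n_1/n_0} + \sqrt{\log(2/\delta)/n_0})$. Choosing $\delta = n_0^{-c}$ and using the uniform bound on $n_1/n_0$, we get $\P[\norma{\tilde{\Upsilon}}_\infty \geq C] \leq n_0^{-c}$.

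Combining the two pieces by a final union bound yields $\P[\mathcal{E}_W^\mathsf{c}] \leq C n_0^{-c}$, as required. There is no genuine obstacle here: the only place where care is needed is matching conventions (rows vs. columns of $W$, and the appearance of $\Sigmate$ rather than $\Sigmatr$), but these are cosmetic. If a cleaner route is preferred, the entrywise concentration step can alternatively be handled by the Hanson–Wright inequality applied to $w_i^\top \Sigmate w_j / n_0$, yielding the same sub-exponential tail.
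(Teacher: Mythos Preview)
Your proposal is correct and matches the paper's approach exactly: the paper's proof of this lemma is simply the one-line remark that ``the claims follow identically to \cref{lem:data},'' and your write-up spells out precisely that adaptation (entrywise concentration via the quadratic-form bounds of \citet{erdHos2012bulk}, a union bound, and the operator-norm bound via \citet[Theorem 4.4.5]{vershynin2018high}). The cosmetic points you flag (rows of $W$ replacing columns of $Z$, $\Sigmate$ replacing $\Sigmatr$) are exactly the only differences.
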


\begin{proof}
The claims follow identically to \cref{lem:data}.
\end{proof}

\begin{lemma}\label{lem:Wlem_taylor}
Moreover on the event $\mathcal{E}_W$, we have 
\eq{
    \E_\bfx \bar{\sigma}(W\bfx/\sqrt{n_0})_i \leq  C n_0^{c-1/2} 
}
and
\eq{
    \E_\bfx \bar{\sigma}(W\bfx/\sqrt{n_0}) \bar{\sigma}(W\bfx/\sqrt{n_0})^\top = \zetas\tilde{\Upsilon} + (\etas - \zetas) I_m + \Delta,
}
where $\norm{\Delta}_{\infty} \leq C n_0^c$.
\end{lemma}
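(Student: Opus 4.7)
Both claims reduce to entrywise computations after conditioning on $W$. Since $\bfx\sim\cN(0,\Sigma^*)$, the vector $W\bfx/\sqrt{n_0}$ is conditionally Gaussian with covariance $\tilde\Upsilon$; in particular $(W\bfx/\sqrt{n_0})_i\sim\cN(0,\tilde\Upsilon_{ii})$ and the pair $((W\bfx/\sqrt{n_0})_i,(W\bfx/\sqrt{n_0})_j)$ has off-diagonal covariance $\tilde\Upsilon_{ij}$. Introduce the auxiliary scalar function $g(t)\deq\E_{Z\sim\cN(0,1)}[\sigma(\sqrt{t}\,Z)]$. The first claim is then an immediate consequence of the identity $\E_\bfx\bar\sigma((W\bfx/\sqrt{n_0})_i)=g(\tilde\Upsilon_{ii})-g(\ntr(\Sigma^*))$: the exponential growth condition in \cref{assump:sigma} justifies differentiation under the expectation and shows $g$ is Lipschitz on any compact subinterval of $(0,\infty)$ bounded away from zero. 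On $\mathcal{E}_W$ both arguments lie in such an interval (by \cref{lem:Wlem} and \cref{assump:esd}) and differ by at most $Cn_0^{c-1/2}$, so the claimed bound follows.

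For the outer-product identity I would expand $\sigma$ in the orthonormal Hermite basis. Standardizing $(U_i,U_j)$ and applying Mehler's formula yields
\eq{
\E[\sigma(U_i)\sigma(U_j)]=\sum_{k\geq 0}c_k(\tilde\Upsilon_{ii})\,c_k(\tilde\Upsilon_{jj})\,\rho_{ij}^k,
}
where $c_k(t)\deq\E[\sigma(\sqrt{t}Z)h_k(Z)]$ and $\rho_{ij}\deq\tilde\Upsilon_{ij}/\sqrt{\tilde\Upsilon_{ii}\tilde\Upsilon_{jj}}$. The $k=0$ piece is exactly cancelled by the centering in $\bar\sigma(\cdot)\bar\sigma(\cdot)^\top$, up to the $Cn_0^{c-1/2}$ error already controlled by the first claim. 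The $k=1$ term is the main contribution: using $c_1(s_*)^2=\zetas$ (which follows directly from the definitions in \cref{eq:constants_train}) together with a Taylor expansion of $c_1(\tilde\Upsilon_{ii})$ around $s_*$ produces the linear-in-$\tilde\Upsilon_{ij}$ off-diagonal term with coefficient $\zetas$ (up to fixed scale-normalization conventions). On the diagonal, Parseval gives $\sum_{k\geq 1}c_k(\tilde\Upsilon_{ii})^2=\V[\sigma(\sqrt{\tilde\Upsilon_{ii}}Z)]=\etas+O(n_0^{c-1/2})$, and the decomposition $\etas=\zetas+(\etas-\zetas)$ reproduces the claimed $(\etas-\zetas)I$ term. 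All remaining pieces are absorbed into $\Delta$.

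The principal obstacle will be controlling $\|\Delta\|_\infty$. The residual collects (i) Taylor remainders arising from $c_k(\tilde\Upsilon_{ii})-c_k(s_*)$ and $\rho_{ij}-\tilde\Upsilon_{ij}/s_*$, and (ii) the tail of the Hermite series $\sum_{k\geq 2}c_k(s_*)^2\rho_{ij}^k$ with its diagonal removed. By \cref{lem:Wlem}, $|\tilde\Upsilon_{ij}|\leq Cn_0^{c-1/2}$ for $i\neq j$ while $|\tilde\Upsilon_{ii}-\ntr(\Sigma^*)|\leq Cn_0^{c-1/2}$, so each off-diagonal entry of $\Delta$ is $O(n_0^{2c-1})$ and each diagonal entry is $O(n_0^{c-1/2})$. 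The Frobenius estimate $\|\Delta\|_\infty\leq\|\Delta\|_F$ then already yields $\|\Delta\|_\infty\leq Cn_0^{2c}$, which after relabeling the arbitrarily small constant $c$ is exactly the required bound. If a sharper (non-Frobenius) estimate were needed, one could invoke Hadamard-power operator-norm bounds in the spirit of \citet{pennington2019nonlinear}, using the bound $\|\tilde\Upsilon\|_\infty\leq C$ afforded by $\mathcal{E}_W$ together with the exponential decay of the Hermite coefficients $c_k(s_*)$ guaranteed by \cref{assump:sigma} to ensure summability of the tail series.
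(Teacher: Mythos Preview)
Your proposal is correct and follows essentially the same route as the paper: both use the conditional Gaussianity of $W\bfx/\sqrt{n_0}$ given $W$, obtain the first claim by a one-variable Taylor/Lipschitz bound in $\tilde\Upsilon_{ii}-\ntr(\Sigma^*)$, expand the bivariate expectation in the small off-diagonal correlation to isolate the $\tilde\Upsilon$ and $(\etas-\zetas)I$ pieces, bound the residual entries by $O(n_0^{2c-1})$, and finish with $\|\Delta\|_\infty\le\|\Delta\|_F\le Cn_0^c$. The only cosmetic difference is that you organize the correlation expansion via the Hermite/Mehler series, whereas the paper Taylor-expands $\bar\sigma(c_{12}Z_1+c_2Z_2)$ directly in the small parameter $c_{12}$ (deferring details to \citet{adlam2019random}) and separates the rank-one piece $(\E_\bfx\bar f)(\E_\bfx\bar f)^\top$ explicitly; these are equivalent bookkeepings of the same computation.
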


\begin{proof}
Note that conditional on $W$, $W \bfx/\sqrt{n_0} \sim \mathcal{N}(0, \tilde{\Upsilon})$. Using a Taylor expansion in ${\epsilon\deq  \tilde{\Upsilon}_{ii} - \ntr(\Sigma*)}$ below with \cref{assump:sigma} shows that elementwise
\al{
    \absa{\E_\bfx \bar{\sigma}(W\bfx/\sqrt{n_0})_i} &= \absa{ \E_Z \bar{\sigma}\pa{\sqrt{\tilde{\Upsilon}_{ii}} Z }} \\
    &= \absa{\E_Z {\sigma}\pa{\sqrt{\epsilon + \bar{\tr}(\Sigmate)} Z } - \E_Z {\sigma}\pa{\sqrt{\bar{\tr}(\Sigmate)} Z }} \\
    & \leq C \epsilon \\
    &\leq C n_0^{c-1/2},\label{eq_cen_wrt_x2}
}
    where $Z$ is a standard Gaussian. See \citet{adlam2019random} for additional details. This allows us to assume that $\sigma(W\bfx)$ is centered as a conditional expectation over $\bfx$ at the expense of an elementwise $C n_0^{c-1/2}$ error. Denoting $\bar{f}\deq \bar{\sigma}(W\bfx/\sqrt{n_0})$, we note this can be extended to the matrix $\E \bar{f}\bar{f}^\top$. We see
\eq{\label{eq_cen_wrt_x}
    \E_\bfx \bar{f} \bar{f}^\top = \E_\bfx[ (\bar{f} - \E_\bfx \bar{f}) (\bar{f} - \E_\bfx \bar{f})^\top] + (\E_\bfx \bar{f})(\E_\bfx \bar{f})^\top.
}
However, the second term on the right-hand side of Equation \eqref{eq_cen_wrt_x} is small is operator norm, since it is rank-1 and
\eq{\label{eq_rank1_part}
    \norma{(\E_\bfx \bar{f})(\E_\bfx \bar{f})^\top }_\infty = \norma{\E_\bfx \bar{f} }_2^2 \leq C n_0^c
}
by Equation \eqref{eq_cen_wrt_x2}. Next, we see for $i\neq j$ that
\eq{
    \E_\bfx \bar{f}_i \bar{f}_j = \E_{Z_1, Z_2} \bar{\sigma}(c_1 Z_1) \bar{\sigma}(c_{12} Z_1 + c_2Z_2)
}
for constants $c_1$, $c_2$, and $c_{12}$ depending on $W$ and $\Sigmate$, where $Z_1$ and $Z_2$ are independent, standard Gaussians. Again, see \citet{adlam2019random} for details. The $i=j$ terms can be handled similarly. Moreover, ${c_{12}=\tilde{\Upsilon}_{ij} / \sqrt{\tilde{\Upsilon}_{ii}} \leq C n_0^{c-1/2}}$, so Taylor expanding in $c_{12} Z_1$, we find
\eq{
    \E_\bfx \bar{f}\bar{f}^\top = \zetas\Upsilon + (\etas - \zetas) I + \tilde{\Delta} + (\E_\bfx \bar{f})(\E_\bfx \bar{f})^\top.
}
As a consequence of the Taylor expansion and \cref{assump:sigma}, $\tilde{\Delta}$ has entries that are bounded by $C n_0^{c-1}$ in absolute value \citep{adlam2019random}. The final conclusion for ${\Delta =\tilde{\Delta} + (\E_\bfx \bar{f})(\E_\bfx \bar{f})^\top} $ follows by upper bounding $\norm{\tilde{\Delta}}_{\infty} \leq \norm{\tilde{\Delta}}_{F} \leq Cn_0^c$ and using the previous bound in Equation \eqref{eq_rank1_part}.
\end{proof}

Finally we include a bound on the operator norm of the random feature matrix. The argument follows \citet[Lemma C.3]{mei2019generalization} closely.

\begin{lemma}
\label{lem:op_norm}
Under \cref{assump:sigma}, the event
\eq{
    \mathcal{E}_F \deq \ha{\norma{\bar{F} / \sqrt{n_0}}_\infty \leq Cn_0^c}
}
occurs with high-probability, that is, for some positive, $n_0$-independent constant $C$
\eq{
    \P[\mathcal{E}_F^\mathsf{c}] \leq Cn_0^{-c}
}
for any positive, $n_0$-independent constant $c<10$.
 
\end{lemma}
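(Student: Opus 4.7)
The plan is to adapt the high-dimensional operator-norm bound of Mei--Montanari (Lemma C.3) to our setting of iid Gaussian $W$ and $X$, handling the exponential growth in \cref{assump:sigma} via a truncation step. First, I would work conditionally on the event $\mathcal{E}_\text{data}\cap\mathcal{E}_W$ from \cref{lem:data,lem:Wlem}, so that both $\norm{X/\sqrt{n_0}}_\infty$ and $\norm{W/\sqrt{n_0}}_\infty$ are $O(1)$. A standard Gaussian tail bound plus a union bound over the $n_1 m = O(n_0^2)$ entries shows that, with probability at least $1-Cn_0^{-c}$, the entrywise maximum $\norm{WX/n_0}_{\max}$ is at most $Cn_0^{c/4}$; call this event $\mathcal{E}_{\max}$.

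Second, I would introduce a smooth truncation $\tilde\sigma$ that agrees with $\sigma$ on $[-Cn_0^{c/2},Cn_0^{c/2}]$ and vanishes outside a slightly larger interval. By \cref{assump:sigma}, $\tilde\sigma$ and $\tilde\sigma'$ are bounded by $c_0\exp(c_1 C n_0^{c/2})$, and in particular $\tilde\sigma$ is $L$-Lipschitz for $L \le \exp(Cn_0^{c/2})$. On $\mathcal{E}_{\max}$, replacing $\sigma$ by $\tilde\sigma$ in the definitions of $F$ and $\bar F$ changes nothing, so it suffices to bound the operator norm of the corresponding truncated, centered feature matrix, which I still denote by $\bar F$.

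Third, I would bound $\norm{\bar F/\sqrt{n_0}}_\infty$ by a net plus Gaussian-concentration argument. Fix unit vectors $u\in S^{n_1-1}$ and $v\in S^{m-1}$, and consider the bilinear form
\eq{
  g(W) \deq \tfrac{1}{\sqrt{n_0}}\, u^\top \bar F(W) v,
}
viewed (conditional on $X$) as a function of the Gaussian matrix $W$. By the chain rule and the Lipschitz bound on $\tilde\sigma$, the gradient of $g$ satisfies $\norm{\nabla_W g}_F \le L\,\norm{X}_\infty/\sqrt{n_0} \le C L$ on $\mathcal{E}_\text{data}$. Tsirelson--Ibragimov--Sudakov Gaussian concentration then gives, for any $t>0$,
\eq{
  \P\!\left[\,|g(W) - \mE g(W)| \ge t\,\middle|\,X\right] \le 2\exp\!\left(-c t^2/L^2\right).
}
Taking $t = Cn_0^{c/2} \cdot L$ makes the exponent superpolynomial; the centering ensures $|\mE g(W)| = o(n_0^{c})$ by \cref{lem:Wlem_taylor} applied along the direction $v$. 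A $1/4$-net over each unit sphere has cardinality at most $e^{Cn_0}$, so a union bound followed by the standard doubling trick ($\norm{A}_\infty \le 2\sup_{u,v\in\mathcal{N}} u^\top A v$) yields $\norm{\bar F/\sqrt{n_0}}_\infty \le Cn_0^c$ with the claimed probability, after absorbing the $\log L = O(n_0^{c/2})$ losses into a slightly larger $c$.

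The main obstacle will be ensuring that the Lipschitz constant $L$, which blows up like $\exp(n_0^{c/2})$ because of the exponential growth in \cref{assump:sigma}, can still be beaten by the Gaussian concentration estimate once paired with the $e^{Cn_0}$ union bound over the nets. This forces $c$ in the final bound to be strictly positive (it cannot be $0$), and is the reason the statement allows any fixed $c<10$ rather than logarithmic factors only. Controlling the interaction of the three length scales---truncation level, Lipschitz constant, and net cardinality---is the delicate part; the entry-wise centering of $\bar F$ is exactly what makes the mean $\mE g(W)$ negligible and the argument go through.
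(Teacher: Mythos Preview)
There is a genuine gap in the balancing of scales. With your truncation level $Cn_0^{c/2}$, the Lipschitz constant is $L\le\exp(Cn_0^{c/2})$, and even with the correct Lipschitz bound $\norm{\nabla_W g}_F\le CL/\sqrt{n_0}$ (you drop a factor of $1/\sqrt{n_0}$, but this only helps), Gaussian concentration gives $\P[|g-\E g|\ge t]\le 2\exp(-c' t^2 n_0/L^2)$. To beat the net of cardinality $e^{C'n_0}$ you need $t^2/L^2\gtrsim 1$, i.e.\ $t\gtrsim L=\exp(Cn_0^{c/2})$, which is exponential, not $n_0^c$. Your choice $t=Cn_0^{c/2}L$ does make the per-$(u,v)$ tail superpolynomially small, but then the resulting norm bound is $t$, which is exponential; conversely, taking $t=Cn_0^c$ makes the tail $\exp(-C''n_0^{2c+1}/L^2)\to 1$ since $L^2=\exp(2Cn_0^{c/2})$ dominates any polynomial. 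The three scales simply cannot be reconciled with a polynomial-level truncation.

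The paper's proof fixes this by truncating at the \emph{logarithmic} level $\bar u=C\sqrt{\log M}$ (which already controls all $O(n_0^2)$ entries of $WX/\sqrt{n_0}$, since each is an $O(1)$-variance subexponential), and, crucially, \emph{rescaling} the activation to $\tilde\sigma(u)=\bar\sigma(u)e^{-c_1\bar u}/c_0$, which is globally $1$-bounded and $1$-Lipschitz. After embedding $\bar F/\sqrt{n_1}$ as a minor of a symmetric matrix $\bar R$ built from a single iid Gaussian family $\{z_i\}$, a restricted-Lipschitz concentration lemma (Louart--Liao--Couillet) yields Lipschitz constant $L^2=O(\sqrt{\log M}/n_0)$ on the good event, so the concentration exponent is $\asymp D^2 n_0/\sqrt{\log M}$, which beats the net with $D=c_3\sqrt{\log M}$. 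The rescaling is undone only at the very end, costing the subpolynomial factor $c_0 e^{c_1\bar u}=e^{O(\sqrt{\log n_0})}=o(n_0^\epsilon)$. Your framework can be salvaged by replacing the $n_0^{c/2}$ truncation with $C\sqrt{\log n_0}$ and adopting this rescale-then-undo trick; as written, however, the Lipschitz constant is too large for the union bound to close.
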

\begin{proof}
Consider the matrix
\begin{align}
    \bar{R}_{ij} = 1_{i \neq j} \bar{\sigma}(z_i^\top\Sigmatr^{1/2} z_j/\sqrt{n_1})/\sqrt{n_1} 
\end{align}
for $z_i = \Sigmatr^{-1/2}X_i$ (i.e. columns of $\Sigmatr^{-1/2}X$) for $1\leq i\leq m$ and $z_{m+i} = W_i$ (i.e. rows of $W$) for $1\leq i\leq n_0$. By construction, we note that $\bar{F}/\sqrt{n_1}$ is a minor of $\bar{R}$. Thus, bounding the operator norm of $\bar{R}$ suffices to bound the operator norm of $\bar{F}$. Moreover, $z_i$ are independent Gaussians distributed as $\mathcal{N}(0, I_{n_0})$.

Next we show that the entries to the activation function cannot be too large with high probability. For convenience throughout we define $M=m+n_0$ and let $\Omega$ be a symmetric matrix with $\norm{\Omega}_{\infty} \leq C$. Note that, for $i \neq j$, the random variable $z_i^\top \Omega z_j/\sqrt{n_0} = \sum_{k=1}^{n_0} \lambda_k(\Omega) a_k b_k/\sqrt{n_0}$ where $a_k$ and $b_k$ are i.i.d. from $\sim \mathcal{N}(0,1)$. So the moment generating function can be bounded as,
\al{
    \mE\qa{\exp\pa{\sum_{i=1}^{n_0} t \lambda_i(\Omega) a_i b_i/\sqrt{n_0}}} &= \prod_{i=1}^{n_0} \E[\exp(a_i \cdot t \lambda_i(\Omega) b_i/\sqrt{n_0})]\\
    &= \prod_{i=1}^{n_0} \E[\exp(t^2 b_i^2 \lambda_i(\Omega)^2/(2n_0)] \\
    &\leq \prod_{i=1}^{n_0} \E[\exp(t^2 b_i^2 C^2/(2n_0)] \\
    &\leq \E[\exp(t^2 b_i^2 C^2/2)] \\
    &= \frac{1}{\sqrt{1-C^2t^2}}
}
for $\abs{t} \leq \frac{1}{C}$. Further $\frac{1}{\sqrt{1-C^2t^2}} \leq \exp(C^2 t^2)$ for $\abs{t} \leq \frac{1}{2 C}$. This establishes the original random variable is subexponential. Thus this random variables satisfies
\begin{align}
    \P[\abs{z_i^\top \Omega z_j/\sqrt{n_0}} \geq u] \leq 2 \max\{\exp(-u^2/2C^2), \exp(-u/2C)\}
\end{align}
\citep[see for example][Proposition 2.9]{wainwright2019high}. Define the event 
\eq{
    \mathcal{G} \deq  \bigcap_{1 \leq i < j \leq M} \ha{\absa{ \frac{z_i^\top \Sigma^{1/2} z_j}{\sqrt{n_0}}} \geq C \sqrt{\log M}} \cap \bigcap_{1 \leq i < j \leq M}\ha{ \absa{\frac{z_i^\top \Sigma z_j}{\sqrt{n_0}}} \geq C^2 \sqrt{\log M} }.
} 
Then by a union bound and Markov's inequality, we have that
\eq{
    \P[\mathcal{G}^\mathsf{c}] \leq {4}/{M^{20}}
}
for large enough $C$.

We now define a modified version of $\bar{\sigma}$, that is the same up to a constant factor on $\mathcal{G}$ but is truncated outside of $\mathcal{G}$. Let $\bar{u} = C \sqrt{\log M}$ and taking the constants from \cref{assump:sigma}, we see
\eq{
    \tilde{\sigma}(u) \deq \begin{cases}
        \bar{\sigma}(u) \exp(-c_1 \abs{\bar{u}})/c_0 &\text{for } \abs{u} \leq \bar{u} \\
        \bar{\sigma}(\bar{u}) \exp(-c_1 \abs{\bar{u}})/c_0 & \text{for } u > \bar{u} \\
        \bar{\sigma}(-\bar{u}) \exp(-c_1 \abs{\bar{u}})/c_0 & \text{for } u < -\bar{u}
    \end{cases}.
}
Just as we did with $\sigma$ in Equation \eqref{eq_centering_def}, we center $\tilde{\sigma}$ with its mean $\tilde{a}$. Note $\tilde{\sigma}$ is a 1-bounded and 1-Lipschitz
function so $\abs{\tilde{a}} \leq 1$. 

Now consider the matrix
\begin{align}
    \tilde{R}_{ij} \deq  1_{i \neq j} (\tilde{\sigma}(z_i^\top \Sigma^{1/2} z_j/\sqrt{n_0})-\tilde{a})/\sqrt{n_1}.
\end{align}
By controlling the operator norm of $\tilde{R}$, we can control the operator norm of $\bar{R}$ at the end of the proof. By a covering argument it suffices to control
\begin{align}
    \norm{\tilde{R}}_{\infty} \leq \max_{v \in S} 10 \underbrace{\abs{v^\top \tilde{R} v}}_{F_v(Z)},
\end{align}
where $S$ is a $1/4$-covering of the $M$-dimensional sphere with cardinality $\exp(c M)$ and the matrix $Z$ is of all the variables $z_i$. 

We now seek to apply \citet[Lemma 9, Lemma 20]{10.5555/2946645.3007094}. To this end
we wish to show that $F_{v}(Z)$ is Lipschitz in $Z$, that is, if we define $\mathcal{Z} \deq \sqrt{t} Z + \sqrt{1-t} Z'$ for ($Z, Z') \in \mathcal{G} \times \mathcal{G}$, we need to show that
\begin{align}
    \max_{v \in S} \max_{t \in [0,1]} \norma{\nabla F_{v}(\sqrt{t} Z + \sqrt{1-t} Z')}_F \leq L
\end{align}
for suitable $L$. Consider the gradient with respect to a column of $\mathcal{Z}$ (denoted by $\zeta_l$):
\begin{align}
    \nabla_{\zeta_{l}} F_{v}(\mathcal{Z}) 
    &= 2\frac{v_{l}}{\sqrt{n_0 n_1}} \sum_{i \neq l} \Sigma^{1/2} \zeta_i \underbrace{v_i \tilde{\sigma}'(\zeta_i^\top \Sigma^{1/2} \zeta_l^\top/\sqrt{n_0})}_{\xi_i} 
    \\
    & =2\frac{v_l}{\sqrt{n_0 n_1}} \Sigma^{1/2} \mathcal{Z} \mathbf{\xi},
\end{align}
where $\mathbf{\xi}$ is the vector with coordinates $\xi_i$ except at $l$ where it is zero. Continuing, on the set $\mathcal{G}$, we find 
\begin{align}
    \norm{\nabla_{w_l} F_{v}(W)}_2^2 &\leq C^2 v_l^2/n_0^2 \sum_{i \neq l, j \neq l} \absa{\xi_i \xi_j w_i^\top \Sigma w_j}  \\
    &\leq C v_l^2 \sqrt{\log M}/n_0^2 \sum_i\sum_{j \neq i} \abs{\xi_i \xi_j} \\
    &\leq C v_l^2 \sqrt{\log M}/n_0^2 \sum_i\sum_{j \neq i} (\xi_i^2+\xi_j^2) \\
    &\leq C v_l^2 \sqrt{\log M}/n_0
\end{align}

where the last line follows since  $\abs{\tilde{\sigma}'(\cdot)}_2 \leq 1$ implies that $\norm{\xi} \leq 1$. So finally, we obtain
\begin{align}
    \norm{\nabla_{W} F_{v}(W)}_F^2 \leq C \sqrt{\log M}/n_0 =: L^2.
\end{align}
Now \citet[Lemma 9]{10.5555/2946645.3007094} shows for a constant $D$ that
\begin{align}
    \P[{\abs{v^\top \tilde{R} v}} > D] \leq C \exp \left(C n_0-\frac{D^2}{L^2} \right) + \frac{C}{D^2} \mE[\max_{v \in S} (F_v(Z)-F_v(Z'))^2 \cdot \mathbf{1}_{\mathcal{G}^\mathsf{c}} ].
\end{align}
We use a crude bound on the complement. First note that
\eq{
    \max_{v \in S} (F_v(Z)-F_v(Z'))^2 \leq C \norm{\tilde{\sigma}(Z^\top \Sigma^{1/2} Z/\sqrt{n_0})}_{F}^2 \leq C\norm{Z^\top Z/\sqrt{n_0}}_{F}^2 + C n_0^2,
}
since removing the 1-bounded-Lipschitz $\tilde{\sigma}(\cdot)$ can be done by centering each activation with $\tilde{\sigma}(0)$. Then,
\begin{align}
    \mE\qa{\max_{v \in S} (F_v(Z)-F_v(Z'))^2 \cdot \mathbf{1}_{\mathcal{G}^\mathsf{c}}} \leq \sqrt{\frac{C}{n_1} (\E[ \norm{Z^\top \Sigma^{1/2} Z/\sqrt{n_0}}_F^4]+n_0^4) \P[\mathcal{G}^\mathsf{c}]}.
\end{align}
A short computation shows that $ \E[ \norm{Z^\top \Sigma^{1/2} Z/\sqrt{n_0}}_F^4] \leq C n_0^8$. Combining all terms then shows that
\begin{align}
    \P[\abs{v^\top \tilde{R} v} > D] \leq C \exp (C n_0) \cdot \exp\pa{-\frac{n_0 D^2}{\log M} } + \frac{C}{D^2} \cdot\frac{C}{M^{5}}.
\end{align}
Choosing $D = c_3 \sqrt{\log M}$ for sufficiently large $c_3$ shows that
\begin{align}
    \P[\abs{v^\top \tilde{R} v} > D] \leq C \exp(-c n_0) + \frac{C}{n_0^{10}} \leq \frac{C}{n_0^{10}},
\end{align}
where $c>0$. 

Recalling the original $\epsilon$-net covering, this implies that
\begin{align}
    \norm{\tilde{R}}_{\infty} \leq C \sqrt{\log n_0} 
\end{align}
with probability at least $C/n^{10}$. We can now finish the argument by relating the operator norm of $\bar{R}$ and $\tilde{R}$. Recalling the rescaling between the modified and unmodified activations, we see
\begin{align}
    \P[\norm{\bar{R}}_{\infty} \geq C \sqrt{\log n_0} \cdot c_0 \exp(c_1 \abs{\bar{u}})] \leq \P[\norm{\tilde{R}}_{\infty} \geq C \sqrt{\log (n_0)}, \mathcal{G}] + \P[\mathcal{G}^\mathsf{c}] \leq C/n_0^{10}.
\end{align}
\end{proof}

\subsubsection{Controlling the atypical behavior}\label{sec_atypical}

Since the argument for the atypical event is the most straightforward, we provide that first.

\begin{lemma}\label{lem_aypical_event}
    Suppose $\P[\mathcal{E}^\mathsf{c}] = C n_0^{-c}$ for some $n_0$-independent constants $C>0$ and $c>0$. Then,
    \eq{\label{eq_atypical_11}
        \absa{\E \qa{(1-\mathbf{1}_{\mathcal{E}})\Delta y(\bfx)}} \to0 
    }
    and
    \eq{\label{eq_atypical_21}
        \absa{\E \qa{(1-\mathbf{1}_{\mathcal{E}}) \pa{(K_\bfx^\top K^{-1}Y^\top)^2-(\bar{K}_\bfx^\top \bar{K}^{-1}Y^\top)^2} }} \to0
    }
    as $n_0\to\infty$.
\end{lemma}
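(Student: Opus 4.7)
The plan is to apply Cauchy--Schwarz twice and then invoke polynomial-in-$n_0$ moment bounds obtained from Assumption~\ref{assump:sigma}. Since $1-\mathbf{1}_\mathcal{E}$ is a Bernoulli indicator, $\E[(1-\mathbf{1}_\mathcal{E})^2] = \P[\mathcal{E}^\mathsf{c}] \leq Cn_0^{-c}$, so the remaining task is to verify that the second factor in each expectation has moments growing at most polynomially in $n_0$; then $c$ may be chosen sufficiently large to dominate that polynomial. This is compatible with the intended use of the lemma, because the high-probability events established for $\mathcal{E}_\text{data}$, $\mathcal{E}_W$, and $\mathcal{E}_F$ admit arbitrarily large polynomial tail exponents.

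For \eqref{eq_atypical_11}, Cauchy--Schwarz gives
\[
\absa{\E\qa{(1-\mathbf{1}_\mathcal{E})\Delta y(\bfx)}} \leq \sqrt{\P[\mathcal{E}^\mathsf{c}]}\cdot \E\qa{\Delta^4}^{1/4}\,\E\qa{y(\bfx)^4}^{1/4} \leq C n_0^{-c/2}\,\E\qa{\Delta^4}^{1/4}\,\E\qa{y(\bfx)^4}^{1/4}.
\]
The fourth moment of $y(\bfx) = \beta^\top\bfx/\sqrt{n_0}+\epsilon$ is $O(1)$ by joint Gaussianity of $\beta, \bfx, \epsilon$ and the uniform bound $\norm{\Sigmate}_\infty \leq C$ from Assumption~\ref{assump:esd}. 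For $\E[\Delta^4]$, the assumption $\gamma > 0$ yields $\norm{K^{-1}}_\infty, \norm{\bar K^{-1}}_\infty \leq \gamma^{-1}$, so the triangle inequality gives
\[
\absa{\Delta} \leq \gamma^{-1}\pa{\normb{K_\bfx}_2 + \normb{\bar K_\bfx}_2}\normb{Y}_2.
\]
Since $K_\bfx = F^\top f/n_1$ and the centered analog holds for $\bar K_\bfx$, we bound $\norm{K_\bfx}_2 \leq n_1^{-1}\norm{F}_\infty\norm{f}_2$. Assumption~\ref{assump:sigma}, via $|\sigma(x)| \leq c_0 \exp(c_1 x)$, implies that each entry of $F$ and $f$ has all moments polynomially bounded in $n_0$ (the inputs $WX/\sqrt{n_0}$ and $W\bfx/\sqrt{n_0}$ are sub-Gaussian with $O(1)$ parameter), so $\E[\norm{F}_F^{2k}]$ and $\E[\norm{f}_2^{2k}]$ are polynomial in $n_0$ for any fixed $k$. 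The crude inequality $\norm{F}_\infty \leq \norm{F}_F$ then gives the same polynomial bound for the operator norm, and $\E[\norm{Y}_2^{2k}]$ is similarly polynomial. Combining, $\E[\Delta^4] \leq Cn_0^p$ for some constant $p$; taking $c > p$ makes the right-hand side vanish.

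For \eqref{eq_atypical_21}, apply the same strategy after the algebraic identity
\[
(K_\bfx^\top K^{-1}Y^\top)^2 - (\bar K_\bfx^\top \bar K^{-1}Y^\top)^2 = \Delta\,\pa{K_\bfx^\top K^{-1}Y^\top + \bar K_\bfx^\top \bar K^{-1}Y^\top}.
\]
A further Cauchy--Schwarz then reduces the problem to fourth-moment bounds on each of $\Delta$, $K_\bfx^\top K^{-1}Y^\top$, and $\bar K_\bfx^\top \bar K^{-1}Y^\top$, each of which is controlled by the same polynomial moment estimates on $\norm{F}_\infty, \norm{f}_2, \norm{Y}_2$ as above.

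The main obstacle is establishing the polynomial-in-$n_0$ moment bounds on the random feature matrices: Assumption~\ref{assump:sigma}'s exponential envelope is the key tool, ensuring Gaussian integrals of $\sigma^{2k}$ are finite for any fixed $k$, and this extends to matrix norms via $\norm{F}_\infty \leq \norm{F}_F$. The sketch relies on $\gamma > 0$ for the $K^{-1}$ bound; the degenerate ridgeless case would need to be handled by continuity from $\gamma > 0$ or by a quantitative lower bound on the smallest nonzero singular value of $F^\top F/n_1$.
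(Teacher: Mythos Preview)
Your overall structure---Cauchy--Schwarz to separate the indicator, then Cauchy--Schwarz again to reduce to fourth-moment bounds on $\Delta$, $y(\bfx)$, and the predictor terms---matches the paper's exactly. The difference lies in how sharp a bound you obtain on $\E[\Delta^4]$ and $\E[(K_\bfx^\top K^{-1}Y^\top)^4]$.

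The paper obtains an $O(1)$ bound on these fourth moments, which proves the lemma for \emph{any} $c>0$ as stated. The key idea you are missing is the SVD identity: writing $K = F^\top F/n_1 + \gamma I_m$ and denoting by $s_i$ the singular values of $F/\sqrt{n_1}$, one has $\norm{FK^{-1}/\sqrt{n_1}}_\infty = \max_i s_i/(s_i^2+\gamma) \leq 1/(2\sqrt{\gamma})$. Thus $FK^{-1}$ is uniformly bounded in operator norm with no polynomial growth at all. Combined with $\E[\norm{f/\sqrt{n_1}}_2^4] = O(1)$ and $\E[\norm{X^\top X/n_0}_\infty^2] = O(1)$ (the latter via \cref{sec_expected_op_norm}), this gives $\E[(K_\bfx^\top K^{-1}Y^\top)^4] \leq C$ directly, and hence $\E[\Delta^4]\leq C$.

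Your crude replacement $\norm{K^{-1}}_\infty \leq \gamma^{-1}$ together with $\norm{F}_\infty \leq \norm{F}_F$ yields only a polynomial-in-$n_0$ bound, so you need $c$ larger than some threshold $p$. That does not prove the lemma as stated: the hypothesis fixes \emph{some} $c>0$, not any desired one. You acknowledge this and appeal to the flexibility of $c$ upstream, but note that in this paper the event $\mathcal{E}$ only satisfies $\P[\mathcal{E}^\mathsf{c}] \leq Cn_0^{-c}$ for $c<10$ (the bottleneck is the operator-norm bound on $\bar F$ in \cref{lem:op_norm}), so your route requires verifying that your polynomial exponent $p$ stays below that ceiling. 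The paper's SVD bound sidesteps this bookkeeping entirely and is both cleaner and strictly stronger.
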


\begin{proof}
     Applying the Cauchy-Schwarz inequality to the left-hand side of Equation \eqref{eq_atypical_11}, we may bound it by
    \eq{\label{eq_atypical_1_1}
        \E \absa{ \qa{(1-\mathbf{1}_\mathcal{E})\Delta y(\bfx)}} \leq \pa{ \E \qa{\mathbf{1}_{\mathcal{E}^\mathsf{c}} } }^{1/2} \pa{ \E\absa{\Delta y}^2 }^{1/2}.
    }
    Since the first term $\P[\mathcal{E}_c] \to 0$ it also follows that $(\P[\mathcal{E}_c])^{1/2} \to 0$. So it suffices to show ${\E[\abs{\Delta y}^2] = O(1)}$.
    
    We now apply the Cauchy-Schwarz inequality again to bound the second term by
    \begin{align}
        \E\absa{ (\Delta y(\bfx))^2} \leq \sqrt{\E[\Delta^4] \E[y(\bfx)^4]}. 
    \end{align}
    A simple argument shows that $\mE[y(\bfx)^4] \leq C(\mE[(\beta^\top \x)^4/n_0^2] + \mE[\e^4]) \leq C\mE[\norm{\Sigmatr^{1/2} \z}^4/n_0^2] + C \sigma_{\e}^4 \leq C$ for $\z \sim \cN(0, I_d)$ since $\E[\norm{\z}_2^4]/n_0^2] \leq C$. Hence it suffices to show $\E[\Delta^4] \leq C$. To this end note that
    \begin{align}
        \mE[\Delta^4] \leq C( \E[(K_\bfx^\top K^{-1}Y^\top)^4] + \E[(\bar{K}_\bfx^\top \bar{K}^{-1}Y^\top)^4])
    \end{align}
    An identical argument can be applied to bound both terms. We outline the argument for the first term. 
    We see
    \begin{align}
        \abs{\E[(K_\bfx^\top K^{-1}Y^\top)^4]} \leq \E \abs{ \E_{\beta, \e}[(K_\bfx^\top K^{-1}Y^\top)^4 | W, X, x]}.
    \end{align}
    Now note that by the definition of $Y=\beta^\top X/\sqrt{n_0} + \e$ we have that,
    \begin{align}
        (K_\bfx^\top K^{-1}Y^\top)^4 \leq C \left((K_\bfx^\top K^{-1} \frac{X}{\sqrt{n_0}} \beta )^4 + (K_\bfx^\top K^{-1} \e)^4 \right)
    \end{align}
    Recalling that $\beta \sim \mathcal{N}(0, I_{n_0})$ and $\e \sim \mathcal{N}(0, I_m)$, we can compute the Gaussian moments (marginally in $\beta, \e$) as,
    \begin{align}
        &\mE_{\beta, \e}[(K_\bfx^\top K^{-1} \frac{X}{\sqrt{n_0}} \beta )^4 + (K_\bfx^\top K^{-1} \e)^4 | W, X, x] \\
        &\qquad \leq C \left( (K_\bfx^\top K^{-1} \left(\frac{X^\top X}{n_0} \right) K^{-1}K_\bfx^\top )^2 + \sigma_{\e}^2 (K_\bfx^\top K^{-2}K_\bfx^\top )^2\right)
    \end{align}
    Continuing, using the definition of $K_{\bfx}$,
    \al{
       \E[(K_\bfx^\top K^{-1} \left(\frac{X^\top X}{n_0} \right) K^{-1}K_\bfx^\top)^2] &\leq \E \left [\left \Vert \frac{f^\top}{\sqrt{n_0}} \frac{FK^{-1}}{\sqrt{n_0}} (\frac{X^\top X}{n_0}) \frac{K^{-1} F^\top}{\sqrt{n_0}} \frac{f}{\sqrt{n_0}} \right \Vert_{\infty}^2 \right]  \\
        &\leq \E\qa{\norma{ \frac{f}{\sqrt{n_0}}  }_{2}^4 \cdot \norma{ \frac{F K^{-1}}{\sqrt{n_0}} }_{\infty}^4 \cdot \norma{ \frac{X^\top X}{n_0} }_{\infty}^2} \label{eq_atypical_intermediate}
    }
    Noting that $K = FF^\top/n_0 + \gamma I_m$ an application of the SVD to $\frac{F}{\sqrt{n_0}}$ (denoting the corresponding singular values as $s_i$) shows that $\norm{\frac{FK^{-1}}{\sqrt{n_0}}}_{\infty} = \max_{i} \frac{s_i}{s_i^2+\gamma}  \leq \max_{s \geq 0} \frac{s}{s^2+\gamma} = 1/\p{2\sqrt{\gamma}}$. Hence, we can continue bounding Equation \eqref{eq_atypical_intermediate} by 
    \al{
          C  \mE[\norma{ {f}/{\sqrt{n_0}}  }_{\infty}^4]  \cdot\E[\Vert X^\top X/n_0 \Vert_{\infty}^2],
    }
    where we exploited the independence of $f$ and $X$. \cref{sec_expected_op_norm} ensures that $ \E[\norm{X^\top X/n_0}_{\infty}^2] = \E[\norm{\Upsilon}_{\infty}^2] \leq C$. The former term becomes $\mE[\Vert {f}/{\sqrt{n_0}} \Vert_{2}^4] = \E[\sigma(\sqrt{\ntr\p{\Sigmate}} z)^4] \leq C$ for $z \sim \cN(0,1)$ by \cref{assump:sigma}. Thus, the  previous displays are bounded by some constant $C$. An entirely analogous argument shows that $\E[(K_\bfx^\top K^{-2}K_\bfx^\top )^2] \leq C$. Together these two results along with the previous computations establish that, 
    \begin{align}
        \abs{\E[(K_\bfx^\top K^{-1}Y^\top)^4]} \leq C
    \end{align}
    Note that our argument did not exploit any explicit properties of the centered vs uncentered activation function $\sigma$ vs. $\bar{\sigma}$. Hence an identical argument shows that, $\abs{\E[(\bar{K}_\bfx^\top \bar{K}^{-1}Y^\top)^4]} \leq C$.
    These two results imply ${\E[\Delta^4]} \leq C$ as desired.
    
    Now we turn to the second term. Using an identical application of the Cauchy-Schwarz inequality as used for the first term, we can bound Equation \eqref{eq_atypical_21} by
    \eq{
        \pa{ \E \qa{\mathbf{1}_{\mathcal{E}^\mathsf{c}} } }^{1/2} \pa{ \E\absa{(K_\bfx^\top K^{-1}Y^\top)^2-(\bar{K}_\bfx^\top \bar{K}^{-1}Y^\top)^2}^2 }^{1/2}.
    }
 Hence we can show Equation \eqref{eq_atypical_21} is $o(1)$ if 
   $\E \absa{(K_\bfx^\top K^{-1}Y^\top)^2-(\bar{K}_\bfx^\top \bar{K}^{-1}Y^\top)^2}^2 = O(1)$. This result follows immediately from the computations shown for the previous term since we have already established that $\E [(K_\bfx^\top K^{-1}Y^\top)^4]\leq C$ and $\E[(\bar{K}_\bfx^\top \bar{K}^{-1}Y^\top)^4]\leq C$. These two previously shown results imply
   \begin{align}
       \E \absa{(K_\bfx^\top K^{-1}Y^\top)^2-(\bar{K}_\bfx^\top \bar{K}^{-1}Y^\top)^2}^2 \leq C \left(\E[(K_\bfx^\top K^{-1}Y^\top)^4]+\E[(\bar{K}_\bfx^\top \bar{K}^{-1}Y^\top)^4] \right) \leq C.
   \end{align}
\end{proof}

\subsubsection{Schur complement formula for bounding $\Delta$}\label{sec_schur}
The centering procedure is a rank-1 change to $F$ and $f$, so applying the Schur complement formula to understand its effect is natural. Write ${a\deq \E_{Z} \sigma\pa{ \sqrt{\ntr(\Sigmatr)}Z }}$, ${a_*\deq \E_{Z} \sigma\pa{ \sqrt{\ntr(\Sigmate)}Z }}$, ${\bfv \deq 1/n_1 \bar{F}^\top 1_{n_1}}$, ${\bfu \deq a 1_m}$, ${U \deq [\bfu, \bfv]^\top}$, and ${C\deq \left(\begin{smallmatrix} 1 & 1\\1 & 0\end{smallmatrix}\right)}$. Then,
\al{
K^{-1} &= (\bar{K}+ \bfu \bfv^\top + \bfv \bfu^\top + \bfu \bfu^\top)^{-1}\\
&= (\bar{K}+ U^\top C U)^{-1}\\
&= \bar{K}^{-1} -\bar{K}^{-1} U^\top (C^{-1} + U \bar{K}^{-1} U^\top)^{-1}U \bar{K}^{-1}\,,
}
and, for $\delta \deq 1/n_1 \bar{f}^\top 1_{n_1}$ and $P \deq (\delta + a_*, a_*)^\top$,
\al{
K_\bfx &= \bar{K}_\bfx + \frac{1}{n_1}(F-\bar{F})^\top \bar{f} + \frac{1}{n_1}\bar{F}^\top (f-\bar{f}) + \frac{1}{n_1}(F-\bar{F})^\top (f-\bar{f}) \\
&= \bar{K}_\bfx + (\delta +a_*)\bfu + a_* \bfv\\
&= \bar{K}_\bfx + U^\top P\,.
}
Combining these expressions,
\al{
K^{-1} K_\bfx &= K^{-1}(\bar{K}_\bfx + U^\top P)\\
&= \bar{K}^{-1}(\bar{K}_\bfx + U^\top P) - \bar{K}^{-1} U^\top (C^{-1} + U \bar{K}^{-1} U^\top)^{-1}U \bar{K}^{-1}(\bar{K}_\bfx + U^\top P)\\
&= \bar{K}^{-1}\bar{K}_\bfx + T_1 + T_2\,,
}
with
\al{
T_1 &= \bar{K}^{-1}U^\top \left(I_2 - (C^{-1} + U \bar{K}^{-1} U^\top)^{-1}U \bar{K}^{-1}U^\top\right)P\\
&= \bar{K}^{-1}U^\top \left(I_2+C U\bar{K}^{-1}U^\top\right)^{-1}P\\
T_2 &= -\bar{K}^{-1} U^\top (C^{-1} + U \bar{K}^{-1} U^\top)^{-1}U \bar{K}^{-1}\bar{K}_\bfx\\
&= -\bar{K}^{-1} U^\top \left(I_2+C U\bar{K}^{-1}U^\top\right)^{-1}C U \bar{K}^{-1}\bar{K}_\bfx\,.
}
Furthermore, writing ${c_0 \deq \tfrac{1}{m} \bfu^\top\bar{K}^{-1}\bfu}$, ${c_1 \deq 1 + \bfu^\top \bar{K}^{-1}\bfv}$,
${c_2 \deq 1 - \bfv^\top\bar{K}^{-1}\bfv}$, 
\al{
\left(I_2+C U\bar{K}^{-1}U^\top\right)^{-1} &= \frac{1}{c_1^2+m c_0c_2}\begin{pmatrix}c_1 & c_2-c_1  \\ -m c_0 & m c_0 + c_1\end{pmatrix}\,,
}
so that,
\al{
\left(I_2+C U\bar{K}^{-1}U^\top\right)^{-1}P &= \frac{1}{c_1^2+m c_0c_2}\begin{pmatrix}c_1\delta + c_2a_* \\ -m c_0 \delta + c_1a_*  \end{pmatrix}\,,
}
and,
\al{
\left(I_2+C U\bar{K}^{-1}U^\top\right)^{-1}C &= \frac{1}{c_1^2+m c_0c_2}\begin{pmatrix}c_2& c_1\\ c_1  & -m c_0\end{pmatrix}\,.
}

\subsubsection{Concentration with high-probability}\label{sec_typical_event}

\begin{definition}\label{def_E}
Recall the definitions from \cref{sec_schur}. To characterize the typical behavior of the random variables, we define the event
\al{
    &\mathcal{E}\deq \mathcal{E}_\text{data}\cap \mathcal{E}_W \cap\mathcal{E}_F \cap\ha{|\delta| \leq Cn_0^{c-1/2}} \cap\ha{cn_0^{-c}\leq c_0 \leq Cn_0^{c}}  \cap\ha{c_1 \leq Cn_0^{c+1/2}} \label{eq_E_def1}\\
    &\quad \cap\ha{cn_0^{-c}\leq c_2 \leq Cn_0^{c}} \cap \ha{Y\bar{K}^{-1}\bfv^\top \leq Cn_0^{c} } \cap\ha{Y\bar{K}^{-1}\bfu^\top \leq Cn_0^{c+1/2}}. \label{eq_E_def2} \\
    & \cap \ha{\bar{K}_\bfx\bar{K}^{-1}\bfu^\top \leq C n_0^{c}}
    \cap \ha{\bar{K}_\bfx\bar{K}^{-1}\bfv^\top \leq C n_0^{c-1/2}} \cap \ha{y(\bfx) \leq C n_0^{c}} \label{eq_E_def3}
}

\end{definition}

\begin{lemma}\label{lem_E_is_hp}
    The event $\mathcal{E}$ is high-probability, that is, $\P[\mathcal{E}^\mathsf{c}]\leq Cn_0^{-c}$ for some constant $C>0$ and any constant $0<c<10$.
\end{lemma}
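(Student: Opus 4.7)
The plan is to apply a union bound across the many constituent events comprising $\mathcal{E}$. Since $\mathcal{E}_{\text{data}}$, $\mathcal{E}_W$, and $\mathcal{E}_F$ have already been shown to be high-probability by the prior lemmas in Section~\ref{sec_pre_lin}, it suffices to condition on their intersection (which carries probability $1-O(n_0^{-c})$) and then control each of the remaining ten events in \cref{eq_E_def1}--\cref{eq_E_def3} with a further $O(n_0^{-c})$ failure probability. On the conditioning event, two deterministic consequences will be used repeatedly: (i) $\|\bar K^{-1}\|_\infty \leq 1/\gamma$, since $\bar K = \bar F^\top \bar F/n_1 + \gamma I_m$; and (ii) $\|\bar K\|_\infty \leq \|\bar F\|_\infty^2/n_1 + \gamma \leq C n_0^{2c}$, from the operator norm bound in $\mathcal{E}_F$ combined with $n_0/n_1 \to \psi$. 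Together these pin the spectrum of $\bar K$ away from $0$ and $\infty$ up to $n_0^{2c}$ factors.

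For the scalar $\delta = 1_{n_1}^\top \bar f / n_1$, the approach is to condition on $W$ and combine the approximate centering $\mE_\bfx \bar f_i \leq C n_0^{c-1/2}$ from~\cref{lem:Wlem_taylor} with standard Lipschitz concentration in $\bfx$ for the mean-zero part, yielding $|\delta| \leq C n_0^{c-1/2}$. For the quadratic forms: $c_0 = a^2\, 1_m^\top \bar K^{-1} 1_m / m$ is sandwiched between $a^2/\|\bar K\|_\infty$ and $a^2/\gamma$, giving both sides of the $c n_0^{-c} \leq c_0 \leq C n_0^c$ bound; $|c_1| \leq 1 + \|\bfu\|_2 \|\bar K^{-1}\|_\infty \|\bfv\|_2$ with $\|\bfu\|_2 = a\sqrt m$ and $\|\bfv\|_2 \leq \|\bar F\|_\infty/\sqrt{n_1}$ yields $|c_1| \leq C n_0^{c+1/2}$; and for $c_2$, applying the push-through identity to $G = \bar F^\top/\sqrt{n_1}$ writes
\eqs{\bfv^\top \bar K^{-1} \bfv = \tfrac{1}{n_1} 1_{n_1}^\top (G G^\top + \gamma I)^{-1} G G^\top 1_{n_1},}
so that $c_2 \in (0,1)$ deterministically with lower bound $c_2 \geq \gamma/(\gamma + \|\bar F\|_\infty^2/n_1) \geq c n_0^{-2c}$.

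For the bilinear forms involving $Y$, I would condition on $X$ and $W$ and use that $Y = \beta^\top X/\sqrt{n_0} + \varepsilon$ is Gaussian in $(\beta,\varepsilon)$ with explicit conditional variance; then $Y \bar K^{-1} \bfu^\top$ and $Y \bar K^{-1} \bfv^\top$ are Gaussian with variances $\bfu^\top \bar K^{-1}(X^\top X/n_0 + \sigma_\varepsilon^2 I_m)\bar K^{-1}\bfu$ and $\bfv^\top \bar K^{-1}(\cdots)\bar K^{-1}\bfv$ respectively, each controllable via the operator norm bounds on $\bar K^{-1}$, $X^\top X/n_0$, and the $\ell^2$ norms of $\bfu,\bfv$ computed above. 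Gaussian tail bounds then give the advertised thresholds. The terms $\bar K_\bfx \bar K^{-1} \bfu^\top$ and $\bar K_\bfx \bar K^{-1} \bfv^\top$ are handled analogously by Cauchy--Schwarz together with $\|\bar K_\bfx\|_2 \leq \|\bar F\|_\infty \|\bar f\|_2 / n_1$ and concentration of $\|\bar f\|_2^2 \approx \etas$ on $\mathcal{E}_W$. Finally, $|y(\bfx)| \leq |\beta^\top \bfx|/\sqrt{n_0} + |\varepsilon|$ is Gaussian-concentrated given $\bfx$, yielding $|y(\bfx)| \leq C n_0^c$.

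The main obstacle will be organizing the conditional independence structure consistently so that each probability bound is taken in the correct order: the operator norm events on $\bar F, \bar K$ must be conditioned on first so that the subsequent Gaussian tails for $Y$- and $\bfx$-dependent forms can treat $\bar K^{-1}$ and $\bar K_\bfx$ as deterministic. With care, each of the $O(1)$ events fails with probability at most $C n_0^{-c}$, and a final union bound delivers $\P[\mathcal{E}^\mathsf{c}] \leq C n_0^{-c}$ as claimed.
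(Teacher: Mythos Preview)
Your overall architecture---union bound, condition on $\mathcal{E}_{\text{data}}\cap\mathcal{E}_W\cap\mathcal{E}_F$, then control each remaining event---matches the paper, and your treatment of $c_0,c_1,c_2$ and of the $Y\bar K^{-1}\bfu^\top$, $Y\bar K^{-1}\bfv^\top$ terms is essentially the same. There is, however, a genuine gap in your handling of $\bar K_\bfx \bar K^{-1}\bfu^\top$ and $\bar K_\bfx \bar K^{-1}\bfv^\top$.

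The crude Cauchy--Schwarz you propose gives only $|\bar K_\bfx \bar K^{-1}\bfu^\top| \le \|\bar K_\bfx\|_2 \|\bar K^{-1}\|_\infty \|\bfu\|_2$. On the conditioning event $\|\bar K_\bfx\|_2 \le Cn_0^c$ and $\|\bar K^{-1}\|_\infty \le 1/\gamma$, but $\|\bfu\|_2 = a\sqrt{m}$, so the bound is $Cn_0^{c+1/2}$---a full $\sqrt{n_0}$ too large for the required threshold $Cn_0^c$. The same shortfall hits the $\bfv$ term: you obtain $Cn_0^{2c}$ where $Cn_0^{c-1/2}$ is needed. The missing idea is that $\bar f$ is \emph{nearly centered} in $\bfx$: on $\mathcal{E}_W$, $\|\E_\bfx\bar f\|_2 \le Cn_0^c$ (elementwise $Cn_0^{c-1/2}$, from \cref{lem:Wlem_taylor}), which is $\sqrt{n_0}$ smaller than the naive $\|\bar f\|_2 \asymp \sqrt{n_1}$. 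The paper exploits this by conditioning on $W$, bounding the conditional mean $|\E_\bfx[\bar K_\bfx^\top \bar K^{-1}\bfu]| \le \|\E_\bfx\bar f\|_2 \cdot \|\bar F\bar K^{-1}/\sqrt{n_1}\|_\infty \cdot \|\bfu/\sqrt{n_1}\|_2 \le Cn_0^c$, and then bounding the conditional variance via the second-moment structure $\E_\bfx[\bar f\bar f^\top] = \zetas\tilde\Upsilon + (\etas-\zetas)I + \Delta$ (also from \cref{lem:Wlem_taylor}), followed by Chebyshev. Without this mean--variance decomposition you cannot recover the lost $\sqrt{n_0}$.

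A smaller issue: your appeal to ``standard Lipschitz concentration in $\bfx$'' for $\delta$ is fragile under \cref{assump:sigma}, which only gives $|\sigma'(x)| \le c_0 e^{c_1 x}$ rather than a global Lipschitz bound. The paper sidesteps this by conditioning on $\bfx$ (through $\mathcal{E}_{\text{data}}$) and using the i.i.d.\ structure over the rows $W_k$ to compute $\V_W[\delta]\le C/n_1$ directly, then applying Chebyshev.
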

\begin{proof}
    First, we already know $\mathcal{E}_\text{data}$, $\mathcal{E}_W$, and $\mathcal{E}_F$ all occur with high-probability (see \cref{lem:data,lem:Wlem,lem:op_norm}). When bounding the other events in Equations \eqref{eq_E_def1}-\eqref{eq_E_def3}, it will be useful to introduce conditioning on one of $\mathcal{E}_\text{data}$, $\mathcal{E}_W$, or $\mathcal{E}_F$. This will suffice since for any event $\mathcal{A}$, we have
    \eq{
        \P[\mathcal{A}^\mathsf{c}] \leq \P[\mathcal{A}^\mathsf{c}|\mathcal{E}_\text{data}] + \P[\mathcal{E}_\text{data}^\mathsf{c}] \leq \P[\mathcal{A}^\mathsf{c}|\mathcal{E}_\text{data}] + Cn_0^{-c}
    }
    for example. We will explicitly denote this conditioning, but when taking expectation over some random variables (e.g. $x$ or $W$) we will not explicitly denote that we are conditioning on the remaining independent random variables (e.g. $X$). Once we have bounded the probability of the complement of each event in Equations \eqref{eq_E_def1}-\eqref{eq_E_def3}, we can apply the union bound to complete the proof.
    
    For future reference recall ${\bfv \deq 1/n_1 \bar{F}^\top 1_{n_1}}$, ${\bfu \deq a 1_m}$. We deal with each event in Equations \eqref{eq_E_def1}-\eqref{eq_E_def3} in turn.

    \paragraph{Controlling $\delta$:} Here, we introduce conditioning on the event $\mathcal{E}_\text{data}$. To control $\delta$, we want to calculate its mean and variance to apply Chebyshev's inequality. Note that $\mathcal{E}_\text{data}$ is independent of $W$, so the expectations over $W$ below are unchanged by this conditioning. Recall
\eq{
    \delta = \frac{1}{n_1}\sum_{k=1}^{n_1} {\sigma}\pa{\sum_{j=1}^{n_0} W_{kj}x_j/\sqrt{n_0}} - a_*
}
and note that conditional on $x$ the sum $\sum_j W_{kj}x_j/\sqrt{n_0}$ is distributed as $\mathcal{N}(0, \upsilon_*)$ for all $k$. 

Expanding in $\upsilon_* - \ntr(\Sigmate)$, we see
\al{
    \E_W[ \delta |\mathcal{E}_\text{data} ]&= \E[\E_{Z\sim\mathcal{N}(0,1)} [{\sigma}(\sqrt{\bar{\tr}(\Sigmate)+(\upsilon_*-\bar{\tr}(\Sigmate))}Z)]  - a_*|\mathcal{E}_\text{data}] \\
    &=\E[\E_Z[ {\sigma}(\sqrt{\bar{\tr}(\Sigmate)}Z)] - a_*|\mathcal{E}_\text{data}]\\
    &\qquad+\E\qa{\frac{1}{2}\frac{(\upsilon_* -\bar{\tr}(\Sigmate))}{\bar{\tr}(\Sigmate)} \E_Z [\sqrt{\bar{\tr}(\Sigmate)}Z {\sigma}(\sqrt{\bar{\tr}(\Sigmate)}Z) ] + \Delta\Big|\mathcal{E}_\text{data}}\\
    &= \E\qa{\frac{1}{2}\frac{(\upsilon_* -\bar{\tr}(\Sigmate))}{\bar{\tr}(\Sigmate)} \E_Z [\sqrt{\bar{\tr}(\Sigmate)}Z {\sigma}(\sqrt{\bar{\tr}(\Sigmate)}Z) ] + \Delta|\mathcal{E}_\text{data}},\label{eq_ex_delta}
}
where $Z$ is a standard Gaussian. The first term of Equation \eqref{eq_ex_delta} is less than $Cn_0^{c-1/2}$ and the remainder term $\Delta$ is less than $Cn_0^{c-1}$ on $\mathcal{E}_\text{data}$. More detail on this argument can be found in \citet{adlam2019random}. Taking expectation over the remaining randomness, we see $\E[\delta|\mathcal{E}_\text{data}] \leq Cn_0^{c-1/2}$.

Similarly,
\eq{
    \V_W[\delta |\mathcal{E}_\text{data}] = \frac{1}{n_1^2}\sum_{k=1}^{n_1} \V_W\qa{{\sigma}\pa{\sum_{j=1}^{n_0} W_{kj}x_j / \sqrt{n_0}}\Big|\mathcal{E}_\text{data}} = \frac{1}{n_1} \V_{Z\sim\mathcal{N}(0,\upsilon_* )}{\sigma}(Z),
}
where we can again Taylor expand in $\upsilon_* - \ntr(\Sigmate)$ to get the bound $\V_W[\delta |\mathcal{E}_\text{data} ] \leq C/n_0$ on $\mathcal{E}_\text{data}$. Using the law of total variance, we can then bound $\V [\delta|\mathcal{E}_\text{data}] \leq C / n_0$.

Finally applying Chebyshev's inequality implies for all $c>0$ that
\eq{
    \P[ \abs{\delta-\E[\delta|\mathcal{E}_\text{data}]} > Cn_0^{c-1/2} |\mathcal{E}_\text{data}] \leq C n_0^{-c},
}
and so
\eq{
    \P[ \abs{\delta} > Cn_0^{c-1/2} |\mathcal{E}_\text{data}] \leq C n_0^{-c}.
}

    \paragraph{Controlling $c_0$, $c_1$, and $c_2$:}  Here, we introduce conditioning on the event $\mathcal{E}_F$. The results for $c_0$, $c_1$, and $c_2$ can be found using a similar argument to terms in \citet[Lemma 9.6/Step 3]{mei2019generalization}. The identifications $c_0 \leftrightarrow K_{11}$, $c_1 \leftrightarrow K_{12}$, $c_2 \leftrightarrow 1-K_{22}$ hold.
    
    For $c_0=\frac{1}{m} \bfu^\top \bar{K}^{-1} \bfu$ we have that
    \eq{
        c_0 \leq \frac{a^2 \norm{1_m}^2}{m} \norm{\bar{K}^{-1}}_{\infty} \leq \frac{C}{\gamma}
    }
    and
    \eq{
        c_0 \geq \frac{a^2 \norm{1_m}^2}{m} \lambda_{\min}(\bar{K}^{-1}) \geq \frac{C}{(\gamma + \norm{\bar{F}}_{\infty}^2/n_0)} \geq c n_0^{-c}
    }
    using the operator norm bound on the event $\mathcal{E}_F$. 
    
    For $c_1 = 1+\bfu^\top \bar{K}^{-1} \bfv$ we have
    \eq{
        \abs{c_1} = \absa{1 + a 1_m^\top \bar{K}^{-1} \frac{\bar{F}^\top}{\sqrt{n_1}} \frac{1_{n_1}}{\sqrt{n_1}}} \leq 1 + a \norm{1_m}_2 \frac{\norm{1_{n_1}}_2}{\sqrt{n_1}} \norm{\bar{K}^{-1} \frac{\bar{F}}{\sqrt{n_1}}} \leq C n_0^{1/2},
    }
    where the bound $\norm{\bar{K}^{-1} \frac{\bar{F}}{\sqrt{n_1}}} \leq C$ follows by considering the SVD of $\bar{F}$.
    
    For $c_2=1-\bfv^\top \bar{K}^{-1} \bfv$ we obviously have that $c_2 \leq 1$ since $\bfv^\top \bar{K}^{-1} \bfv \geq 0$. Additionally, we have that
    \al{
        c_2 &= 1-\frac{1}{n_1} 1_{n_1}^\top \frac{\bar{F}}{\sqrt{n_1}}\bar{K}^{-1} \frac{\bar{F}^\top}{\sqrt{n_1}} 1_{n_1} \\
        &= \frac{1_{n_1}^\top}{\sqrt{n_1}} \pa{I_{n_1} - \frac{\bar{F}}{\sqrt{n_1}}\bar{K}^{-1} \frac{\bar{F}^\top}{\sqrt{n_1}}} \frac{1_{n_1}}{\sqrt{n_1}} \\
        &\geq 1-\norm{\frac{\bar{F}}{\sqrt{n_1}}\bar{K}^{-1} \frac{\bar{F}^\top}{\sqrt{n_1}}}_{\infty}\\ 
        &= 1-\frac{\norm{\bar{F}/\sqrt{n_1}}_{\infty}^2}{\gamma + \norm{\bar{F}/\sqrt{n_1}}_{\infty}^2} \geq 1-\frac{c n_0^c}{\gamma+c n_0^c} \\
        &\geq \frac{C\gamma}{c n_0^c} \\
        &\geq c n_0^{-c}
    }
    as desired once again using the conditioning on the event $\mathcal{E}_F$.
    
    Finally,  
    \eq{
        \bfv^\top \bar{K}^{-1} \bfv  = \frac{1}{n_1} 1_{n_1}^\top \frac{\bar{F}}{\sqrt{n_1}}\bar{K}^{-1} \frac{\bar{F}^\top}{\sqrt{n_1}} 1_{n_1} \leq \frac{\norm{1_{n_1}}_2^2}{n_1} \norma{ \frac{\bar{F}}{\sqrt{n_1}}\bar{K}^{-1} \frac{\bar{F}^\top}{\sqrt{n_1}}}_{\infty} \leq C
    }
    since $\frac{\bar{F}}{\sqrt{n_1}}\bar{K}^{-1} \frac{\bar{F}^\top}{\sqrt{n_1}} \leq C$ follows once again by considering the SVD of $\bar{F}$. 
    
    \paragraph{Controlling $Y\bar{K}^{-1}\bfv^\top$:} Here, we introduce conditioning on the event $\mathcal{E}_\text{data}$ and note its independence from $\beta$ and $\e$. Recalling that $Y = \beta^\top X/\sqrt{n_0} + \e$, we note that $Y$ has expectation zero since $\mE_{\beta, \e}[Y\bar{K}^{-1}\bfv^\top | \mathcal{E}_\text{data}]= 0$. Similarly the conditional variance can be bounded as
    \al{
        \V_{\beta, \e} [Y\bar{K}^{-1}\bfv^\top|\mathcal{E}_\text{data}] &= \bfv^\top \bar{K}^{-1}(\Upsilon +\sigma_{\e}^2 I_m) \bar{K}^{-1}\bfv\\
        & \leq \norma{ \Upsilon + \sigma_{\e}^2 I_{m} }_{\infty} \norma{ \bar{K}^{-1} \frac{\bar{F}^\top}{\sqrt{n_1}} }_{\infty}^2 \norma{ \frac{1_{n_1}}{\sqrt{n_1}}  }_{2}^2 \\
        &\leq C.
    }
    The bound $\| \bar{K}^{-1} \frac{\bar{F}^\top}{\sqrt{n_1}} \|_{\infty} \leq C$ follows by considering the SVD of $F$ as in the proof of \cref{lem_aypical_event}, while the bound $\| \Upsilon \| \leq C$ holds on $\mathcal{E}_\text{data}$. The law of total variance then shows ${\V[Y\bar{K}^{-1}\bfv^\top|\mathcal{E}_\text{data}]\leq C}$ also. Hence an application of Chebyshev's inequality shows that
    \begin{align}
        & \P[\abs{Y\bar{K}^{-1}\bfv^\top} \geq C n_0^c |\mathcal{E}_\text{data}] \leq Cn_0^{-c}
    \end{align}
    for some $C>0$ and any $c>0$.

    \paragraph{Controlling $Y\bar{K}^{-1}\bfu^\top$:} Again, we introduce conditioning on the event $\mathcal{E}_\text{data}$ and note its independence from $\beta$ and $\e$. A nearly identical argument to the previous one suffices, so we only outline the argument. Again $\E[Y\bar{K}^{-1}\bfv^\top|\mathcal{E}_\text{data}]= 0$ and the conditional variance can be bounded as
    \begin{align}
        \V_{\beta, \e} [Y\bar{K}^{-1}\bfu^\top|\mathcal{E}_\text{data}] &= \bfu^\top \bar{K}^{-1}(\Upsilon +\sigma_{\e}^2 I_m) \bar{K}^{-1}\bfu\\
        & \leq \norm{\bar{K}^{-1}}_{\infty}^2 \| \Upsilon + \sigma_{\e}^2 I_{m} \|_{\infty} \| \bfu \|_2^2 \\
        & \leq \frac{1}{\gamma^2} \cdot a^2 m \cdot \| \Upsilon + \sigma_{\e}^2 I_{m} \|_{\infty} \\
        &\leq C n_0.
    \end{align}
    Finally, Chebyshev's inequality suffices to show
    \begin{align}
      \P[\abs{Y\bar{K}^{-1}\bfu^\top} \geq C n_0^{c+1/2} |\mathcal{E}_{data}] \leq C n_0^{-c}
    \end{align}
    for some $C>0$ and any $c>0$.

    \paragraph{Controlling $\bar{K}_\bfx^\top \bar{K}^{-1}\bfu^\top$:} Here, we introduce conditioning on the event $\mathcal{E}_W$ and note its independence from $\bfx$. The overall argument then uses Chebyshev's inequality exploiting the randomness in $\bfx$. First,
    \begin{align}
        \absa{\E_{\bfx}[\bar{K}_{\bfx}^\top \bar{K}^{-1} \bfu^\top | \mathcal{E}_W] } &= \absa{\E_{\bfx}[\bar{f}^\top |\mathcal{E}_W] \frac{\bar{F}}{\sqrt{n_1}} \bar{K}^{-1} \bfu^\top/\sqrt{n_1} } \\
        &\leq \norma{\E_{\bfx}[\bar{f} |\mathcal{E}_W]}_2 \norma{\frac{\bar{F}}{\sqrt{n_1}} \bar{K}^{-1}}_{\infty} \norma{\bfu/\sqrt{n_1}}_2 \\
        &\leq C n_0^{c}.\label{eq_KKu_ex}
    \end{align}
    The bound $\norm{\frac{\bar{F}}{\sqrt{n_1}} \bar{K}^{-1}}_{\infty} \leq C$ follows by an SVD as before, while the first bound ${\norm{\E_{\bfx}[\bar{f}^\top |\mathcal{E}_W]}_2 \leq C n_0^c}$ follows from the fact $\E_\bfx \bar{\sigma}(W\bfx)_i \leq C n_0^{c-1/2}$ on $\mathcal{E}_W$ (see \cref{lem:Wlem_taylor}). Next we turn to the conditional variance. We see
    \al{
        \V_\bfx [\bar{K}_\bfx \bar{K}^{-1}\bfu|\mathcal{E}_W] &= \bfu^\top \bar{K}^{-1} \mathbb{E}_\bfx [\bar{K}_\bfx \bar{K}_\bfx^\top | \mathcal{E}_W] \bar{K}^{-1}\bfu\\
        &= \frac{1}{n_1^2}\bfu^\top \bar{K}^{-1} \bar{F}^\top\big(\frac{\rhos}{n_0} W \Sigmate W^\top + (\eta_* - \zeta_*) I_{n_1} + \Delta \big)\bar{F} \bar{K}^{-1}\bfu\\
        & \le  \norma{\frac{\bfu}{\sqrt{n_1}}}_2^2 \norma{\bar{K}^{-1} \frac{\bar{F}^\top}{\sqrt{n_1}}}_2^2 \cdot \pa{ \norma{\rhos\tilde{\Upsilon}}_{\infty} \!+ \norma{(\eta_* - \zeta_*) I_{n_1}}_{\infty} \!+ \norma{\Delta}_{\infty}}  \\
        & \leq C \cdot (C+C+n_0^{c}) \\
        &\leq C n_0^c.\label{eq_KKu_var}
    }
    Once again the bound $\norm{\bar{K}^{-1} \frac{\bar{F}^\top}{\sqrt{n_1}}}_2^2$ follows by considering the SVD, while the nontrivial operator norm bounds follow from \cref{lem:Wlem_taylor}. Using the law of total variance, we see
    \eq{
        \V[ \bar{K}_\bfx \bar{K}^{-1}\bfu |\mathcal{E}_W]= \E[ \V_\bfx[ \bar{K}_\bfx \bar{K}^{-1}\bfu |\mathcal{E}_W]|\mathcal{E}_W]+ \V[ \E_\bfx[ \bar{K}_\bfx \bar{K}^{-1}\bfu|\mathcal{E}_W]|\mathcal{E}_W] \leq Cn_0^c
    }
    by Equations \eqref{eq_KKu_ex} and \eqref{eq_KKu_var}. Applying Chebyshev's inequality yields
    \eq{
         \P[\abs{\bar{K}_\bfx^\top \bar{K}^{-1}\bfu-\mE_{\bfx}[\bar{K}_{\bfx}^\top \bar{K}^{-1} \bfu]} \geq Cn_0^{c} | \mathcal{E}_W]  \leq C n_0^{-c},
    }
    and so
    \eq{
        \P[\abs{\bar{K}_\bfx^\top \bar{K}^{-1}\bfu} \geq Cn_0^{c} | \mathcal{E}_W]  \leq C n_0^{-c} 
    }
    by the triangle inequality since $\abs{\mE_{\bfx}[\bar{K}_{\bfx}^\top \bar{K}^{-1} \bfu]} \leq C n_0^c$. 
    
    \paragraph{Controlling $\bar{K}_\bfx^\top \bar{K}^{-1}\bfv^\top$:} Again, we introduce conditioning on the event $\mathcal{E}_W$ and note its independence from $\bfx$. A nearly identical argument to the previous one shows the result for this term. Hence we only outline the argument. First, we see
    \begin{align}
        \abs{\mE_{\bfx}[\bar{K}_{\bfx}^\top \bar{K}^{-1} \bfv^\top | \mathcal{E}_W]} &= \absa{\E_{\bfx}[\bar{f}^\top |\mathcal{E}_W] \frac{\bar{F}}{\sqrt{n_1}} \bar{K}^{-1} \frac{\bar{F}^\top}{\sqrt{n_1}} \frac{1_{n_1}}{n_1}}\\ 
        &\leq \norma{\E_{\bfx}[\bar{f}^\top|\mathcal{E}_W]}_2 \norma{\frac{\bar{F}}{\sqrt{n_1}} \bar{K}^{-1} \frac{\bar{F}^\top}{\sqrt{n_1}}}_{\infty} \norma{\frac{1_{n_1}}{n_1}}_2 \\
        &\leq  C n_0^{c-1/2}.
    \end{align}
    The bound $\norm{\frac{\bar{F}}{\sqrt{n_1}} \bar{K}^{-1} \frac{\bar{F}^\top}{\sqrt{n_1}}}_{\infty} \leq 1$ follows by an SVD as before, while the first bound again follows from \cref{lem:Wlem_taylor}. Next we compute the conditional variance and bound as in Equation \eqref{eq_KKu_var} to find
    \al{
        \V_\bfx[ \bar{K}_\bfx \bar{K}^{-1}\bfv^\top |\mathcal{E}_W]&= \bfv^\top \bar{K}^{-1} \mathbb{E}_\bfx [\bar{K}_\bfx \bar{K}_\bfx^\top | \mathcal{E}_W] \bar{K}^{-1}\bfv\\
        &= \frac{1}{n_1^4}1_{n_1}^\top \bar{F} \bar{K}^{-1} \bar{F}^\top\pa{\frac{\rhos}{n_0} W \Sigmate W^\top + (\eta_* - \zeta_*) I_{n_1} + \Delta }\bar{F} \bar{K}^{-1} \bar{F}^\top 1_{n_1}\\
        & \leq \frac{1}{n_1} \norma{\frac{1_{n_1}}{\sqrt{n_1}}}_2^2 \!\cdot \norma{\frac{\bar{F}}{\sqrt{n_1}} \bar{K}^{-1} \frac{\bar{F}^\top}{\sqrt{n_1}}}_2^2 \!\cdot \pa{ \norma{{\rhos} \tilde{\Upsilon}}_{\infty} \!+\! \norma{(\eta_* \!-\! \zeta_*) I_{n_1}}_{\infty} \!+\! \norma{\Delta}_{\infty}} \\
        & \leq \frac{C \cdot (C+C+n_0^{c})}{n_1}\\
        &\leq C n_0^{c-1}.
    }
Once again the bound $\norm{\frac{\bar{F}}{\sqrt{n_1}} \bar{K}^{-1} \frac{\bar{F}^\top}{\sqrt{n_1}}}_2 \leq 1$ follows by considering the SVD while the nontrivial operator norm bounds follow on $\mathcal{E}_W$. Now applying Chebyshev's inequality as before shows
\begin{align}
     \P[\abs{\bar{K}_\bfx^\top \bar{K}^{-1}\bfv^\top} \geq Cn_0^{c-1/2} |\mathcal{E}_W]\leq Cn_0^{-c}.
\end{align}

\paragraph{Controlling $y(\bfx)$:} Finally we show $y(\bfx) \leq C n_0^{c}$ with probability at least $C n_0^{-c}$. Note that $\E[y(\bfx)]=0$ and a short computation shows that $\V[y(\bfx)] = \ntr(\Sigmate)+\sigma_{\e}^2 \leq C$. So a direct application of Chebyshev's inequality shows that
\begin{align}
    \P[\abs{y(\bfx)} \geq Cn_0^c]  \leq C n_0^{-c}.
\end{align}

\end{proof}

\subsubsection{Completing the argument for the typical behavior}\label{sec_typical}

\begin{lemma}\label{lem_typical_event}
     For some $n_0$-independent constants $c>0$ and $C>0$, we have
     \eq{\label{eq_atypical_1}
        \absa{\E \qa{\mathbf{1}_\mathcal{E}\Delta y(\bfx)}} \leq Cn_0^{-c}
     }
     and
     \eq{\label{eq_atypical_2}
        \absa{\E \qa{\mathbf{1}_\mathcal{E} \Delta(K_\bfx^\top K^{-1}Y^\top+\bar{K}_\bfx^\top \bar{K}^{-1}Y^\top) }} \leq Cn_0^{-c}
     }
\end{lemma}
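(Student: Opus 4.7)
The plan is to use the Schur complement decomposition from \cref{sec_schur} to express $\Delta$ explicitly in terms of quantities controlled by the event $\mathcal{E}$, then bound each piece directly. Recall that $K^{-1}K_\bfx = \bar{K}^{-1}\bar{K}_\bfx + T_1 + T_2$, so
\eq{
    \Delta = Y(T_1+T_2) = \frac{Y\bar{K}^{-1}U^\top}{c_1^2+mc_0c_2}\qa{ \begin{pmatrix}c_1\delta + c_2 a_* \\ -mc_0\delta + c_1 a_*\end{pmatrix} - \begin{pmatrix}c_2 & c_1\\ c_1 & -mc_0\end{pmatrix}U\bar{K}^{-1}\bar{K}_\bfx }\,.
}
Expanding $Y\bar{K}^{-1}U^\top = (Y\bar{K}^{-1}\bfu, Y\bar{K}^{-1}\bfv)$ and $U\bar{K}^{-1}\bar{K}_\bfx = (\bar{K}_\bfx^\top\bar{K}^{-1}\bfu,\bar{K}_\bfx^\top\bar{K}^{-1}\bfv)^\top$, the scalar $\Delta$ becomes a sum of products of quantities each of which is explicitly controlled by the defining conditions of $\mathcal{E}$ in \cref{def_E}.

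The main step is the bookkeeping of the resulting rates. On $\mathcal{E}$, I have $Y\bar{K}^{-1}\bfu \leq Cn_0^{c+1/2}$, $Y\bar{K}^{-1}\bfv \leq Cn_0^{c}$, $c_1 \leq Cn_0^{c+1/2}$, $c_2\leq Cn_0^c$, $c_0\leq Cn_0^c$, $\delta \leq Cn_0^{c-1/2}$, $\bar{K}_\bfx^\top\bar{K}^{-1}\bfu \leq Cn_0^c$, $\bar{K}_\bfx^\top\bar{K}^{-1}\bfv \leq Cn_0^{c-1/2}$, and $a,a_*\leq C$, together with the crucial denominator bound $c_1^2+mc_0c_2 \geq mc_0c_2 \geq cn_0^{1-2c}$ using $m\asymp n_0$. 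Tracking the largest contribution through each summand (for example, $m c_0\delta \leq Cn_0^{c+1/2}$ paired with $Y\bar{K}^{-1}\bfv$, or $c_1\bar{K}_\bfx^\top\bar{K}^{-1}\bfv$ paired with $Y\bar{K}^{-1}\bfu$) yields a uniform bound
\eq{
    \mathbf{1}_{\mathcal{E}}|\Delta| \leq Cn_0^{5c-1/2},
}
after absorbing the $(c_1^2+mc_0c_2)^{-1}$ factor. Taking $c$ small enough (e.g. $c<1/12$), this is $Cn_0^{-c'}$ for some $c'>0$.

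For \cref{eq_atypical_1}, I combine this with the bound $|y(\bfx)|\leq Cn_0^c$ on $\mathcal{E}$ to get
\eq{
    |\E[\mathbf{1}_\mathcal{E}\Delta y(\bfx)]| \leq \E[\mathbf{1}_\mathcal{E}|\Delta|\,|y(\bfx)|] \leq Cn_0^{6c-1/2}\,,
}
which is $Cn_0^{-c'}$ for sufficiently small $c$. For \cref{eq_atypical_2}, rather than tracking a further pointwise bound on $K_\bfx^\top K^{-1}Y^\top + \bar{K}_\bfx^\top\bar{K}^{-1}Y^\top$ on $\mathcal{E}$ (which would require analogous manipulations on the uncentered side), I apply Cauchy--Schwarz:
\eq{
    |\E[\mathbf{1}_\mathcal{E}\Delta(K_\bfx^\top K^{-1}Y^\top\!+\!\bar{K}_\bfx^\top\bar{K}^{-1}Y^\top)]|^2 \leq \E[\mathbf{1}_\mathcal{E}\Delta^2]\cdot\E[(K_\bfx^\top K^{-1}Y^\top\!+\!\bar{K}_\bfx^\top\bar{K}^{-1}Y^\top)^2]\,.
}
The first factor is at most $Cn_0^{10c-1}$ by the pointwise bound on $\Delta$. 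The second factor is $O(1)$ by the fourth-moment bounds $\E[(K_\bfx^\top K^{-1}Y^\top)^4],\E[(\bar{K}_\bfx^\top\bar{K}^{-1}Y^\top)^4]\leq C$ established inside the proof of \cref{lem_aypical_event}, combined with Jensen's inequality. Hence the left-hand side is $\leq Cn_0^{5c-1/2}$, again $Cn_0^{-c'}$ for small $c$.

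The main obstacle is making sure the $n_0$-rates of every summand arising from $T_1$ and $T_2$ are no worse than the denominator $mc_0c_2\asymp n_0^{1-2c}$ can absorb; this is why each event in \cref{def_E} was calibrated with the particular exponent it carries (e.g. why $Y\bar{K}^{-1}\bfu$ needs the extra $n_0^{1/2}$ relative to $Y\bar{K}^{-1}\bfv$, matching the mean-subtraction scale $a$). Once the calibration is verified, the remainder is mechanical and the polynomial decay $n_0^{-c'}$ follows.
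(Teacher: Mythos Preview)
Your proposal is correct and follows essentially the same strategy as the paper: both expand $\Delta = YT_1 + YT_2$ via the Schur complement formulas and bound each summand using the calibrated rates in the definition of $\mathcal{E}$, arriving at the pointwise bound $\mathbf{1}_\mathcal{E}|\Delta| \leq Cn_0^{5c-1/2}$.

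The one genuine difference is in how you handle \cref{eq_atypical_2}. The paper states that it suffices to show the pointwise bound $|K_\bfx^\top K^{-1}Y^\top + \bar{K}_\bfx^\top \bar{K}^{-1}Y^\top| \leq Cn_0^c$ on $\mathcal{E}$, and then multiplies this by the $\Delta$ bound; however, this pointwise bound is asserted but not actually verified in the paper's proof, and it is not directly encoded in $\mathcal{E}$. Your approach sidesteps this gap by applying Cauchy--Schwarz, pairing $\E[\mathbf{1}_\mathcal{E}\Delta^2]$ with the unconditional second moment of $K_\bfx^\top K^{-1}Y^\top + \bar{K}_\bfx^\top \bar{K}^{-1}Y^\top$, the latter being $O(1)$ by the fourth-moment estimates already established inside \cref{lem_aypical_event}. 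This is slightly different and arguably cleaner: it reuses an existing ingredient rather than requiring an additional pointwise control on $\mathcal{E}$, at the cost of a marginally weaker final exponent (which is irrelevant since any polynomial decay suffices).
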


\begin{proof}
    Obviously, $\absa{\E \qa{\mathbf{1}_\mathcal{E}\Delta y(\bfx)}}$ is bounded by $\absa{\E \qa{\Delta y(\bfx)} |\mathcal{E} }$, so we can remove the indicator $\mathbf{1}_\mathcal{E}$ from the expectations in Equations \eqref{eq_atypical_1} and \eqref{eq_atypical_2} by conditioning on $\mathcal{E}$.

    We want to show that conditional on $\mathcal{E}$ and for some $n_0$-independent constants $c>0$ and $C>0$, the bounds $\absa{\Delta}\leq Cn_0^{-2c}$, $\absa{y(\bfx)} \leq Cn_0^{c}$, and $\absa{(K_\bfx^\top K^{-1}Y^\top+\bar{K}_\bfx^\top \bar{K}^{-1}Y^\top)} \leq Cn_0^{c}$ hold. 
    
    Recall, ${a\deq \E_{Z} \sigma\pa{ \sqrt{\ntr(\Sigmatr)}Z }}$, ${a_*\deq \E_{Z} \sigma\pa{ \sqrt{\ntr(\Sigmate)}Z }}$, ${\bfv \deq 1/n_1 \bar{F}^\top 1_{n_1}}$, ${\bfu \deq a 1_m}$, ${U \deq [\bfu, \bfv]^\top}$, and ${C\deq \left(\begin{smallmatrix} 1 & 1\\1 & 0\end{smallmatrix}\right)}$.

    First, consider the term $\Delta = YT_1+YT_2$.
    To show that $\absa{\Delta}\leq Cn_0^{-2c}$ on $\mathcal{E}$, we write out $YT_1$ and $YT_2$ more explicitly:
    \al{\label{eq_YT1_explicit}
        YT_1 &= Y\bar{K}^{-1}U^\top \left(I_2+C U\bar{K}^{-1}U^\top\right)^{-1}P \\
        &= \frac{1}{c_1^2+m c_0c_2} \begin{pmatrix} Y\bar{K}^{-1} \bfu \\ Y\bar{K}^{-1} \bfv \end{pmatrix}^\top \begin{pmatrix}c_1\delta + c_2a_* \\ -m c_0 \delta + c_1a_*  \end{pmatrix} \\
        &=\frac{1}{c_1^2+mc_0 c_2}\left(Y\bar{K}^{-1} \bfu \cdot (c_1\delta + c_2a_*) +  Y\bar{K}^{-1} \bfv \cdot(-m c_0 \delta + c_1a_*) \right)\\
    }
    and
    \al{\label{eq_YT2_explicit}
        YT_2 &= -Y\bar{K}^{-1} U^\top \left(I_2+C U\bar{K}^{-1}U^\top\right)^{-1}C U \bar{K}^{-1}\bar{K}_\bfx \\
        &= \frac{1}{c_1^2+m c_0c_2} \begin{pmatrix} Y\bar{K}^{-1} \bfu \\ Y\bar{K}^{-1} \bfv  \end{pmatrix}^\top \begin{pmatrix}c_2& c_1\\ c_1  & -m c_0\end{pmatrix} \begin{pmatrix} \bfu^\top \bar{K}^{-1} \bar{K}_{\x} \\ \bfv^\top \bar{K}^{-1} \bar{K}_{\x} \end{pmatrix} \\
        &\leq \frac{1}{c_1^2+m c_0c_2} \Big( Y\bar{K}^{-1} \bfu \cdot ( c_2 \bfu^\top \bar{K}^{-1} \bar{K}_{\x} + c_1  \bfv^\top \bar{K}^{-1} \bar{K}_{\x}) \\
        &\qquad+ Y\bar{K}^{-1} \bfv \cdot(c_1 \bfu^\top \bar{K}^{-1} \bar{K}_{\x} - mc_0 \bfv^\top \bar{K}^{-1} \bar{K}_{\x} ) \Big).
    }
    Now using Equation \eqref{eq_YT1_explicit} and the definition of $\mathcal{E}$, we see that on $\mathcal{E}$
    \al{
        \absa{YT_1} &\leq \absa{\frac{1}{c_1^2+mc_0 c_2}} \pa{ \absa{Y\bar{K}^{-1} \bfu}  (\absa{c_1}\absa{\delta} + \absa{c_2} \absa{a_*}) +  \absa{Y\bar{K}^{-1} \bfv} (m\absa{c_0} \absa{\delta} + \absa{c_1}\absa{a_*}) } \\
        &\leq C n_0^{2c-1}\pa{ Cn_0^{c+1/2} ( Cn_0^{c} Cn_0^{c-1/2} + Cn_0^{c}) + Cn_0^{c}( Cn_0 Cn_0^{c} Cn_0^{c-1/2} + Cn_0^{c} ) } \\
        &\leq Cn_0^{5c-1/2}.
    }
    Since the $c$s used in the bounds above can be arbitrarily small, we may replace write $\absa{YT_1} \leq Cn_0^{c-1/2}$ for arbitrarily small $c>0$. Finally, a similar argument for Equation \eqref{eq_YT2_explicit} yields the bound
    \begin{align}
        \absa{YT_2} &\leq  \absa{\frac{1}{c_1^2+m c_0c_2}} \Big( \abs{Y\bar{K}^{-1} \bfu} \cdot ( \abs{c_2} \abs{\bfu^\top \bar{K}^{-1} \bar{K}_{\x}} + \abs{c_1} \abs{\bfv^\top \bar{K}^{-1} \bar{K}_{\x}})  \\
        & \qquad+\abs{Y\bar{K}^{-1} \bfv} \cdot(\abs{c_1} \abs{\bfu^\top \bar{K}^{-1} \bar{K}_{\x}} - m \abs{c_0} \abs{\bfv^\top \bar{K}^{-1} \bar{K}_{\x}} ) \Big) \\
        & \leq C n_0^{2c-1} \cdot\Big(Cn_0^{c+1/2} \cdot(C n_0^c Cn_0^c+Cn_0^{c+1/2} Cn_0^{c-1/2})  \\
        &\qquad+Cn_0^c \cdot( Cn_0^{c+1/2} Cn_0^c+Cn_0 Cn_0^c Cn_0^{c-1/2})\Big) \\
        & \leq Cn_0^{2c-1} \cdot( C n_0^{3c+1/2} + Cn_0^{3c+1/2}) \\
        &\leq C n_0^{5c-1/2}.
    \end{align}
    and as before since $c$ can be chosen arbitrarily small  we can write $\absa{YT_2} \leq C n_0^{c-1/2}$.
\end{proof}

\subsection{Gaussian equivalents}
\label{sec_gaussian}
Before describing how (in the high-dimensional limit) Gaussian equivalents can be used to compute the test error we make the following observation: the limiting test error is invariant to the centering of the activation function across the training and test distributions. That is, by \cref{sec:mean_zero} the limiting test error is unchanged by replacing,
\eq{
    \bar{F}_{ij} \deq F_{ij} - \E_{z \sim \mathcal{N}(0, s)} \sigma\pa{z}  \quad\text{and}\quad \bar{f}_{i} \deq f_{i} - \E_{z \sim \mathcal{N}(0, s_*)} \sigma\pa{ z }
}
for all $i$ and $j$.

With this simplification in mind, the proof of \cref{thm:main_b_v} relies on the concept of Gaussian equivalents and the linearization analysis developed in \citet{adlam2020neural, adlam2020understanding, adlam2019random}, which we briefly review here, though we refer the reader to these works for a more detailed description. To proceed, we first define the moment-matched Gaussian linearizations,
\begin{align}
\bar{F} \to & \sqrt{\frac{\bar{\rho}}{n_0}} W X + \sqrt{\bar{\eta} - \bar{\zeta}} \Theta \label{eq_barFlin}\\
\bar{f} \to &\sqrt{\frac{\bar{\rho}_*}{n_0}} W \bfx + \sqrt{\bar{\eta}_* - \bar{\zeta}_*} \theta \label{eq_barflin_mean_zero}
\end{align}
where $\bar{f} \deq \bar{\fs}(W \bfx/\sqrt{n_0})$ is the random feature representation of the test point $\bfx$ and $y \deq \sqrt{\frac{1}{n_0}} \beta^\top \x$ is its corresponding label, which we assume has no additive noise (including noise on the test labels merely shifts by the total error by an irreducible additive constant). The centered parameters\footnote{With some abuse of notation, we use $\bar{\sigma}(\cdot)$ to denote the centering with respect to the training scale $s$ and $s_*$ for unasterisked and asterisked quantities respectively.} are defined as, $\bar{\zeta}$ and $\bar{\zeta}_*$ as $\bar{\zeta} = s \bar{\rho}$ and $\bar{\zeta}_* = s_* \bar{\rho}_*$ with,
\begin{alignat}{4}
\bar{\eta} &\deq \mathbb{E}_{z\sim \cN(0,s)}[\bar{\sigma}(z)^2]\,,\;\; & \bar{\rho} &\deq (\frac{1}{s}\mE_{z\sim \cN(0,s)} [z \bar{\sigma}(z)])^2\,,\\
\bar{\eta}_* &\deq \mathbb{E}_{z\sim \cN(0,s_*)}[\bar{\sigma}(z)^2]\,,\;\; & \bar{\rho}_* &\deq (\frac{1}{s_*}\mE_{z\sim \cN(0,s_*)} [z \bar{\sigma}(z)])^2\,.
\end{alignat}
The new objects $\Theta$, $\theta$ are matrices of the appropriate shapes with i.i.d. standard Gaussian entries independent of the other random variables under consideration. The constants $\bar{\eta}$, $\bar{\zeta}$, $\bar{\eta}_*$, $\bar{\zeta}_*$ are chosen so that the mixed moments up to second order are the same for the original and linearized matrices. The Gaussian equivalents defined are essentially constructed via a Taylor expansion of the nonlinearity $\bar{\sigma}(\cdot)$. The explicit calculations defining these expressions can be found via the Gaussian moment matching technique in \citet{adlam2019random}. 

Having appropriately linearized the random feature matrices, we can map back to the definition of the original activation functions
by recalling that the variables $\zeta$ and $\zetas$ are defined in \cref{sec:main_thms} as $\zeta = s \rho$ and $\zetas = s_* \rhos$, and
\begin{alignat}{4}
\eta &\deq \bar{\eta} = \E_{z\sim \cN(0,s)}[(\sigma(z)-\mE_{z \sim \cN(0,s)}[\sigma(z)])^2] =  \mathbb{V}_{z\sim \cN(0,s)}[\sigma(z)]\\
\rho &\deq \bar{\rho} = (\frac{1}{s} \mE_{z\sim \cN(0,s)} [z (\sigma(z)-\mE_{z \sim \cN(0,s)}[\sigma(z)])])^2 = (\frac{1}{s} \mE_{z\sim \cN(0,s)} [z (\sigma(z)])^2 \\
\etas &\deq \bar{\etas} = \E_{z\sim \cN(0,s_*)}[(\sigma(z)-\mE_{z \sim \cN(0,s_*)}[\sigma(z)])^2] = \mathbb{V}_{z\sim \cN(0,s_*)}[\sigma(z)] \\
\rhos &\deq \bar{\rhos} = (\frac{1}{s_*} \mE_{z\sim \cN(0,s_*)} [z (\sigma(z)-\mE_{z \sim \cN(0,s)}[\sigma(z)])])^2 = (\frac{1}{s_*} \mE_{z\sim \cN(0,s_*)} [z (\sigma(z)])^2\,.
\end{alignat}
by using the definition of the centering. Effectively, this is equivalent to using the linearizations,
\begin{align}
F \to \; \bar{F} & \to \sqrt{\frac{\rho}{n_0}} W X + \sqrt{\eta - \zeta} \Theta \label{eq_Flin}\\
f \to \; \bar{f} & \to \sqrt{\frac{\rhos}{n_0}} W \bfx + \sqrt{\eta_* - \zeta_*} \theta \label{eq_flin}
\end{align}
directly to compute the test error.

Simply by definition, the teacher function is exactly linear,
\begin{align}
    Y = & \sqrt{\frac{1}{n_0}} \beta^\top X + \e \label{eq_Ylin} \\
    y = & \sqrt{\frac{1}{n_0}} \beta^\top \bfx \label{eq_ylin}\,.
\end{align}
We emphasize that the restriction to linear teacher functions is simply a matter of convenience and simplicity---nonlinear teacher neural networks can be studied by means of an analogous linearization process, as discussed in \citet{adlam2020neural}, which merely requires introducing additional moment-matching constants and additional i.i.d. standard Gaussian terms $\theta$, and $\theta_y$.

In the high-dimensional limit the bulk statistics we compute defining the error, bias, and variance are invariant to the above replacements by linearized Gaussian equivalents. We remark that further intuition for these linearized information-plus-noise replacements can be gathered from the universality results of \citet{merlevede2013limiting, banna2020clt}. To summarize briefly, the final expressions we compute are tracial (nonlinear) functions of several random matrices. A large body of universality results stemming from \citet{banna2020clt, erdos2019matrix} show that in the limit it is sufficient to match them to an equivalent tracial functional of a (linearized) rational function of random matrices, whose moments match their nonlinear counterparts. 

More specifically, two basic trace objects arise in the calculations, which take the form
\begin{equation}
    \tr(AB) \label{eq_con_1}\,,\quad\text{and}\quad \tr\pa{A\frac{1}{B-zI}}\,.
\end{equation}
For the first case, $\tr(AB)$, if the random matrices $A$ and $B$ are independent, its asymptotics can be understood via concentration of measure arguments. Conditionally on $A$, it is sufficient to replace $B$ with an equivalent matrix designed to match low-order moments, at the expense of an error that vanishes asymptotically. The second case, $\tr\pa{A\frac{1}{B-zI}}$, is a bit more involved. Our arguments proceed by 1) using the linearized matrices in \cref{eq_Flin,,eq_flin} to express the trace object as a rational function of i.i.d. Gaussian matrices, and then 2) using the linear pencil method\footnote{This is essentially an iterative application of the Schur complement formula.} to express it as the trace of a large (inverted) block matrix \citep{helton2018applications,mingo2017free}. The reason the linearization over $B$ preserves the asymptotic statistics even for general $A$ with correlated entries stems from the matrix Dyson equation~\citep{erdos2019matrix}. 

\subsection{Decomposition of the test loss}
\label{sec:sec_exact_asymptotics}
Recall the expression for the test loss in \cref{eq:Etest},
\eq{
\Etest =  E_1 + E_2 + E_3
}
with
\begin{align}
E_1 &= \mathbb{E}_{(\bfx,\beta,\e)}y(\bfx)^2 =\mathbb{E}_{(\bfx,\e)}\tr(y(\bfx)y(\bfx)^\top)\\
E_2 &= -2\mathbb{E}_{(\bfx,\beta,\e)}(K_\bfx^\top K^{-1}Y) y(\bfx) = -2\mathbb{E}_{(\bfx,\e)}\tr(K_\bfx^\top K^{-1}Y^\top y(\bfx))\\
E_3 &= \mathbb{E}_{(\bfx,\beta,\e)}(K_\bfx^\top K^{-1}Y^\top )^2= \mathbb{E}_{(\bfx,\e)}\tr(K_\bfx^\top K^{-1}Y^\top Y K^{-1} K_\bfx)\,,
\end{align}
where the kernels $K = K(X,X)$ and $K_\bfx = K(X,\bfx)$ are given by,
\eq{\label{eq_K_and_Kx}
    K = \frac{F^\top \!F}{n_1} + \gamma I_m\qquad\text{and}\qquad K_\bfx = \frac{1}{n_1} F^\top f\,.
}
Using the cyclicity and linearity of the trace, the expectation over $\bfx$ requires the computation of
\eq{
\mathbb{E}_{\bfx}K_\bfx K_\bfx^\top \,,\qquad \mathbb{E}_{\bfx} y(\bfx)K_\bfx^\top\,,\qquad \mathbb{E}_{\bfx} y(\bfx) y(\bfx)^\top\,.
}
As described in \cref{sec_gaussian}, without loss of generality we consider the case of a linear teacher, and ~\cref{eq_ylin,,eq_flin} read
\eq{\label{eq_sub_y_f}
y = \frac{
1}{\sqrt{n_0}} \beta^\top \bfx\,\qquad\text{and}\qquad f \to f^{\text{lin}} = \frac{\sqrt{\rhos}}{\sqrt{n_0}} W \bfx + \sqrt{\eta_* - \zeta_*} \theta\, .
}
The expectations over $\bfx$ are now trivial and we readily find,
\begin{align}
   \mathbb{E}_\bfx K_\bfx K_\bfx^\top & = \frac{1}{n_1^2}F^\top\pa{\frac{\rhos}{n_0} W \Sigmate W^\top + (\eta_* - \zeta_*) I_{n_1}}F\\
\mathbb{E}_{\bfx} y(\bfx)K_\bfx^\top & = \frac{\sqrt{\rhos}}{n_0n_1  } \beta^\top \Sigmate W^\top F\\
\mathbb{E}_{\bfx} y(\bfx)y(\bfx)^\top &= \frac{1}{n_0} \beta \Sigmate \beta^\top
\end{align}
One may interpret the substitution in \cref{eq_sub_y_f} as a tool to calculate the expectations above to leading order, which generates terms like \cref{eq_con_1}. Next, we recall the definition, $Y = \beta^\top X / \sqrt{n_0} + \e $, and, as above, we consider the leading order behavior with respect to the random variables $\beta$ to find
\begin{align}
    \E_{\beta,\e}\qa{Y^\top Y} &= \frac{1}{n_0}X^\top X + \sigma_{\e}^2 I_m\\
    \E_{\beta,\e}\qa{Y^\top \mathbb{E}_{\bfx} y(\bfx)K_\bfx^\top} &= \frac{\sqrt{\rhos}}{n_0^{3/2}n_1} X^\top \Sigmate  W^\top F\,.
\end{align}

Putting these pieces together, we have
\begin{align}
    E_1 & = \frac{\tr(\Sigmate)}{n_0} \label{eqn:E1}\\
    E_2 &= E_{21}\label{eqn:E2}\\
    E_3 &= E_{31} + E_{32} \label{eqn:E3}\,,
\end{align}
where,
\begin{align}
E_{21} & = -2 \frac{\sqrt{\rhos}}{n_0^{3/2}n_1} \E \tr \left(X^\top \Sigmate  W^\top FK^{-1}\right)\label{eq_E21}\\
E_{31} &= \sigma_{\e}^2 \E\tr\left(K^{-1} \Sigma_3 K^{-1}\right)\label{eq_E31}\\
E_{32} &= \frac{1}{n_0} \E \tr\left(K^{-1} \Sigma_3 K^{-1} X^\top X\right)\label{eq_E32}
\end{align}
and,
\eq{
\Sigma_3 = \frac{\rhos}{n_0 n_1^2}F^\top W \Sigmate  W^\top F + \frac{\eta_*-\zeta_*}{n_1^2} F^\top F\,.
}

\subsection{Decomposition of the bias and variance}
Note that it is sufficient to calculate the bias term since, given the total test loss, the total variance can be obtained as $\var{\Sigmate} = E_{\Sigmate}-\bias{\Sigmate}$. Following the total multivariate bias-variance decomposition of \citet{adlam2020understanding}, for each random variable in question we introduce an i.i.d. copy of it denoted by either the subscript $1$ or $2$. We can then write,
\begin{align}
\bias{\Sigmate} &= \mathbb{E}_{(\bfx,y)} (y - \mathbb{E}_{(W, X,\e)}\hat{y}(\bfx; W,X,\e))^2\\
&= \mathbb{E}_{(\bfx,y)}\mathbb{E}_{(W_1,X_1,\e_1)}\mathbb{E}_{(W_2,X_2,\e_2)}(y - \hat{y}(\bfx; W_1,X_1,\e_1))(y - \hat{y}(\bfx; W_2,X_2,\e_2))\\
&= \frac{\tr(\Sigmate)}{n_0} + E_{21} + H_{000}\,,
\end{align}
where an expression for $E_{21}$ was given previously and $H_{000}$ satisfies

\begin{align}
H_{000} &= \mathbb{E}\hat{y}(\bfx; W_1,X_1,\e_1)\hat{y}(\bfx; W_2,X_2,\e_2)\,,
\end{align}
where the expectations are over $\bfx, W_1, X_1, \e_1, W_2,X_2$, and $\e_2$. Recalling the definition of $\hat{y}$,
\eq{
\hat{y}(\bfx; W, X, \e) \deq Y(X,\e) K(X,X;W)^{-1}K(X,\bfx;W)\,
}
and the techniques described in the previous section, it is straightforward to analyze the above term. First note we can write,
\begin{align}
\mathbb{E}_\bfx K(X_1,\bfx;W_1)K(\bfx, X_2;W_2) & =  \frac{\rhos}{n_0 n_1^2}F_{11}^\top W_1 \Sigmate W_2^\top F_{22}
\end{align}
since the $f$ linearizations use different sources of auxiliary randomness. Here we have defined $F_{11} \equiv F(W_1, X_1)$ and $F_{22} \equiv F(W_2,X_2)$. Now we proceed to calculate $H_{000}$ as
\begin{align}
H_{000} &= \mathbb{E}\hat{y}(\bfx; W_1,X_1,\e)\hat{y}(\bfx; W_2,X_2,\e_2)\\
&= \mathbb{E} K(\bfx,X_2;W_2)K(X_2,X_2;W_2)^{-1} Y(X_2,\e_2)^\top Y(X_1,\e_1)K(X_1,X_1;W_1)^{-1}K(X_1,\bfx;W)\\
&= \mathbb{E} \tr \big(K(X_2,X_2;W_2)^{-1} X_2^\top X_1 K(X_1,X_1;W_1)^{-1}K(X_1,\bfx;W)K(\bfx,X_2;W_2)\big)\\
&= \frac{\rhos}{n_0^2 n_1^2} \E \tr \left(K_{22}^{-1} X_2^\top X_1 K_{11}^{-1} F_{11}^\top W_1 \Sigmate W_2^\top F_{22} \right) \\
&\equiv E_4\,,
\end{align}
where in the second-to-last line we have defined $K_{11} \equiv K(X_1,X_1;W_1)$ and $K_{22} \equiv K(X_2,X_2;W_2)$.
\subsection{Summary of linearized trace terms}
\label{app:summary_lin}
We now summarize the requisite terms needed to compute the total test error, bias, and variance after using cyclicity of the trace to rearrange several of them. In the following, we slightly change notation in order to make explicit the dependence on the population covariance matrices $\Sigmatr$ and $\Sigmate$. To be specific, whereas above we assumed that the columns of $X_1$ and $X_2$ were drawn from multivariate Gaussians with covariance $\Sigma$, below we assume that they are drawn from multivariate Gausssians with identity covariance. This change is equivalent to replaceing $X_1 \to \Sigmatr^{1/2} X_1$ and $X_2 \to \Sigmatr^{1/2} X_2$ in the above expressions. We use this definition so that $X_1$, $X_2$, $W_1$, $W_2$, and $\Theta$ all have i.i.d. standard Gaussian entries. From the previous computations, we can now write the requisite terms as,
\begin{eqnarray}
    \Sigma_3 &=& \frac{\rhos}{n_0 n_1^2}F_{11}^\top W_1 \Sigmate  W_1^\top F_{11} + \frac{\eta_*-\zeta_*}{n_1^2} F_{11}^\top F_{11}\\
    E_{21} &=& -2 \frac{\sqrt{\rhos}}{n_0^{3/2}n_1} \tr \left(X_1^\top \Sigmatr ^{1/2} \Sigmate  W_1^\top F_{11}K_{11}^{-1}\right)\\
    E_{31} &=& \sigma_{\e}^2 \tr\left(K_{11}^{-1}\Sigma_3 K_{11}^{-1}\right)\\
    E_{32} &=& \frac{1}{n_0} \tr\left(K_{11}^{-1} \Sigma_3 K_{11}^{-1}X_1^\top \Sigmatr X_1\right)\\
    E_4 &=& \frac{\rhos}{n_0^2 n_1^2} \tr\left(F_{22}K_{22}^{-1}X_2^\top\Sigmatr X_1 K_{11}^{-1}F_{11}^\top W_1 \Sigmate W_2^\top\right)\\
    \err{\Sigmate} &=& \frac{\tr(\Sigmate)}{n_0} + E_{21} + E_{31} + E_{32} \label{eqn:err_expr}\\
    \bias{\Sigmate} &=& \frac{\tr(\Sigmate)}{n_0} + E_{21} + E_{4} \label{eqn:bias_expr} \\
    \var{\Sigmate} &=& \err{\Sigmate}-\bias{\Sigmate} \label{eqn:var_expr}
 \end{eqnarray}
 
 In the remainder of this section we use the machinery of operator-valued free probability \citep{mingo2017free} and a series of lengthy algebraic computations to compute the limiting tracial expressions in $E_{21}, E_{31}, E_{32}, E_{4}$, from which the total test error, bias, and variance can be reconstructed.

\subsection{Calculation of error terms}
\label{app:calculation}
To compute the test error, bias, and total variance, we need to evaluate the asymptotic trace objects appearing in the expressions for $E_{21}$, $E_{31}$, $E_{32}$, and $E_4$, defined in the previous section. As these expressions are essentially rational functions of the random matrices $X$, $W$, $\Theta$, $\Sigmatr$, and $\Sigmate$, these computations can be accomplished by constructing a linear pencil~\citep{far2006spectra} and using the theory of operator-valued free probability~\citep{mingo2017free}. These techniques and their application to problems of this type have been well-established elsewhere~\citep{adlam2019random,adlam2020neural,adlam2020understanding}, we only sketch the mathematical details, referring the reader to the literature for a more pedagogical overview. Instead, we focus on presenting the details of the requisite calculations.

Relative to the prior work of \citet{adlam2020neural,adlam2020understanding}, the main challenge in the current setting is generalizing the calculations to include an arbitrary training covariance matrix $\Sigmatr$. This generalization is facilitated by the general theory of operator-valued free probability, and in particular through the subordinated form of the operator-valued self-consistent equations that we first present in \cref{eqn:GK_SCE}. The form of this equation enables the simple computation of the operator-valued R-transform of the remaining random matrices, $W$, $X$, and $\Theta$, which are all i.i.d. Gaussian and can therefore be obtained simply by using the methods of \citet{far2006spectra}. The remaining complication amounts to performing the trace in \cref{eqn:GK_SCE}, which becomes an integral over the LJSD $\mu$ in the limit. While this might in general lead to a complicated coupling of many transcendental equations, it turns out that the trascendentality can be entirely factored into a single scalar fixed-point equation, whose solution we denote by $x$ (see \cref{eqn:x}), and the remaining equations are purely algebraic given $x$. To facilitate this particular simplification, it is necessary to first compute all of the entries in the operator-valued Stieltjes transform of the kernel matrix $K$, which we do in \cref{sec:Kinv}. Using these results, we compute the remaining error terms in the subsequent sections.

As a matter of notation, note that throughout this entire section whenever a matrix $X$, $X_1$, or $X_2$ appears it is composed of i.i.d. standard Gaussian entries as in \cref{app:summary_lin}. This differs from the notation of the main paper, but we follow this prescription to ease the already cumbersome presentation. This definition of $X$ allows us to explicitly extract and represent the training covariance $\Sigmatr$ in our calculations.
\allowdisplaybreaks
\subsubsection{$K^{-1}$}
\label{sec:Kinv}
Define the block matrix $\QK$ as,
\begin{equation}
 \QK = \left(
\begin{array}{cccccc}
 I_m & \frac{\sqrt{\eta -\zeta } \Theta ^\top}{\gamma  \sqrt{n_1}} & \frac{\sqrt{\rho } X^\top}{\gamma  \sqrt{n_0}} & 0 & 0 & 0 \\
 -\frac{\Theta  \sqrt{\eta -\zeta }}{\sqrt{n_1}} & I_{n_1} & 0 & 0 & -\frac{\sqrt{\rho } W}{\sqrt{n_1}} & 0 \\
 0 & 0 & I_{n_0} & -\Sigmatr^{1/2} & 0 & 0 \\
 0 & -\frac{W^\top}{\sqrt{n_1}} & 0 & I_{n_0} & 0 & 0 \\
 0 & 0 & 0 & 0 & I_{n_0} & -\Sigmatr^{1/2} \\
 -\frac{X}{\sqrt{n_0}} & 0 & 0 & 0 & 0 & I_{n_0} \\
\end{array}
\right)
\,.\end{equation}
Then block matrix inversion (i.e. repeated applications of the Schur complement formula) shows that,
\begin{eqnarray}
\GK_{1,1} &=& \gamma  \,\ntr(K^{-1})\\
\GK_{2,2} &=& \gamma  \,\ntr(\hK^{-1})\\
\GK_{3,3} &=& \GK_{6,6} = 1-\frac{\sqrt{\rho } \,\ntr\left(\Sigmatr^{1/2} W^\top F K^{-1} X^\top\right)}{\sqrt{n_0} n_1}\\
\GK_{4,3} &=& \GK_{6,5} = \,\ntr(\Sigmatr^{1/2})-\frac{\sqrt{\rho } \,\ntr\left(\Sigmatr W^\top F K^{-1} X^\top\right)}{\sqrt{n_0} n_1}\\
\GK_{5,3} &=& \GK_{6,4} = \frac{\gamma  \sqrt{\rho } \,\ntr\left(\Sigmatr^{1/2} W^\top \hK^{-1} W\right)}{n_1}\\
\GK_{6,3} &=& \frac{\gamma  \sqrt{\rho } \,\ntr\left(\Sigmatr W^\top \hK^{-1} W\right)}{n_1}\\
\GK_{3,4} &=& \GK_{5,6} = -\frac{\sqrt{\rho } \,\ntr\left(F K^{-1} X^\top W^\top\right)}{\sqrt{n_0} n_1 \psi }\\
\GK_{4,4} &=& \GK_{5,5} = 1-\frac{\sqrt{\rho } \,\ntr\left(\Sigmatr^{1/2} W^\top F K^{-1} X^\top\right)}{\sqrt{n_0} n_1}\\
\GK_{5,4} &=& \frac{\gamma  \sqrt{\rho } \,\ntr\left(\hK^{-1} W W^\top\right)}{n_1 \psi }\\
\GK_{3,5} &=& \GK_{4,6} = -\frac{\sqrt{\rho } \,\ntr\left(\Sigmatr^{1/2} X K^{-1} X^\top\right)}{n_0}\\
\GK_{4,5} &=& -\frac{\sqrt{\rho } \,\ntr\left(\Sigmatr X K^{-1} X^\top\right)}{n_0}\\
\GK_{3,6} &=& -\frac{\sqrt{\rho } \,\ntr\left(K^{-1} X^\top X\right)}{n_0 \phi }\,,
\end{eqnarray}
where $\GK_{i,j}$ denotes the normalized trace of the $(i,j)$-block of the inverse of $\QK$, and we have defined $\hK = \frac{1}{n_1}FF^\top + \gamma I_{n_1}$.

We aim to compute the limiting values of these trace terms as $n_0,n_1,m \to \infty$, as they will be related to the error terms of interest. Both here and in the sequel, to ease the already cumbersome presentation, we use $G$ to denote the limiting values as well as the non-limiting values and we refrain from explicitly denoting the limit operation itself, noting its existing can be inferred from context.

To proceed, recall that the asymptotic block-wise traces of the inverse of $\QK$ can be determined from its operator-valued Stieltjes transform~\citep{mingo2017free}. The simplest way to apply the results of \citet{far2006spectra,mingo2017free} is to augment $\QK$ to form the the self-adjoint matrix $\bQK$,
\begin{equation}
\label{eqn:Qbar}
    \bQK = \left(\begin{array}{cc} 0 & {[\QK]}^\top\\
\QK & 0 \end{array}\right)\,,\\
\end{equation}
and observe that we can write $\bQK$ as,
\begin{equation}
\begin{split}
\bQK &= \bar{Z} + \bQK_{W,X,\Theta} + \bQK_\Sigma\\
&= \left(\begin{array}{cc} 0 & Z^\top\\
Z & 0 \end{array}\right) + \left(\begin{array}{cc} 0 & {[\QK_{W,X,\Theta}]}^\top\\
\QK_{W,X,\Theta} & 0 \end{array}\right) + \left(\begin{array}{cc} 0 & {[\QK_\Sigma]}^\top\\
\QK_{\Sigma} & 0 \end{array}\right)\,,
\end{split}
\end{equation}
where $Z = I_{m+4 n_0+n_1}$, and,
\begin{eqnarray}
 \QK_{W,X,\Theta} &=&\left(
\begin{array}{cccccc}
 0 & \frac{\sqrt{\eta -\zeta } \Theta ^\top}{\gamma  \sqrt{n_1}} & \frac{\sqrt{\rho } X^\top}{\gamma  \sqrt{n_0}} & 0 & 0 & 0 \\
 -\frac{\Theta  \sqrt{\eta -\zeta }}{\sqrt{n_1}} & 0 & 0 & 0 & -\frac{\sqrt{\rho } W}{\sqrt{n_1}} & 0 \\
 0 & 0 & 0 & 0 & 0 & 0 \\
 0 & -\frac{W^\top}{\sqrt{n_1}} & 0 & 0 & 0 & 0 \\
 0 & 0 & 0 & 0 & 0 & 0 \\
 -\frac{X}{\sqrt{n_0}} & 0 & 0 & 0 & 0 & 0 \\
\end{array}
\right)
\\
\QK_{\Sigmatr} &=&\left(
\begin{array}{cccccc}
 0 & 0 & 0 & 0 & 0 & 0 \\
 0 & 0 & 0 & 0 & 0 & 0 \\
 0 & 0 & 0 & -\Sigmatr^{1/2} & 0 & 0 \\
 0 & 0 & 0 & 0 & 0 & 0 \\
 0 & 0 & 0 & 0 & 0 & -\Sigmatr^{1/2} \\
 0 & 0 & 0 & 0 & 0 & 0 \\
\end{array}
\right)
\,.\end{eqnarray}

Note that we have separated the i.i.d. Gaussian matrices $W,X,\Theta$ from the constant terms and from the $\Sigma$-dependent terms. Denote by $\bGK$ the block matrix
\begin{equation}
     \bGK = \left(\begin{array}{cc} 0 & {[\GK]}^\top\\
\GK & 0 \end{array}\right)\,,\\
\end{equation}
and by $\bGK_{\Sigma}$ the operator-valued Stieltjes transform of $\bQK_\Sigma$. Using the subordinated form~\citep{mingo2017free} of the self-consistent equation for $\bGK$ and the defining equation for $\bGK_{\Sigma}$, the operator-valued theory of free probability shows that the limiting asymptotic Stieltjes transforms satisfy,
\begin{equation}
\label{eqn:GK_SCE}
\begin{split}
    \bGK &= \bGK_{\Sigma}(\bar{Z} - \bRK_{W,X,\Theta}(\bGK))\\
    &= \id\otimes\ntr\left(\bar{Z} - \bRK_{W,X,\Theta}(\bGK)-\bQK_\Sigma\right)^{-1}\,,
\end{split}
\end{equation}
where $\bRK_{W,X,\Theta}(\bGK)$ is the operator-valued R-transform of $\bQK_{W,X,\Theta}$. Here the normalized trace $\ntr$ acts on the constituent blocks, and the identity operator $\id$ acts on the space of $12\times 12$ matrices. As described in \citet{adlam2020neural,adlam2020understanding}, since $\bQK_{W,X,\Theta}$ is a block matrix whose blocks are i.i.d. Gaussian matrices (and their transposes), an explicit expression for $\bRK_{W,X,\Theta}(\bGK)$ can be obtained through a covariance map, denoted by $\eta$~\citep{far2006spectra}. In particular, $\eta: M_d(\mathbb{C})\to M_d(\mathbb{C})$ is defined by,
\eq{\label{eqn_Deqn}
[\eta(D)]_{ij} = \sum_{kl} \sigma(i,k;l,j) \alpha_k D_{kl} \,,
}
where $\alpha_k$ is dimensionality of the $k$th block and $\sigma(i,k;l,k)$ denotes the covariance between the entries of the blocks $ij$ block of $\bQK_{W,X,\Theta}$ and entries of the $kl$ block of $\bQK_{W,X,\Theta}$. Here $d=12$ is the number of blocks. When the constituent blocks are i.i.d. Gaussian matrices and their transposes, as is the case here, then $\bRK_{W,X,\Theta} = \eta$~\citep{mingo2017free}, and therefore the entries of $\bRK_{W,X,\Theta}$ can be read off from \cref{eqn:Qbar}. To simplify the presentation, we only report the entries of $\bRK_{W,X,\Theta}(\GK)$ that are nonzero, given the specific sparsity pattern of $\GK$. The latter follows from \cref{eqn:GK_SCE} in the manner described in \citet{mingo2017free,far2006spectra}. Practically speaking, the sparsity pattern can be obtained by iterating an \cref{eqn:GK_SCE}, starting with an ansatz sparsity pattern determined by $\bar{Z}$, and stopping when the iteration converges to a fixed sparsity pattern. In this case (and all cases that follow in the subsequent sections), the number of necessary iterations is small and can be done explicitly. We omit the details and instead simply report the following results for the nonzero entries:
\begin{equation}
    \bRK_{W,X,\Theta}(\bGK) = \left(\begin{array}{cc} 0 & {\RK_{W,X,\Theta}}(\GK)^\top\\
\RK_{W,X,\Theta}(\GK) & 0 \end{array}\right)\,,
\end{equation}
where,
\begin{eqnarray}
{[\RK_{W,X,\Theta}(\GK)]}_{1,1} &=&\frac{\GK_{2,2} (\zeta -\eta )-\sqrt{\rho } \GK_{6,3}}{\gamma }\\
{[\RK_{W,X,\Theta}(\GK)]}_{2,2} &=&\frac{\psi  \GK_{1,1} (\zeta -\eta )}{\gamma  \phi }+\sqrt{\rho } \psi  \GK_{4,5}\\
{[\RK_{W,X,\Theta}(\GK)]}_{4,5} &=&\sqrt{\rho } \GK_{2,2}\\
{[\RK_{W,X,\Theta}(\GK)]}_{6,3} &=&-\frac{\sqrt{\rho } \GK_{1,1}}{\gamma  \phi }\,,
\end{eqnarray}and the remaining entries of $\RK_{W,X,\Theta}(\GK)$ are zero. Owing to the large degree of sparsity, the matrix inverse in \cref{eqn:GK_SCE} can be performed explicitly and yields relatively simple expressions that depend on the entries of $\GK$ and the matrix $\Sigmatr$. For example, the $(9,6)$ entry of the self-consistent equation reads,
\al{
\GK_{3,6} &\stackrel{\phantom{n_0\to \infty}}{=}\left[ \id\otimes\ntr\left(\bar{Z} - \bRK_{W,X,\Theta}(\bGK)-\bQK_\Sigma\right)^{-1}\right]_{9,6}\\
&\stackrel{\phantom{n_0\to \infty}}{=} \ntr\Big[\sqrt{\rho } \GK_{1,1}\big(-\Sigmatr  \rho  \GK_{1,1} \GK_{2,2}-\gamma  \phi I_{n_0}\big)^{-1}\Big]\\
&\stackrel{n_0\to \infty}{=} \mathbb{E}_\mu\Big[\frac{\sqrt{\rho } \GK_{1,1}}{-\lambda  \rho  \GK_{1,1} \GK_{2,2}-\gamma  \phi }\Big]\,,
}
where to compute the asymptotic normalized trace we moved to an eigenbasis of $\Sigmatr$ and recalled the definition of the LJSD $\mu$. The remaining entries of the \cref{eqn:GK_SCE} can be obtained in a similar manner and together yield the following set of coupled equations for the entries of $\GK$,
\begin{eqnarray}
\GK_{1,1} &=&-\frac{\gamma }{-\GK_{2,2} (-\zeta +\eta +\rho )+\rho  \GK_{2,2}-\sqrt{\rho } \GK_{6,3}-\gamma }\\
\GK_{2,2} &=&\frac{\gamma  \phi }{\psi  \GK_{1,1} (\eta -\zeta )-\gamma  \phi  \left(\sqrt{\rho } \psi  \GK_{4,5}-1\right)}\\
\GK_{3,6} &=&\mathbb{E}_\mu\Big[\frac{\sqrt{\rho } \GK_{1,1}}{-\lambda  \rho  \GK_{1,1} \GK_{2,2}-\gamma  \phi }\Big]\\
\GK_{4,5} &=&\mathbb{E}_\mu\Big[\frac{\lambda  \sqrt{\rho } \GK_{1,1}}{-\lambda  \rho  \GK_{1,1} \GK_{2,2}-\gamma  \phi }\Big]\\
\GK_{5,4} &=&\mathbb{E}_\mu\Big[-\frac{\gamma  \sqrt{\rho } \phi  \GK_{2,2}}{-\lambda  \rho  \GK_{1,1} \GK_{2,2}-\gamma  \phi }\Big]\\
\GK_{6,3} &=&\mathbb{E}_\mu\Big[-\frac{\gamma  \lambda  \sqrt{\rho } \phi  \GK_{2,2}}{-\lambda  \rho  \GK_{1,1} \GK_{2,2}-\gamma  \phi }\Big]\\
\GK_{3,4} &=&\GK_{5,6} = \mathbb{E}_\mu\Big[\frac{\sqrt{\lambda } \rho  \GK_{1,1} \GK_{2,2}}{-\lambda  \rho  \GK_{1,1} \GK_{2,2}-\gamma  \phi }\Big]\\
\GK_{3,5} &=&\GK_{4,6} = \mathbb{E}_\mu\Big[-\frac{\GK_{1,1} \sqrt{\lambda  \rho }}{\lambda  \rho  \GK_{1,1} \GK_{2,2}+\gamma  \phi }\Big]\\
\GK_{4,3} &=&\GK_{6,5} = \mathbb{E}_\mu\Big[-\frac{\gamma  \sqrt{\lambda } \phi }{-\lambda  \rho  \GK_{1,1} \GK_{2,2}-\gamma  \phi }\Big]\\
\GK_{5,3} &=&\GK_{6,4} = \mathbb{E}_\mu\Big[\frac{\gamma  \phi  \GK_{2,2} \sqrt{\lambda  \rho }}{\lambda  \rho  \GK_{1,1} \GK_{2,2}+\gamma  \phi }\Big]\\
\GK_{3,3} &=&\GK_{4,4} = \GK_{5,5} = \GK_{6,6} = \mathbb{E}_\mu\Big[-\frac{\gamma  \phi }{-\lambda  \rho  \GK_{1,1} \GK_{2,2}-\gamma  \phi }\Big]\,,
\end{eqnarray}
where we have used the fact that, asymptotically, the normalized trace becomes equivalent to an expectation over $\mu$. After eliminating $\GK_{6,3}$ and $\GK_{4,5}$ from the first two equations, it is straightforward to show that
\al{
    \tauone &\equiv \ntr(K^{-1}) = \frac{1}{\gamma}\GK_{1,1} = \frac{\sqrt{(\psi -\phi )^2+ 4 x \psi\phi \gamma/\rho}+\psi -\phi}{2 \psi  \gamma}\label{eqn:tau1}\\
    \taub &\equiv \ntr(\hK^{-1}) = \frac{1}{\gamma} \GK_{2,2} = \frac{1}{\gamma} + \frac{\psi}{\phi}\big(\tauone - \frac{1}{\gamma}\big)\label{eqn:tau1b}
}
where $\taub$ is the companion transform of $\tauone$, and where $x$ satisfies the self-consistent equation,
\begin{equation}
\label{eqn:x}
  x = \frac{1 - \gamma \tauone}{\omega + I_{1,1}} = \frac{1 - \frac{\sqrt{(\psi -\phi )^2+ 4 x \psi\phi \gamma/\rho}+\psi -\phi}{2 \psi}}{\omega + \I_{1,1}}\,.
\end{equation}
Here we used the two-index set of functionals of $\mu$, $\I_{a,b}$ defined in \cref{eqn:Idefs}.

Note that the product $\tauone \taub$ is simply related to $x$,
\begin{equation}
\label{eqn:xtau}
    x = \gamma \rho \tauone \taub\,,
\end{equation}
so that, given $x$, the equations for the remaining entries of $\GK$ completely decouple. In particular,
\begin{eqnarray}
\label{eqn:K_SCE_I}
\GK_{3,6} &=&-\frac{\sqrt{\rho } \tauone \I_{0,1}}{\phi }\\
\GK_{4,5} &=&-\frac{\sqrt{\rho } \tauone \I_{1,1}}{\phi }\\
\GK_{5,4} &=&\gamma  \sqrt{\rho } \taub \I_{0,1}\\
\GK_{6,3} &=&\gamma  \sqrt{\rho } \taub \I_{1,1}\\
\GK_{3,4} &=&\GK_{5,6} = -\frac{x \I_{\frac{1}{2},1}}{\phi }\\
\GK_{3,5} &=&\GK_{4,6} = -\frac{\sqrt{\rho } \tauone \I_{\frac{1}{2},1}}{\phi }\\
\GK_{4,3} &=&\GK_{6,5} = \I_{\frac{1}{2},1}\\
\GK_{5,3} &=&\GK_{6,4} = \gamma  \sqrt{\rho } \taub \I_{\frac{1}{2},1}\\
\GK_{3,3} &=&\GK_{4,4} = \GK_{5,5} = \GK_{6,6} = \I_{0,1}\,,
\end{eqnarray}
which will be important intermediate results for the subsequent sections.
\subsubsection{$E_{21}$}
Define the block matrix $\QE{21}$ as,
\begin{equation}
 \QE{21} = \scriptsize \left(
\begin{array}{ccccccccc}
 I_{n_0} & 0 & -\Sigmatr^{1/2} & 0 & 0 & 0 & 0 & 0 & 0 \\
 -\frac{X^\top}{\sqrt{n_0}} & I_m & 0 & 0 & 0 & 0 & 0 & 0 & 0 \\
 0 & 0 & I_{n_0} & -\Sigmate & 0 & 0 & 0 & 0 & 0 \\
 0 & 0 & 0 & I_{n_0} & -\frac{W^\top}{\sqrt{n_1}} & 0 & 0 & 0 & 0 \\
 0 & 0 & 0 & 0 & I_{n_1} & -\frac{\Theta  \sqrt{\eta -\zeta }}{\sqrt{n_1}} & -\frac{\sqrt{\rho } W}{\sqrt{n_1}} & 0 & 0 \\
 0 & 0 & 0 & 0 & \frac{\sqrt{\eta -\zeta } \Theta ^\top}{\gamma  \sqrt{n_1}} & I_m & 0 & 0 & \frac{\sqrt{\rho } X^\top}{\gamma  \sqrt{n_0}} \\
 0 & 0 & 0 & 0 & 0 & 0 & I_{n_0} & -\Sigmatr^{1/2} & 0 \\
 0 & 0 & 0 & 0 & 0 & -\frac{X}{\sqrt{n_0}} & 0 & I_{n_0} & 0 \\
 0 & 0 & 0 & -\Sigmatr^{1/2} & 0 & 0 & 0 & 0 & I_{n_0} \\
\end{array}
\right)
\,.\end{equation}
Then block matrix inversion (i.e. repeated applications of the Schur complement formula) shows that,
\begin{eqnarray}
\label{eqn:QinvE21_first}
\GE{21}_{1,1} &=& \GE{21}_{2,2} = \GE{21}_{3,3} = 1\\
\GE{21}_{6,6} &=& \GK_{1,1} \label{eq:GE2166}\\
\GE{21}_{5,5} &=& \GK_{2,2} \label{eq:GE2155}\\
\GE{21}_{4,4} &=& \GE{21}_{7,7} = \GE{21}_{8,8} = \GE{21}_{9,9} = \GK_{3,3}\\
\GE{21}_{7,8} &=& \GE{21}_{9,4} = \GK_{3,4}\\
\GE{21}_{4,8} &=& \GE{21}_{9,7} = \GK_{3,5}\\
\GE{21}_{9,8} &=& \GK_{3,6}\\
\GE{21}_{4,9} &=& \GE{21}_{8,7} = \GK_{4,3}\\
\GE{21}_{4,7} &=& \GK_{4,5}\\
\GE{21}_{7,9} &=& \GE{21}_{8,4} = \GK_{5,3}\\
\GE{21}_{7,4} &=& \GK_{5,4}\\
\GE{21}_{8,9} &=& \GK_{6,3}\\
\GE{21}_{3,1} &=& \ \ntr(\Sigmatr^{1/2})\\
\GE{21}_{7,3} &=& \frac{\gamma  \sqrt{\rho } \ \ntr\left(\hK^{-1} W \Sigmate W^\top\right)}{n_1 \psi }\\
\GE{21}_{7,1} &=& \GE{21}_{8,3} = \frac{\gamma  \sqrt{\rho } \ \ntr\left(\Sigmatr^{1/2} W^\top \hK^{-1} W \Sigmate\right)}{n_1}\\
\GE{21}_{9,3} &=& -\frac{\sqrt{\rho } \ \ntr\left(F K^{-1} X^\top \Sigmate W^\top\right)}{\sqrt{n_0} n_1 \psi }\\
\GE{21}_{8,1} &=& \frac{\gamma  \sqrt{\rho } \ \ntr\left(\Sigmatr W^\top \hK^{-1} W \Sigmate\right)}{n_1}\\
\GE{21}_{9,1} &=& -\frac{\sqrt{\rho } \ \ntr\left(\Sigmatr^{1/2} X F^\top \hK^{-1} W \Sigmate\right)}{\sqrt{n_0} n_1}\\
\GE{21}_{6,2} &=& \frac{\gamma  \phi  \ \ntr\left(\Sigmatr^{1/2} X F^\top \hK^{-1} W \Sigmate\right)}{\sqrt{n_0} n_1}\\
\GE{21}_{4,3} &=& \ \ntr(\Sigmate)-\frac{\sqrt{\rho } \ \ntr\left(\Sigmatr^{1/2} X F^\top \hK^{-1} W \Sigmate\right)}{\sqrt{n_0} n_1}\\
\GE{21}_{4,1} &=& \ \ntr(\Sigmatr^{1/2} \Sigmate)-\frac{\sqrt{\rho } \ \ntr\left(\Sigmatr X F^\top \hK^{-1} W \Sigmate\right)}{\sqrt{n_0} n_1}\label{eqn:QinvE21_last}\,,
\end{eqnarray}
where $\GE{21}_{i,j}$ denotes the normalized trace of the $(i,j)$-block of the inverse of $\QE{21}$.
Comparing to \cref{eq_E21}, we see that the error term $E_{21}$ is related to $\GE{21}_{6,2}$ by
\begin{equation}
    E_{21} = -\frac{2  \sqrt{\rhos}}{\gamma \phi}\GE{21}_{6,2}\,.
\end{equation}
To compute the limiting values of these traces, we require the asymptotic block-wise traces of $\QE{21}$, which may be determined from the operator-valued Stieltjes transform. Proceeding as above, we first augment $\QE{21}$ to form the the self-adjoint matrix $\bQE{21}$,
\begin{equation}
\label{eqn:QbarE21}
    \bQE{21} = \left(\begin{array}{cc} 0 & {[\QE{21}]}^\top\\
\QE{21} & 0 \end{array}\right)\,.\\
\end{equation}
and observe that we can write $\bQE{21}$ as
\begin{equation}
\begin{split}
\bQE{21} &= \bar{Z} + \bQE{21}_{W,X,\Theta} + \bQE{21}_\Sigma\\
&= \left(\begin{array}{cc} 0 & Z^\top\\
Z & 0 \end{array}\right) + \left(\begin{array}{cc} 0 & {[\QE{21}_{W,X,\Theta}]}^\top\\
\QE{21}_{W,X,\Theta} & 0 \end{array}\right) + \left(\begin{array}{cc} 0 & {[\QE{21}_\Sigma]}^\top\\
\QE{21}_{\Sigma} & 0 \end{array}\right)\,,
\end{split}
\end{equation}
where $Z = I_{2 m+6 n_0+n_1}$, and,
\begin{eqnarray}
 \QE{21}_{W,X,\Theta} &=&\left(
\begin{array}{ccccccccc}
 0 & 0 & 0 & 0 & 0 & 0 & 0 & 0 & 0 \\
 -\frac{X^\top}{\sqrt{n_0}} & 0 & 0 & 0 & 0 & 0 & 0 & 0 & 0 \\
 0 & 0 & 0 & 0 & 0 & 0 & 0 & 0 & 0 \\
 0 & 0 & 0 & 0 & -\frac{W^\top}{\sqrt{n_1}} & 0 & 0 & 0 & 0 \\
 0 & 0 & 0 & 0 & 0 & -\frac{\Theta  \sqrt{\eta -\zeta }}{\sqrt{n_1}} & -\frac{\sqrt{\rho } W}{\sqrt{n_1}} & 0 & 0 \\
 0 & 0 & 0 & 0 & \frac{\sqrt{\eta -\zeta } \Theta ^\top}{\gamma  \sqrt{n_1}} & 0 & 0 & 0 & \frac{\sqrt{\rho } X^\top}{\gamma  \sqrt{n_0}} \\
 0 & 0 & 0 & 0 & 0 & 0 & 0 & 0 & 0 \\
 0 & 0 & 0 & 0 & 0 & -\frac{X}{\sqrt{n_0}} & 0 & 0 & 0 \\
 0 & 0 & 0 & 0 & 0 & 0 & 0 & 0 & 0 \\
\end{array}
\right)
\\
\QE{21}_{\Sigmatr} &=&\left(
\begin{array}{ccccccccc}
 0 & 0 & -\Sigmatr^{1/2} & 0 & 0 & 0 & 0 & 0 & 0 \\
 0 & 0 & 0 & 0 & 0 & 0 & 0 & 0 & 0 \\
 0 & 0 & 0 & -\Sigmate & 0 & 0 & 0 & 0 & 0 \\
 0 & 0 & 0 & 0 & 0 & 0 & 0 & 0 & 0 \\
 0 & 0 & 0 & 0 & 0 & 0 & 0 & 0 & 0 \\
 0 & 0 & 0 & 0 & 0 & 0 & 0 & 0 & 0 \\
 0 & 0 & 0 & 0 & 0 & 0 & 0 & -\Sigmatr^{1/2} & 0 \\
 0 & 0 & 0 & 0 & 0 & 0 & 0 & 0 & 0 \\
 0 & 0 & 0 & -\Sigmatr^{1/2} & 0 & 0 & 0 & 0 & 0 \\
\end{array}
\right)
\,.\end{eqnarray}
The operator-valued Stieltjes transforms satisfy,
\begin{equation}
\label{eqn:SCE_E21}
\begin{split}
    \bGE{21} &= \bGE{21}_{\Sigma}(\bar{Z} - \bRE{21}_{W,X,\Theta}(\bGE{21}))\\
    &= \id\otimes\ntr\left(\bar{Z} - \bRE{21}_{W,X,\Theta}(\bGE{21})-\bQE{21}_\Sigma\right)^{-1}\,,
\end{split}
\end{equation}
where $\bRE{21}_{W,X,\Theta}(\bGE{21})$ is the operator-valued R-transform of $\bQE{21}_{W,X,\Theta}$. As discussed above, since $\bQE{21}_{W,X,\Theta}$ is a block matrix whose blocks are i.i.d. Gaussian matrices (and their transposes), an explicit expression for $\bRE{21}_{W,X,\Theta}(\bGE{21})$ can be obtained from the covariance map $\eta$, which can be read off from \cref{eqn:QbarE21}. As above, we use the specific sparsity pattern for $\GE{21}$ that is induced by \cref{eqn:SCE_E21}, to obtain,
\begin{equation}
    \bRE{21}_{W,X,\Theta}(\bGE{21}) = \left(\begin{array}{cc} 0 & {\RE{21}_{W,X,\Theta}}(\GE{21})^\top\\
\RE{21}_{W,X,\Theta}(\GE{21}) & 0 \end{array}\right)\,,
\end{equation}
where,
\begin{eqnarray}
{[\RE{21}_{W,X,\Theta}(\GE{21})]}_{2,6} &=&\GE{21}_{8,1}\\
{[\RE{21}_{W,X,\Theta}(\GE{21})]}_{4,7} &=&\sqrt{\rho } \GE{21}_{5,5}\\
{[\RE{21}_{W,X,\Theta}(\GE{21})]}_{5,5} &=&\frac{\psi  \GE{21}_{6,6} (\zeta -\eta )}{\gamma  \phi }+\sqrt{\rho } \psi  \GE{21}_{4,7}\\
{[\RE{21}_{W,X,\Theta}(\GE{21})]}_{6,6} &=&\frac{\GE{21}_{5,5} (\zeta -\eta )-\sqrt{\rho } \GE{21}_{8,9}}{\gamma }\\
{[\RE{21}_{W,X,\Theta}(\GE{21})]}_{8,1} &=&\frac{\GE{21}_{2,6}}{\phi }\\
{[\RE{21}_{W,X,\Theta}(\GE{21})]}_{8,9} &=&-\frac{\sqrt{\rho } \GE{21}_{6,6}}{\gamma  \phi }\,,
\end{eqnarray}and the remaining entries of $\RE{21}_{W,X,\Theta}(\GE{21})$ are zero. 

Owing to the large degree of sparsity, the matrix inverse in \cref{eqn:SCE_E21} can be performed explicitly and yields relatively simple expressions that depend on the entries of $\GE{21}$ and the matrices $\Sigmatr$ and $\Sigmate$. For example, the $(13,3)$ entry of the self-consistent equation reads,
\al{
\GE{21}_{4,3} &\stackrel{\phantom{n_0\to \infty}}{=}\left[ \id\otimes\ntr\left(\bar{Z} - \bRE{21}_{W,X,\Theta}(\bGE{21})-\bQE{21}_\Sigma\right)^{-1}\right]_{13,3}\\
&\stackrel{\phantom{n_0\to \infty}}{=} \ntr\Big[\Sigmate\left(I_{n_0}+\frac{\rho}{\gamma \phi}\GE{21}_{5,5}\GE{21}_{6,6} \Sigmatr\right)^{-1} \Big]\\
&\stackrel{n_0\to \infty}{=} \mathbb{E}_\mu\Big[\frac{r}{1+\frac{\rho}{\gamma \phi}\lambda \GE{21}_{5,5}\GE{21}_{6,6}} \Big]\label{eq:GE2143_3} \\
&\stackrel{\phantom{n_0\to \infty}}{=} \mathbb{E}_\mu\Big[\frac{r}{1+\frac{\rho}{\gamma \phi}\lambda \GK_{1,1}\GK_{2,2}} \Big]\label{eq:GE2143_4} \\
&\stackrel{\phantom{n_0\to \infty}}{=} \phi \mathbb{E}_\mu \Big[\frac{r}{\phi+x \lambda} \Big] \label{eq:GE2143_5}\\
&\stackrel{\phantom{n_0\to \infty}}{=} \Ishift_{1,1}\,.
}
To obtain \cref{eq:GE2143_3}, we computed the asymptotic normalized trace by moving to an eigenbasis of $\Sigmatr$ and recalling the definition of the LJSD $\mu$. We also used Eqs.~(\ref{eq:GE2155}) and (\ref{eq:GE2166}) to obtain \cref{eq:GE2143_4} and  Eqs.~(\ref{eqn:tau1}), (\ref{eqn:tau1b}), and (\ref{eqn:xtau}) to obtain \cref{eq:GE2143_5}. The final line follows from the definition of $\Ishift$ in \cref{eqn:Idefs}. The remaining nonzero entries of \cref{eqn:SCE_E21} can be obtained in a similar manner and together yield the following set of coupled equations for the entries of $\GE{21}$,
\begin{eqnarray}
\GE{21}_{3,1} &=&\frac{\I_{\frac{1}{2},0}}{\phi }\\
\GE{21}_{4,1} &=&\Ishift_{\frac{3}{2},1}\\
\GE{21}_{4,3} &=&\Ishift_{1,1}\\
\GE{21}_{6,2} &=&\gamma  \tauone \GE{21}_{8,1}\\
\GE{21}_{7,3} &=&\gamma  \sqrt{\rho } \taub \Ishift_{1,1}\\
\GE{21}_{8,1} &=&\gamma  \sqrt{\rho } \taub \Ishift_{2,1}\\
\GE{21}_{9,1} &=&-\frac{x \Ishift_{2,1}}{\phi }\\
\GE{21}_{9,3} &=&-\frac{x \Ishift_{\frac{3}{2},1}}{\phi }\\
\GE{21}_{7,1} &=&\GE{21}_{8,3} = \gamma  \sqrt{\rho } \taub \Ishift_{\frac{3}{2},1}\\
\GE{21}_{1,1} &=&\GE{21}_{2,2} = \GE{21}_{3,3} = 1\,,
\end{eqnarray}
where we have again used the definition of the LJSD $\mu$, the relations in Eqs.~(\ref{eqn:QinvE21_first})-(\ref{eqn:QinvE21_last}), as well as the results in \cref{sec:Kinv} to simplify the expressions. Note that in these equations and the above example for the $(13,3)$ entry, we have leveraged the simple manner in which $\Sigmate$ enters in Eqs.~(\ref{eqn:QinvE21_first})-(\ref{eqn:QinvE21_last}), namely linearly in the numerator, to simplify the dependence on $\Sigmatr$ and $\Sigmate$. In particular, by rewriting the arguments of the trace terms in an eigenbasis of $\Sigma$, the only dependence on $\Sigmatr$ and $\Sigmate$ that remains is through the training eigenvalues $\lambda$ and the overlap coefficients $r$. As such, the $\ntr$ in (\cref{eqn:SCE_E21}) can be written as an expectation over the LJSD $\mu$ in the limit, which leads to significant simplification through the introduction of the two-index set of functions of $\mu$, $\Ishift_{a,b}$, defined in \cref{eqn:Idefs}. 

It is straightforward algebra to solve these equations for the undetermined entries of $\GE{21}$ and thereby obtain the following expression for $E_{21}$,
\begin{equation}
\label{eqn:e21_result}
      E_{21} = -2\xi\frac{x}{\phi}\Ishift_{2,1}\,.
\end{equation}
\subsubsection{$E_{31}$}
Define the block matrix $\QE{31}\equiv [\QE{31}_1\; \QE{31}_2]$ by,
\begin{equation}
 \QE{31}_1 = \left(
\begin{array}{cccccc}
 I_m & \frac{\sqrt{\eta -\zeta } \Theta ^\top}{\gamma  \sqrt{n_1}} & \frac{\sqrt{\rho } X^\top}{\gamma  \sqrt{n_0}} & 0 & 0 & 0 \\
 -\frac{\Theta  \sqrt{\eta -\zeta }}{\sqrt{n_1}} & I_{n_1} & 0 & 0 & -\frac{\sqrt{\rho } W}{\sqrt{n_1}} & 0 \\
 0 & 0 & I_{n_0} & -\Sigmatr^{1/2} & 0 & 0 \\
 0 & -\frac{W^\top}{\sqrt{n_1}} & 0 & I_{n_0} & 0 & 0 \\
 0 & 0 & 0 & 0 & I_{n_0} & -\Sigmatr^{1/2} \\
 -\frac{X}{\sqrt{n_0}} & 0 & 0 & 0 & 0 & I_{n_0} \\
 0 & 0 & 0 & 0 & 0 & 0 \\
 0 & 0 & 0 & 0 & 0 & 0 \\
 0 & 0 & 0 & 0 & 0 & 0 \\
 0 & 0 & 0 & 0 & 0 & 0 \\
 0 & 0 & 0 & 0 & 0 & 0 \\
 0 & 0 & 0 & 0 & 0 & 0 \\
\end{array}
\right)
\,,\end{equation}and,
\begin{equation}
 \QE{31}_2 = \left(
\begin{array}{cccccc}
 \frac{\sqrt{\eta -\zeta } \Theta ^\top \left(\zetas-\etas\right)}{\gamma  \sqrt{n_1}} & \frac{\sqrt{\rho } X^\top \left(\zetas-\etas\right)}{\gamma  \sqrt{n_0}} & 0 & 0 & 0 & 0 \\
 0 & 0 & 0 & 0 & 0 & 0 \\
 0 & 0 & 0 & 0 & 0 & 0 \\
 0 & 0 & 0 & 0 & 0 & 0 \\
 0 & 0 & \frac{n_1 \Sigmate \rhos}{n_0 \sqrt{\rho }} & 0 & 0 & 0 \\
 0 & 0 & 0 & 0 & 0 & 0 \\
 I_{n_1} & 0 & 0 & -\frac{\Theta  \sqrt{\eta -\zeta }}{\sqrt{n_1}} & -\frac{\sqrt{\rho } W}{\sqrt{n_1}} & 0 \\
 0 & I_{n_0} & -\Sigmatr^{1/2} & 0 & 0 & 0 \\
 -\frac{W^\top}{\sqrt{n_1}} & 0 & I_{n_0} & 0 & 0 & 0 \\
 \frac{\sqrt{\eta -\zeta } \Theta ^\top}{\gamma  \sqrt{n_1}} & \frac{\sqrt{\rho } X^\top}{\gamma  \sqrt{n_0}} & 0 & I_m & 0 & 0 \\
 0 & 0 & 0 & 0 & I_{n_0} & -\Sigmatr^{1/2} \\
 0 & 0 & 0 & -\frac{X}{\sqrt{n_0}} & 0 & I_{n_0} \\
\end{array}
\right)
\,.\end{equation}
Then block matrix inversion (i.e. repeated applications of the Schur complement formula) shows that,
\begin{eqnarray}
\label{eqn:QinvE31_first}
\GE{31}_{1,1} &=& \GE{31}_{10,10} = \GK_{1,1}\\
\GE{31}_{2,2} &=& \GE{31}_{7,7} = \GK_{2,2}\\
\GE{31}_{3,3} &=& \GE{31}_{6,6} = \GE{31}_{8,8} = \GE{31}_{12,12} = \GE{31}_{4,4} = \GE{31}_{5,5} = \GE{31}_{9,9} = \GE{31}_{11,11} = \GK_{3,3}\\
\GE{31}_{3,4} &=& \GE{31}_{5,6} = \GE{31}_{8,9} = \GE{31}_{11,12} = \GK_{3,4}\\
\GE{31}_{3,5} &=& \GE{31}_{4,6} = \GE{31}_{8,11} = \GE{31}_{9,12} = \GK_{3,5}\\
\GE{31}_{3,6} &=& \GE{31}_{8,12} = \GK_{3,6}\\
\GE{31}_{4,3} &=& \GE{31}_{6,5} = \GE{31}_{9,8} = \GE{31}_{12,11} = \GK_{4,3}\\
\GE{31}_{4,5} &=& \GE{31}_{9,11} = \GK_{4,5}\\
\GE{31}_{5,3} &=& \GE{31}_{6,4} = \GE{31}_{11,8} = \GE{31}_{12,9} = \GK_{5,3}\\
\GE{31}_{5,4} &=& \GE{31}_{11,9} = \GK_{5,4}\\
\GE{31}_{6,3} &=& \GE{31}_{12,8} = \GK_{6,3}\\
\GE{31}_{10,1} &=& \frac{\gamma \phi}{\psi \sigma_{\e }^2}E_{31}\label{eqn:QinvE31_last}\,,
\end{eqnarray}
where $\GE{31}_{i,j}$ denotes the normalized trace of the $(i,j)$-block of the inverse of $\QE{31}$.
For brevity, we have suppressed the expressions for the other non-zero blocks.

To compute the limiting values of these traces, we require the asymptotic block-wise traces of $\QE{31}$, which may be determined from the operator-valued Stieltjes transform. Proceeding as above, we first augment $\QE{31}$ to form the the self-adjoint matrix $\bQE{31}$,
\begin{equation}
\label{eqn:QbarE31}
    \bQE{31} = \left(\begin{array}{cc} 0 & {[\QE{31}]}^\top\\
\QE{31} & 0 \end{array}\right)\,.\\
\end{equation}
and observe that we can write $\bQE{31}$ as,
\begin{equation}
\begin{split}
\bQE{31} &= \bar{Z} + \bQE{31}_{W,X,\Theta} + \bQE{31}_\Sigma\\
&= \left(\begin{array}{cc} 0 & Z^\top\\
Z & 0 \end{array}\right) + \left(\begin{array}{cc} 0 & {[\QE{31}_{W,X,\Theta}]}^\top\\
\QE{31}_{W,X,\Theta} & 0 \end{array}\right) + \left(\begin{array}{cc} 0 & {[\QE{31}_\Sigma]}^\top\\
\QE{31}_{\Sigma} & 0 \end{array}\right)\,,
\end{split}
\end{equation}
where $Z = I_{2 m+8 n_0+2 n_1}$, $\QE{31}_{W,X,\Theta} \equiv [[\QE{31}_{W,X,\Theta}]_1\;[\QE{31}_{W,X,\Theta}]_2]$ and,
\begin{eqnarray}
{[\QE{31}_{W,X,\Theta}]}_1 &=&\scriptsize\left(
\begin{array}{cccccc}
 0 & \frac{\sqrt{\eta -\zeta } \Theta ^\top}{\gamma  \sqrt{n_1}} & \frac{\sqrt{\rho } X^\top}{\gamma  \sqrt{n_0}} & 0 & 0 & 0 \\
 -\frac{\Theta  \sqrt{\eta -\zeta }}{\sqrt{n_1}} & 0 & 0 & 0 & -\frac{\sqrt{\rho } W}{\sqrt{n_1}} & 0 \\
 0 & 0 & 0 & 0 & 0 & 0 \\
 0 & -\frac{W^\top}{\sqrt{n_1}} & 0 & 0 & 0 & 0 \\
 0 & 0 & 0 & 0 & 0 & 0 \\
 -\frac{X}{\sqrt{n_0}} & 0 & 0 & 0 & 0 & 0 \\
 0 & 0 & 0 & 0 & 0 & 0 \\
 0 & 0 & 0 & 0 & 0 & 0 \\
 0 & 0 & 0 & 0 & 0 & 0 \\
 0 & 0 & 0 & 0 & 0 & 0 \\
 0 & 0 & 0 & 0 & 0 & 0 \\
 0 & 0 & 0 & 0 & 0 & 0 \\
\end{array}
\right)
\\
{[\QE{31}_{W,X,\Theta}]}_2 &=&\scriptsize\left(
\begin{array}{cccccc}
 \frac{\sqrt{\eta -\zeta } \Theta ^\top \left(\zetas-\etas\right)}{\gamma  \sqrt{n_1}} & \frac{\sqrt{\rho } X^\top \left(\zetas-\etas\right)}{\gamma  \sqrt{n_0}} & 0 & 0 & 0 & 0 \\
 0 & 0 & 0 & 0 & 0 & 0 \\
 0 & 0 & 0 & 0 & 0 & 0 \\
 0 & 0 & 0 & 0 & 0 & 0 \\
 0 & 0 & 0 & 0 & 0 & 0 \\
 0 & 0 & 0 & 0 & 0 & 0 \\
 0 & 0 & 0 & -\frac{\Theta  \sqrt{\eta -\zeta }}{\sqrt{n_1}} & -\frac{\sqrt{\rho } W}{\sqrt{n_1}} & 0 \\
 0 & 0 & 0 & 0 & 0 & 0 \\
 -\frac{W^\top}{\sqrt{n_1}} & 0 & 0 & 0 & 0 & 0 \\
 \frac{\sqrt{\eta -\zeta } \Theta ^\top}{\gamma  \sqrt{n_1}} & \frac{\sqrt{\rho } X^\top}{\gamma  \sqrt{n_0}} & 0 & 0 & 0 & 0 \\
 0 & 0 & 0 & 0 & 0 & 0 \\
 0 & 0 & 0 & -\frac{X}{\sqrt{n_0}} & 0 & 0 \\
\end{array}
\right)
\\
\QE{31}_{\Sigmatr} &=&\scriptsize\left(
\begin{array}{cccccccccccc}
 0 & 0 & 0 & 0 & 0 & 0 & 0 & 0 & 0 & 0 & 0 & 0 \\
 0 & 0 & 0 & 0 & 0 & 0 & 0 & 0 & 0 & 0 & 0 & 0 \\
 0 & 0 & 0 & -\Sigmatr^{1/2} & 0 & 0 & 0 & 0 & 0 & 0 & 0 & 0 \\
 0 & 0 & 0 & 0 & 0 & 0 & 0 & 0 & 0 & 0 & 0 & 0 \\
 0 & 0 & 0 & 0 & 0 & -\Sigmatr^{1/2} & 0 & 0 & \frac{n_1 \Sigmate \rhos}{n_0 \sqrt{\rho }} & 0 & 0 & 0 \\
 0 & 0 & 0 & 0 & 0 & 0 & 0 & 0 & 0 & 0 & 0 & 0 \\
 0 & 0 & 0 & 0 & 0 & 0 & 0 & 0 & 0 & 0 & 0 & 0 \\
 0 & 0 & 0 & 0 & 0 & 0 & 0 & 0 & -\Sigmatr^{1/2} & 0 & 0 & 0 \\
 0 & 0 & 0 & 0 & 0 & 0 & 0 & 0 & 0 & 0 & 0 & 0 \\
 0 & 0 & 0 & 0 & 0 & 0 & 0 & 0 & 0 & 0 & 0 & 0 \\
 0 & 0 & 0 & 0 & 0 & 0 & 0 & 0 & 0 & 0 & 0 & -\Sigmatr^{1/2} \\
 0 & 0 & 0 & 0 & 0 & 0 & 0 & 0 & 0 & 0 & 0 & 0 \\
\end{array}
\right)
\,.\end{eqnarray}
The operator-valued Stieltjes transforms satisfy,
\begin{equation}
\label{eqn:SCE_E31}
\begin{split}
    \bGE{31} &= \bGE{31}_{\Sigma}(\bar{Z} - \bRE{31}_{W,X,\Theta}(\bGE{31}))\\
    &= \id\otimes\ntr\left(\bar{Z} - \bRE{31}_{W,X,\Theta}(\bGE{31})-\bQE{31}_\Sigma\right)^{-1}\,,
\end{split}
\end{equation}
where $\bRE{31}_{W,X,\Theta}(\bGE{31})$ is the operator-valued R-transform of $\bQE{31}_{W,X,\Theta}$. As discussed above, since $\bQE{31}_{W,X,\Theta}$ is a block matrix whose blocks are i.i.d. Gaussian matrices (and their transposes), an explicit expression for $\bRE{31}_{W,X,\Theta}(\bGE{31})$ can be obtained from the covariance map $\eta$, which can be read off from \cref{eqn:QbarE31}. As above, we use the specific sparsity pattern for $\GE{31}$ that is induced by \cref{eqn:SCE_E31}, to obtain,
\begin{equation}
    \bRE{31}_{W,X,\Theta}(\bGE{31}) = \left(\begin{array}{cc} 0 & {\RE{31}_{W,X,\Theta}}(\GE{31})^\top\\
\RE{31}_{W,X,\Theta}(\GE{31}) & 0 \end{array}\right)\,,
\end{equation}
where,
\begin{align}
{[\RE{31}_{W,X,\Theta}(\GE{31})]}_{1,1} &=\frac{\GE{31}_{2,2} (\zeta -\eta )}{\gamma }-\frac{\sqrt{\rho } \GE{31}_{6,3}}{\gamma }+\frac{\sqrt{\rho } \GE{31}_{6,8} \left(\etas-\zetas\right)}{\gamma }\nonumber\\
&\quad+\frac{\GE{31}_{2,7} (\zeta -\eta ) \left(\zetas-\etas\right)}{\gamma }\\
{[\RE{31}_{W,X,\Theta}(\GE{31})]}_{1,10} &=\frac{\GE{31}_{7,2} (\zeta -\eta )}{\gamma }-\frac{\sqrt{\rho } \GE{31}_{12,3}}{\gamma }+\frac{\sqrt{\rho } \GE{31}_{12,8} \left(\etas-\zetas\right)}{\gamma }\nonumber\\
&\quad+\frac{\GE{31}_{7,7} (\zeta -\eta ) \left(\zetas-\etas\right)}{\gamma }\\
{[\RE{31}_{W,X,\Theta}(\GE{31})]}_{2,2} &=\frac{\psi  \GE{31}_{1,1} (\zeta -\eta )}{\gamma  \phi }+\sqrt{\rho } \psi  \GE{31}_{4,5}\\
{[\RE{31}_{W,X,\Theta}(\GE{31})]}_{2,7} &=\frac{\psi  \GE{31}_{10,1} (\zeta -\eta )}{\gamma  \phi }+\sqrt{\rho } \psi  \GE{31}_{9,5}+\frac{\psi  \GE{31}_{1,1} (\zeta -\eta ) \left(\zetas-\etas\right)}{\gamma  \phi }\\
{[\RE{31}_{W,X,\Theta}(\GE{31})]}_{4,5} &=\sqrt{\rho } \GE{31}_{2,2}\\
{[\RE{31}_{W,X,\Theta}(\GE{31})]}_{4,11} &=\sqrt{\rho } \GE{31}_{7,2}\\
{[\RE{31}_{W,X,\Theta}(\GE{31})]}_{6,3} &=-\frac{\sqrt{\rho } \GE{31}_{1,1}}{\gamma  \phi }\\
{[\RE{31}_{W,X,\Theta}(\GE{31})]}_{6,8} &=\frac{\sqrt{\rho } \GE{31}_{1,1} \left(\etas-\zetas\right)}{\gamma  \phi }-\frac{\sqrt{\rho } \GE{31}_{10,1}}{\gamma  \phi }\\
{[\RE{31}_{W,X,\Theta}(\GE{31})]}_{7,2} &=\frac{\psi  \GE{31}_{1,10} (\zeta -\eta )}{\gamma  \phi }+\sqrt{\rho } \psi  \GE{31}_{4,11}\\
{[\RE{31}_{W,X,\Theta}(\GE{31})]}_{7,7} &=\frac{\psi  \GE{31}_{10,10} (\zeta -\eta )}{\gamma  \phi }+\sqrt{\rho } \psi  \GE{31}_{9,11}+\frac{\psi  \GE{31}_{1,10} (\zeta -\eta ) \left(\zetas-\etas\right)}{\gamma  \phi }\\
{[\RE{31}_{W,X,\Theta}(\GE{31})]}_{9,5} &=\sqrt{\rho } \GE{31}_{2,7}\\
{[\RE{31}_{W,X,\Theta}(\GE{31})]}_{9,11} &=\sqrt{\rho } \GE{31}_{7,7}\\
{[\RE{31}_{W,X,\Theta}(\GE{31})]}_{10,1} &=\frac{\GE{31}_{2,7} (\zeta -\eta )}{\gamma }-\frac{\sqrt{\rho } \GE{31}_{6,8}}{\gamma }\\
{[\RE{31}_{W,X,\Theta}(\GE{31})]}_{10,10} &=\frac{\GE{31}_{7,7} (\zeta -\eta )}{\gamma }-\frac{\sqrt{\rho } \GE{31}_{12,8}}{\gamma }\\
{[\RE{31}_{W,X,\Theta}(\GE{31})]}_{12,3} &=-\frac{\sqrt{\rho } \GE{31}_{1,10}}{\gamma  \phi }\\
{[\RE{31}_{W,X,\Theta}(\GE{31})]}_{12,8} &=\frac{\sqrt{\rho } \GE{31}_{1,10} \left(\etas-\zetas\right)}{\gamma  \phi }-\frac{\sqrt{\rho } \GE{31}_{10,10}}{\gamma  \phi }\,,
\end{align}and the remaining entries of $\RE{31}_{W,X,\Theta}(\GE{31})$ are zero. Similarly, following the example from $\GE{21}$ above, plugging these expressions into \cref{eqn:SCE_E31} and explicitly performing the block-matrix inverse yields the following set of coupled equations,
\begin{align}
\GE{31}_{7,2} &=\gamma ^2 \sqrt{\rho } \taub^2 \psi  \GE{31}_{9,5}+\frac{\gamma  \taub^2 \psi  \GE{31}_{10,1} (\zeta -\eta )}{\phi }+\frac{\gamma ^2 \tauone \taub^2 \psi  (\zeta -\eta ) \left(\zetas-\etas\right)}{\phi }\\
\GE{31}_{8,6} &=\frac{\rho  \tauone \psi  \phi  \I_{0,2} \left(\etas-\zetas\right)-x^2 \Ishift_{2,2} \rhos}{\sqrt{\rho } \psi  \phi }-\frac{\sqrt{\rho } \GE{31}_{10,1} \I_{0,2}}{\gamma }+\frac{\rho ^{3/2} \tauone^2 \GE{31}_{7,2} \I_{1,2}}{\phi }\\
\GE{31}_{9,5} &=\frac{\rho  \tauone \I_{1,2} \left(\etas-\zetas\right)-\frac{\phi  \Ishift_{1,2} \rhos}{\psi }}{\sqrt{\rho }}-\frac{\sqrt{\rho } \GE{31}_{10,1} \I_{1,2}}{\gamma }+\frac{\rho ^{3/2} \tauone^2 \GE{31}_{7,2} \I_{2,2}}{\phi }\\
\GE{31}_{10,1} &=\gamma  \tauone^2 \GE{31}_{7,2} (\zeta -\eta )-\gamma  \sqrt{\rho } \tauone^2 \GE{31}_{12,3}+\gamma  \tauone \left(\gamma  \tauone-1\right) \left(\zetas-\etas\right)\\
\GE{31}_{11,4} &=-\frac{\gamma ^2 \sqrt{\rho } \taub^2 \left(\phi  \Ishift_{1,2} \rhos+\rho  \tauone \psi  \I_{1,2} \left(\zetas-\etas\right)\right)}{\psi }+\sqrt{\rho } \phi  \GE{31}_{7,2} \I_{0,2}-\gamma  \rho ^{3/2} \taub^2 \GE{31}_{10,1} \I_{1,2}\\
\GE{31}_{12,3} &=-\frac{\gamma ^2 \sqrt{\rho } \taub^2 \left(\phi  \Ishift_{2,2} \rhos+\rho  \tauone \psi  \I_{2,2} \left(\zetas-\etas\right)\right)}{\psi }+\sqrt{\rho } \phi  \GE{31}_{7,2} \I_{1,2}-\gamma  \rho ^{3/2} \taub^2 \GE{31}_{10,1} \I_{2,2}\\
\GE{31}_{8,3} &=\GE{31}_{12,6} = \frac{\gamma ^2 \rho  \tauone \taub^2 \Ishift_{2,2} \rhos}{\psi }-\rho  \tauone \GE{31}_{7,2} \I_{1,2}-\rho  \taub \GE{31}_{10,1} \I_{1,2}+x \I_{1,2} \left(\etas-\zetas\right)\\
\GE{31}_{8,4} &=\GE{31}_{11,6} = \frac{\gamma ^2 \rho  \tauone \taub^2 \Ishift_{\frac{3}{2},2} \rhos}{\psi }-\rho  \tauone \GE{31}_{7,2} \I_{\frac{1}{2},2}-\rho  \taub \GE{31}_{10,1} \I_{\frac{1}{2},2}+x \I_{\frac{1}{2},2} \left(\etas-\zetas\right)\\
\GE{31}_{8,5} &=\GE{31}_{9,6} = \frac{\frac{x}{\psi} \Ishift_{\frac{3}{2},2} \rhos +\rho  \tauone \I_{\frac{1}{2},2} \left(\etas-\zetas\right)}{\sqrt{\rho }}-\frac{\sqrt{\rho } \GE{31}_{10,1} \I_{\frac{1}{2},2}}{\gamma }+\frac{\rho ^{3/2} \tauone^2 \GE{31}_{7,2} \I_{\frac{3}{2},2}}{\phi }\\
\GE{31}_{9,3} &=\GE{31}_{12,5} = -\frac{\gamma  \taub \phi  \Ishift_{\frac{3}{2},2} \rhos}{\psi }-\rho  \tauone \GE{31}_{7,2} \I_{\frac{3}{2},2}-\rho  \taub \GE{31}_{10,1} \I_{\frac{3}{2},2}+x \I_{\frac{3}{2},2} \left(\etas-\zetas\right)\\
\GE{31}_{9,4} &=\GE{31}_{11,5} = -\frac{\gamma  \taub \phi  \Ishift_{1,2} \rhos}{\psi }-\rho  \tauone \GE{31}_{7,2} \I_{1,2}-\rho  \taub \GE{31}_{10,1} \I_{1,2}+x \I_{1,2} \left(\etas-\zetas\right)\\
\GE{31}_{11,3} &=\GE{31}_{12,4} = -\frac{\gamma ^2 \sqrt{\rho } \taub^2 \left(\phi  \Ishift_{\frac{3}{2},2} \rhos+\rho  \tauone \psi  \I_{\frac{3}{2},2} \left(\zetas-\etas\right)\right)}{\psi }+\sqrt{\rho } \phi  \GE{31}_{7,2} \I_{\frac{1}{2},2}\nonumber\\
&\qquad\qquad\quad-\gamma  \rho ^{3/2} \taub^2 \GE{31}_{10,1} \I_{\frac{3}{2},2}\,,
\end{align}
Here we have used the definition of the LJSD $\mu$, the relations in Eqs.~(\ref{eqn:QinvE31_first})-(\ref{eqn:QinvE31_last}), the definition of $\Ishift_{a,b}$, and the results in \cref{sec:Kinv} to simplify the expressions. It is straightforward algebra to solve these equations for the undetermined entries of $\GE{31}$ and thereby obtain the following expression for $E_{31}$,
\begin{equation}
E_{31} = \sigma_{\e}^2 \frac{(\etas-\zetas)A_{31} + \rhos B_{31}}{D_{31}}\,,
\end{equation}
where,
\begin{eqnarray}
A_{31} &=& \rho ^2 \tauone \psi  x^2 \I_{2,2} \left(-\gamma  \tauone \psi +\psi +\phi \right)+2 \rho  \tauone \psi ^2 x^2 \phi  (\eta -\zeta )\I_{1,2}+\rho ^2 \tauone \psi ^2 x^2 \phi ^2 \I_{1,2}^2\qquad\qquad\\
&&\qquad+\tauone \psi  \left(x^2 \psi  (\zeta -\eta )^2-\rho ^2 \left(\phi -\gamma  \tauone \phi \right)\right)-\rho ^2 \tauone \psi ^2 x^4 \I_{2,2}^2\nonumber\\
B_{31} &=&\rho  \psi  x^4 \phi  \Ishift_{2,2} \I_{2,2}-\rho  \psi  x^2 \phi ^3 \Ishift_{1,2} \I_{1,2}+\psi  x^2 \phi ^2 \Ishift_{1,2} (\zeta -\eta )-\rho  x^2 \phi ^2 \Ishift_{2,2}\\
D_{31} &=& -\rho ^2 \psi  x^4 \phi  \I_{2,2}^2+2 \rho  \psi  x^2 \phi ^2 \I_{1,2} (\eta -\zeta )+\rho ^2 \psi  x^2 \phi ^3 \I_{1,2}^2+\rho ^2 x^2 \phi  \I_{2,2} (\psi +\phi )\nonumber\\
&&\qquad+\phi  \left(x^2 \psi  (\zeta -\eta )^2-\rho ^2 \phi \right)\,.
\end{eqnarray}
Further simplifications are possible using the raising and lowering identities in \cref{eq:raise_lower}, as well as the results in  \cref{sec:Kinv}, to obtain,
\al{
\label{eqn:e31_result}
E_{31} &=\sigma_\e^2\xi^2 \frac{ \tauone\taub x \big(\rho\frac{\psi}{\phi}(\phi \I_{1,2}+\omega)(\omegas+\Ishift_{1,1}) + \frac{x}{\tauone}\Ishift_{2,2}\big)}{\tauone + \taub x (\omega + \phi \I_{1,2})-\tauone x^2 \frac{\psi}{\phi}\I_{2,2}}\\
&=-\sigma_\e^2\xi^2 \frac{\partial x}{\partial \gamma} \Big(\rho\frac{\psi}{\phi}(\phi \I_{1,2}+\omega)(\omegas+\Ishift_{1,1}) + \frac{x}{\tauone}\Ishift_{2,2}\Big)\,,
}
where we have used,
\begin{equation}
\label{eq:dxdgamma}
    \frac{\partial x}{\partial \gamma} = -\frac{x}{\gamma+\rho \gamma (\tfrac{\psi}{\phi} \tauone + \taub)(\omega + \phi \I_{1,2})}\,,
\end{equation}
which follows from \cref{eqn:x} via implicit differentiation.
\subsubsection{$E_{32}$}
Define the block matrix $\QE{32}\equiv [\QE{32}_1\; \QE{32}_2]$ by,
\begin{equation}
 \QE{32}_1 = \scriptsize\left(
\begin{array}{ccccccc}
 I_m & \frac{\sqrt{\eta -\zeta } \Theta ^\top}{\gamma  \sqrt{n_1}} & \frac{\sqrt{\rho } X^\top}{\gamma  \sqrt{n_0}} & 0 & 0 & 0 & \frac{\sqrt{\eta -\zeta } \Theta ^\top \left(\zetas-\etas\right)}{\gamma  \sqrt{n_1}} \\
 -\frac{\Theta  \sqrt{\eta -\zeta }}{\sqrt{n_1}} & I_{n_1} & 0 & 0 & -\frac{\sqrt{\rho } W}{\sqrt{n_1}} & 0 & 0 \\
 0 & 0 & I_{n_0} & -\Sigmatr^{1/2} & 0 & 0 & 0 \\
 0 & -\frac{W^\top}{\sqrt{n_1}} & 0 & I_{n_0} & 0 & 0 & 0 \\
 0 & 0 & 0 & 0 & I_{n_0} & -\Sigmatr^{1/2} & 0 \\
 -\frac{X}{\sqrt{n_0}} & 0 & 0 & 0 & 0 & I_{n_0} & 0 \\
 0 & 0 & 0 & 0 & 0 & 0 & I_{n_1} \\
 0 & 0 & 0 & 0 & 0 & 0 & -\frac{W^\top}{\sqrt{n_1}} \\
 0 & 0 & 0 & 0 & 0 & 0 & \frac{\sqrt{\eta -\zeta } \Theta ^\top}{\gamma  \sqrt{n_1}} \\
 0 & 0 & 0 & 0 & 0 & 0 & 0 \\
 0 & 0 & 0 & 0 & 0 & 0 & 0 \\
 0 & 0 & 0 & 0 & 0 & 0 & 0 \\
 0 & 0 & 0 & 0 & 0 & 0 & -\frac{W^\top}{\sqrt{n_1}} \\
 0 & 0 & 0 & 0 & 0 & 0 & 0 \\
 0 & 0 & 0 & 0 & 0 & 0 & 0 \\
\end{array}
\right)
\,,\end{equation}and,
\begin{equation}
 \QE{32}_2 =\scriptsize \left(
\begin{array}{cccccccc}
 0 & 0 & 0 & 0 & 0 & 0 & 0 & 0 \\
 0 & 0 & 0 & 0 & 0 & 0 & 0 & 0 \\
 \Sigmatr^{1/2} \left(\etas-\zetas\right) & 0 & 0 & 0 & 0 & 0 & 0 & 0 \\
 0 & 0 & 0 & 0 & 0 & 0 & 0 & 0 \\
 \frac{n_1 \Sigmate \rhos}{n_0 \sqrt{\rho }} & 0 & 0 & 0 & 0 & 0 & 0 & 0 \\
 0 & 0 & 0 & 0 & 0 & 0 & 0 & 0 \\
 0 & -\frac{\Theta  \sqrt{\eta -\zeta }}{\sqrt{n_1}} & -\frac{\sqrt{\rho } W}{\sqrt{n_1}} & 0 & 0 & 0 & 0 & 0 \\
 I_{n_0} & 0 & 0 & 0 & 0 & 0 & 0 & 0 \\
 0 & I_m & 0 & 0 & \frac{\sqrt{\rho } X^\top}{\gamma  \sqrt{n_0}} & 0 & 0 & 0 \\
 0 & 0 & I_{n_0} & -\Sigmatr^{1/2} & 0 & 0 & 0 & 0 \\
 0 & -\frac{X}{\sqrt{n_0}} & 0 & I_{n_0} & 0 & 0 & 0 & 0 \\
 0 & 0 & 0 & 0 & I_{n_0} & -\Sigmatr^{1/2} & 0 & 0 \\
 0 & 0 & 0 & 0 & 0 & I_{n_0} & \frac{\Sigmatr^{1/2}}{\sqrt{\rho }} & 0 \\
 0 & 0 & 0 & 0 & 0 & 0 & I_{n_0} & -\frac{X}{\sqrt{n_0}} \\
 0 & 0 & 0 & 0 & 0 & 0 & 0 & I_m \\
\end{array}
\right)
\,.\end{equation}
Then block matrix inversion (i.e. repeated applications of the Schur complement formula) shows that,
\begin{eqnarray}
\label{eqn:QinvE32_first}
\GE{32}_{8,8} &=& \GE{32}_{14,14} = \GE{32}_{15,15} = 1\\
\GE{32}_{1,1} &=& \GE{32}_{9,9} = \GK_{1,1}\\
\GE{32}_{2,2} &=& \GE{32}_{7,7} = \GK_{2,2}\\
\GE{32}_{3,3} &=& \GE{32}_{6,6} = \GE{32}_{11,11} = \GE{32}_{12,12} = \GE{32}_{4,4} = \GE{32}_{5,5} = \GE{32}_{10,10} = \GE{32}_{13,13} = \GK_{3,3}\\
\GE{32}_{3,4} &=& \GE{32}_{5,6} = \GE{32}_{10,11} = \GE{32}_{12,8} = \GE{32}_{12,13} = \GK_{3,4}\\
\GE{32}_{3,5} &=& \GE{32}_{4,6} = \GE{32}_{12,10} = \GE{32}_{13,11} = \GK_{3,5}\\
\GE{32}_{3,6} &=& \GE{32}_{12,11} = \GK_{3,6}\\
\GE{32}_{4,3} &=& \GE{32}_{6,5} = \GE{32}_{11,10} = \GE{32}_{13,12} = \GK_{4,3}\\
\GE{32}_{4,5} &=& \GE{32}_{13,10} = \GK_{4,5}\\
\GE{32}_{5,3} &=& \GE{32}_{6,4} = \GE{32}_{10,12} = \GE{32}_{11,8} = \GE{32}_{11,13} = \GK_{5,3}\\
\GE{32}_{5,4} &=& \GE{32}_{10,8} = \GE{32}_{10,13} = \GK_{5,4}\\
\GE{32}_{6,3} &=& \GE{32}_{11,12} = \GK_{6,3}\\
\GE{32}_{15,1} &=& \frac{\phi}{\psi} E_{32}\label{eqn:QinvE32_last}\,,
\end{eqnarray}
where $\GE{32}_{i,j}$ denotes the normalized trace of the $(i,j)$-block of the inverse of $\QE{32}$.
For brevity, we have suppressed the expressions for the other non-zero blocks.

To compute the limiting values of these traces, we require the asymptotic block-wise traces of $\QE{32}$, which may be determined from the operator-valued Stieltjes transform. To proceed, we first augment $\QE{32}$ to form the the self-adjoint matrix $\bQE{32}$,
\begin{equation}
\label{eqn:Qbar32}
    \bQE{32} = \left(\begin{array}{cc} 0 & {[\QE{32}]}^\top\\
\QE{32} & 0 \end{array}\right)\,.\\
\end{equation}
and observe that we can write $\bQE{32}$ as,
\begin{equation}
\begin{split}
\bQE{32} &= \bar{Z} + \bQE{32}_{W,X,\Theta} + \bQE{32}_\Sigma\\
&= \left(\begin{array}{cc} 0 & Z^\top\\
Z & 0 \end{array}\right) + \left(\begin{array}{cc} 0 & {[\QE{32}_{W,X,\Theta}]}^\top\\
\QE{32}_{W,X,\Theta} & 0 \end{array}\right) + \left(\begin{array}{cc} 0 & {[\QE{32}_\Sigma]}^\top\\
\QE{32}_{\Sigma} & 0 \end{array}\right)\,,
\end{split}
\end{equation}
where $Z = I_{3 m+10 n_0+2 n_1}$, $\QE{32}_{W,X,\Theta} \equiv [[\QE{32}_{W,X,\Theta}]_1\;[\QE{32}_{W,X,\Theta}]_2]$ and,
\begin{align}
{[\QE{32}_{W,X,\Theta}]}_1 &= \scriptsize\left(
\begin{array}{ccccccc}
 0 & \frac{\sqrt{\eta -\zeta } \Theta ^\top}{\gamma  \sqrt{n_1}} & \frac{\sqrt{\rho } X^\top}{\gamma  \sqrt{n_0}} & 0 & 0 & 0 & \frac{\sqrt{\eta -\zeta } \Theta ^\top \left(\zetas-\etas\right)}{\gamma  \sqrt{n_1}} \\
 -\frac{\Theta  \sqrt{\eta -\zeta }}{\sqrt{n_1}} & 0 & 0 & 0 & -\frac{\sqrt{\rho } W}{\sqrt{n_1}} & 0 & 0 \\
 0 & 0 & 0 & 0 & 0 & 0 & 0 \\
 0 & -\frac{W^\top}{\sqrt{n_1}} & 0 & 0 & 0 & 0 & 0 \\
 0 & 0 & 0 & 0 & 0 & 0 & 0 \\
 -\frac{X}{\sqrt{n_0}} & 0 & 0 & 0 & 0 & 0 & 0 \\
 0 & 0 & 0 & 0 & 0 & 0 & 0 \\
 0 & 0 & 0 & 0 & 0 & 0 & -\frac{W^\top}{\sqrt{n_1}} \\
 0 & 0 & 0 & 0 & 0 & 0 & \frac{\sqrt{\eta -\zeta } \Theta ^\top}{\gamma  \sqrt{n_1}} \\
 0 & 0 & 0 & 0 & 0 & 0 & 0 \\
 0 & 0 & 0 & 0 & 0 & 0 & 0 \\
 0 & 0 & 0 & 0 & 0 & 0 & 0 \\
 0 & 0 & 0 & 0 & 0 & 0 & -\frac{W^\top}{\sqrt{n_1}} \\
 0 & 0 & 0 & 0 & 0 & 0 & 0 \\
 0 & 0 & 0 & 0 & 0 & 0 & 0 \\
\end{array}
\right)
\\
{[\QE{32}_{W,X,\Theta}]}_2 &= \scriptsize\left(
\begin{array}{cccccccc}
 0 & 0 & 0 & 0 & 0 & 0 & 0 & 0 \\
 0 & 0 & 0 & 0 & 0 & 0 & 0 & 0 \\
 0 & 0 & 0 & 0 & 0 & 0 & 0 & 0 \\
 0 & 0 & 0 & 0 & 0 & 0 & 0 & 0 \\
 0 & 0 & 0 & 0 & 0 & 0 & 0 & 0 \\
 0 & 0 & 0 & 0 & 0 & 0 & 0 & 0 \\
 0 & -\frac{\Theta  \sqrt{\eta -\zeta }}{\sqrt{n_1}} & -\frac{\sqrt{\rho } W}{\sqrt{n_1}} & 0 & 0 & 0 & 0 & 0 \\
 0 & 0 & 0 & 0 & 0 & 0 & 0 & 0 \\
 0 & 0 & 0 & 0 & \frac{\sqrt{\rho } X^\top}{\gamma  \sqrt{n_0}} & 0 & 0 & 0 \\
 0 & 0 & 0 & 0 & 0 & 0 & 0 & 0 \\
 0 & -\frac{X}{\sqrt{n_0}} & 0 & 0 & 0 & 0 & 0 & 0 \\
 0 & 0 & 0 & 0 & 0 & 0 & 0 & 0 \\
 0 & 0 & 0 & 0 & 0 & 0 & 0 & 0 \\
 0 & 0 & 0 & 0 & 0 & 0 & 0 & -\frac{X}{\sqrt{n_0}} \\
 0 & 0 & 0 & 0 & 0 & 0 & 0 & 0 \\
\end{array}
\right)
\\
\QE{32}_{\Sigmatr} &= \scriptsize\left(
\begin{array}{ccccccccccccccc}
 0 & 0 & 0 & 0 & 0 & 0 & 0 & 0 & 0 & 0 & 0 & 0 & 0 & 0 & 0 \\
 0 & 0 & 0 & 0 & 0 & 0 & 0 & 0 & 0 & 0 & 0 & 0 & 0 & 0 & 0 \\
 0 & 0 & 0 & -\Sigmatr^{1/2} & 0 & 0 & 0 & \Sigmatr^{1/2} \left(\etas-\zetas\right) & 0 & 0 & 0 & 0 & 0 & 0 & 0 \\
 0 & 0 & 0 & 0 & 0 & 0 & 0 & 0 & 0 & 0 & 0 & 0 & 0 & 0 & 0 \\
 0 & 0 & 0 & 0 & 0 & -\Sigmatr^{1/2} & 0 & \frac{n_1 \Sigmate \rhos}{n_0 \sqrt{\rho }} & 0 & 0 & 0 & 0 & 0 & 0 & 0 \\
 0 & 0 & 0 & 0 & 0 & 0 & 0 & 0 & 0 & 0 & 0 & 0 & 0 & 0 & 0 \\
 0 & 0 & 0 & 0 & 0 & 0 & 0 & 0 & 0 & 0 & 0 & 0 & 0 & 0 & 0 \\
 0 & 0 & 0 & 0 & 0 & 0 & 0 & 0 & 0 & 0 & 0 & 0 & 0 & 0 & 0 \\
 0 & 0 & 0 & 0 & 0 & 0 & 0 & 0 & 0 & 0 & 0 & 0 & 0 & 0 & 0 \\
 0 & 0 & 0 & 0 & 0 & 0 & 0 & 0 & 0 & 0 & -\Sigmatr^{1/2} & 0 & 0 & 0 & 0 \\
 0 & 0 & 0 & 0 & 0 & 0 & 0 & 0 & 0 & 0 & 0 & 0 & 0 & 0 & 0 \\
 0 & 0 & 0 & 0 & 0 & 0 & 0 & 0 & 0 & 0 & 0 & 0 & -\Sigmatr^{1/2} & 0 & 0 \\
 0 & 0 & 0 & 0 & 0 & 0 & 0 & 0 & 0 & 0 & 0 & 0 & 0 & \frac{\Sigmatr^{1/2}}{\sqrt{\rho }} & 0 \\
 0 & 0 & 0 & 0 & 0 & 0 & 0 & 0 & 0 & 0 & 0 & 0 & 0 & 0 & 0 \\
 0 & 0 & 0 & 0 & 0 & 0 & 0 & 0 & 0 & 0 & 0 & 0 & 0 & 0 & 0 \\
\end{array}
\right)
\,.\end{align}
The operator-valued Stieltjes transforms satisfy,
\begin{equation}
\label{eqn:SCE_E32}
\begin{split}
    \bGE{32} &= \bGE{32}_{\Sigma}(\bar{Z} - \bRE{32}_{W,X,\Theta}(\bGE{32}))\\
    &= \id\otimes\ntr\left(\bar{Z} - \bRE{32}_{W,X,\Theta}(\bGE{32})-\bQE{32}_\Sigma\right)^{-1}\,,
\end{split}
\end{equation}
where $\bRE{32}_{W,X,\Theta}(\bGE{32})$ is the operator-valued R-transform of $\bQE{32}_{W,X,\Theta}$. As discussed above, since $\bQE{32}_{W,X,\Theta}$ is a block matrix whose blocks are i.i.d. Gaussian matrices (and their transposes), an explicit expression for $\bRE{32}_{W,X,\Theta}(\bGE{32})$ can be obtained from the covariance map $\eta$, which can be read off from \cref{eqn:Qbar32}. As above, we use the specific sparsity pattern for $\GE{32}$ that is induced by \cref{eqn:SCE_E32}, to obtain, 
\begin{equation}
    \bRE{32}_{W,X,\Theta}(\bGE{32}) = \left(\begin{array}{cc} 0 & {\RE{32}_{W,X,\Theta}}(\GE{32})^\top\\
\RE{32}_{W,X,\Theta}(\GE{32}) & 0 \end{array}\right)\,,
\end{equation}
where,
\begin{align}
{[\RE{32}_{W,X,\Theta}(\GE{32})]}_{1,1} &=\frac{\GE{32}_{2,2} (\zeta -\eta )}{\gamma }-\frac{\sqrt{\rho } \GE{32}_{6,3}}{\gamma }+\frac{\GE{32}_{2,7} (\zeta -\eta ) \left(\zetas-\etas\right)}{\gamma }\\
{[\RE{32}_{W,X,\Theta}(\GE{32})]}_{1,9} &=\frac{\GE{32}_{7,2} (\zeta -\eta )}{\gamma }-\frac{\sqrt{\rho } \GE{32}_{11,3}}{\gamma }+\frac{\GE{32}_{7,7} (\zeta -\eta ) \left(\zetas-\etas\right)}{\gamma }\\
{[\RE{32}_{W,X,\Theta}(\GE{32})]}_{1,15} &=-\frac{\sqrt{\rho } \GE{32}_{14,3}}{\gamma }\\
{[\RE{32}_{W,X,\Theta}(\GE{32})]}_{2,2} &=\frac{\psi  \GE{32}_{1,1} (\zeta -\eta )}{\gamma  \phi }+\sqrt{\rho } \psi  \GE{32}_{4,5}\\
{[\RE{32}_{W,X,\Theta}(\GE{32})]}_{2,7} &=\frac{\psi  \GE{32}_{9,1} (\zeta -\eta )}{\gamma  \phi }+\sqrt{\rho } \psi  \GE{32}_{8,5}+\sqrt{\rho } \psi  \GE{32}_{13,5}\nonumber\\
&\quad+\frac{\psi  \GE{32}_{1,1} (\zeta -\eta ) \left(\zetas-\etas\right)}{\gamma  \phi }\\
{[\RE{32}_{W,X,\Theta}(\GE{32})]}_{4,5} &=\sqrt{\rho } \GE{32}_{2,2}\\
{[\RE{32}_{W,X,\Theta}(\GE{32})]}_{4,10} &=\sqrt{\rho } \GE{32}_{7,2}\\
{[\RE{32}_{W,X,\Theta}(\GE{32})]}_{6,3} &=-\frac{\sqrt{\rho } \GE{32}_{1,1}}{\gamma  \phi }\\
{[\RE{32}_{W,X,\Theta}(\GE{32})]}_{6,12} &=-\frac{\sqrt{\rho } \GE{32}_{9,1}}{\gamma  \phi }\\
{[\RE{32}_{W,X,\Theta}(\GE{32})]}_{7,2} &=\frac{\psi  \GE{32}_{1,9} (\zeta -\eta )}{\gamma  \phi }+\sqrt{\rho } \psi  \GE{32}_{4,10}\\
{[\RE{32}_{W,X,\Theta}(\GE{32})]}_{7,7} &=\frac{\psi  \GE{32}_{9,9} (\zeta -\eta )}{\gamma  \phi }+\sqrt{\rho } \psi  \GE{32}_{8,10}+\sqrt{\rho } \psi  \GE{32}_{13,10}\nonumber\\
&\quad+\frac{\psi  \GE{32}_{1,9} (\zeta -\eta ) \left(\zetas-\etas\right)}{\gamma  \phi }\\
{[\RE{32}_{W,X,\Theta}(\GE{32})]}_{8,5} &=\sqrt{\rho } \GE{32}_{2,7}\\
{[\RE{32}_{W,X,\Theta}(\GE{32})]}_{8,10} &=\sqrt{\rho } \GE{32}_{7,7}\\
{[\RE{32}_{W,X,\Theta}(\GE{32})]}_{9,1} &=\frac{\GE{32}_{2,7} (\zeta -\eta )}{\gamma }-\frac{\sqrt{\rho } \GE{32}_{6,12}}{\gamma }\\
{[\RE{32}_{W,X,\Theta}(\GE{32})]}_{9,9} &=\frac{\GE{32}_{7,7} (\zeta -\eta )}{\gamma }-\frac{\sqrt{\rho } \GE{32}_{11,12}}{\gamma }\\
{[\RE{32}_{W,X,\Theta}(\GE{32})]}_{9,15} &=-\frac{\sqrt{\rho } \GE{32}_{14,12}}{\gamma }\\
{[\RE{32}_{W,X,\Theta}(\GE{32})]}_{11,3} &=-\frac{\sqrt{\rho } \GE{32}_{1,9}}{\gamma  \phi }\\
{[\RE{32}_{W,X,\Theta}(\GE{32})]}_{11,12} &=-\frac{\sqrt{\rho } \GE{32}_{9,9}}{\gamma  \phi }\\
{[\RE{32}_{W,X,\Theta}(\GE{32})]}_{13,5} &=\sqrt{\rho } \GE{32}_{2,7}\\
{[\RE{32}_{W,X,\Theta}(\GE{32})]}_{13,10} &=\sqrt{\rho } \GE{32}_{7,7}\\
{[\RE{32}_{W,X,\Theta}(\GE{32})]}_{14,3} &=-\frac{\sqrt{\rho } \GE{32}_{1,15}}{\gamma  \phi }\\
{[\RE{32}_{W,X,\Theta}(\GE{32})]}_{14,12} &=-\frac{\sqrt{\rho } \GE{32}_{9,15}}{\gamma  \phi }\,,
\end{align}and the remaining entries of $\RE{32}_{W,X,\Theta}(\GE{32})$ are zero. Similarly, following the example from $\GE{21}$ above, plugging these expressions into \cref{eqn:SCE_E32} and explicitly performing the block-matrix inverse yields the following set of coupled equations,
\begin{align}
\GE{32}_{7,2} &=\gamma ^2 \sqrt{\rho } \taub^2 \psi  \GE{32}_{8,5}+\gamma ^2 \sqrt{\rho } \taub^2 \psi  \GE{32}_{13,5}+\frac{\gamma  \taub^2 \psi  \GE{32}_{9,1} (\zeta -\eta )}{\phi }\nonumber\\
&\quad+\frac{\gamma ^2 \tauone \taub^2 \psi  (\zeta -\eta ) \left(\zetas-\etas\right)}{\phi }\\
\GE{32}_{8,3} &=\I_{\frac{1}{2},1} \left(\zetas-\etas\right)-\frac{\gamma  \taub \Ishift_{\frac{3}{2},1} \rhos}{\psi }\\
\GE{32}_{8,4} &=\gamma  (-\taub) \left(\frac{\Ishift_{1,1} \rhos}{\psi }+\frac{\rho  \tauone \I_{1,1} \left(\zetas-\etas\right)}{\phi }\right)\\
\GE{32}_{8,5} &=\frac{\rho  \tauone \psi  \I_{1,1} \left(\etas-\zetas\right)-\phi  \Ishift_{1,1} \rhos}{\sqrt{\rho } \psi  \phi }\\
\GE{32}_{8,6} &=\frac{\sqrt{\rho } \tauone \left(\gamma  \taub \Ishift_{\frac{3}{2},1} \rhos+\psi  \I_{\frac{1}{2},1} \left(\etas-\zetas\right)\right)}{\psi  \phi }\\
\GE{32}_{9,1} &=\gamma  \tauone^2 \GE{32}_{7,2} (\zeta -\eta )-\gamma  \sqrt{\rho } \tauone^2 \GE{32}_{11,3}+\gamma ^2 \tauone^2 \taub (\zeta -\eta ) \left(\zetas-\etas\right)\\
\GE{32}_{10,3} &=-\frac{\gamma  \sqrt{\rho } \taub \phi  \left(\gamma  \taub \Ishift_{\frac{3}{2},2} \rhos+\psi  \I_{\frac{1}{2},2} \left(\etas-\zetas\right)\right)}{\psi }+\sqrt{\rho } \phi  \GE{32}_{7,2} \I_{\frac{1}{2},2}\nonumber\\
&\quad-\gamma  \rho ^{3/2} \taub^2 \GE{32}_{9,1} \I_{\frac{3}{2},2}\\
\GE{32}_{10,4} &=-\frac{\gamma ^2 \sqrt{\rho } \taub^2 \left(\phi  \Ishift_{1,2} \rhos+\rho  \tauone \psi  \I_{1,2} \left(\zetas-\etas\right)\right)}{\psi }+\sqrt{\rho } \phi  \GE{32}_{7,2} \I_{0,2}\nonumber\\
&\quad-\gamma  \rho ^{3/2} \taub^2 \GE{32}_{9,1} \I_{1,2}\\
\GE{32}_{10,5} &=-\frac{\gamma  \taub \phi  \Ishift_{1,2} \rhos}{\psi }-\rho  \tauone \GE{32}_{7,2} \I_{1,2}-\rho  \taub \GE{32}_{9,1} \I_{1,2}+x \I_{1,2} \left(\etas-\zetas\right)\\
\GE{32}_{10,6} &=\frac{\gamma ^2 \rho  \tauone \taub^2 \Ishift_{\frac{3}{2},2} \rhos}{\psi }-\rho  \tauone \GE{32}_{7,2} \I_{\frac{1}{2},2}-\rho  \taub \GE{32}_{9,1} \I_{\frac{1}{2},2}+x \I_{\frac{1}{2},2} \left(\etas-\zetas\right)\\
\GE{32}_{11,3} &=-\frac{\gamma  \sqrt{\rho } \taub \phi  \left(\gamma  \taub \Ishift_{2,2} \rhos+\psi  \I_{1,2} \left(\etas-\zetas\right)\right)}{\psi }+\sqrt{\rho } \phi  \GE{32}_{7,2} \I_{1,2}-\gamma  \rho ^{3/2} \taub^2 \GE{32}_{9,1} \I_{2,2}\\
\GE{32}_{11,4} &=-\frac{\gamma ^2 \sqrt{\rho } \taub^2 \left(\phi  \Ishift_{\frac{3}{2},2} \rhos+\rho  \tauone \psi  \I_{\frac{3}{2},2} \left(\zetas-\etas\right)\right)}{\psi }+\sqrt{\rho } \phi  \GE{32}_{7,2} \I_{\frac{1}{2},2}\nonumber\\
&\quad-\gamma  \rho ^{3/2} \taub^2 \GE{32}_{9,1} \I_{\frac{3}{2},2}\\
\GE{32}_{11,5} &=-\frac{\gamma  \taub \phi  \Ishift_{\frac{3}{2},2} \rhos}{\psi }-\rho  \tauone \GE{32}_{7,2} \I_{\frac{3}{2},2}-\rho  \taub \GE{32}_{9,1} \I_{\frac{3}{2},2}+x \I_{\frac{3}{2},2} \left(\etas-\zetas\right)\\
\GE{32}_{12,4} &=\frac{\gamma ^2 \rho  \tauone \taub^2 \Ishift_{\frac{3}{2},2} \rhos}{\psi }-\rho  \tauone \GE{32}_{7,2} \I_{\frac{1}{2},2}-\rho  \taub \GE{32}_{9,1} \I_{\frac{1}{2},2}+\frac{x^2 \I_{\frac{3}{2},2} \left(\zetas-\etas\right)}{\phi }\\
\GE{32}_{12,5} &=\frac{\frac{x \Ishift_{\frac{3}{2},2} \rhos}{\psi }+\frac{\gamma  \rho ^2 \tauone^2 \taub \I_{\frac{3}{2},2} \left(\zetas-\etas\right)}{\phi }}{\sqrt{\rho }}-\frac{\sqrt{\rho } \GE{32}_{9,1} \I_{\frac{1}{2},2}}{\gamma }+\frac{\rho ^{3/2} \tauone^2 \GE{32}_{7,2} \I_{\frac{3}{2},2}}{\phi }\\
\GE{32}_{12,6} &=\frac{\gamma  \rho ^2 \tauone^2 \taub \psi  \I_{1,2} \left(\zetas-\etas\right)-x^2 \Ishift_{2,2} \rhos}{\sqrt{\rho } \psi  \phi }-\frac{\sqrt{\rho } \GE{32}_{9,1} \I_{0,2}}{\gamma }+\frac{\rho ^{3/2} \tauone^2 \GE{32}_{7,2} \I_{1,2}}{\phi }\\
\GE{32}_{13,3} &=\frac{\gamma ^2 \rho  \tauone \taub^2 \Ishift_{\frac{5}{2},2} \rhos}{\psi }-\rho  \tauone \GE{32}_{7,2} \I_{\frac{3}{2},2}-\rho  \taub \GE{32}_{9,1} \I_{\frac{3}{2},2}+x \I_{\frac{3}{2},2} \left(\etas-\zetas\right)\\
\GE{32}_{13,4} &=\frac{\gamma ^2 \rho  \tauone \taub^2 \Ishift_{2,2} \rhos}{\psi }-\rho  \tauone \GE{32}_{7,2} \I_{1,2}-\rho  \taub \GE{32}_{9,1} \I_{1,2}+\frac{x^2 \I_{2,2} \left(\zetas-\etas\right)}{\phi }\\
\GE{32}_{13,5} &=\frac{\frac{x \Ishift_{2,2} \rhos}{\psi }+\frac{\gamma  \rho ^2 \tauone^2 \taub \I_{2,2} \left(\zetas-\etas\right)}{\phi }}{\sqrt{\rho }}-\frac{\sqrt{\rho } \GE{32}_{9,1} \I_{1,2}}{\gamma }+\frac{\rho ^{3/2} \tauone^2 \GE{32}_{7,2} \I_{2,2}}{\phi }\\
\GE{32}_{13,6} &=\frac{\gamma  \rho ^2 \tauone^2 \taub \psi  \I_{\frac{3}{2},2} \left(\zetas-\etas\right)-x^2 \Ishift_{\frac{5}{2},2} \rhos}{\sqrt{\rho } \psi  \phi }-\frac{\sqrt{\rho } \GE{32}_{9,1} \I_{\frac{1}{2},2}}{\gamma }+\frac{\rho ^{3/2} \tauone^2 \GE{32}_{7,2} \I_{\frac{3}{2},2}}{\phi }\\
\GE{32}_{13,8} &=-\frac{x \I_{1,1}}{\phi }\\
\GE{32}_{14,3} &=\frac{x \psi  \I_{2,2} \left(\zetas-\etas\right)-\gamma ^2 \rho  \tauone \taub^2 \Ishift_{3,2} \rhos}{\sqrt{\rho } \psi }+\sqrt{\rho } \tauone \GE{32}_{7,2} \I_{2,2}+\sqrt{\rho } \taub \GE{32}_{9,1} \I_{2,2}\\
\GE{32}_{14,4} &=\frac{x^2 \psi  \I_{\frac{5}{2},2} \left(\etas-\zetas\right)-\gamma ^2 \rho  \tauone \taub^2 \phi  \Ishift_{\frac{5}{2},2} \rhos}{\sqrt{\rho } \psi  \phi }+\sqrt{\rho } \tauone \GE{32}_{7,2} \I_{\frac{3}{2},2}+\sqrt{\rho } \taub \GE{32}_{9,1} \I_{\frac{3}{2},2}\\
\GE{32}_{14,5} &=-\frac{x \Ishift_{\frac{5}{2},2} \rhos}{\rho  \psi }+\frac{\GE{32}_{9,1} \I_{\frac{3}{2},2}}{\gamma }-\frac{\rho  \tauone^2 \GE{32}_{7,2} \I_{\frac{5}{2},2}}{\phi }+\frac{\gamma  \rho  \tauone^2 \taub \I_{\frac{5}{2},2} \left(\etas-\zetas\right)}{\phi }\\
\GE{32}_{14,6} &=\frac{\frac{x^2 \Ishift_{3,2} \rhos}{\psi }+\gamma  \rho ^2 \tauone^2 \taub \I_{2,2} \left(\etas-\zetas\right)}{\rho  \phi }+\frac{\GE{32}_{9,1} \I_{1,2}}{\gamma }-\frac{\rho  \tauone^2 \GE{32}_{7,2} \I_{2,2}}{\phi }\\
\GE{32}_{14,8} &=\frac{x \I_{\frac{3}{2},1}}{\sqrt{\rho } \phi }\\
\GE{32}_{14,10} &=\frac{\tauone \I_{\frac{3}{2},1}}{\phi }\\
\GE{32}_{14,11} &=\frac{\tauone \I_{1,1}}{\phi }\\
\GE{32}_{14,12} &=-\frac{\I_{1,1}}{\sqrt{\rho }}\\
\GE{32}_{14,13} &=-\frac{\I_{\frac{1}{2},1}}{\sqrt{\rho }}\\
\GE{32}_{15,1} &=\GE{32}_{14,12} \left(\sqrt{\rho } \tauone^2 \GE{32}_{7,2} (\eta -\zeta )+\rho  \tauone^2 \GE{32}_{11,3}+\gamma  \sqrt{\rho } \tauone^2 (-\taub) (\zeta -\eta ) \left(\zetas-\etas\right)\right)\nonumber\\
&\quad-\sqrt{\rho } \tauone \GE{32}_{14,3}\\
\GE{32}_{15,9} &=-\sqrt{\rho } \tauone \GE{32}_{14,12}\\
\GE{32}_{11,6} &=\GE{32}_{12,3} = \frac{\gamma ^2 \rho  \tauone \taub^2 \Ishift_{2,2} \rhos}{\psi }-\rho  \tauone \GE{32}_{7,2} \I_{1,2}-\rho  \taub \GE{32}_{9,1} \I_{1,2}+x \I_{1,2} \left(\etas-\zetas\right)\\
\GE{32}_{8,8} &=\GE{32}_{14,14} = \GE{32}_{15,15} = 1\,,
\end{align}
Here we have used the definition of the LJSD $\mu$, the relations in Eqs.~(\ref{eqn:QinvE32_first})-(\ref{eqn:QinvE32_last}), the definition of $\Ishift_{a,b}$, and the results in \cref{sec:Kinv} to simplify the expressions. It is straightforward algebra to solve these equations for the undetermined entries of $\GE{32}$ and thereby obtain the following expression for $E_{32}$,
\begin{equation}
E_{32} = \frac{(\etas-\zetas)A_{32} + \rhos B_{32}}{D_{32}}\,,
\end{equation}
where,
\begin{align}
A_{32} &= -\rho ^3 \tauone \psi ^2 x^4 \I_{1,1} \I_{2,2}^2+\rho ^2 \tauone \psi  x^3 \I_{2,2}^2 (\rho  \phi +x \psi  (\zeta -\eta ))-\rho ^3 \tauone \psi ^2 x^3 \phi  \I_{1,1} \I_{1,2} \I_{2,2}\nonumber\\
    &\qquad+\rho ^2 \tauone \psi ^2 x^2 \I_{1,1}^2 (\eta -\zeta )+\rho ^2 \tauone \psi ^2 x^2 \I_{1,1} \I_{2,2} (\rho +x (\zeta -\eta ))\nonumber\\
    &\qquad+\rho ^2 \tauone \psi  x^2 \phi  \I_{1,2} \I_{2,2} (\rho  \phi +x \psi  (\zeta -\eta ))+\rho ^3 \tauone \psi ^2 x^2 \phi  \I_{1,1}^2 \I_{1,2}\nonumber\\
    &\qquad-\rho ^2 \tauone \psi  x \phi  \I_{1,1} \I_{1,2} (\rho  \phi +x \psi  (\zeta -\eta ))+\rho  \tauone \psi  x \I_{1,1} (\zeta -\eta ) (\rho  \phi +x \psi  (\zeta -\eta ))\nonumber\\
    &\qquad-\rho  \tauone \psi  x \I_{2,2} (\rho +x (\zeta -\eta )) (\rho  \phi +x \psi  (\zeta -\eta ))\\
B_{32} &= -\rho ^2 \psi  x^6 \Ishift_{3,2} \I_{2,2}^2-2 \rho ^2 \psi  x^5 \phi  \Ishift_{2,2} \I_{2,2}^2+2 \rho  \psi  x^4 \phi  \Ishift_{3,2} \I_{1,2} (\eta -\zeta )+\rho ^2 \psi  x^4 \phi ^2 \Ishift_{3,2} \I_{1,2}^2\nonumber\\
    &\qquad-2 \rho ^2 \psi  x^4 \phi ^2 \Ishift_{2,2} \I_{1,2} \I_{2,2}+\rho ^2 \psi  x^4 \phi  \Ishift_{1,1} \I_{2,2}^2+\rho ^2 x^4 \Ishift_{3,2} \I_{2,2} (\psi +\phi )\nonumber\\
    &\qquad+\rho ^2 \psi  x^4 \phi  \Ishift_{3,2} \I_{1,1} \I_{2,2}+\rho  x^3 \phi  \Ishift_{2,2} \I_{2,2} (\rho  (\psi +\phi )+2 x \psi  (\zeta -\eta ))\nonumber\\
    &\qquad+\rho ^2 \psi  x^3 \phi ^2 \Ishift_{2,2} \I_{1,1} \I_{1,2}+\rho ^2 \psi  x^3 \phi ^2 \Ishift_{1,1} \I_{1,2} \I_{2,2}+\rho  \psi  x^2 \phi  \Ishift_{1,1} \I_{1,1} (\zeta -\eta )\nonumber\\
    &\qquad-\rho  x^2 \phi  \Ishift_{2,2} \I_{1,1} (\rho  \phi +x \psi  (\zeta -\eta ))-\rho  \psi  x^2 \phi  \Ishift_{1,1} \I_{2,2} (\rho +x (\zeta -\eta ))\nonumber\\
    &\qquad-\rho ^2 \psi  x^2 \phi ^2 \Ishift_{1,1} \I_{1,1} \I_{1,2}+\Ishift_{3,2} \left(x^4 \psi  (\zeta -\eta )^2-\rho ^2 x^2 \phi \right)\\
D_{32} &= -\rho ^3 \psi  x^4 \phi  \I_{2,2}^2+2 \rho ^2 \psi  x^2 \phi ^2 \I_{1,2} (\eta -\zeta )+\rho ^3 \psi  x^2 \phi ^3 \I_{1,2}^2+\rho ^3 x^2 \phi  \I_{2,2} (\psi +\phi )\nonumber\\
    &\qquad+\rho  \phi  \left(x^2 \psi  (\zeta -\eta )^2-\rho ^2 \phi \right)\,.
\end{align}
Further simplifications are possible using the raising and lowering identities in \cref{eq:raise_lower}, as well as the results in  \cref{sec:Kinv}, to obtain,
\al{
\label{eqn:e32_result}
E_{32} &= -\xi^2 \frac{\partial x}{\partial \gamma} \bigg((\phi \I_{1,2}+\omega)\Big(\frac{\Ishift_{1,1}}{\tauone}+\frac{\psi\rho}{\phi} \I_{1,1}(\omegas+\Ishift_{1,1})\Big)+ \frac{\omegas\psi x}{\phi\taub}\I_{2,2} + \frac{\phi}{x\taub}\Ishift_{1,2} -\frac{\omega x}{\tauone}\Ishift_{2,2}\bigg)\,,
}
where
\begin{equation}
    \frac{\partial x}{\partial \gamma} = -\frac{x}{\gamma+\rho \gamma (\tauone \psi/\phi + \taub)(\omega + \phi \I_{1,2})}\,.
\end{equation}
\subsubsection{$E_{4}$}
Define the block matrix $\QE{4}\equiv [\QE{4}_1\; \QE{4}_2]$ by,
\begin{equation}
 \QE{4}_1 = \left(
\begin{array}{ccccccc}
 I_m & \frac{\sqrt{\eta -\zeta } \Theta_{22}^\top}{\gamma  \sqrt{n_1}} & 0 & 0 & \frac{\sqrt{\rho } X_2^\top}{\gamma  \sqrt{n_0}} & 0 & 0 \\
 -\frac{\Theta_{22} \sqrt{\eta -\zeta }}{\sqrt{n_1}} & I_{n_1} & -\frac{\sqrt{\rho } W_2}{\sqrt{n_1}} & 0 & 0 & 0 & 0 \\
 0 & 0 & I_{n_0} & -\Sigmatr^{1/2} & 0 & 0 & 0 \\
 -\frac{X_2}{\sqrt{n_0}} & 0 & 0 & I_{n_0} & 0 & 0 & 0 \\
 0 & 0 & 0 & 0 & I_{n_0} & -\Sigmatr^{1/2} & 0 \\
 0 & -\frac{W_2^\top}{\sqrt{n_1}} & 0 & 0 & 0 & I_{n_0} & \frac{\Sigmatr^{1/2}}{\sqrt{\rho }} \\
 0 & 0 & 0 & 0 & 0 & 0 & I_{n_0} \\
 0 & 0 & 0 & 0 & 0 & 0 & 0 \\
 0 & 0 & 0 & 0 & 0 & 0 & 0 \\
 0 & 0 & 0 & 0 & 0 & 0 & 0 \\
 0 & 0 & 0 & 0 & 0 & 0 & 0 \\
 0 & 0 & 0 & 0 & 0 & 0 & -\Sigmatr^{1/2} \\
 0 & 0 & 0 & 0 & 0 & 0 & 0 \\
 0 & 0 & 0 & 0 & 0 & 0 & 0 \\
\end{array}
\right)
\,,\end{equation}and,
\begin{equation}
 \QE{4}_2 = \left(
\begin{array}{ccccccc}
 0 & 0 & 0 & 0 & 0 & 0 & 0 \\
 0 & 0 & 0 & 0 & 0 & 0 & 0 \\
 0 & 0 & 0 & 0 & 0 & 0 & 0 \\
 0 & 0 & 0 & 0 & 0 & 0 & 0 \\
 0 & 0 & 0 & 0 & 0 & 0 & 0 \\
 0 & 0 & 0 & 0 & 0 & 0 & 0 \\
 -\frac{X_1}{\sqrt{n_0}} & 0 & 0 & 0 & 0 & 0 & 0 \\
 I_m & \frac{\sqrt{\eta -\zeta } \Theta_{11}^\top}{\gamma  \sqrt{n_1}} & \frac{\sqrt{\rho } X_1^\top}{\gamma  \sqrt{n_0}} & 0 & 0 & 0 & 0 \\
 -\frac{\Theta_{11} \sqrt{\eta -\zeta }}{\sqrt{n_1}} & I_{n_1} & 0 & 0 & -\frac{\sqrt{\rho } W_1}{\sqrt{n_1}} & 0 & 0 \\
 0 & 0 & I_{n_0} & -\Sigmatr^{1/2} & 0 & 0 & 0 \\
 0 & -\frac{W_1^\top}{\sqrt{n_1}} & 0 & I_{n_0} & 0 & 0 & 0 \\
 0 & 0 & 0 & 0 & I_{n_0} & \frac{\Sigmate}{\sqrt{\rho }} & 0 \\
 0 & 0 & 0 & 0 & 0 & I_{n_0} & -\frac{W_2^\top}{\sqrt{n_1}} \\
 0 & 0 & 0 & 0 & 0 & 0 & I_{n_1} \\
\end{array}
\right)
\,.\end{equation}
Then block matrix inversion (i.e. repeated applications of the Schur complement formula) shows that,
\begin{eqnarray}
\label{eqn:QinvE4_first}
\GE{4}_{13,13} &=& \GE{4}_{14,14} = 1\\
\GE{4}_{1,1} &=& \GE{4}_{8,8} = \GK_{1,1}\\
\GE{4}_{2,2} &=& \GE{4}_{9,9} = \GK_{2,2}\\
\GE{4}_{3,3} &=& \GE{4}_{6,6} = \GE{4}_{11,11} = \GE{4}_{12,12} = \GE{4}_{4,4} = \GE{4}_{5,5} = \GE{4}_{7,7} = \GE{4}_{10,10} = \GK_{3,3}\\
\GE{4}_{3,4} &=& \GE{4}_{5,6} = \GE{4}_{10,11} = \GE{4}_{12,7} = \GK_{3,4}\\
\GE{4}_{5,3} &=& \GE{4}_{6,4} = \GE{4}_{10,12} = \GE{4}_{11,7} = \GK_{3,5}\\
\GE{4}_{5,4} &=& \GE{4}_{10,7} = \GK_{3,6}\\
\GE{4}_{4,3} &=& \GE{4}_{6,5} = \GE{4}_{7,12} = \GE{4}_{11,10} = \GK_{4,3}\\
\GE{4}_{6,3} &=& \GE{4}_{11,12} = \GK_{4,5}\\
\GE{4}_{3,5} &=& \GE{4}_{4,6} = \GE{4}_{7,11} = \GE{4}_{12,10} = \GK_{5,3}\\
\GE{4}_{3,6} &=& \GE{4}_{12,11} = \GK_{5,4}\\
\GE{4}_{4,5} &=& \GE{4}_{7,10} = \GK_{6,3}\\
\GE{4}_{14,2} &=& \frac{\psi}{\rhos}E_4\label{eqn:QinvE4_last}\,,
\end{eqnarray}
where $\GE{4}_{i,j}$ denotes the normalized trace of the $(i,j)$-block of the inverse of $\QE{4}$.
For brevity, we have suppressed the expressions for the other non-zero blocks.

To compute the limiting values of these traces, we require the asymptotic block-wise traces of $\QE{4}$, which may be determined from the operator-valued Stieltjes transform. To proceed, we first augment $\QE{4}$ to form the the self-adjoint matrix $\bQE{4}$,
\begin{equation}
\label{eqn:Qbar4}
    \bQE{4} = \left(\begin{array}{cc} 0 & {[\QE{4}]}^\top\\
\QE{4} & 0 \end{array}\right)\,.\\
\end{equation}
and observe that we can write $\bQE{4}$ as,
\begin{equation}
\begin{split}
\bQE{4} &= \bar{Z} + \bQE{4}_{W,X,\Theta} + \bQE{4}_\Sigma\\
&= \left(\begin{array}{cc} 0 & Z^\top\\
Z & 0 \end{array}\right) + \left(\begin{array}{cc} 0 & {[\QE{4}_{W,X,\Theta}]}^\top\\
\QE{4}_{W,X,\Theta} & 0 \end{array}\right) + \left(\begin{array}{cc} 0 & {[\QE{4}_\Sigma]}^\top\\
\QE{4}_{\Sigma} & 0 \end{array}\right)\,,
\end{split}
\end{equation}
where $Z = I_{2 m+9 n_0+3 n_1}$, $\QE{4}_{W,X,\Theta} \equiv [[\QE{4}_{W,X,\Theta}]_1\;[\QE{4}_{W,X,\Theta}]_2]$ and,
\begin{align}
{[\QE{4}_{W,X,\Theta}]}_1 &= \scriptsize\left(
\begin{array}{ccccccc}
 I_m & \frac{\sqrt{\eta -\zeta } \Theta_{22}^\top}{\gamma  \sqrt{n_1}} & 0 & 0 & \frac{\sqrt{\rho } X_2^\top}{\gamma  \sqrt{n_0}} & 0 & 0 \\
 -\frac{\Theta_{22} \sqrt{\eta -\zeta }}{\sqrt{n_1}} & I_{n_1} & -\frac{\sqrt{\rho } W_2}{\sqrt{n_1}} & 0 & 0 & 0 & 0 \\
 0 & 0 & I_{n_0} & 0 & 0 & 0 & 0 \\
 -\frac{X_2}{\sqrt{n_0}} & 0 & 0 & I_{n_0} & 0 & 0 & 0 \\
 0 & 0 & 0 & 0 & I_{n_0} & 0 & 0 \\
 0 & -\frac{W_2^\top}{\sqrt{n_1}} & 0 & 0 & 0 & I_{n_0} & 0 \\
 0 & 0 & 0 & 0 & 0 & 0 & I_{n_0} \\
 0 & 0 & 0 & 0 & 0 & 0 & 0 \\
 0 & 0 & 0 & 0 & 0 & 0 & 0 \\
 0 & 0 & 0 & 0 & 0 & 0 & 0 \\
 0 & 0 & 0 & 0 & 0 & 0 & 0 \\
 0 & 0 & 0 & 0 & 0 & 0 & 0 \\
 0 & 0 & 0 & 0 & 0 & 0 & 0 \\
 0 & 0 & 0 & 0 & 0 & 0 & 0 \\
\end{array}
\right)
\\
{[\QE{4}_{W,X,\Theta}]}_2 &= \scriptsize\left(
\begin{array}{ccccccc}
 0 & 0 & 0 & 0 & 0 & 0 & 0 \\
 0 & 0 & 0 & 0 & 0 & 0 & 0 \\
 0 & 0 & 0 & 0 & 0 & 0 & 0 \\
 0 & 0 & 0 & 0 & 0 & 0 & 0 \\
 0 & 0 & 0 & 0 & 0 & 0 & 0 \\
 0 & 0 & 0 & 0 & 0 & 0 & 0 \\
 -\frac{X_1}{\sqrt{n_0}} & 0 & 0 & 0 & 0 & 0 & 0 \\
 I_m & \frac{\sqrt{\eta -\zeta } \Theta_{11}^\top}{\gamma  \sqrt{n_1}} & \frac{\sqrt{\rho } X_1^\top}{\gamma  \sqrt{n_0}} & 0 & 0 & 0 & 0 \\
 -\frac{\Theta_{11} \sqrt{\eta -\zeta }}{\sqrt{n_1}} & I_{n_1} & 0 & 0 & -\frac{\sqrt{\rho } W_1}{\sqrt{n_1}} & 0 & 0 \\
 0 & 0 & I_{n_0} & 0 & 0 & 0 & 0 \\
 0 & -\frac{W_1^\top}{\sqrt{n_1}} & 0 & I_{n_0} & 0 & 0 & 0 \\
 0 & 0 & 0 & 0 & I_{n_0} & 0 & 0 \\
 0 & 0 & 0 & 0 & 0 & I_{n_0} & -\frac{W_2^\top}{\sqrt{n_1}} \\
 0 & 0 & 0 & 0 & 0 & 0 & I_{n_1} \\
\end{array}
\right)
\\
\QE{4}_{\Sigmatr} &= \scriptsize\left(
\begin{array}{cccccccccccccc}
 0 & 0 & 0 & 0 & 0 & 0 & 0 & 0 & 0 & 0 & 0 & 0 & 0 & 0 \\
 0 & 0 & 0 & 0 & 0 & 0 & 0 & 0 & 0 & 0 & 0 & 0 & 0 & 0 \\
 0 & 0 & 0 & -\Sigmatr^{1/2} & 0 & 0 & 0 & 0 & 0 & 0 & 0 & 0 & 0 & 0 \\
 0 & 0 & 0 & 0 & 0 & 0 & 0 & 0 & 0 & 0 & 0 & 0 & 0 & 0 \\
 0 & 0 & 0 & 0 & 0 & -\Sigmatr^{1/2} & 0 & 0 & 0 & 0 & 0 & 0 & 0 & 0 \\
 0 & 0 & 0 & 0 & 0 & 0 & \frac{\Sigmatr^{1/2}}{\sqrt{\rho }} & 0 & 0 & 0 & 0 & 0 & 0 & 0 \\
 0 & 0 & 0 & 0 & 0 & 0 & 0 & 0 & 0 & 0 & 0 & 0 & 0 & 0 \\
 0 & 0 & 0 & 0 & 0 & 0 & 0 & 0 & 0 & 0 & 0 & 0 & 0 & 0 \\
 0 & 0 & 0 & 0 & 0 & 0 & 0 & 0 & 0 & 0 & 0 & 0 & 0 & 0 \\
 0 & 0 & 0 & 0 & 0 & 0 & 0 & 0 & 0 & 0 & -\Sigmatr^{1/2} & 0 & 0 & 0 \\
 0 & 0 & 0 & 0 & 0 & 0 & 0 & 0 & 0 & 0 & 0 & 0 & 0 & 0 \\
 0 & 0 & 0 & 0 & 0 & 0 & -\Sigmatr^{1/2} & 0 & 0 & 0 & 0 & 0 & \frac{\Sigmate}{\sqrt{\rho }} & 0 \\
 0 & 0 & 0 & 0 & 0 & 0 & 0 & 0 & 0 & 0 & 0 & 0 & 0 & 0 \\
 0 & 0 & 0 & 0 & 0 & 0 & 0 & 0 & 0 & 0 & 0 & 0 & 0 & 0 \\
\end{array}
\right)
\,.\end{align}
The operator-valued Stieltjes transforms satisfy,
\begin{equation}
\label{eqn:SCE_E4}
\begin{split}
    \bGE{4} &= \bGE{4}_{\Sigma}(\bar{Z} - \bRE{4}_{W,X,\Theta}(\bGE{4}))\\
    &= \id\otimes\ntr\left(\bar{Z} - \bRE{4}_{W,X,\Theta}(\bGE{4})-\bQE{4}_\Sigma\right)^{-1}\,,
\end{split}
\end{equation}
where $\bRE{4}_{W,X,\Theta}(\bGE{4})$ is the operator-valued R-transform of $\bQE{4}_{W,X,\Theta}$. As discussed above, since $\bQE{4}_{W,X,\Theta}$ is a block matrix whose blocks are i.i.d. Gaussian matrices (and their transposes), an explicit expression for $\bRE{4}_{W,X,\Theta}(\bGE{4})$ can be obtained from the covariance map $\eta$, which can be read off from \cref{eqn:Qbar4}. As above, we use the specific sparsity pattern for $\GE{4}$ that is induced by \cref{eqn:SCE_E4}, to obtain,
\begin{equation}
    \bRE{4}_{W,X,\Theta}(\bGE{4}) = \left(\begin{array}{cc} 0 & {\RE{4}_{W,X,\Theta}}(\GE{4})^\top\\
\RE{4}_{W,X,\Theta}(\GE{4}) & 0 \end{array}\right)\,,
\end{equation}
where,
\begin{align}
{[\RE{4}_{W,X,\Theta}(\GE{4})]}_{1,1} &=\frac{\GE{4}_{2,2} (\zeta -\eta )}{\gamma }-\frac{\sqrt{\rho } \GE{4}_{4,5}}{\gamma }\\
{[\RE{4}_{W,X,\Theta}(\GE{4})]}_{2,2} &=\frac{\psi  \GE{4}_{1,1} (\zeta -\eta )}{\gamma  \phi }+\sqrt{\rho } \psi  \GE{4}_{6,3}\\
{[\RE{4}_{W,X,\Theta}(\GE{4})]}_{2,14} &=\sqrt{\rho } \psi  \GE{4}_{13,3}\\
{[\RE{4}_{W,X,\Theta}(\GE{4})]}_{4,5} &=-\frac{\sqrt{\rho } \GE{4}_{1,1}}{\gamma  \phi }\\
{[\RE{4}_{W,X,\Theta}(\GE{4})]}_{6,3} &=\sqrt{\rho } \GE{4}_{2,2}\\
{[\RE{4}_{W,X,\Theta}(\GE{4})]}_{7,10} &=-\frac{\sqrt{\rho } \GE{4}_{8,8}}{\gamma  \phi }\\
{[\RE{4}_{W,X,\Theta}(\GE{4})]}_{8,8} &=\frac{\GE{4}_{9,9} (\zeta -\eta )}{\gamma }-\frac{\sqrt{\rho } \GE{4}_{7,10}}{\gamma }\\
{[\RE{4}_{W,X,\Theta}(\GE{4})]}_{9,9} &=\frac{\psi  \GE{4}_{8,8} (\zeta -\eta )}{\gamma  \phi }+\sqrt{\rho } \psi  \GE{4}_{11,12}\\
{[\RE{4}_{W,X,\Theta}(\GE{4})]}_{11,12} &=\sqrt{\rho } \GE{4}_{9,9}\\
{[\RE{4}_{W,X,\Theta}(\GE{4})]}_{13,3} &=\sqrt{\rho } \GE{4}_{2,14}\,,
\end{align}and the remaining entries of $\RE{4}_{W,X,\Theta}(\GE{4})$ are zero. Similarly, following the example from $\GE{21}$ above, plugging these expressions into \cref{eqn:SCE_E4} and explicitly performing the block-matrix inverse yields the following set of coupled equations,
\begin{align}
\GE{4}_{7,5} &=-\frac{\phi  \I_{1,2}}{\sqrt{\rho }}\\
\GE{4}_{7,6} &=-\frac{\phi  \I_{\frac{1}{2},2}}{\sqrt{\rho }}\\
\GE{4}_{10,4} &=-\frac{\sqrt{\rho } \tauone^2 \I_{1,2}}{\phi }\\
\GE{4}_{10,6} &=\tauone \I_{\frac{1}{2},2}\\
\GE{4}_{11,3} &=-\frac{\sqrt{\rho } \tauone^2 \I_{2,2}}{\phi }\\
\GE{4}_{12,3} &=-\frac{\gamma  \rho  \tauone^2 \taub \I_{2,2}}{\phi }\\
\GE{4}_{12,4} &=-\frac{\gamma  \rho  \tauone^2 \taub \I_{\frac{3}{2},2}}{\phi }\\
\GE{4}_{12,5} &=\frac{x \I_{\frac{3}{2},2}}{\sqrt{\rho }}\\
\GE{4}_{12,6} &=\frac{x \I_{1,2}}{\sqrt{\rho }}\\
\GE{4}_{13,3} &=\frac{\gamma  \sqrt{\rho } \tauone^2 \taub \Ishift_{3,2}}{\phi }\\
\GE{4}_{13,4} &=\frac{\gamma  \sqrt{\rho } \tauone^2 \taub \Ishift_{\frac{5}{2},2}}{\phi }\\
\GE{4}_{13,5} &=-\frac{x \Ishift_{\frac{5}{2},2}}{\rho }\\
\GE{4}_{13,6} &=-\frac{x \Ishift_{2,2}}{\rho }\\
\GE{4}_{13,7} &=\frac{x \Ishift_{\frac{3}{2},1}}{\sqrt{\rho } \phi }\\
\GE{4}_{13,10} &=\gamma  (-\taub) \Ishift_{\frac{3}{2},1}\\
\GE{4}_{13,11} &=\gamma  (-\taub) \Ishift_{1,1}\\
\GE{4}_{13,12} &=-\frac{\Ishift_{1,1}}{\sqrt{\rho }}\\
\GE{4}_{14,2} &=\gamma  \sqrt{\rho } \taub \psi  \GE{4}_{13,3}\\
\GE{4}_{7,3} &=\GE{4}_{11,5} = \tauone \I_{\frac{3}{2},2}\\
\GE{4}_{10,3} &=\GE{4}_{11,4} = -\frac{\sqrt{\rho } \tauone^2 \I_{\frac{3}{2},2}}{\phi }\\
\GE{4}_{13,13} &=\GE{4}_{14,14} = 1\\
\GE{4}_{7,4} &=\GE{4}_{10,5} = \GE{4}_{11,6} = \tauone \I_{1,2}\,,
\end{align}
Here we have used the definition of the LJSD $\mu$, the relations in Eqs.~(\ref{eqn:QinvE4_first})-(\ref{eqn:QinvE4_last}), the definition of $\Ishift_{a,b}$, and the results in \cref{sec:Kinv}, to simplify the expressions. It is straightforward algebra to solve these equations for the undetermined entries of $\GE{4}$ and thereby obtain the following expression for $E_{4}$,
\begin{equation}
\label{eqn:e4_result}
E_{4} = \xi^2 \frac{x^2}{\phi}\Ishift_{3,2}\,.
\end{equation}
\subsubsection{Results for total error, bias, and variance}
General expressions for the total error, bias and variance in terms of  $E_{21}, E_{31}, E_{32}, E_{4}$ can be found in \cref{app:summary_lin}.
Combining the results from \cref{eqn:err_expr,eqn:e21_result,eqn:e31_result,eqn:e32_result} yields expressions for the bias and total error. Using $\err{\mu} = \bias{\mu} + \var{\mu}$, these results also determine the variance. Putting all the pieces together we find,
\al{
    \bias{\mu} &= (1-\xi)^2 s_{*} + 2(1-\xi)\xi \Ishift_{1,1} + \xi^2 \phi \Ishift_{1,2}\\
    \var{\mu} &= -\rho\xi^2 \frac{\psi}{\phi}\frac{\partial x}{\partial \gamma} \bigg(\I_{1,1}(\omega + \phi \I_{1,2})(\omegas+\Ishift_{1,1}) +\frac{\phi^2}{\psi}\gamma\taub \I_{1,2}\Ishift_{2,2}+\gamma\tauone\I_{2,2}(\omegas+\phi\Ishift_{1,2})\nonumber\\
&\qquad\qquad\quad\quad + \sigma_\e^2\Big((\omega + \phi \I_{1,2})(\omegas+\Ishift_{1,1}) +\frac{\phi}{\psi}\gamma\taub \Ishift_{2,2}\Big)\bigg)\\
}
where $x$ is the unique positive real root of $x = \tfrac{1-\gamma\tauone}{\omega + \I_{1,1}}$, as in \cref{eqn:x}. The derivative $\tfrac{\partial x}{\partial \gamma}$ follows from implicit differentiation and was given in \cref{eq:dxdgamma},
\begin{equation}
   \frac{\partial x}{\partial \gamma} = -\frac{x}{\gamma+\rho \gamma (\tauone \psi/\phi + \taub)(\omega + \phi \I_{1,2})}\,.
\end{equation}
The asymptotic trace objects $\tauone$ and $\taub$ were defined in \cref{eqn:tau1} and \cref{eqn:tau1b} and are given by
\eq{
    \tauone = \frac{\sqrt{(\psi -\phi )^2+ 4 x \psi\phi  \gamma/\rho}+\psi -\phi }{2 \psi \gamma}\quad \text{and}\quad \taub = \frac{1}{\gamma} + \frac{\psi}{\phi}\big(\tauone - \frac{1}{\gamma}\big)\,\label{eq:tau_taub}.
}
All together, these results prove \cref{thm:main_b_v}.
\subsection{Proofs of \texorpdfstring{\cref{cor:ridgeless_lim}}{} and \texorpdfstring{\cref{cor:infinite_overparameterization}}{}}
\label{sec:cor_proofs}
\cref{cor:ridgeless_lim} follows immediately from ~\cref{thm:main_b_v} by taking $\gamma\to 0$ after substituting the following expressions, which are direct consequences of \cref{eq:dxdgamma} and \cref{eq:tau_taub}:
\al{
\tauone & \,\stackrel{\gamma \to 0}{\longrightarrow}\, \frac{x \phi}{\rho |\phi-\psi|} + \begin{cases}\frac{|\phi-\psi|}{\psi \gamma} & \text{if } \phi < \psi \\ 0 & \text{otherwise}\end{cases}\,,\\
\taub &\,\stackrel{\gamma \to 0}{\longrightarrow}\, \frac{x \psi}{\rho |\phi-\psi|} + \begin{cases} 0 & \text{if } \phi < \psi \\ \frac{|\phi-\psi|}{\phi \gamma} & \text{otherwise}\end{cases}\,,
}
so that,
\al{
\frac{\partial x}{\partial \gamma}  &\,\stackrel{\gamma \to 0}{\longrightarrow}\, - \frac{x \phi}{\rho |\phi - \psi| (\omega + \phi \I_{1,2})}\,,
}
where $x$ is the unique positive real root of
\al{
x = \frac{\min(1,\phi/\psi)}{\omega+\I_{1,1}}\,.
}
Similarly, \cref{cor:infinite_overparameterization} follows immediately from ~\cref{thm:main_b_v} by taking $\psi\to 0$ after substituting the following expressions, which are also direct consequences of \cref{eq:dxdgamma} and \cref{eq:tau_taub}:
\al{
\tauone \,\stackrel{\psi \to 0}{\longrightarrow}\, \frac{x}{\rho}\quad\text{and}\quad \taub \,\stackrel{\psi \to 0}{\longrightarrow}\, \frac{1}{\gamma}\,,
}
so that,
\al{
\frac{\partial x}{\partial \gamma}  &\,\stackrel{\psi \to 0}{\longrightarrow}\, - \frac{x/\rho}{\gammaeff + \phi \I_{1,2}}\label{eq:xprime_inf_overparam}\,,
}
where $\gammaeff = \gamma/\rho + \omega$ and $x$ is the unique positive real root of
\al{
x = \frac{1}{\gammaeff+\I_{1,1}}\,.
}
\bibliography{main}

\begin{thebibliography}{76}
\providecommand{\natexlab}[1]{#1}
\providecommand{\url}[1]{\texttt{#1}}
\expandafter\ifx\csname urlstyle\endcsname\relax
  \providecommand{\doi}[1]{doi: #1}\else
  \providecommand{\doi}{doi: \begingroup \urlstyle{rm}\Url}\fi

\bibitem[Adlam and Pennington(2020{\natexlab{a}})]{adlam2020neural}
B.~Adlam and J.~Pennington.
\newblock The neural tangent kernel in high dimensions: Triple descent and a
  multi-scale theory of generalization.
\newblock In \emph{Proceedings of the 37th International Conference on Machine
  Learning, {ICML} 2020, 13-18 July 2020, Virtual Event}, volume 119 of
  \emph{Proceedings of Machine Learning Research}, pages 74--84. {PMLR},
  2020{\natexlab{a}}.
\newblock URL \url{http://proceedings.mlr.press/v119/adlam20a.html}.

\bibitem[Adlam and Pennington(2020{\natexlab{b}})]{adlam2020understanding}
B.~Adlam and J.~Pennington.
\newblock Understanding double descent requires a fine-grained bias-variance
  decomposition.
\newblock In H.~Larochelle, M.~Ranzato, R.~Hadsell, M.~F. Balcan, and H.~Lin,
  editors, \emph{Advances in Neural Information Processing Systems}, volume~33,
  pages 11022--11032. Curran Associates, Inc., 2020{\natexlab{b}}.
\newblock URL
  \url{https://proceedings.neurips.cc/paper/2020/file/7d420e2b2939762031eed0447a9be19f-Paper.pdf}.

\bibitem[Adlam et~al.(2019)Adlam, Levinson, and Pennington]{adlam2019random}
B.~Adlam, J.~Levinson, and J.~Pennington.
\newblock A random matrix perspective on mixtures of nonlinearities for deep
  learning.
\newblock \emph{arXiv preprint arXiv:1912.00827}, 2019.

\bibitem[Advani et~al.(2020)Advani, Saxe, and Sompolinsky]{advani2020high}
M.~S. Advani, A.~M. Saxe, and H.~Sompolinsky.
\newblock High-dimensional dynamics of generalization error in neural networks.
\newblock \emph{Neural Networks}, 132:\penalty0 428--446, 2020.
\newblock \doi{10.1016/j.neunet.2020.08.022}.
\newblock URL \url{https://doi.org/10.1016/j.neunet.2020.08.022}.

\bibitem[Banna et~al.(2015)Banna, Merlevède, and
  Peligrad]{merlevede2013limiting}
M.~Banna, F.~Merlevède, and M.~Peligrad.
\newblock On the limiting spectral distribution for a large class of symmetric
  random matrices with correlated entries.
\newblock \emph{Stochastic Processes and their Applications}, 125\penalty0
  (7):\penalty0 2700--2726, 2015.
\newblock ISSN 0304-4149.
\newblock \doi{https://doi.org/10.1016/j.spa.2015.01.010}.
\newblock URL
  \url{https://www.sciencedirect.com/science/article/pii/S0304414915000290}.

\bibitem[Banna et~al.(2020)Banna, Najim, and Yao]{banna2020clt}
M.~Banna, J.~Najim, and J.~Yao.
\newblock A {CLT} for linear spectral statistics of large random
  information-plus-noise matrices.
\newblock \emph{Stochastic Processes and their Applications}, 130\penalty0
  (4):\penalty0 2250--2281, 2020.

\bibitem[Bartlett et~al.(2021)Bartlett, Montanari, and
  Rakhlin]{bartlett2021deep}
P.~L. Bartlett, A.~Montanari, and A.~Rakhlin.
\newblock Deep learning: a statistical viewpoint, 2021.

\bibitem[Becker et~al.(2013)Becker, Christoudias, and Fua]{becker2013non}
C.~J. Becker, C.~M. Christoudias, and P.~Fua.
\newblock Non-linear domain adaptation with boosting.
\newblock In C.~J.~C. Burges, L.~Bottou, M.~Welling, Z.~Ghahramani, and K.~Q.
  Weinberger, editors, \emph{Advances in Neural Information Processing
  Systems}, volume~26. Curran Associates, Inc., 2013.
\newblock URL
  \url{https://proceedings.neurips.cc/paper/2013/file/c042f4db68f23406c6cecf84a7ebb0fe-Paper.pdf}.

\bibitem[Belkin et~al.(2019)Belkin, Hsu, Ma, and Mandal]{belkin2019reconciling}
M.~Belkin, D.~Hsu, S.~Ma, and S.~Mandal.
\newblock Reconciling modern machine-learning practice and the classical
  bias--variance trade-off.
\newblock \emph{Proceedings of the National Academy of Sciences}, 116\penalty0
  (32):\penalty0 15849--15854, 2019.

\bibitem[Ben-David et~al.(2007)Ben-David, Blitzer, Crammer, and
  Pereira]{ben2007analysis}
S.~Ben-David, J.~Blitzer, K.~Crammer, and F.~Pereira.
\newblock Analysis of representations for domain adaptation.
\newblock In B.~Sch\"{o}lkopf, J.~Platt, and T.~Hoffman, editors,
  \emph{Advances in Neural Information Processing Systems}, volume~19. MIT
  Press, 2007.
\newblock URL
  \url{https://proceedings.neurips.cc/paper/2006/file/b1b0432ceafb0ce714426e9114852ac7-Paper.pdf}.

\bibitem[Chang et~al.(2021)Chang, Li, Oymak, and
  Thrampoulidis]{Chang2021ProvableBO}
X.~Chang, Y.~Li, S.~Oymak, and C.~Thrampoulidis.
\newblock Provable benefits of overparameterization in model compression: From
  double descent to pruning neural networks.
\newblock In \emph{AAAI}, 2021.

\bibitem[Clevert et~al.(2016)Clevert, Unterthiner, and
  Hochreiter]{clevert2015fast}
D.~Clevert, T.~Unterthiner, and S.~Hochreiter.
\newblock Fast and accurate deep network learning by exponential linear units
  (elus).
\newblock In Y.~Bengio and Y.~LeCun, editors, \emph{4th International
  Conference on Learning Representations, {ICLR} 2016, San Juan, Puerto Rico,
  May 2-4, 2016, Conference Track Proceedings}, 2016.
\newblock URL \url{http://arxiv.org/abs/1511.07289}.

\bibitem[Cortes and Mohri(2014)]{cortes2014domain}
C.~Cortes and M.~Mohri.
\newblock Domain adaptation and sample bias correction theory and algorithm for
  regression.
\newblock \emph{Theor. Comput. Sci.}, 519:\penalty0 103--126, 2014.
\newblock \doi{10.1016/j.tcs.2013.09.027}.
\newblock URL \url{https://doi.org/10.1016/j.tcs.2013.09.027}.

\bibitem[D'Amour et~al.(2020)D'Amour, Heller, Moldovan, Adlam, Alipanahi,
  Beutel, Chen, Deaton, Eisenstein, Hoffman, et~al.]{d2020underspecification}
A.~D'Amour, K.~Heller, D.~Moldovan, B.~Adlam, B.~Alipanahi, A.~Beutel, C.~Chen,
  J.~Deaton, J.~Eisenstein, M.~D. Hoffman, et~al.
\newblock Underspecification presents challenges for credibility in modern
  machine learning.
\newblock \emph{Journal of Machine Learning Research}, 2020.
\newblock forthcoming.

\bibitem[Deshpande and Montanari(2016)]{10.5555/2946645.3007094}
Y.~Deshpande and A.~Montanari.
\newblock Sparse pca via covariance thresholding.
\newblock \emph{Journal of Machine Learning Research}, 17\penalty0
  (141):\penalty0 1--41, 2016.
\newblock URL \url{http://jmlr.org/papers/v17/15-160.html}.

\bibitem[Dobriban and Wager(2018)]{dobriban2018high}
E.~Dobriban and S.~Wager.
\newblock {High-dimensional asymptotics of prediction: Ridge regression and
  classification}.
\newblock \emph{The Annals of Statistics}, 46\penalty0 (1):\penalty0 247 --
  279, 2018.
\newblock \doi{10.1214/17-AOS1549}.
\newblock URL \url{https://doi.org/10.1214/17-AOS1549}.

\bibitem[Duchi et~al.(2020)Duchi, Hashimoto, and
  Namkoong]{duchi2020distributionally}
J.~Duchi, T.~Hashimoto, and H.~Namkoong.
\newblock Distributionally robust losses for latent covariate mixtures, 2020.

\bibitem[Duchi and Namkoong(2021)]{duchi2020learning}
J.~C. Duchi and H.~Namkoong.
\newblock {Learning models with uniform performance via distributionally robust
  optimization}.
\newblock \emph{The Annals of Statistics}, 49\penalty0 (3):\penalty0 1378 --
  1406, 2021.
\newblock \doi{10.1214/20-AOS2004}.
\newblock URL \url{https://doi.org/10.1214/20-AOS2004}.

\bibitem[Erdos(2019)]{erdos2019matrix}
L.~Erdos.
\newblock The matrix dyson equation and its applications for random matrices.
\newblock \emph{arXiv preprint arXiv:1903.10060}, 2019.

\bibitem[Erd{\H{o}}s et~al.(2012)Erd{\H{o}}s, Yau, and Yin]{erdHos2012bulk}
L.~Erd{\H{o}}s, H.-T. Yau, and J.~Yin.
\newblock Bulk universality for generalized wigner matrices.
\newblock \emph{Probability Theory and Related Fields}, 154\penalty0
  (1-2):\penalty0 341--407, 2012.

\bibitem[Far et~al.(2006)Far, Oraby, Bryc, and Speicher]{far2006spectra}
R.~R. Far, T.~Oraby, W.~Bryc, and R.~Speicher.
\newblock Spectra of large block matrices.
\newblock \emph{arXiv preprint cs/0610045}, 2006.

\bibitem[Glorot et~al.(2011)Glorot, Bordes, and Bengio]{glorot2011domain}
X.~Glorot, A.~Bordes, and Y.~Bengio.
\newblock Domain adaptation for large-scale sentiment classification: A deep
  learning approach.
\newblock In \emph{Proceedings of the 28th International Conference on
  International Conference on Machine Learning}, ICML'11, page 513–520,
  Madison, WI, USA, 2011. Omnipress.
\newblock ISBN 9781450306195.

\bibitem[Goldt et~al.(2020)Goldt, Reeves, M{\'e}zard, Krzakala, and
  Zdeborov{\'a}]{goldt2020gaussian}
S.~Goldt, G.~Reeves, M.~M{\'e}zard, F.~Krzakala, and L.~Zdeborov{\'a}.
\newblock The gaussian equivalence of generative models for learning with
  two-layer neural networks.
\newblock \emph{arXiv preprint 2006.14709}, 2020.

\bibitem[Goodfellow et~al.(2016)Goodfellow, Bengio, and
  Courville]{goodfellow2016deep}
I.~Goodfellow, Y.~Bengio, and A.~Courville.
\newblock \emph{Deep learning}.
\newblock MIT press, 2016.

\bibitem[Goodfellow et~al.(2015)Goodfellow, Shlens, and
  Szegedy]{goodfellow2014explaining}
I.~J. Goodfellow, J.~Shlens, and C.~Szegedy.
\newblock Explaining and harnessing adversarial examples.
\newblock In Y.~Bengio and Y.~LeCun, editors, \emph{3rd International
  Conference on Learning Representations, {ICLR} 2015, San Diego, CA, USA, May
  7-9, 2015, Conference Track Proceedings}, 2015.
\newblock URL \url{http://arxiv.org/abs/1412.6572}.

\bibitem[Grimmett(1999)]{grimmett1999percolation}
G.~Grimmett.
\newblock Percolation.
\newblock In \emph{Percolation}, pages 1--31. Springer, 1999.

\bibitem[Hastie et~al.(2019)Hastie, Montanari, Rosset, and
  Tibshirani]{hastie2019surprises}
T.~Hastie, A.~Montanari, S.~Rosset, and R.~J. Tibshirani.
\newblock Surprises in high-dimensional ridgeless least squares interpolation.
\newblock \emph{arXiv preprint arXiv:1903.08560}, 2019.

\bibitem[Helton et~al.(2018)Helton, Mai, and Speicher]{helton2018applications}
J.~W. Helton, T.~Mai, and R.~Speicher.
\newblock Applications of realizations (aka linearizations) to free
  probability.
\newblock \emph{Journal of Functional Analysis}, 274\penalty0 (1):\penalty0
  1--79, 2018.

\bibitem[Hendrycks et~al.(2021)Hendrycks, Basart, Mu, Kadavath, Wang, Dorundo,
  Desai, Zhu, Parajuli, Guo, Song, Steinhardt, and Gilmer]{hendrycks2020faces}
D.~Hendrycks, S.~Basart, N.~Mu, S.~Kadavath, F.~Wang, E.~Dorundo, R.~Desai,
  T.~Zhu, S.~Parajuli, M.~Guo, D.~Song, J.~Steinhardt, and J.~Gilmer.
\newblock The many faces of robustness: A critical analysis of
  out-of-distribution generalization.
\newblock In \emph{Proceedings of the IEEE/CVF International Conference on
  Computer Vision (ICCV)}, pages 8340--8349, October 2021.

\bibitem[Ioffe and Szegedy(2015)]{ioffe2015batch}
S.~Ioffe and C.~Szegedy.
\newblock Batch normalization: Accelerating deep network training by reducing
  internal covariate shift.
\newblock In \emph{Proceedings of the 32nd International Conference on
  International Conference on Machine Learning - Volume 37}, ICML'15, page
  448–456. JMLR.org, 2015.

\bibitem[Jacot et~al.(2018)Jacot, Gabriel, and Hongler]{jacot2018neural}
A.~Jacot, F.~Gabriel, and C.~Hongler.
\newblock Neural tangent kernel: Convergence and generalization in neural
  networks.
\newblock In \emph{Proceedings of the 32nd International Conference on Neural
  Information Processing Systems}, NIPS'18, page 8580–8589, Red Hook, NY,
  USA, 2018. Curran Associates Inc.

\bibitem[Karoui(2010)]{Karoui2010TheSO}
N.~E. Karoui.
\newblock The spectrum of kernel random matrices.
\newblock \emph{Annals of Statistics}, 38:\penalty0 1--50, 2010.

\bibitem[Kobak et~al.(2020)Kobak, Lomond, and Sanchez]{kobak2020optimal}
D.~Kobak, J.~Lomond, and B.~Sanchez.
\newblock The optimal ridge penalty for real-world high-dimensional data can be
  zero or negative due to the implicit ridge regularization.
\newblock \emph{Journal of Machine Learning Research}, 21\penalty0
  (169):\penalty0 1--16, 2020.
\newblock URL \url{http://jmlr.org/papers/v21/19-844.html}.

\bibitem[Koh et~al.(2021)Koh, Sagawa, Marklund, Xie, Zhang, Balsubramani, Hu,
  Yasunaga, Phillips, Gao, Lee, David, Stavness, Guo, Earnshaw, Haque, Beery,
  Leskovec, Kundaje, Pierson, Levine, Finn, and Liang]{koh2020wilds}
P.~W. Koh, S.~Sagawa, H.~Marklund, S.~M. Xie, M.~Zhang, A.~Balsubramani, W.~Hu,
  M.~Yasunaga, R.~L. Phillips, I.~Gao, T.~Lee, E.~David, I.~Stavness, W.~Guo,
  B.~Earnshaw, I.~Haque, S.~M. Beery, J.~Leskovec, A.~Kundaje, E.~Pierson,
  S.~Levine, C.~Finn, and P.~Liang.
\newblock Wilds: A benchmark of in-the-wild distribution shifts.
\newblock In M.~Meila and T.~Zhang, editors, \emph{Proceedings of the 38th
  International Conference on Machine Learning}, volume 139 of
  \emph{Proceedings of Machine Learning Research}, pages 5637--5664. PMLR,
  18--24 Jul 2021.
\newblock URL \url{https://proceedings.mlr.press/v139/koh21a.html}.

\bibitem[Kumar et~al.(2020)Kumar, Ma, and Liang]{kumar2020understanding}
A.~Kumar, T.~Ma, and P.~Liang.
\newblock Understanding self-training for gradual domain adaptation.
\newblock In H.~D. III and A.~Singh, editors, \emph{Proceedings of the 37th
  International Conference on Machine Learning}, volume 119 of
  \emph{Proceedings of Machine Learning Research}, pages 5468--5479. PMLR,
  13--18 Jul 2020.
\newblock URL \url{https://proceedings.mlr.press/v119/kumar20c.html}.

\bibitem[Lee et~al.(2018)Lee, Bahri, Novak, Schoenholz, Pennington, and
  Sohl{-}Dickstein]{lee2017deep}
J.~Lee, Y.~Bahri, R.~Novak, S.~S. Schoenholz, J.~Pennington, and
  J.~Sohl{-}Dickstein.
\newblock Deep neural networks as gaussian processes.
\newblock In \emph{6th International Conference on Learning Representations,
  {ICLR} 2018, Vancouver, BC, Canada, April 30 - May 3, 2018, Conference Track
  Proceedings}. OpenReview.net, 2018.
\newblock URL \url{https://openreview.net/forum?id=B1EA-M-0Z}.

\bibitem[Lee et~al.(2020)Lee, Schoenholz, Pennington, Adlam, Xiao, Novak, and
  Sohl-Dickstein]{lee2020finite}
J.~Lee, S.~Schoenholz, J.~Pennington, B.~Adlam, L.~Xiao, R.~Novak, and
  J.~Sohl-Dickstein.
\newblock Finite versus infinite neural networks: an empirical study.
\newblock In H.~Larochelle, M.~Ranzato, R.~Hadsell, M.~F. Balcan, and H.~Lin,
  editors, \emph{Advances in Neural Information Processing Systems}, volume~33,
  pages 15156--15172. Curran Associates, Inc., 2020.
\newblock URL
  \url{https://proceedings.neurips.cc/paper/2020/file/ad086f59924fffe0773f8d0ca22ea712-Paper.pdf}.

\bibitem[Lei et~al.(2021)Lei, Hu, and Lee]{lei21a}
Q.~Lei, W.~Hu, and J.~Lee.
\newblock Near-optimal linear regression under distribution shift.
\newblock In M.~Meila and T.~Zhang, editors, \emph{Proceedings of the 38th
  International Conference on Machine Learning}, volume 139 of
  \emph{Proceedings of Machine Learning Research}, pages 6164--6174. PMLR,
  18--24 Jul 2021.
\newblock URL \url{https://proceedings.mlr.press/v139/lei21a.html}.

\bibitem[Liao et~al.(2020)Liao, Couillet, and Mahoney]{liao2020random}
Z.~Liao, R.~Couillet, and M.~W. Mahoney.
\newblock A random matrix analysis of random fourier features: beyond the
  gaussian kernel, a precise phase transition, and the corresponding double
  descent.
\newblock In H.~Larochelle, M.~Ranzato, R.~Hadsell, M.~F. Balcan, and H.~Lin,
  editors, \emph{Advances in Neural Information Processing Systems}, volume~33,
  pages 13939--13950. Curran Associates, Inc., 2020.
\newblock URL
  \url{https://proceedings.neurips.cc/paper/2020/file/a03fa30821986dff10fc66647c84c9c3-Paper.pdf}.

\bibitem[Lin and Dobriban(2020)]{Lin2020WhatCT}
L.~Lin and E.~Dobriban.
\newblock What causes the test error? going beyond bias-variance via anova.
\newblock \emph{arXiv preprint 2010.05170}, 2020.

\bibitem[Long et~al.(2016)Long, Zhu, Wang, and Jordan]{long2016unsupervised}
M.~Long, H.~Zhu, J.~Wang, and M.~I. Jordan.
\newblock Unsupervised domain adaptation with residual transfer networks.
\newblock In \emph{Proceedings of the 30th International Conference on Neural
  Information Processing Systems}, NIPS'16, page 136–144, Red Hook, NY, USA,
  2016. Curran Associates Inc.
\newblock ISBN 9781510838819.

\bibitem[Long et~al.(2017)Long, Zhu, Wang, and Jordan]{pmlr-v70-long17a}
M.~Long, H.~Zhu, J.~Wang, and M.~I. Jordan.
\newblock Deep transfer learning with joint adaptation networks.
\newblock In D.~Precup and Y.~W. Teh, editors, \emph{Proceedings of the 34th
  International Conference on Machine Learning}, volume~70 of \emph{Proceedings
  of Machine Learning Research}, pages 2208--2217, International Convention
  Centre, Sydney, Australia, 06--11 Aug 2017. PMLR.
\newblock URL \url{http://proceedings.mlr.press/v70/long17a.html}.

\bibitem[Louart et~al.(2018)Louart, Liao, Couillet, et~al.]{louart2018random}
C.~Louart, Z.~Liao, R.~Couillet, et~al.
\newblock A random matrix approach to neural networks.
\newblock \emph{The Annals of Applied Probability}, 28\penalty0 (2):\penalty0
  1190--1248, 2018.

\bibitem[Madry et~al.(2018)Madry, Makelov, Schmidt, Tsipras, and
  Vladu]{madry2017towards}
A.~Madry, A.~Makelov, L.~Schmidt, D.~Tsipras, and A.~Vladu.
\newblock Towards deep learning models resistant to adversarial attacks.
\newblock In \emph{6th International Conference on Learning Representations,
  {ICLR} 2018, Vancouver, BC, Canada, April 30 - May 3, 2018, Conference Track
  Proceedings}. OpenReview.net, 2018.
\newblock URL \url{https://openreview.net/forum?id=rJzIBfZAb}.

\bibitem[Mania and Sra(2021)]{mania2021classifier}
H.~Mania and S.~Sra.
\newblock Why do classifier accuracies show linear trends under distribution
  shift?, 2021.

\bibitem[Mei and Montanari(2021)]{mei2019generalization}
S.~Mei and A.~Montanari.
\newblock The generalization error of random features regression: Precise
  asymptotics and the double descent curve.
\newblock \emph{Communications on Pure and Applied Mathematics}, 06 2021.
\newblock \doi{10.1002/cpa.22008}.

\bibitem[Mel and Ganguli(2021)]{mel2021theory}
G.~Mel and S.~Ganguli.
\newblock A theory of high dimensional regression with arbitrary correlations
  between input features and target functions: sample complexity, multiple
  descent curves and a hierarchy of phase transitions.
\newblock In M.~Meila and T.~Zhang, editors, \emph{Proceedings of the 38th
  International Conference on Machine Learning}, volume 139 of
  \emph{Proceedings of Machine Learning Research}, pages 7578--7587. PMLR,
  18--24 Jul 2021.
\newblock URL \url{https://proceedings.mlr.press/v139/mel21a.html}.

\bibitem[Miller et~al.(2020)Miller, Krauth, Recht, and
  Schmidt]{pmlr-v119-miller20a}
J.~Miller, K.~Krauth, B.~Recht, and L.~Schmidt.
\newblock The effect of natural distribution shift on question answering
  models.
\newblock In H.~D. III and A.~Singh, editors, \emph{Proceedings of the 37th
  International Conference on Machine Learning}, volume 119 of
  \emph{Proceedings of Machine Learning Research}, pages 6905--6916. PMLR,
  13--18 Jul 2020.
\newblock URL \url{http://proceedings.mlr.press/v119/miller20a.html}.

\bibitem[Miller et~al.(2021)Miller, Taori, Raghunathan, Sagawa, Koh, Shankar,
  Liang, Carmon, and Schmidt]{pmlr-v139-miller21b}
J.~P. Miller, R.~Taori, A.~Raghunathan, S.~Sagawa, P.~W. Koh, V.~Shankar,
  P.~Liang, Y.~Carmon, and L.~Schmidt.
\newblock Accuracy on the line: on the strong correlation between
  out-of-distribution and in-distribution generalization.
\newblock In M.~Meila and T.~Zhang, editors, \emph{Proceedings of the 38th
  International Conference on Machine Learning}, volume 139 of
  \emph{Proceedings of Machine Learning Research}, pages 7721--7735. PMLR,
  18--24 Jul 2021.
\newblock URL \url{https://proceedings.mlr.press/v139/miller21b.html}.

\bibitem[Mingo and Speicher(2017)]{mingo2017free}
J.~A. Mingo and R.~Speicher.
\newblock \emph{Free probability and random matrices}, volume~35.
\newblock Springer, 2017.

\bibitem[Nado et~al.(2020)Nado, Padhy, Sculley, D'Amour, Lakshminarayanan, and
  Snoek]{nado2020evaluating}
Z.~Nado, S.~Padhy, D.~Sculley, A.~D'Amour, B.~Lakshminarayanan, and J.~Snoek.
\newblock Evaluating prediction-time batch normalization for robustness under
  covariate shift.
\newblock \emph{arXiv preprint arXiv:2006.10963}, 2020.

\bibitem[Nagarajan and Kolter(2019)]{nagarajan2019uniform}
V.~Nagarajan and J.~Z. Kolter.
\newblock Uniform convergence may be unable to explain generalization in deep
  learning.
\newblock In H.~Wallach, H.~Larochelle, A.~Beygelzimer, F.~d~Alch\'{e}-Buc,
  E.~Fox, and R.~Garnett, editors, \emph{Advances in Neural Information
  Processing Systems}, volume~32. Curran Associates, Inc., 2019.
\newblock URL
  \url{https://proceedings.neurips.cc/paper/2019/file/05e97c207235d63ceb1db43c60db7bbb-Paper.pdf}.

\bibitem[Neal(1996)]{neal1996priors}
R.~M. Neal.
\newblock Priors for infinite networks.
\newblock In \emph{Bayesian Learning for Neural Networks}, pages 29--53.
  Springer, 1996.

\bibitem[Neyshabur et~al.(2017)Neyshabur, Bhojanapalli, Mcallester, and
  Srebro]{neyshabur2017exploring}
B.~Neyshabur, S.~Bhojanapalli, D.~Mcallester, and N.~Srebro.
\newblock Exploring generalization in deep learning.
\newblock In I.~Guyon, U.~V. Luxburg, S.~Bengio, H.~Wallach, R.~Fergus,
  S.~Vishwanathan, and R.~Garnett, editors, \emph{Advances in Neural
  Information Processing Systems}, volume~30. Curran Associates, Inc., 2017.
\newblock URL
  \url{https://proceedings.neurips.cc/paper/2017/file/10ce03a1ed01077e3e289f3e53c72813-Paper.pdf}.

\bibitem[Ovadia et~al.(2019)Ovadia, Fertig, Ren, Nado, Sculley, Nowozin,
  Dillon, Lakshminarayanan, and Snoek]{ovadia2019can}
Y.~Ovadia, E.~Fertig, J.~Ren, Z.~Nado, D.~Sculley, S.~Nowozin, J.~Dillon,
  B.~Lakshminarayanan, and J.~Snoek.
\newblock Can you trust your models uncertainty? evaluating predictive
  uncertainty under dataset shift.
\newblock In H.~Wallach, H.~Larochelle, A.~Beygelzimer, F.~d~Alch\'{e}-Buc,
  E.~Fox, and R.~Garnett, editors, \emph{Advances in Neural Information
  Processing Systems}, volume~32. Curran Associates, Inc., 2019.
\newblock URL
  \url{https://proceedings.neurips.cc/paper/2019/file/8558cb408c1d76621371888657d2eb1d-Paper.pdf}.

\bibitem[P{\'e}ch{\'e} et~al.(2019)]{peche2019note}
S.~P{\'e}ch{\'e} et~al.
\newblock A note on the pennington-worah distribution.
\newblock \emph{Electronic Communications in Probability}, 24, 2019.

\bibitem[Pennington and Worah(2017)]{pennington2019nonlinear}
J.~Pennington and P.~Worah.
\newblock Nonlinear random matrix theory for deep learning.
\newblock In I.~Guyon, U.~V. Luxburg, S.~Bengio, H.~Wallach, R.~Fergus,
  S.~Vishwanathan, and R.~Garnett, editors, \emph{Advances in Neural
  Information Processing Systems}, volume~30. Curran Associates, Inc., 2017.
\newblock URL
  \url{https://proceedings.neurips.cc/paper/2017/file/0f3d014eead934bbdbacb62a01dc4831-Paper.pdf}.

\bibitem[Pennington and Worah(2018)]{pennington2018spectrum}
J.~Pennington and P.~Worah.
\newblock The spectrum of the fisher information matrix of a
  single-hidden-layer neural network.
\newblock In S.~Bengio, H.~Wallach, H.~Larochelle, K.~Grauman, N.~Cesa-Bianchi,
  and R.~Garnett, editors, \emph{Advances in Neural Information Processing
  Systems}, volume~31. Curran Associates, Inc., 2018.
\newblock URL
  \url{https://proceedings.neurips.cc/paper/2018/file/18bb68e2b38e4a8ce7cf4f6b2625768c-Paper.pdf}.

\bibitem[Rahimi and Recht(2008)]{rahimi2007random}
A.~Rahimi and B.~Recht.
\newblock Random features for large-scale kernel machines.
\newblock In J.~Platt, D.~Koller, Y.~Singer, and S.~Roweis, editors,
  \emph{Advances in Neural Information Processing Systems}, volume~20. Curran
  Associates, Inc., 2008.
\newblock URL
  \url{https://proceedings.neurips.cc/paper/2007/file/013a006f03dbc5392effeb8f18fda755-Paper.pdf}.

\bibitem[Recht et~al.(2018)Recht, Roelofs, Schmidt, and
  Shankar]{recht2018cifar}
B.~Recht, R.~Roelofs, L.~Schmidt, and V.~Shankar.
\newblock Do cifar-10 classifiers generalize to cifar-10?
\newblock \emph{arXiv preprint arXiv:1806.00451}, 2018.

\bibitem[Recht et~al.(2019)Recht, Roelofs, Schmidt, and
  Shankar]{recht2019imagenet}
B.~Recht, R.~Roelofs, L.~Schmidt, and V.~Shankar.
\newblock Do imagenet classifiers generalize to imagenet?
\newblock In K.~Chaudhuri and R.~Salakhutdinov, editors, \emph{Proceedings of
  the 36th International Conference on Machine Learning, {ICML} 2019, 9-15 June
  2019, Long Beach, California, {USA}}, volume~97 of \emph{Proceedings of
  Machine Learning Research}, pages 5389--5400. {PMLR}, 2019.
\newblock URL \url{http://proceedings.mlr.press/v97/recht19a.html}.

\bibitem[Richards et~al.(2021)Richards, Mourtada, and
  Rosasco]{pmlr-v130-richards21b}
D.~Richards, J.~Mourtada, and L.~Rosasco.
\newblock Asymptotics of ridge(less) regression under general source condition.
\newblock In A.~Banerjee and K.~Fukumizu, editors, \emph{Proceedings of The
  24th International Conference on Artificial Intelligence and Statistics},
  volume 130 of \emph{Proceedings of Machine Learning Research}, pages
  3889--3897. PMLR, 13--15 Apr 2021.
\newblock URL \url{http://proceedings.mlr.press/v130/richards21b.html}.

\bibitem[Rosen(1997)]{von1988moments}
D.~V. Rosen.
\newblock On moments of the inverted wishart distribution.
\newblock \emph{Statistics}, 30\penalty0 (3):\penalty0 259--278, 1997.
\newblock \doi{10.1080/02331889708802613}.
\newblock URL \url{https://doi.org/10.1080/02331889708802613}.

\bibitem[Sagawa et~al.(2020)Sagawa, Koh, Hashimoto, and
  Liang]{sagawa2020distributionally}
S.~Sagawa, P.~W. Koh, T.~B. Hashimoto, and P.~Liang.
\newblock Distributionally robust neural networks.
\newblock In \emph{8th International Conference on Learning Representations,
  {ICLR} 2020, Addis Ababa, Ethiopia, April 26-30, 2020}. OpenReview.net, 2020.
\newblock URL \url{https://openreview.net/forum?id=ryxGuJrFvS}.

\bibitem[Schmidt et~al.(2018)Schmidt, Santurkar, Tsipras, Talwar, and
  Madry]{schmidt2018adversarially}
L.~Schmidt, S.~Santurkar, D.~Tsipras, K.~Talwar, and A.~Madry.
\newblock Adversarially robust generalization requires more data.
\newblock In S.~Bengio, H.~Wallach, H.~Larochelle, K.~Grauman, N.~Cesa-Bianchi,
  and R.~Garnett, editors, \emph{Advances in Neural Information Processing
  Systems}, volume~31. Curran Associates, Inc., 2018.
\newblock URL
  \url{https://proceedings.neurips.cc/paper/2018/file/f708f064faaf32a43e4d3c784e6af9ea-Paper.pdf}.

\bibitem[Shankar et~al.(2020)Shankar, Fang, Guo, Fridovich-Keil, Ragan-Kelley,
  Schmidt, and Recht]{pmlr-v119-shankar20a}
V.~Shankar, A.~Fang, W.~Guo, S.~Fridovich-Keil, J.~Ragan-Kelley, L.~Schmidt,
  and B.~Recht.
\newblock Neural kernels without tangents.
\newblock In H.~D. III and A.~Singh, editors, \emph{Proceedings of the 37th
  International Conference on Machine Learning}, volume 119 of
  \emph{Proceedings of Machine Learning Research}, pages 8614--8623. PMLR,
  13--18 Jul 2020.
\newblock URL \url{http://proceedings.mlr.press/v119/shankar20a.html}.

\bibitem[Sugiyama et~al.(2007)Sugiyama, Krauledat, and
  M{{\"u}}ller]{sugiyama2007covariate}
M.~Sugiyama, M.~Krauledat, and K.-R. M{{\"u}}ller.
\newblock Covariate shift adaptation by importance weighted cross validation.
\newblock \emph{Journal of Machine Learning Research}, 8\penalty0
  (35):\penalty0 985--1005, 2007.
\newblock URL \url{http://jmlr.org/papers/v8/sugiyama07a.html}.

\bibitem[Taori et~al.(2020)Taori, Dave, Shankar, Carlini, Recht, and
  Schmidt]{taori2020measuring}
R.~Taori, A.~Dave, V.~Shankar, N.~Carlini, B.~Recht, and L.~Schmidt.
\newblock Measuring robustness to natural distribution shifts in image
  classification.
\newblock In H.~Larochelle, M.~Ranzato, R.~Hadsell, M.~F. Balcan, and H.~Lin,
  editors, \emph{Advances in Neural Information Processing Systems}, volume~33,
  pages 18583--18599. Curran Associates, Inc., 2020.
\newblock URL
  \url{https://proceedings.neurips.cc/paper/2020/file/d8330f857a17c53d217014ee776bfd50-Paper.pdf}.

\bibitem[Tripuraneni et~al.(2021)Tripuraneni, Adlam, and
  Pennington]{tripuraneni2021over}
N.~Tripuraneni, B.~Adlam, and J.~Pennington.
\newblock Overparameterization improves robustness to covariate shift in high
  dimensions.
\newblock In \emph{Advances in Neural Information Processing Systems},
  volume~35, 2021.
\newblock forthcoming.

\bibitem[van Wieringen(2015)]{Wieringen2015LectureNO}
W.~N. van Wieringen.
\newblock Lecture notes on ridge regression.
\newblock \emph{arXiv: Methodology}, 2015.

\bibitem[Vershynin(2018)]{vershynin2018high}
R.~Vershynin.
\newblock \emph{High-Dimensional Probability: An Introduction with Applications
  in Data Science}.
\newblock Cambridge Series in Statistical and Probabilistic Mathematics.
  Cambridge University Press, 2018.
\newblock \doi{10.1017/9781108231596}.

\bibitem[Wainwright(2019)]{wainwright2019high}
M.~J. Wainwright.
\newblock \emph{High-Dimensional Statistics: A Non-Asymptotic Viewpoint}.
\newblock Cambridge Series in Statistical and Probabilistic Mathematics.
  Cambridge University Press, 2019.
\newblock \doi{10.1017/9781108627771}.

\bibitem[Wu and Xu(2020)]{wu2020optimal}
D.~Wu and J.~Xu.
\newblock On the optimal weighted $\ell_2$ regularization in overparameterized
  linear regression.
\newblock In H.~Larochelle, M.~Ranzato, R.~Hadsell, M.~F. Balcan, and H.~Lin,
  editors, \emph{Advances in Neural Information Processing Systems}, volume~33,
  pages 10112--10123. Curran Associates, Inc., 2020.
\newblock URL
  \url{https://proceedings.neurips.cc/paper/2020/file/72e6d3238361fe70f22fb0ac624a7072-Paper.pdf}.

\bibitem[Yang et~al.(2021)Yang, Bai, and Mei]{yang2021exact}
Z.~Yang, Y.~Bai, and S.~Mei.
\newblock Exact gap between generalization error and uniform convergence in
  random feature models.
\newblock In M.~Meila and T.~Zhang, editors, \emph{Proceedings of the 38th
  International Conference on Machine Learning, {ICML} 2021, 18-24 July 2021,
  Virtual Event}, volume 139 of \emph{Proceedings of Machine Learning
  Research}, pages 11704--11715. {PMLR}, 2021.
\newblock URL \url{http://proceedings.mlr.press/v139/yang21a.html}.

\bibitem[Zhao et~al.(2018)Zhao, Zhang, Wu, Moura, Costeira, and
  Gordon]{zhao2018adversarial}
H.~Zhao, S.~Zhang, G.~Wu, J.~M.~F. Moura, J.~P. Costeira, and G.~J. Gordon.
\newblock Adversarial multiple source domain adaptation.
\newblock In S.~Bengio, H.~Wallach, H.~Larochelle, K.~Grauman, N.~Cesa-Bianchi,
  and R.~Garnett, editors, \emph{Advances in Neural Information Processing
  Systems}, volume~31. Curran Associates, Inc., 2018.
\newblock URL
  \url{https://proceedings.neurips.cc/paper/2018/file/717d8b3d60d9eea997b35b02b6a4e867-Paper.pdf}.

\bibitem[Zhao et~al.(2019)Zhao, Combes, Zhang, and Gordon]{zhao2019learning}
H.~Zhao, R.~T.~D. Combes, K.~Zhang, and G.~Gordon.
\newblock On learning invariant representations for domain adaptation.
\newblock In K.~Chaudhuri and R.~Salakhutdinov, editors, \emph{Proceedings of
  the 36th International Conference on Machine Learning}, volume~97 of
  \emph{Proceedings of Machine Learning Research}, pages 7523--7532. PMLR,
  09--15 Jun 2019.
\newblock URL \url{https://proceedings.mlr.press/v97/zhao19a.html}.

\end{thebibliography}
\end{document}